\newtheorem*{theorem*}{Theorem}
\newtheorem{theorem}{Theorem}[section]
\newtheorem{corollary}[theorem]{Corollary}
\newtheorem{definition}[theorem]{Definition}
\newtheorem{observation}[theorem]{Observation}
\def\eqref#1{equation~\ref{#1}}
\def\floor#1{\lfloor #1 \rfloor}
\def\1{\bm{1}}
\def\vs{{\bm{s}}}
\def\vu{{\bm{u}}}
\def\vw{{\bm{w}}}
\def\vx{{\bm{x}}}
\def\vy{{\bm{y}}}
\DeclareMathAlphabet{\mathsfit}{\encodingdefault}{\sfdefault}{m}{sl}
\SetMathAlphabet{\mathsfit}{bold}{\encodingdefault}{\sfdefault}{bx}{n}
\newcommand{\E}{\mathbb{E}}
\newcommand{\R}{\mathbb{R}}
\DeclareMathOperator*{\argmax}{arg\,max}
\DeclareMathOperator*{\argmin}{arg\,min}
\newcommand{\Xdis}[1]{\mathbf{X}_{#1}}
\newcommand{\Thetafin}[0]{\Theta^{\text{fin}}}
\newcommand{\Thetaci}[0]{\Theta^{\text{ci}}}
\newcommand{\Thetabci}[0]{\Theta^{\text{bci}}}
\newcommand{\ThetaN}[0]{\Theta_N}
\newcommand{\ThetaciN}[0]{\Thetaci_N}
\newcommand{\ThetabciN}[0]{\Thetabci_N}
\newcommand{\ThetabciNM}[0]{\Thetabci_{N,M}}
\DeclareMathOperator{\rsupp}{rsupp}
\definecolor{hzhorange}{RGB}{219, 48, 122}
\newcommand\yk[1]{\ykcomment{#1}}
\newcommand\gyk[1]{\gykcomment{#1}}
\newcommand\hzh[1]{\textcolor[RGB]{202,100,22}{[Zhihuan: #1]}}
\newcommand{\fang}[1]{\fangcomment{#1}}
\renewcommand\yk[1]{}
\renewcommand\gyk[1]{}
\renewcommand\hzh[1]{}
\renewcommand{\fang}[1]{}
\title{Algorithmic Robust Forecast Aggregation}
\author[1]{Yongkang Guo}
\author[2]{Jason D. Hartline}
\author[1]{Zhihuan Huang}
\author[1]{Yuqing Kong}
\author[2]{Anant Shah}
\author[3]{Fang-Yi Yu}
\affil[1]{Peking University}
\affil[2]{Northwestern University}
\affil[3]{George Mason University}
\affil[1]{\texttt{\{yongkang\_guo,zhihuan.huang,yuqing.kong\}@pku.edu.cn}}
\affil[2]{\texttt{hartline@northwestern.edu}}
\affil[2]{\texttt{anantshah2026@u.northwestern.edu}}
\affil[3]{\texttt{fangyiyu@gmu.edu}}
\date{}
\begin{document}

\maketitle
\begin{abstract}
    Forecast aggregation combines the predictions of multiple forecasters to improve accuracy. However, the lack of knowledge about forecasters' information structure hinders optimal aggregation. Given a family of information structures, robust forecast aggregation aims to find the aggregator with minimal worst-case regret compared to the omniscient aggregator. Previous approaches for robust forecast aggregation rely on heuristic observations and parameter tuning. We propose an algorithmic framework for robust forecast aggregation. Our framework provides efficient approximation schemes for general information aggregation with a finite family of possible information structures. In the setting considered by \citet{doi:10.1073/pnas.1813934115} where two agents receive independent signals conditioned on a binary state, our framework also provides efficient approximation schemes by imposing Lipschitz conditions on the aggregator or discrete conditions on agents' reports. Numerical experiments demonstrate the effectiveness of our method by providing a nearly optimal aggregator in the setting considered by \citet{doi:10.1073/pnas.1813934115}.
\end{abstract}

\section{Introduction}
\label{sec:intro}

%flow: forecast aggregation is important, robust forecast aggregation is particularly important, prior work does not provide a systematic method to solve, we provide a systematic algorithmic framework to .... In particular, in an important setting where......., we .......

%the expected square of the distance between $f$'s output and the Bayesian posterior conditioned on the agents' information and the underlying information structure. Formally, the goal is $\min_f \max_{\theta\in\Theta} \E[(f(\mathbf{x})-opt_{\theta}(\mathbf{s}))^2]$ where $\mathbf{x}$ is the agents' forecasts and $opt_{\theta}(\mathbf{s})$ is the Bayesian posterior conditioned on the agents' information $\mathbf{s}$ and the underlying information structure $\theta$. 

Forecast aggregation combines the predictions of multiple agents into a more accurate prediction. With forecast aggregation, decision-makers can reduce error, diversify risk and enhance accuracy based on the collective knowledge of agents compared to any single agent, thereby advancing the common good. Forecast aggregation is commonly used in many domains to generate more informed predictions for various variables, such as weather in weather forecasting, the spread of infectious diseases in public health, the outcome of games in sports, fuel prices in energy, and GDP growth in economics.

In practice, one crucial challenge of forecast aggregation is that the aggregator may not have full knowledge of the information structure and the agents. Without this prior knowledge, the aggregator cannot employ Bayes rules to combine the forecasts optimally.\fang{we may introduce omniscient aggregator here} Traditional prior-free aggregation methods, such as simple averaging, are especially bad on some information structures. For example, in weather forecasting, assume the prior probability of raining tomorrow is $30\%$, and there are two agents who will receive a conditionally independent binary signal (Low or High). Agents will report their posterior, which is $10\%$ given the Low signal and $50\%$ given the High signal. When both agents report 50\%, the simple averaging will also output $50\%$. However, the optimal aggregator will output the Bayesian posterior $70\%$ by calculation.

To address this challenge, \citet{doi:10.1073/pnas.1813934115} propose a robust forecast aggregation paradigm, which aims to find an aggregator with the best worst-case performance. In this paradigm, the aggregator\fang{We use an aggregator as a function and also the designer who picks the function.  Do we want to distinguish them?} takes the forecasts of multiple agents as input and outputs an aggregated forecast. The aggregator knows the family of information structures $\Theta$ to which the underlying information structure $\theta$ belongs but does not know the exact $\theta$ that the agents share. Fixing the information structure $\theta$, the regret is defined as the loss of $f$, subtracting the loss of the omniscient aggregator's performance $R(f,\theta)=loss(f)-loss(opt_{\theta})$. The robustness paradigm aims to find the best aggregator $f$ among the family $\mathcal{F}$ with the lowest worst-case regret $\inf_{f\in\mathcal{F}} \sup_{\theta\in\Theta} R(f,\theta)$. The omniscient aggregator $opt_{\theta}$ outputs the Bayesian posterior conditioning on the agents' information and the underlying information structure $\theta$. The worst-case regret refers to the max regret over $\Theta$. In addition to \citet{doi:10.1073/pnas.1813934115}, many recent works including~\citet{neyman2022you, LEVY2022105075,de2021robust} also employ the framework of robust information aggregation framework with different choices of information structure, and formats of reports. %However, current designs of the robust aggregator rely on clever observations and small parameter tuning. This approach may not always result in tight results nor apply to a wide range of information structures.

%This makes the approach more robust and reliable, as it is less likely to be influenced by noise or small fluctuations in the agents' predictions.

This paper proposes an algorithmic framework for computing the optimal robust aggregator. Theoretically, when the set of information structures is finite, with mild conditions on the regret function and the family of aggregators, we provide a fully polynomial time approximation scheme (FPTAS) to compute the optimal aggregator. \fang{We may begin by saying that we apply our framework to infinite setting by Arieli et al.}When the set of information structures is continuous\fang{infinite?  Not sure how to define continuous formally}, we put an additional Lipschitz restriction on the aggregators, which ensures that the computed aggregator has a bounded Lipschitz constant. In the robust forecast paradigm, the term ``robustness'' refers to the robustness to the choice of information structure. Lipschitz aggregators add an additional level of robustness: the ability to handle small perturbations in agents' reports. As a direct application, in the setting considered by \citet{doi:10.1073/pnas.1813934115}, our framework provides a fully polynomial time approximation scheme (FPTAS) to compute the optimal Lipschitz aggregator. Numerically, our algorithm obtains an aggregator whose regret estimation almost matches the lower bound in the two conditional independent agents~\citep{doi:10.1073/pnas.1813934115}. We find that when both agents provide near certain forecasts such as $(0.1,0.1)$ or $(0.9,0.9)$, our aggregator with the smallest regret consistently amplifies these forecasts to even more extreme values than previous state-of-the-art aggregators (\Cref{fig:aggregators}).  Our observation aligns with previous empirical work showing that extremizing the average of forecasts often improves the aggregate forecast~\citep{satopaa2014combining,baron2014two,satopaa2015combining}.%Therefore, our approach not only outperforms existing methods but also automatically provides valuable insights that may be challenging to obtain in specific cases.

%Furthermore, the Lipschitz condition allows for the development of an FPTAS to learn the optimal aggregator. Without the Lipschitz condition, by restricting agents' reports to be discrete such as 10\%, 20\%\footnote{Note that other underlying parameters are still continuous.}, our framework also provides an FPTAS to learn the optimal aggregator. Numerically, our algorithm obtains an aggregator whose regret estimation almost matches the lower bound, which almost fills in the gap in prior work. 

Now we present a detailed exposition of our framework and the associated challenges. Our framework views robust forecast aggregation as a zero-sum game between nature and the aggregator. Nature's action space consists of a set of information structures $\Theta$, while the aggregator's action space is represented by a function space $\mathcal{F}$. One notable feature of this problem is that the aggregator's best response can be computed efficiently, even when considering the additional Lipschitz restriction. This enables the application of computational methodology as ellipsoid and online learning techniques to effectively solve the zero-sum game~\citep{HJS-23}.

%\fang{I am not sure how general is this.  It seems like for any mixed strategy, the Bayes aggregator is always the best response (regardless of $\mathcal{F}$ mapping from reports to actions.  ) }\gyk{In the absolute loss function $|y-\omega|$, the optimal response is not the Bayes.}\fang{Here the Bayes aggregator (or Bayes estimator) also depends on the loss function.  In the absolute loss case, the Bayes aggregator output the median.}

When the set of information structures $\Theta$ is finite, with mild assumption, our algorithmic approach, based on the online learning framework, allows for the development of an FPTAS for the robust aggregation problem. This result has wide-ranging implications, encompassing various information aggregation scenarios without imposing any restrictions on the format of the agents' reports. Consequently, our algorithmic approach enables the collection of more intricate reports and offers flexibility in modeling agents. For instance, agents can provide not only their forecasts but also higher-order information, such as their expectations for other agents' forecasts~\citep{prelec2017solution,palley2019extracting}. 

Dealing with a continuous set of information structures $\Theta$ is more challenging. To address the complexities of continuous information structures, we employ approaches that involve dimension reduction, discretization, coupling analysis, and a smoothing step for the omniscient aggregator. By tackling these challenges, we can effectively handle the complexities of continuous information structures in the setting considered by \citet{doi:10.1073/pnas.1813934115}. %We also consider another family of information structures with discrete reports as a bridge between the finite case and the continuous case.

%We begin by reducing the dimensionality of $\Theta$ to a constant value. However, even with this reduction, nature's action space remains complex. To enable online learning, we require an appropriate discretization that effectively represents $\Theta$. This means that for any $\theta\in\Theta$, there exists a corresponding $\theta'$ in the discrete set that closely approximates it in terms of regret. To achieve this, we carefully select suitable metrics. For the setting where agents' reports are discrete, we employ the total variation distance (TVD) to ensure an appropriate discrete representation. However, in the Lipschitz setting, TVD is overly restrictive, so we utilize the Earth Mover's distance as a more flexible metric. We also perform a meticulous coupling analysis to demonstrate the existence of the covering, ensuring that the discrete representation adequately captures the continuous set. Furthermore, the omniscient aggregator, which provides optimal predictions, may exhibit non-smooth behavior. To overcome this, we introduce a meticulous smoothing step that ensures a smoother version of the omniscient aggregator maintains a similar regret performance. By tackling these challenges, we can effectively handle the complexities of continuous information structures in the setting considered by \citet{doi:10.1073/pnas.1813934115}. 

In addition, the choice of paradigm $R(f,\theta)$ plays a key role in robust aggregation, as well as other robust optimization problems~\citep{gabrel2014recent, hartline2020benchmark}. Different robustness paradigms can yield distinct results and interpretations. Prior work has often focused on specific robustness paradigms without thoroughly justifying their choices over others. In our work, we mainly consider the additive regret paradigm $loss(f)-loss(opt_\theta)$.  We also empirically compare different paradigms (ratio $loss(f)/loss(opt_\theta)$ and absolute $loss(f)$)and offer a better understanding of the implications and trade-offs associated with each paradigm (\Cref{fig:optimization_goal}).

In summary, our work presents a structured framework and a systematic algorithmic approach for robust aggregation. This enables the automatic design of aggregators. Additionally, our framework offers assistance in selecting suitable robustness paradigms. By empowering decision-makers with better aggregators, our approach enables them to make more informed choices. Our ultimate goal is to advance the field of robust information aggregation, benefit decision-makers, and promote enhanced decision-making practices.

The following sections introduce the problem, settings, results, and our approaches more formally.  

\subsection{Problem, Settings, and Results}\fang{I wonder how many details should be in this section or in problem statement (Section 3).  Right now there are some repetition.}
We state the problem setting and give an overview of the results. There is a state of the world $w \in \Omega$. An agent $i$ receives a private signal $s_{i}$ from a space of signals $\mathcal{S}_{i}$. Let $\mathcal{S}=\mathcal{S}_{1} \times \mathcal{S}_{2} \times \dots \times \mathcal{S}_{n}$ be the space of joint signals. An information structure $\theta$ is a joint distribution over the space of states and signals $(\Omega,\mathcal{S})$. Every agent reports a forecast.\fang{We use forecast and prediction interchangeably} The forecast of agent $i$ depends on their private signal $s_{i}$ and the information structure $\theta$. The space of forecasts of agent $i$ is denoted by $X_{i}$. An aggregator is a function $f: X_{1} \times X_{2} \times \dots \times X_{n} \to Y$, which maps the joint forecasts of agents to the space of aggregations $Y$. The loss that an aggregator suffers depends on the loss function. A loss function $\ell : Y \times \Omega \to \mathbb{R}^{+}$ captures the loss suffered on aggregation $y$ for the state of the world $w$. The loss suffered by an aggregator for an information structure, denoted by $loss(f,\theta)$ is the expected loss suffered by the aggregator over information structure $\theta$ for some forecasting rules used by the agents. Formally, given a loss function $\ell$, a family of aggregators $\mathcal{F}$ and a family of information structures $\Theta$, we aim to solve 
\[\inf_{f\in\mathcal{F}}\sup_{\theta\in\Theta}R(f,\theta)=\inf_{f\in\mathcal{F}}\sup_{\theta\in\Theta} \left(loss(f,\theta)-loss(opt_{\theta},\theta)\right)\] where $opt_{\theta}$ is an omniscient aggregator who knows $\theta$ perfectly. We use the loss of $opt_{\theta}$ as a benchmark, and we set the \textit{loss} as the quadratic loss. In this work, we focus on additive regret, i.e., the difference between the loss suffered by the aggregator and the loss suffered by the omniscient aggregator. Other robustness paradigms would include the ratio paradigm and the absolute paradigm. The ratio paradigm corresponds to the ratio of the loss of the aggregator to the loss of the omniscient aggregator, while the absolute paradigm corresponds solely to the loss of the aggregator. We provide evidence as to why the additive paradigm is most suitable to the problem we study.

\paragraph{Settings} %\fang{We may do this more succinctly: We introduce the following settings that consider three combinations of nature's action set $\Theta$ and aggregator's action set $\mathcal{F}$.  As a warm-up, we consider nature's action set $\Theta$ to be finite, but no restriction on $\mathcal{F}$.  Then the set of information structures is infinite, with mild restrictions put on either $\Theta$ or $\mathcal{F}$: either the agents' reports are discrete such as $10\%,20\%$, while the other underlying parameters are continuous, or the aggregators have a bounded Lipschitz constant. }
We introduce the following settings that consider multiple pairs of nature's action set $\Theta$ and aggregator's action set $\mathcal{F}$. 
\begin{itemize}
\item \textbf{Finite} The set of information structures $\Theta$ is finite. %while the aggregator can pick any function. That is, no restriction on $\mathcal{F}$. \yk{check this after we obtain a general theorem}
\item \textbf{Continuous} The set of information structures is continuous, with mild restrictions put on either $\Theta$ or $\mathcal{F}$.
\begin{itemize}
\item \textbf{Discrete Reports} The agents' reports are discrete such as $10\%,20\%$. Other underlying parameters are continuous. Thus, this does not imply that the set of information structures is discrete. %No restriction on $\mathcal{F}$. \yk{check this after we obtain a general theorem}
\item \textbf{Lipschitz Aggregators} The aggregators have a bounded Lipschitz constant. 
\end{itemize}
\end{itemize}

%\yk{I delete ``At a high level, we view the robust forecast problem as a zero-sum game between nature and the aggregator. The key challenge is that the players' action sets are complex.'' and put it to the beginning of section 1.2, we can add it back if needed because I do not want people to think we put restrictions because we cannot the original question.} 

\paragraph{Results} Theoretically, when the set of information structures $\Theta$ is finite and each $\theta\in\Theta$ has constant size support, the set of aggregators $\mathcal{F}$ is convex and compact with a polynomial time separation oracle, and the loss function is convex continuous and bounded, we provide an FPTAS (\cref{thm:finite} in \cref{sec:finite}). 

For the continuous setting, we focus on the model considered by \cite{doi:10.1073/pnas.1813934115}. Here there are two agents whose private signals are independent conditioning on a binary state. They are asked to report their forecasts for the state. We provide an FPTAS for both the discrete reports (\cref{thm:discrete}) and Lipschitz aggregators (\cref{thm:lipschitz}) settings. 

% {Numerically, using the same regret estimation method, our algorithm obtains an aggregator whose regret is 0.0226, \yk{0.0226 or 0.0227?} which almost matches the lower bound $\frac18(5\sqrt{5}-11)\approx 0.0225$, while the state-of-the-art (SOTA) proposed in \cite{doi:10.1073/pnas.1813934115} has regret 0.0250.  In Figure \ref{fig:aggregators}, a comparison is shown between our obtained aggregator and prior aggregators. Unlike prior aggregators, when both agents report extremes, such as (20\%, 20\%) or (80\%, 80\%), our aggregator, which has the smallest regret, further aggregates these forecasts to an even more extreme forecast. }
Our algorithmic framework obtains an aggregator with regret 0.0226 that almost\footnote{The convergence time of our FPTAS depends on the discretization parameters and the Lipschitz constant. To have a reasonable convergence time, we pick relatively small discretization parameters. This is why we only obtain a near-tight aggregator. } matches the existing lower bound $\frac18(5\sqrt{5}-11)\approx 0.0225$, while previous state-of-the-art in \cite{doi:10.1073/pnas.1813934115} has regret 0.0250. 

Our aggregator differs from the previous aggregators mainly when both agents' reports are close to $0$ or $1$. For instance, when both agents report $(0.1, 0.1)$ or $(0.9, 0.9)$, as shown in Figure \ref{fig:aggregators}, our aggregator, which has the best performance, outputs even more extreme forecasts than previous aggregators. 

%\fang{As shown in \Cref{fig:allregrets}, our aggregator perform well on all reports, but previous ones varies.}  

\begin{figure}[!ht]
    \centering
    \begin{subfigure}[b]{0.225\textwidth}
        \centering
        \includegraphics[height=.96\textwidth]{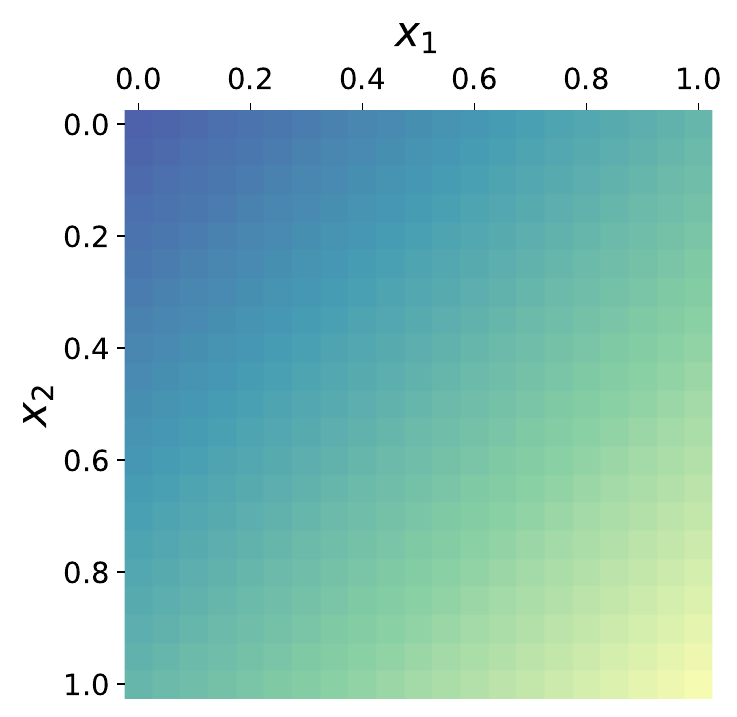}
        \caption{\textbf{Simple averaging}}
    \end{subfigure}
    \begin{subfigure}[b]{0.225\textwidth}
        \centering
        \includegraphics[height=.96\textwidth]{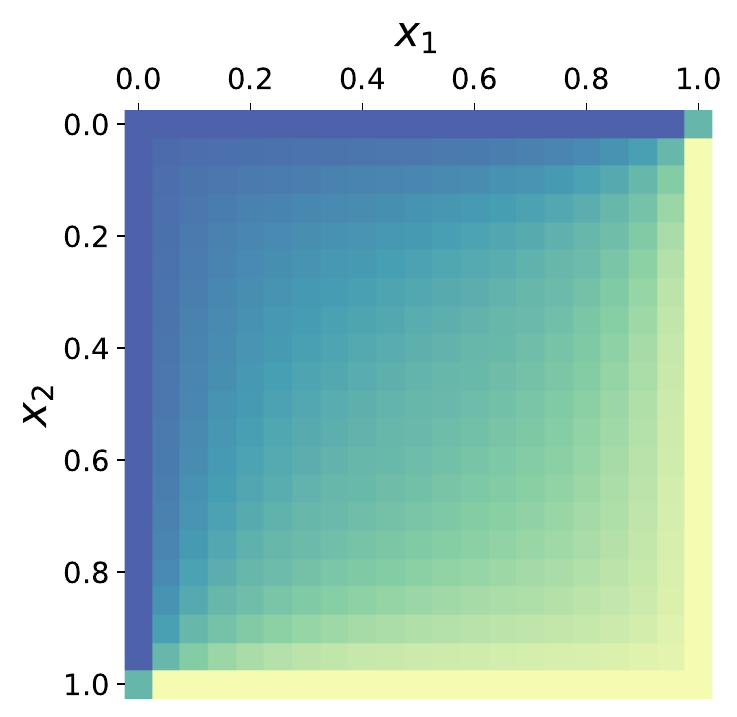}
        \caption{\textbf{State-of-the-art}}
    \end{subfigure}
    \begin{subfigure}[b]{0.27\textwidth}
        \centering
        \includegraphics[height=.8\textwidth]{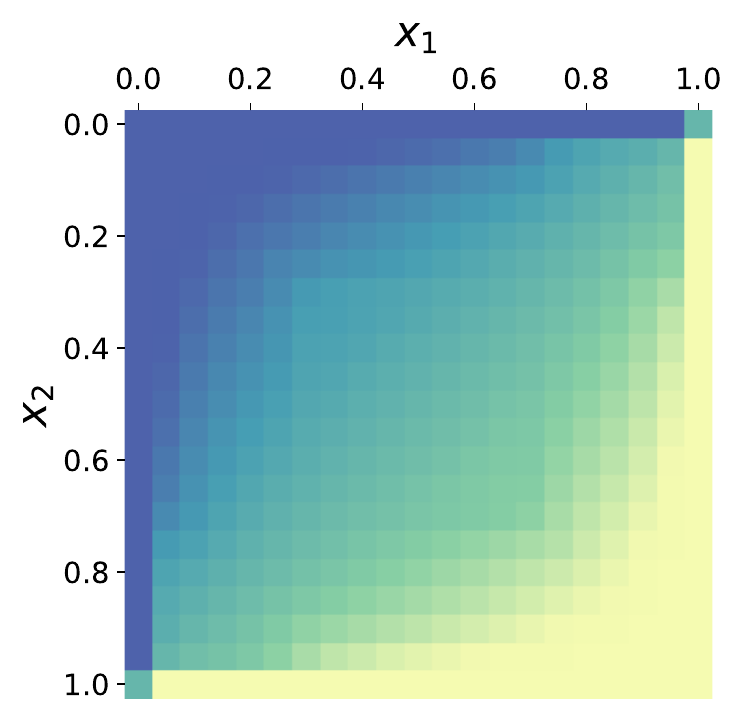}
        \includegraphics[height=.72\textwidth]{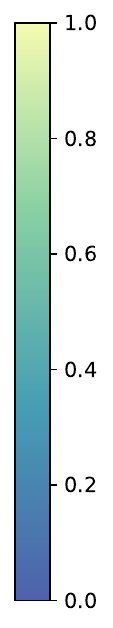}
        \caption{\textbf{Our aggregator}}
    \end{subfigure}
    
    \caption{\textbf{Heatmaps of different aggregators $f(x_1,x_2)$.} The horizontal axis represents the first agent's report $x_1$, and the vertical axis represents the second agent's report $x_2$. Darker to lighter shades represent the range of $f(x_1,x_2)$ from 0 to 1.}
    \label{fig:aggregators}
\end{figure}

Additionally, we experiment and compare our algorithmic framework on different robustness paradigms, additive, ratio, and absolute (see formulas in \Cref{sec:paradigm}). In the context considered in this work, the findings indicate that the additive robustness paradigm outperforms the other two (\Cref{fig:optimization_goal}). The additive robustness paradigm demonstrates superior overall performance in quadratic loss on a wide range of information structures. This is because the additive robustness paradigm pays attention to a broader range of information structures, while the absolute and ratio robustness paradigms exhibit limited attention to only a few information structures. 

\begin{figure}[!ht]
    \centering
    \begin{subfigure}[b]{0.32\textwidth}
        \includegraphics[width=\textwidth]{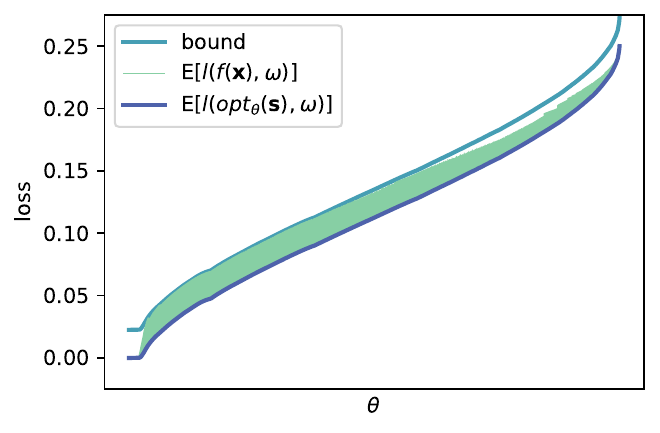}
        \caption{\textbf{Additive}}
        \label{fig:image2}
    \end{subfigure}
    \begin{subfigure}[b]{0.32\textwidth}
        \includegraphics[width=\textwidth]{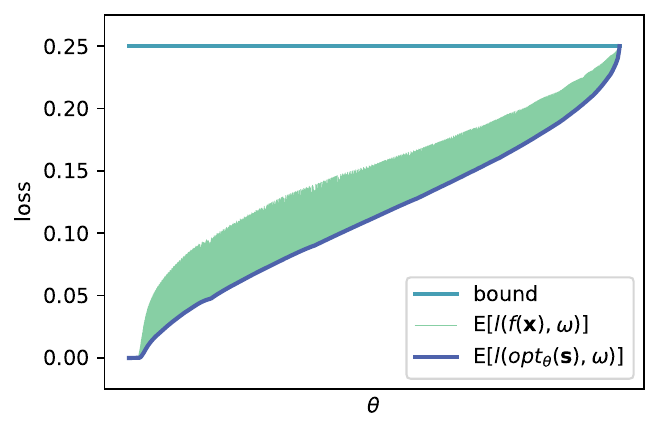}
        \caption{\textbf{Absolute}}
        \label{fig:image1}
    \end{subfigure}
    \begin{subfigure}[b]{0.32\textwidth}
        \includegraphics[width=\textwidth]{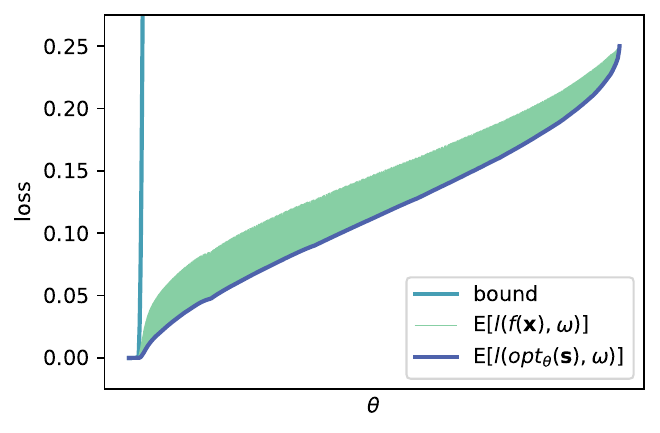}
        \caption{\textbf{Ratio}}
        \label{fig:image2}
    \end{subfigure}
    
    \caption{\textbf{Losses of the optimal aggregator under additive, ratio, and absolute robustness paradigms.} We pick a finite collection of the information structures, and the horizontal axis represents the information structures sorted by their losses under the omniscient aggregator. The vertical axis represents the loss. The bottom curve (navy blue) represents the optimal loss (lower bound), i.e., the loss of the omniscient aggregator. The middle area (green), is the range of loss of the optimal aggregator $f$ obtained by our algorithm for each paradigm. The top curve (cyan) represents the highest loss that can be afforded for each information structure without increasing the maximum regret.\fang{weird}\gyk{It is hard to explain this curve. Maybe put the formula here.}\hzh{The top curve represents the maximum loss that will not increase the regret for each information structure.} The worst case occurs when the top curve touches the middle curve.}
    \label{fig:optimization_goal}
\end{figure}

\subsection{Framework Overview}

At a high level, we view the robust aggregation problem as a zero-sum game between nature who picks the information structure $\theta$ and the aggregator who picks the aggregation function $f$. The challenge is that both players' action spaces are complex and high dimensional. However, a key insight is that despite this complexity, the aggregator's best response can be efficiently computed given a mixed strategy of nature, which is a distribution over information structures. Leveraging this observation, we introduce an algorithmic framework that addresses the challenges of robust aggregation. 

\begin{description}
\item [Finite] When the set of information structures $\Thetafin$ is finite but the aggregator's action set can be continuous, we employ online learning to solve the zero-sum game outlined as the following. 
\begin{description}
\item [Zero-Sum Game] To solve the robust aggregation problem as a zero-sum game between nature picking $\theta$, and an aggregator picking $f$, we prove the existence of a mixed Nash equilibrium in this game. Then we demonstrate that finding an approximate equilibrium allows us to obtain a near-optimal aggregator.

\item [Online Learning] Each information structure $\theta\in\Thetafin$ is an action of nature. For each round $t$, nature will select a probability distribution $\vw_{\Thetafin}^t$, the aggregator will select the best response $f^t$ and each $\theta$ will gain reward $u^t_\theta=R(f^t,\theta)$ under $f^t$. Nature aims to minimize the sum of $T$ rounds online learning regret $\max_{\theta\in \Thetafin}\sum_{t=1}^T u^t_\theta-\sum_{t=1}^T \vw_{\Thetafin}^t\cdot \vu^t
    $. We show that the aggregator's best response is efficient to compute through convex optimization with mild assumptions. Combining the above,  \cref{alg:1} can obtain arbitrarily small online learning regret in polynomial time by properly selecting the parameters.
\end{description}
\item [Continuous] Here both nature and the aggregator's action sets are continuous. In such cases, we will need to reduce dimensions, discretize action sets, and prove insensitivity properties. 
\begin{description}
\item [Dimension Reduction] The dimension is the minimal number of parameters to describe the continuous information structures $\Theta$. In general, because the size of signal space ${\mathcal{S}}$ can be infinite, $\Theta$ can be infinite-dimensional, and hence the weight vector $\vw_{\Theta}^t$ is also infinite-dimensional.  To run the online learning algorithm efficiently, we need to reduce the space of $\Theta$ to a lower dimension space  $\Theta^{sub}\subset\Theta$ so that minimizing the expected loss in the set $\Theta^{sub}$ is equivalent to minimizing the expected loss in the original set $\Theta$. By making this reduction, we can simplify the problem without loss of generality and reduce the time complexity of our algorithm. 
\item [Discretization] After the dimension reduction, the set $\Theta^{sub}$ can be represented by a finite number of parameters. However, the number of information structures in $\Theta^{sub}$ is still infinite because the parameters are continuous. To reduce its complexity, we will discretize $\Theta^{sub}$ and obtain a sketch set $\Theta^{dis}$ of $\Theta^{sub}$. To make sure the aggregator learned in the discrete information structures $\Theta^{dis}$ also performs well in the continuous case $\Theta$, the following properties are desired to control the generalization error. 
\begin{description}
\item [$(\epsilon,d)$-covering] For any information structure $\theta$ in $\Theta^{sub}$, we can find another information structure $\theta'$ in $\Theta^{dis}$ such that $\theta$ and $\theta'$ are close enough, i.e., $d(\theta,\theta')\leq \epsilon$ under some metric $d$.  
\item [Insensitivity] for any aggregator $f\in\mathcal{F}$, any close pair of $\theta$ and $\theta'$ under metric $d$, $R(f,\theta)\approx R(f,\theta')$.

%$f$ is smooth, i.e., $|f(x)-f(y)|\le L|x-y|$ under some distances.
\end{description}
With these properties, we can show that for any aggregator $f$, the maximal regret in space $\Theta^{dis}$ is close to space $\Theta$, i.e., $\max_{\theta\in\Theta^{dis}} R(f,\theta)\approx \max_{\theta\in\Theta }R(f,\theta)$. Thus, we can apply the online learning algorithm in the finite set $\Theta^{dis}$ to obtain a near-optimal aggregator $f$ for $\Theta$.
%With these properties, we can show that for any close pair of $\theta$ and $\theta'$, $R(f,\theta)\approx R(f,\theta')$. Therefore, $\max_{\theta\in\Theta^{dis}} R(f,\theta)\approx \max_{\theta\in\Theta }R(f,\theta)$.
%{ The primary challenge here is to analyze various insensitivity properties of the loss function, especially how the omniscient aggregator changes as the information structure changes.} 
\end{description}
\end{description}

\paragraph{Implementation Details} To implement the above framework, there are still some obstacles that need to be overcome regarding choosing the proper distribution metric in the space of the information structures, upper bounding the metric by coupling, and delicately dealing with the sensitivity of the additive regret. 
\begin{description}
\item[\textbf{Metric Choice}] Since the family of information structures is a set of probability distributions, we should pick a distribution metric. Total variation distance (TVD) is a natural choice in the discrete reports setting. However, TVD is overly sensitive in the Lipschitz setting. Instead, we employ a weaker distance, the earth's mover distance (EMD) in the Lipschitz setting. 
\item[\textbf{Coupling}] Recall that we need to show for any information structure $\theta$ in $\Theta^{sub}$, we can find another information structure $\theta'$ in $\Theta^{dis}$ such that $\theta$ and $\theta'$ are close enough. In the Lipschitz setting, we will adopt the EMD as the metric which is less sensitive. Using a duality form of EMD, we will construct a proper coupling between $\theta$ and $\theta'$ to upper bound the EMD between $\theta$ and $\theta'$. %Then EMD can be upper bounded by the sum of weight multiples the distance over each element of the coupling. We choose a proper $\theta'\in\Theta^{dis}$ for any $\theta\in\Theta$ and design different couplings according to the property of $\theta$.
%Then we transport reports $(a_i,b_j)$ to their corresponding nearest reports $(a_i',b_j')$, which is called local transport. Besides, we also do remote transport, which maps $(a_i,b_j)$ to $(a_{\neg i},b_j)$ or $(a_{i},b_{\neg j})$. When $|a_1-a_0|$ or $|b_1-b_0|$ is small, the cost of transport is small. On the other hand, if $|a_1-a_0|$ or $|b_1-b_0|$ is large, then the local transport will take up the most weight. So the coupling provides a good upper bound for the EMD of $\theta$ and $\theta'$.
\item [\textbf{Sensitivity}] Neither TVD nor EMD covering guarantee generalization straightforwardly, because the aggregators can be very sensitive as the information structure changes. To bound the sensitivity of the additive regret, we will analyze the $loss(f,\theta)$ and $loss(opt_{\theta},\theta)$ separately. We can bound $loss(f,\theta)$'s Lipschitz constant easily in both the discrete reports or Lipschitz aggregators setting. That is, $loss(f,\theta)\approx loss(f,\theta')$ when $\theta$ and $\theta'$ are close. However, the omniscient aggregator $opt_{\theta}$ is highly sensitive even in the simple conditionally independent model. We will prove that the forecasts are unlikely to happen in those sensitive areas so that $loss(opt_{\theta},\theta)$, which refers to the expected square loss of the omniscient aggregator, is also insensitive regarding $\theta$, i.e. $loss(opt_{\theta},\theta)\approx loss(opt_{\theta'},\theta')$ when $\theta$ and $\theta'$ are close.
\end{description}

\section{Related Work}

\paragraph{Robust Information Aggregation} In this work, we consider the problem of robust forecast aggregation. The question of finding a forecast aggregator which performs well in the worst-case for any information structure was first studied by  \citet{doi:10.1073/pnas.1813934115}. In their work, they propose low-regret aggregators for the two agents, binary state space setting when the agents receive conditionally independent signals. They provide upper and lower bounds on the additive regret, showing that the average prior scheme\footnote{The formula of the average prior aggregator is $\frac{x_1x_2(1-\frac{x_1+x_2}2)}{x_1x_2(1-\frac{x_1+x_2}2)+(1-x_1)(1-x_2)\frac{x_1+x_2}2}$.} provides a worst-case regret of $0.0260$ while a lower bound on the regret is $\frac18(5\sqrt{5}-11)\approx 0.0225$. They improve the regret to $0.0250$ by tuning the parameters in the average prior scheme. We consider their setting of conditionally independent signals and show that under some restrictions on the class of forecast aggregators and the class of information structures, a near-optimal robust aggregator can be computed efficiently. Our algorithm obtains an aggregator which has regret $0.0226$, which almost matches the lower bound. In further work on robust forecast aggregation, \citet{neyman2022you} consider the ratio robustness paradigm for a class of information structures which they term as \textit{projective substitutes}. We show empirically that the ratio robustness paradigm is not a good choice for forecast aggregation. \citet{LEVY2022105075} consider a setting where the aggregator knows the marginal distribution of the forecasters and wants to design an aggregator which works well for any unknown correlation structure among the forecasters. \citet{de2021robust} consider a similar setting to \citet{LEVY2022105075}, but with an ex-ante perspective, in which the aggregator knows the marginal distribution of signals of each expert in isolation but is unaware of any correlation between them. When the set of information structures is finite, we provide an efficient algorithm for a general robust aggregation problem. When the set of information structures is continuous, we mainly focus on the two-agent, conditionally independent information structure setting considered in \citet{doi:10.1073/pnas.1813934115}.

%In this work, we consider the robust information aggregation problem in the two-agent, conditionally independent information structure setting when the aggregator has no knowledge about the marginal distributions over signals of each expert individually.

\paragraph{Prior-Independent Optimal Design} 
The robust information aggregation problem we consider falls under the class of prior-independent optimal design problems. The challenge in a prior-independent design problem is to come up with a design that works well for any distributional input while competing against the optimal design which knows the distribution. \citet{dhangwatnotai2010revenue} show that in a single-item two-agent setting when the values are drawn independently and identically from regular distributions, the second-price auction is a 2-approximation in terms of the prior-independent ratio. For this setting, \citet{fu2015randomization} showed that the ratio of $2$ is not tight, while \citet{allouah2020prior} showed that the ratio must lie in the range $[1.80,1.95]$. \citet{hartline2020benchmark} resolved this question by showing a ratio of 1.91, while also proving that it is tight. More recently, \citet{anunrojwong2022} consider the prior-independent mechanism design problem for a single item multi-agent setting, for a variety of valuation distributions including i.i.d distributions, mixtures of i.i.d distributions, exchangeable distributions, and all joint distributions. They show that the prior-independent optimal mechanism across all settings is a second-price auction with a random reserve under the additive regret robustness paradigm. Our work is different from prior-independent mechanism design literature in the sense that we consider a prior-independent information aggregation problem. In prior-independent mechanism design, the space of distributions corresponds to the values of the agents in the mechanism, while in prior-independent information aggregation, they would correspond to all possible information structures, which potentially have a high dimension and complex formats. 

\paragraph{Online Learning \& Zero-Sum Games}
In this work, we consider the reduction from computing a near-optimal
prior-independent algorithm to computing an approximate Nash
equilibrium in a two-player zero-sum game. \citet{Dantzig-63} showed
that a Nash equilibrium in a finite bi-matrix zero-sum game can be computed
by a linear program. Following the result of \citet{Khachiyan-79},
finding an equilibrium in zero-sum games is tractable in the size of
the payoff matrix. There is a lot of work in the literature which looks at
computing an approximate Nash equilibrium in a finite game using online-learning
techniques. If both the players are employing no-regret learning
strategies (e.g., \citealp{FS-97}), then their average play converges
to a Nash equilibrium at rate $O(\frac{1}{\sqrt{T}})$ with a logarithmic dependence on the number of actions of the agents. \citet{DDK-11}
give a no-regret learning algorithm which when used by both players
simultaneously, their average play converges to an approximate Nash
equilibrium at rate $O(\frac{\log T}{T})$ and in the adversarial case
achieves the $O(\frac{1}{\sqrt{T}})$ rate. Recent works have looked at
learning strategies so as to obtain the last iterate convergence to the
minimax optimization problem when both players simultaneously employ a
specific no-regret learning strategy (e.g., \citealp{DP-19}). In our problem, the space of aggregators being optimized over is uncountably large and the space of information structures is uncountably large as well, and thus the above techniques cannot be applied directly to the original strategy spaces.

We utilize the framework proposed by \citet{HJS-23} to compute a near-optimal robust forecast aggregator. In their work, \citet{HJS-23} identify sufficient conditions under which there exists an algorithm to efficiently compute a near-optimal prior-independent online algorithm. At a high level, the three sufficient conditions are an \textit{efficient best response} property, an \textit{efficient utility computation} property, and a \textit{small-cover} property. While they apply the framework to online algorithms by showing an application to the ski-rental problem, we show that the properties are also satisfied in the setting of robust forecast aggregation. As mentioned in Section 1, the robust forecast aggregation problem can be viewed as an infinite zero-sum game. Motivated by computing equilibrium in infinite games from applications to machine learning, \citet{AADDF-23} computes an equilibrium in an infinite two-player zero-sum game with general payoffs assuming that each player has access to a best-response oracle. Their bounds however depend exponentially on the approximation factor desired. Thus, the framework of \citet{HJS-23} is best suited to our problem. While they apply this framework in the context of online algorithms, we show that this framework also can be applied in the context of information aggregation. The main challenge to overcome is to show an analogous \textit{small-cover} property over the space of information structures.
\section{Problem Statement}
In this section, we state our general model and main problem. Then we will discuss different settings derived from the general model.
\paragraph{Information Structure $\theta$, Aggregator $f$} We consider the following information aggregation problems in this paper. Suppose the world has a state $\omega\in\Omega$. There are $n$ agents and each agent $i$ receives a private signal $s_i$ in a signal space $\mathcal{S}_i$. Let $\mathcal{S}=\mathcal{S}_1\times \mathcal{S}_2\times...\times \mathcal{S}_n$. An information structure $\theta\in \Delta_{\Omega\times \mathcal{S}}$ is a joint distribution over the state and signals.
We denote a family of information structures as $\Theta\subseteq \Delta_{\Omega\times \mathcal{S}}$. The aggregator will ask each agent $i$ to report $x_i(s_i, \theta)$ which depends on her private signal and the information structure $\theta$.\footnote{For example, when $x_i(s_i,\theta)$ is the posterior report, $x_i(s_i,\theta)=\Pr[\omega=1|s_i]$ when $\Omega = \{0,1\}$} Let $\vx=(x_1,\cdots,x_n)\in X$, and $\rsupp(\theta)=\{\vx:\Pr_{\theta}[\vx]>0\}$ is the set of $\vx$ where the marginal probability of $\vx$ under $\theta$ is positive. An aggregator is a deterministic function $f\in \mathcal{F}$ which maps $\vx$ to space $Y$. We define a loss function $\ell(y,\omega):Y\times\Omega\mapsto \mathbb{R}^+$, indicating the loss suffered by the aggregator when the real state is $\omega$ and the aggregation result is $y$. To evaluate our aggregator, we define a generally unachievable benchmark, \emph{omniscient aggregator}, who knows the information structure $\theta$ and all signals of agents $\vs$ and reports
\[opt_\theta(\vs)=\argmin_g\E_\theta[\ell(g(\vs),\omega)]\]
to minimize the expected loss.

\paragraph{Min-max Problem} Given a family of information structures $\Theta$, a set of aggregators $\mathcal{F}$, we aim to minimize the expected loss in the worst information structure. Thus, we want to find the optimal function $f^*$ to solve the following min-max problem:
$$\inf_{f\in \mathcal{F}}\sup_{\theta\in\Theta}\E_\theta[\ell(f(\vx),\omega)]-\E_\theta[\ell(opt_\theta(\vs),\omega)].$$

Notice that the distribution of $\vx$ is determined by $\theta$ so all the randomness comes from $\theta$. $R(f,\theta)=\E_\theta[\ell(f(\vx),\omega)]-\E_\theta[\ell(opt_\theta(\vs),\omega)]$ is aggregator $f$'s regret on information structure $\theta$. $R(f,\Theta)=\sup_{\theta\in\Theta} R(f,\theta)$ is aggregator $f$'s maximal regret among the family of information structures $\Theta$. 

\paragraph{Forecast Aggregation} We mainly focus on forecast aggregation. Here each $x_i\in \Delta_{\Omega}$ is agent $i$'s Bayesian posterior for state $\omega$, i.e., $x_i(s_i, \theta)_{\omega}=\Pr_{\theta}[\omega|s_i]$. The aggregator will map all reports to a forecast in $\Delta_{\Omega}$.  We assume that agents’ reports are truthful.  To incentivize the agents to report truthfully, we may introduce additional reward to the agents by a proper scoring rule \citep{winkler1996scoring,gneiting2007strictly}, such as square loss or cross-entropy loss.

%\paragraph{Decision Aggregation} Another included setting is decision aggregation. The states represent $m$ actions. Each agent will report the maximum likelihood action based on his signal: $x_i(s_i,\theta)=\argmax_{\omega} \Pr_{\theta}[\omega|s_i]$. The aggregator will map reports to a pure or mixed strategy $\mathbf{y}\in \Delta_\Omega$ over the actions. To incentivize the agents, we may use the absolute loss $\ell(\vx,\omega)=||\vx-\mathbf{1}_\omega||_1$.

\subsection{Settings of Information structures $\Theta$ and Aggregators $\mathcal{F}$}

We will show that solving the min-max problem can be seen as solving the zero-sum game between nature, who selects the information structure, and the aggregator. However, in the basic setting where there is no restriction on $\mathcal{F}$ nor $\Theta$, it is difficult to solve the game as the action set is continuous and has a high dimension. Thus we introduce different variants of the setting which puts finite or Lipschiz restrictions on the set of information structures, or the set of the aggregators.

\paragraph{Finite Setting: Finite Set of Information Structures}
As a warm-up, we first consider the min-max problem with a finite number of information structures with a finite number of signals. That is, we aim to solve the following min-max problem:

$$\inf_{f\in \mathcal{F}}\sup_{\theta\in\Thetafin}R(f,\theta)$$
where $|\Thetafin|$ is finite, and for any $\theta\in \Thetafin$, $\left|\rsupp(\theta)\right|$ is bounded by a constant. Recall that $\rsupp(\theta)$ is the support of report space. In the finite setting, we will show that we can directly run the online learning algorithm to solve the zero-sum game because nature's action space is finite and the aggregator's best response can be computed efficiently. Later we will extend our method to a continuous set of special information structures. The analysis of the continuous set is based on the results in the finite setting but requires a more delicate sensitivity analysis of the objective function $R$.

\paragraph{Discrete Setting: Discrete Reports}
In practice, we usually use sliders to elicit forecasters' reports. Thus, a slightly relaxed setting is the setting where the reports $x_i$ are restricted to be in a finite discrete space, such as the space of discrete percentages. Note that even if the reports are discrete, the set of information structures can still be continuous because other parameters that are unrelated to the reports are continuous. The analysis of this setting is also an intermediate step in the analysis of the generally continuous information structures.

Formally, given a resolution scale $N\in \mathbb{N}$, we define $[1/N]:=\left\{0,\frac1N, \frac2N, \cdots, 1\right\}$ and $[n] = \{1,2,\dots, n\}$. We will focus on the following set of information structures. 

$$\ThetaN =\left\{\theta\in\Delta_{\Omega\times S}: \forall i\in [n], s_i\in \mathcal{S}_i,x_i(s_i, \theta)\in [1/N]\right\}.$$

%%\begin{equation}\label{eq:discrete_infostruct}    \bartheta_N = \{\theta\in\Delta_{\Omega\times S}: \forall i, |\mathcal{S}_i| = 2, \forall s_i\in \mathcal{S}_i,N\cdot\Pr_\theta[\omega=1|s_i]\in \mathbb{N}\}.\end{equation}

We aim to solve the following min-max problem:
$$\inf_{f\in \mathcal{F}}\sup_{\theta\in\ThetaN}R(f,\theta).$$

\paragraph{Lipschitz Setting: Lipschitz Aggregators}  We consider a special set of aggregators--the Lipschitz aggregator. The ``robustness'' in the robust forecast paradigm refers to the ability to handle the lack of knowledge regarding the information structure. Lipschitz aggregators provide an additional level of robustness by handling small perturbations in agents' reports. Additionally, this restriction on the set of aggregators will allow nature to have a generally continuous action set. Formally, we define the $L$-Lipschitz function: 

\begin{definition}[$L$-Lipschitz, \citealp{sohrab2003basic}]
  A function $f:X\mapsto \R$ is \textbf{\textit{$L$-Lipschitz}} if 
$$|f(\vx)-f(\vx')|\le L\|\vx-\vx'\|_1$$
for all $\vx, \vx'\in X$. Additionally, $f$ has \textbf{\textit{Lipschitz constant $L$}} if $f$ is $L$-Lipschitz but not $L'$-Lipschitz for all $L'<L$.  We will use $\|f\|_{Lip}$ to denote $f$'s Lipschitz constant, and $\|f\|_\infty = \sup_{\vx}|f(\vx)|$.  
\end{definition}

Let $\mathcal{F}_L=\{f:\|f\|_{Lip}\le L\}$. We aim to solve the following min-max problem:
$$\inf_{f\in \mathcal{F}_L}\sup_{\theta\in\Theta}R(f,\theta).$$

\subsection{Binary State and Two Conditionally Independent Agents}\label{sec:pre_bci}

In the robust information aggregation problem, we consider the setting of two agents with conditionally independent signals. We first reason as to why this is the right model to consider when considering the robust aggregation problem. We then move on to the loss function we consider, stating the functional form of the omniscient aggregator. We can directly solve the finite information structures setting for general models. However, to efficiently solve the discrete reports setting and Lipschitz aggregators setting, we still need to reduce the dimension and find an appropriate discretization. We formalize the discretization on the information structures that is useful to our setting.

\citet{doi:10.1073/pnas.1813934115} propose a model which assumes a binary state $\omega=\{0,1\}$ and two agents with conditionally independent signals. They show that the min-max problem is trivial with a set of general information structures because nature can always find an information structure where no aggregator can be better than the naive one which always picks one agent's report. Within the conditionally independent structure, as the number of agents goes to infinite, the min-max problem becomes trivial again because the benchmark omniscient aggregator will be ``too good''. Therefore, like \citet{doi:10.1073/pnas.1813934115}, we focus on the setting of two conditionally independent agents. 

Formally, we consider all possible conditionally independent information structures $\Thetaci$ so that for each $\theta\in \Thetaci$, for all $s_1\in \mathcal{S}_1,s_2\in \mathcal{S}_2,\omega\in\Omega=\{0,1\}$, $\Pr_\theta[s_1,s_2|\omega]=\Pr_\theta[s_1|\omega]\Pr_\theta[s_2|\omega]$. Agents share $\theta$, as well as the common prior for the state $\mu=\Pr_\theta[\omega=1]$. Each agent $i$ reports her posterior $x_i(s_i,\theta)=\Pr_\theta[\omega=1|s_i]$.

In this binary state setting, both the reports $x_1,x_2$ and the aggregator's output $y$ are in $[0,1]$. We use the square loss function, i.e. $\ell(y,\omega)=(y-\omega)^2$.\fang{Are $loss$ general loss and $\ell$ for square loss?} So we only need to consider the set of aggregators $\mathcal{F}=\{f:f(x_1,x_2)\in[0,1],\forall (x_1,x_2)\in [0,1]^2\}$ that maps $[0,1]^2$ to $[0,1]$. The omniscient aggregator's posterior is $opt_\theta(\vs)=\Pr_\theta[\omega=1|\vs]$. 

This special model has several properties that simplify the robust forecast aggregation problem. 

\paragraph{Properties} First, with the conditionally independent structure, the omniscient aggregator's Bayesian posterior only depends on the reports $\vx$ and prior $\mu=\Pr_\theta[\omega=1]$. Thus, we use $g_{\mu}(\vx)$ to denote the omniscient aggregator's Bayesian posterior.
\begin{lemma}[Bayesian Posterior, \citealp{bordley1982multiplicative}]\label{lem:prediction2posterior}
Given an information structure with prior $\mu$, the omniscient aggregator's Bayesian posterior given forecasts $\vx$ is 
\begin{equation}\label{eq:omniscient}
    g_{\mu}(\vx)=\frac{(1-\mu)x_1 x_2}{(1-\mu)x_1 x_2+\mu(1-x_1)(1-x_2)}.
\end{equation}
 Let $\bar{\mu} = \mu$, $\bar{x}_1 = 1-x_1$, and $\bar{x}_2 = 1-x_2$. The above formula can be simplified as $g_\mu(\vx) = \frac{\bar{\mu}x_1x_2}{\bar{\mu}x_1x_2+\mu\bar{x}_1\bar{x}_2}$. 
\end{lemma}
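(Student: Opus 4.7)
The plan is to apply Bayes' rule twice, exploiting the conditional independence of $s_1, s_2$ given $\omega$. First, I would write the joint posterior using Bayes' rule:
\[
\Pr_\theta[\omega=1 \mid s_1, s_2] = \frac{\Pr_\theta[s_1, s_2 \mid \omega=1]\, \mu}{\Pr_\theta[s_1, s_2 \mid \omega=1]\, \mu + \Pr_\theta[s_1, s_2 \mid \omega=0]\, (1-\mu)},
\]
and then factor each $\Pr_\theta[s_1, s_2 \mid \omega]$ as $\Pr_\theta[s_1 \mid \omega] \Pr_\theta[s_2 \mid \omega]$ using the conditional independence assumption that defines $\Thetaci$. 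This reduces the computation of the joint posterior to knowing, for each agent, the likelihood ratio $\Pr_\theta[s_i \mid \omega=1]/\Pr_\theta[s_i \mid \omega=0]$.

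Second, I would convert each individual likelihood ratio into the promised form involving $x_i$ and $\mu$. Since $x_i = \Pr_\theta[\omega=1 \mid s_i]$ by definition, a one-line application of Bayes' rule to the single signal $s_i$ gives
\[
\frac{\Pr_\theta[s_i \mid \omega=1]}{\Pr_\theta[s_i \mid \omega=0]} \;=\; \frac{x_i\, (1-\mu)}{(1-x_i)\, \mu}.
\]
Substituting this identity (once for $i=1$, once for $i=2$) into the posterior odds $\Pr_\theta[\omega=1 \mid s_1, s_2]/\Pr_\theta[\omega=0 \mid s_1, s_2]$ causes the factors of $\mu/(1-\mu)$ to partially cancel, leaving
\[
\frac{\Pr_\theta[\omega=1 \mid s_1, s_2]}{\Pr_\theta[\omega=0 \mid s_1, s_2]} \;=\; \frac{(1-\mu)\, x_1 x_2}{\mu\, (1-x_1)(1-x_2)}.
\]
Converting odds back to a probability yields the claimed expression for $g_\mu(\vx)$; the simplified form using $\bar{\mu}, \bar{x}_1, \bar{x}_2$ is then just a notational rewrite.

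I should also justify that $opt_\theta(\vs)$ equals this posterior in the first place. Under square loss $\ell(y,\omega) = (y-\omega)^2$, the Bayes-optimal point estimate minimizing $\E_\theta[\ell(y,\omega) \mid \vs]$ is the conditional mean $\E_\theta[\omega \mid \vs]$, which for $\omega \in \{0,1\}$ coincides with $\Pr_\theta[\omega=1 \mid \vs]$. So $opt_\theta(\vs) = g_\mu(\vx(\vs))$ whenever the reports determine the joint posterior, which is precisely what conditional independence guarantees.

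I do not expect any genuine obstacle: this is a textbook Bayes computation. The only mild care needed is to restrict attention to signal pairs $(s_1,s_2) \in \rsupp(\theta)$ so that all conditional probabilities are well-defined, and to note that the formula depends on $\theta$ only through $\mu$ and the reports $\vx$ (and not on the signal spaces $\mathcal{S}_i$), which is the key structural consequence of conditional independence that subsequent sections will exploit.
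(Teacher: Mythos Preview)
Your argument is correct and is the standard derivation. Note, however, that the paper does not supply its own proof of this lemma: it is stated with a citation to Bordley (1982) and used as a known fact, so there is no in-paper proof to compare against. Your write-up would serve perfectly well as a self-contained justification if one were desired.
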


We use $\Xdis{\theta} \in\Delta_{[0,1]^2}$ to denote the marginal distribution over the reports. The following lemma states its formula. $\Pr_\theta[x_1]$ is the probability that agent 1 report $x_1$ and $\Pr_\theta[x_2]$ is the probability that agent 2 report $x_2$.
%\fang{I do not like $\Xdis{\theta}$ for the abuse of subscript, e.g., $x_1$.  Can we use $P_\theta(\Xdis)$, $\mathcal{D}_\theta(\Xdis)$, $\mathcal{D}_\Xdis(\theta)$?}

%\cite{doi:10.1073/pnas.1813934115}
\fang{In my opinion, these three seems too trivial to be lemmas.}
\begin{lemma}[Reports Distribution, \citealp{doi:10.1073/pnas.1813934115}]
The probability of reports $(x_1,x_2)$ is
$$\Pr_\theta[x_1,x_2]=\Pr_\theta[x_1]\Pr_\theta[x_2]\left(\frac{(1-x_1)(1-x_2)}{1-\mu}+\frac{x_1x_2}{\mu}\right).$$
\end{lemma}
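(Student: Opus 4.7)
The plan is to compute $\Pr_\theta[x_1,x_2]$ by conditioning on the latent state $\omega$ and exploiting conditional independence. First I would observe that although conditional independence is stated for signals, it lifts to reports: since $x_i$ is a deterministic function of $s_i$, the conditional distribution factors as
\[
\Pr_\theta[x_1,x_2\mid \omega] \;=\; \sum_{s_1:\,x_1(s_1,\theta)=x_1}\sum_{s_2:\,x_2(s_2,\theta)=x_2}\Pr_\theta[s_1\mid\omega]\Pr_\theta[s_2\mid\omega] \;=\; \Pr_\theta[x_1\mid\omega]\Pr_\theta[x_2\mid\omega].
\]
By total probability, $\Pr_\theta[x_1,x_2]=(1-\mu)\Pr_\theta[x_1\mid\omega=0]\Pr_\theta[x_2\mid\omega=0]+\mu\Pr_\theta[x_1\mid\omega=1]\Pr_\theta[x_2\mid\omega=1]$, so the task reduces to expressing $\Pr_\theta[x_i\mid\omega]$ in terms of $\Pr_\theta[x_i]$, $x_i$, and $\mu$.

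The next step is the single identity $\Pr_\theta[\omega=1\mid x_i]=x_i$. This is the assertion that the agent's reported posterior is itself the posterior given only the report, which follows from the tower property: $\Pr_\theta[\omega=1\mid x_i]=\E_\theta[\Pr_\theta[\omega=1\mid s_i]\mid x_i]=\E_\theta[x_i\mid x_i]=x_i$, using that $x_i$ is $\sigma(s_i)$-measurable. Applying Bayes' rule then yields $\Pr_\theta[x_i\mid\omega=1]=x_i\Pr_\theta[x_i]/\mu$ and $\Pr_\theta[x_i\mid\omega=0]=(1-x_i)\Pr_\theta[x_i]/(1-\mu)$.

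Finally I substitute these into the total-probability expansion. The $(1-\mu)$ and $\mu$ factors cancel one power in the denominators, and the $\Pr_\theta[x_1]\Pr_\theta[x_2]$ factors out, leaving exactly
\[
\Pr_\theta[x_1,x_2]=\Pr_\theta[x_1]\Pr_\theta[x_2]\left(\frac{(1-x_1)(1-x_2)}{1-\mu}+\frac{x_1x_2}{\mu}\right),
\]
as claimed. There is no real obstacle; the only subtlety worth stating carefully is the lift from conditional independence of signals to conditional independence of reports, and the tower-property identity $\Pr_\theta[\omega=1\mid x_i]=x_i$, both of which are immediate from $x_i$ being a function of $s_i$ and from the definition of the posterior report.
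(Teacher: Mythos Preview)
Your proof is correct and is exactly the standard argument: condition on $\omega$, use conditional independence to factor, invert via Bayes with the identity $\Pr_\theta[\omega=1\mid x_i]=x_i$, and substitute. The paper does not supply its own proof of this lemma---it simply cites \citet{doi:10.1073/pnas.1813934115}---so there is nothing to compare against; your derivation is precisely the computation one would carry out.
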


The expected square distance between the aggregator's output and the omniscient aggregator's output. 

%We only need to approximate the posterior under this setting:
\begin{lemma}[Additive Regret, \citealp{doi:10.1073/pnas.1813934115}] For any aggregator $f\in\mathcal{F}$, 
$$R(f,\theta)=\E_\theta[(f(\vx)-g_{\mu}(\vx))^2].$$
\end{lemma}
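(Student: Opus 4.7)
The plan is to reduce the claim to the standard orthogonal-projection identity for square loss, using two ingredients: (i) the omniscient aggregator for square loss is the conditional expectation of $\omega$ given the available information, and (ii) in the conditionally independent binary-state model, the forecast vector $\vx$ is a sufficient statistic for $\omega$ given $\vs$, so this conditional expectation coincides with $g_{\mu}(\vx)$.

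First I would observe that by definition $opt_\theta(\vs) = \argmin_g \E_\theta[(g(\vs)-\omega)^2]$, and since $\omega\in\{0,1\}$, the pointwise minimizer is $opt_\theta(\vs) = \E_\theta[\omega\mid\vs] = \Pr_\theta[\omega=1\mid \vs]$. Next I would invoke conditional independence: because $s_1 \perp s_2 \mid \omega$, Bayes' rule gives $\Pr_\theta[\omega=1\mid s_1,s_2]$ as a function of $(x_1,x_2)$ alone. Applying the Bayesian Posterior lemma (Lemma on $g_\mu$), this function is exactly $g_\mu(\vx)$; in particular $opt_\theta(\vs) = g_\mu(\vx)$ almost surely under $\theta$.

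With this identification, I would expand
\begin{align*}
(f(\vx)-\omega)^2
&= \bigl(f(\vx)-g_\mu(\vx)\bigr)^2 + 2\bigl(f(\vx)-g_\mu(\vx)\bigr)\bigl(g_\mu(\vx)-\omega\bigr) + \bigl(g_\mu(\vx)-\omega\bigr)^2,
\end{align*}
and take expectation under $\theta$. The cross term vanishes by the tower property: conditioning on $\vx$, the factor $f(\vx)-g_\mu(\vx)$ is measurable, and $\E_\theta[\omega\mid\vx] = g_\mu(\vx)$ (since $\vx$ is a sufficient statistic), so $\E_\theta[(f(\vx)-g_\mu(\vx))(g_\mu(\vx)-\omega)\mid \vx] = 0$. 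Rearranging yields
\[
\E_\theta[(f(\vx)-\omega)^2] - \E_\theta[(g_\mu(\vx)-\omega)^2] = \E_\theta[(f(\vx)-g_\mu(\vx))^2],
\]
and since the left-hand side is precisely $R(f,\theta)$ (using $opt_\theta(\vs)=g_\mu(\vx)$ in the benchmark term), the lemma follows.

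There is no real obstacle here; this is the standard Pythagorean decomposition of mean-squared error around the Bayes predictor. The only subtle step is the sufficient-statistic claim $\Pr_\theta[\omega=1\mid\vs] = g_\mu(\vx)$, which I would state explicitly and justify by direct computation through Bayes' rule and conditional independence, so that one can replace $opt_\theta(\vs)$ by $g_\mu(\vx)$ without ambiguity before performing the expansion.
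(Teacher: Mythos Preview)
Your proof is correct. The paper does not supply its own proof of this lemma; it states it as a cited result from \citet{doi:10.1073/pnas.1813934115} and treats it as background (an author comment in the source even remarks that these facts are ``too trivial to be lemmas''). Your argument---identifying $opt_\theta(\vs)=g_\mu(\vx)$ via conditional independence and then applying the Pythagorean decomposition of squared loss around the Bayes predictor---is exactly the standard justification one would expect, and nothing is missing.
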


\paragraph{Dimension Reduction}

%Since the argument in \Cref{lem:reduction} also holds in this setting, we only need to consider the binary signal space $\bartheta_N$:

\citet{doi:10.1073/pnas.1813934115} prove that it is sufficient, with conditionally independent information structures, to consider binary signals. Formally, let $\Thetabci$ denote the set of conditionally independent information structures with binary signals where for all $i$, $|\mathcal{S}_i|=2$.  For any aggregator $f$, to maximize the regret, it is sufficient for nature to pick from $\Thetabci$, i.e., $R(f, \Thetaci)=R(f,\Thetabci)$. We extend the dimension reduction results to the discrete reports version of $\Thetaci$ (\Cref{lem:reduction}).

Formally, we write the discrete report versions of $\Thetaci$ and $\Thetabci$ as follows.
\begin{equation}\label{eq:discrete_infostruct}    
% \ThetaciN = \{\theta\in\Thetaci: \forall i, \forall s_i\in \mathcal{S}_i,\Pr_\theta[\omega=1|s_i]\in \{0,\frac1N, \frac2N, \cdots, 1\}\}
\begin{aligned}
    &\ThetaciN = \{\theta\in\Thetaci: \forall i, \forall s_i\in \mathcal{S}_i,x_i(s_i, \theta)\in [1/N]\},\text{ and }\\
&\ThetabciN = \{\theta\in\Thetaci: \forall i, |\mathcal{S}_i| = 2, \forall s_i\in \mathcal{S}_i,x_i(s_i, \theta)\in [1/N]\}
\end{aligned}
\end{equation}
where the reports are discrete with a resolution scale $N$. 
Alternatively, $\ThetaciN = \Thetaci\cap \ThetaN$ and $\ThetabciN = \Thetabci\cap \ThetaN$.

% Let $\ThetabciN$ denote the set of conditionally independent $\theta$ with binary $\mathcal{S}_1$ and $\mathcal{S}_2$, and discrete reports with resolution scale $N$. 

% \begin{equation}\label{eq:discrete_infostruct}    
% % \ThetabciN = \{\theta\in\Thetaci: \forall i, |\mathcal{S}_i| = 2, \forall s_i\in \mathcal{S}_i,\Pr_\theta[\omega=1|s_i]\in \{0,\frac1N, \frac2N, \cdots, 1\}\}
% \ThetabciN = \{\theta\in\Thetaci: \forall i, |\mathcal{S}_i| = 2, \forall s_i\in \mathcal{S}_i,x_i(s_i, \theta)\in \{0,\frac1N, \frac2N, \cdots, 1\}\}
% \end{equation}

\begin{lemmarep}[Dimension Reduction for Discrete Setting]\label{lem:reduction}For any aggregator $f$, $R(f, \ThetaciN)=R(f,\ThetabciN)$.
\end{lemmarep}

To show the dimension reduction, based on \cite{doi:10.1073/pnas.1813934115}'s idea, we write each information structure as a convex combination of ``basic'' information structures which only have binary supports. When the action space is $\Thetaci$, nature's optimization problem has a multi-linear format. Therefore, it is sufficient for nature to pick ``basic'' information structures with binary supports.  This property holds because the discrete version of $\Thetaci$ still has ``basic'' discrete information structures with binary supports. We defer the proof to \Cref{apx:reduction}. %In fact, our proof works for any information structures with finite report space.

%\begin{lemma}\label{lem:reduction}For any aggregator $f$, $R(f,\Theta)=R(f,\bitheta)$ where $\bitheta$ denotes all discrete conditionally independent $\theta$ with binary $\mathcal{S}_1$ and $\mathcal{S}_2$. That is, $\bitheta=\{\theta|\theta\in\Theta,|\mathcal{S}_1|=2,|\mathcal{S}_2|=2\}.$\end{lemma}
\begin{toappendix}
\label{apx:reduction}
\end{toappendix}

\begin{appendixproof}[Proof of \Cref{lem:reduction}]
    Suppose $q_i^x=\Pr_\theta[x_i=x]$ for any $x\in [1/N]$. Then $\mathbf{q}_i$ is a non-negative vector with dimension $N+1$. The regret can be represented by
    \begin{align*}
        R(f,\theta)=&\E_\theta[(f(x_1,x_2)-g_\mu(x_1,x_2))^2]\\
        =&\sum_{x_1,x_2}\Pr_\theta[x_1,x_2](f(x_1,x_2)-g_\mu(x_1,x_2))^2\\
        =&\sum_{x_1,x_2}q_1^{x_1}q_2^{x_2}\left(\frac{(1-x_1)(1-x_2)}{1-\mu}+\frac{x_1x_2}{\mu}\right)\left(f(x_1,x_2)-\frac{(1-\mu)x_1 x_2}{(1-\mu)x_1 x_2+\mu(1-x_1)(1-x_2)}\right)^2\\
    \end{align*}
    which means it is a multilinear function of $\mathbf{q_1},\mathbf{q_2}$. And $\mathbf{q_1},\mathbf{q_2}$ should satisfy the following linear constraints:
    $$\sum_{x}q_i^xx=\mu\text{ and }\sum_x q_i^x=1\text{, for all } i=1,2$$
    We first fix $\mathbf{q}_2$, so that we can view $R(f,\ThetaciN)=\sup_{\theta\in\ThetaciN} R(f,\theta)$ as a linear programming problem with variable $\mathbf{q}_1$. By the fundamental theorem of linear programming, we know that there must exist an optimal basic feasible solution for this problem (Luenberger and David G, 1973). Here the constraint matrix is $2\times (N+1)$, so the basic solution has at most $2$ non-zero entries. Similarly, for fixed $\mathbf{q}_1$, we can find an optimal basic feasible solution for $\mathbf{q}_2$ with $\le 2$ non-zero entries. Thus, it is sufficient to consider information structures in $\ThetabciN$.
\end{appendixproof}

\paragraph{Choice of Coordinates} In $\ThetabciN$ or $\Thetabci$, each $\theta$ can be encoded five parameters. For simplicity, we assume $s_i\in\{0,1\},i=1,2$. Here we present two possible parametrizations used in this paper. First, \emph{prediction parametrization} $(\mu, a_0, a_1, b_0, b_1)$ uses the prior for state $\mu=\Pr_{\theta}[\omega=1]$, and the agents' reports given different private signals 
\begin{equation}\label{eq:para1}
    \begin{cases}
a_0=\Pr_{\theta}[\omega=1|s_1=0]\\
a_1=\Pr_{\theta}[\omega=1|s_1=1]\\
b_0=\Pr_{\theta}[\omega=1|s_2=0]\\
b_1=\Pr_{\theta}[\omega=1|s_2=1]
\end{cases}.
\end{equation}
%$a_0=\Pr_{\theta}[\omega=1|s_1=0],a_1=\Pr_{\theta}[\omega=1|s_1=1],b_0=\Pr_{\theta}[\omega=1|s_2=0],b_1=\Pr_{\theta}[\omega=1|s_2=1]$. 

One the other hand, \emph{probability parametrization} $(\mu, p_0, p_1, q_0, q_1)$ uses $\mu=\Pr_{\theta}[\omega=1]$, and the probability of receiving each signal 
\begin{equation}\label{eq:para2}
\begin{cases}
p_0=\Pr_{\theta}[s_1=1|\omega=0]\\
p_1=\Pr_{\theta}[s_1=1|\omega=1]\\
q_0=\Pr_{\theta}[s_2=1|\omega=0]\\
q_1=\Pr_{\theta}[s_2=1|\omega=1]
\end{cases}
.
\end{equation}
%$p_0=\Pr_{\theta}[s_1=1|\omega=0],p_1=\Pr_{\theta}[s_1=1|\omega=1],q_0=\Pr_{\theta}[s_2=1|\omega=0],q_1=\Pr_{\theta}[s_2=1|\omega=1]$.

We will pick the parametrization properly based on the specific scenario and the analysis being conducted. For example, in the discrete report setting, it is more convenient to pick the first prediction parametrization as it directly contains the reports. When we want to measure the distance between the information structures in the analysis, it is more convenient to use the second probability parametrization. 

We prove an auxiliary result (\Cref{lem:construct_info}) that there is a bijection between these two coordinates. We defer the proof to \Cref{apx:reduction}.
\begin{lemmarep}\label{lem:construct_info}
    For any $a_0, a_1, b_0, b_1$ and $\mu$ in $[0,1]$ with $a_0<\mu<a_1$ and $b_0<\mu<b_1$, there exists a unique conditional independent information structure with binary signals $\theta = (\mu, p_0, p_1, q_0, q_1)\in \Thetabci$ where 
    \begin{equation}\label{eq:construct_info}
        p_1 = \frac{a_0(a_1-\mu)}{\mu (a_1-a_0)}, p_0 = \frac{(1-a_0)(a_1-\mu )}{(1-\mu )(a_1-a_0)}, q_1 = \frac{b_0(b_1-\mu )}{\mu (b_1-b_0)}\text{, and }q_0 = \frac{(1-b_0)(b_1-\mu )}{(1-\mu )(b_1-b_0)}
    \end{equation}
    so that $\rsupp(\theta) = \{(a_0, b_0), (a_1, b_0), (a_0, b_1), (a_1, b_1)\}$.
\end{lemmarep}

\begin{appendixproof}
Note that given $a_0, a_1, b_0, b_1$ and $\mu$, the condition $\rsupp(\theta) = \{(a_0, b_0), (a_1, b_0), (a_0, b_1), (a_1, b_1)\}$ induces a system of linear equations.
$$\left\{\begin{aligned}
    &a_0(\mu p_1+(1-\mu) p_0) = \mu p_1\\
    &a_1(\mu(1-p_1)+(1-\mu) (1-p_0)) = \mu(1-p_1)\\
    &b_0(\mu q_1+(1-\mu) q_0) = \mu q_1\\
    &b_1(\mu(1-q_1)+(1-\mu) (1-q_0)) = \mu(1-q_1)
\end{aligned}\right.$$
By direct computation, we can show the system of linear equations is full rank and \eqref{eq:construct_info} is the unique solution.  

On the other hand, we need to show $p_0, p_1, q_0, q_1\in [0,1]$.  Because $a_0<\mu<a_1$, $p_1, p_0\ge 0$.  Additionally, because $1-p_1 = \frac{(\mu-a_0)a_1}{\mu (a_1-a_0)}\ge0$, and $1-p_0 = \frac{(1-a_1)(\mu-a_0)}{(1-\mu)(a_1-a_0)}\ge 0$, so $p_0, p_1\le 1$.  We have $p_0, p_1\in [0,1]$ and $q_0, q_1\in [0,1]$ by symmetry.  Therefore, $\theta$ is a valid conditional independent information structure with binary signals.
\end{appendixproof}

\section{Warm-up: Finite Set of Information Structures}\label{sec:finite}
In this section, we provide an algorithm which computes an approximate optimal robust aggregator in the finite setting when we can access an efficient $\epsilon$-best response oracle that outputs an $\epsilon$-approximate optimal aggregator for any distribution over information structures. With an $\epsilon$-best response oracle, we compute an approximate optimal robust aggregator using online learning techniques where the aggregator acts as an adversary to the distribution over information structures. We first provide a definition for the $\epsilon$-best response oracle, following which we give a FPTAS to compute a near optimal robust aggregator. 

\begin{definition}[$\epsilon$-Best Response]
Given $\epsilon\ge 0$, sets $\Thetafin$ and $\mathcal{F}$ with regret $R$, an \emph{$\epsilon$-best response oracle} inputs a distribution $\vw_{\Thetafin}\in\Delta_{\Thetafin}$, and outputs an aggregator $f$ such that for any $f'\in \mathcal{F}$, 
    $\E_{\theta\sim \vw_{\Thetafin}^t}[R(f,\theta)]\le \E_{\theta\sim \vw_{\Thetafin}^t}[R(f',\theta)]+\epsilon$.
\end{definition}
We show that when the set of information structures is finite, with convexity, continuity, and compactness conditions on regret $R$, there exists an FPTAS to solve the robust aggregation problem described in \Cref{alg:1}. \Cref{thm:finite} states our main theorem. 

\begin{theorem}\label{thm:finite}
    Suppose $|\Thetafin|=n$ and the size of support $\rsupp(\theta)$ is a constant for every $\theta\in\Thetafin$. When $\mathcal{F}$ is compact, the loss function $\ell$ is convex and continuous regarding $f$, lies in $[0,1]$, and there exists a  polynomial time $poly(n,1/\epsilon)$ oracle for the $\epsilon$-best response for any $0<\epsilon<1$, \Cref{alg:1} is an FPTAS which finds an $\epsilon$-optimal aggregator $\hat{f}$ over information structures $\Thetafin$ so that $\sup_{\theta\in\Thetafin} R(\hat{f},\theta)\le\inf_{f\in\mathcal{F}}\sup_{\theta\in\Thetafin} R(f,\theta)+\epsilon$.
\end{theorem}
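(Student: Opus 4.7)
\textbf{Proof plan for Theorem \ref{thm:finite}.}

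The plan is to cast the problem as a two-player zero-sum game between nature, whose pure strategies are the $n$ information structures in $\Thetafin$ and whose mixed strategies live in the simplex $\Delta_{\Thetafin}$, and the aggregator, whose strategies live in the compact set $\mathcal{F}$. The payoff is $R(f,\theta)$ extended linearly in $\vw\in\Delta_{\Thetafin}$. Linearity in $\vw$, convexity of $R(\cdot,\theta)$ in $f$ (inherited from convexity of $\ell$), continuity of $R$, compactness of $\mathcal{F}$, and compactness of $\Delta_{\Thetafin}$ let me invoke Sion's minimax theorem, so there is a common game value $V=\inf_f\sup_\theta R(f,\theta)$, and it suffices to compute an $\epsilon$-approximate minimax strategy for the aggregator.

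To compute that strategy, I would run the standard no-regret vs.\ best-response recipe (this is essentially \Cref{alg:1} as described in the framework overview). At each round $t=1,\dots,T$ nature plays a distribution $\vw^t\in\Delta_{\Thetafin}$ maintained by the multiplicative weights / Hedge algorithm over the $n$ experts $\theta\in\Thetafin$, while the aggregator responds with $f^t\in\mathcal{F}$ produced by the $\epsilon_0$-best response oracle on input $\vw^t$, where $\epsilon_0 := \epsilon/3$. Each expert $\theta$ then receives reward $u^t_\theta = R(f^t,\theta)\in[0,1]$ (boundedness follows from $\ell\in[0,1]$). Since the rewards are in $[0,1]$, Hedge with learning rate $\eta=\sqrt{\ln n / T}$ guarantees nature's external regret is at most $O(\sqrt{T\ln n})$, i.e.\ $\max_\theta \sum_t u^t_\theta - \sum_t \vw^t\cdot \vu^t \le O(\sqrt{T\ln n})$.

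Combining the two guarantees: for any fixed $f'\in\mathcal{F}$,
\begin{align*}
\sum_{t=1}^T \vw^t\cdot \vu^t &\le \sum_{t=1}^T \E_{\theta\sim\vw^t}[R(f',\theta)] + \epsilon_0 T, \\
\max_{\theta\in\Thetafin}\sum_{t=1}^T R(f^t,\theta) &\le \sum_{t=1}^T \vw^t\cdot \vu^t + O(\sqrt{T\ln n}).
\end{align*}
Dividing by $T$ and averaging over the $f^t$, taking $\hat f := \tfrac{1}{T}\sum_{t=1}^T f^t$ (which lies in $\mathcal{F}$ when $\mathcal{F}$ is convex, as stated in the framework overview; otherwise output the uniform mixture over $f^1,\dots,f^T$), convexity of $R(\cdot,\theta)$ in $f$ gives $R(\hat f,\theta)\le \tfrac{1}{T}\sum_t R(f^t,\theta)$. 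Therefore
\begin{equation*}
\max_{\theta\in\Thetafin} R(\hat f,\theta) \;\le\; \inf_{f'\in\mathcal{F}} \sup_{\theta\in\Thetafin} R(f',\theta) + \epsilon_0 + O\!\bigl(\sqrt{\ln n / T}\bigr).
\end{equation*}
Choosing $T = \Theta(\ln n / \epsilon^2)$ makes the online-learning slack at most $\epsilon/3$, so the total error is at most $\epsilon$, and the total runtime is $T$ calls to the $\epsilon_0$-best response oracle plus Hedge updates, i.e.\ $\mathrm{poly}(n,1/\epsilon)$.

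The routine parts (Hedge bound, convexity of $R$ in $f$, boundedness) go through mechanically; the genuine content is verifying that the minimax theorem applies here (where compactness of $\mathcal{F}$, continuity of $R$, and convexity in $f$ are the hypotheses that are used) and that the convex-combination step $\hat f\in\mathcal{F}$ is legitimate. The only delicate modeling choice is that the aggregator's best-response step must be implementable in time $\mathrm{poly}(n,1/\epsilon)$; this is assumed as a hypothesis of the theorem, but it is worth noting that the assumption that each $\rsupp(\theta)$ has constant size is what keeps the dimension of the optimization over $f$ restricted to its behaviour on $O(n)$ report profiles, so the oracle call is genuinely polynomial rather than merely polynomial in some ambient function space.
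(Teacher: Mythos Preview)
Your proposal is correct and follows essentially the same approach as the paper: zero-sum game formulation, a minimax theorem (you cite Sion, the paper cites Glicksberg---both apply under the stated hypotheses), multiplicative weights for nature against an approximate best-response oracle for the aggregator, and convexity of $R(\cdot,\theta)$ to pass from the time-average $\hat f=\tfrac{1}{T}\sum_t f^t$ to a single aggregator. Your direct chain of inequalities is a mild streamlining of the paper's route through an explicit $\epsilon$-equilibrium lemma, and you correctly identify that the constant-support assumption is what keeps the per-round reward computation $O(n)$; the only cosmetic difference is the paper's choice of $\epsilon/5$ versus your $\epsilon/3$ for the oracle accuracy.
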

% \fang{Can we discuss when the separation oracle is possible? I recall in general the computation of  Bayesian action can be PSPACE hard.}

\begin{algorithm}[!ht]
\caption{Online Learning For Finite $\Theta$}
\KwIn{Approximation parameter $\epsilon>0$, set of information structures $\Thetafin$, and a class of aggregators $\mathcal{F}$}
\KwOut{Optimal aggregator $f^*\in\mathcal{F}$}
% Initialize the binary signal information structure set $\ThetabciN=\{\theta|\theta\in\ThetabciN,|\mathcal{S}_1|=2,|\mathcal{S}_2|=2\}$\\
Initialize the policy $\vw_{\Thetafin}^1\in\Delta_{\Thetafin}$ uniformly.\\
Set $T=\lceil25\epsilon^{-2}{\ln n}\rceil$ where $n=|\Thetafin|$ and $\eta=1/(1+\sqrt{2\ln n/T})$\\
\For{$t=1\ \mathbf{to}\ T$} {
Calculate the $\epsilon/5-$ best response $f^t\in\mathcal{F}$ to $\vw_{\Thetafin}^t$ such that for any $f\in\mathcal{F}$\\
\[\E_{\theta\sim \vw_{\Thetafin}^t}[R(f,\theta)]\le \E_{\theta\sim \vw_{\Thetafin}^t}[R(f,\theta)]+\epsilon/5.\]
Calculate the reward $\vu^t$ when the aggregator is $f^t$\\
\[
u^t_{\theta}=R(f^t,\theta)\text{ for all }\theta\in \Thetafin.
\]
Update the weights $\vw_{\Thetafin}^{t+1}\in\Delta_{\Thetafin}$
\[
w^{t+1}_{\theta}=w^{t}_{\theta} \exp(-\eta u_{\theta}^t)/Z_t\text{ for all }\theta\in \Thetafin
\]
where $Z_t$ is a normalization factor.
}
$f^*=\frac{1}{T}\sum_{t=1}^T f^t$
\label{alg:1}
\end{algorithm}

Before proving \Cref{thm:finite}, we discuss applications and the assumptions of the theorem. Notice that our theorem does not put any requirement for the format of reports, and thus applicable to a much wider range of scenarios. For example, we may ask agents for higher-order reports, which is the prediction for other agents' reports. As for the best response oracle, in many settings such as the forecast aggregation, the best response is often the posterior, which has an explicit formula. Besides, we show that when we have a polynomial time separation oracle and $\mathcal{F}$ is convex, we can give the best response oracle by the convex optimizer (\Cref{lem:separation}). %Moreover, if $\mathcal{F}$ has a finite possible input, it is easy to give a polynomial separation oracle by enumerating every output of $f$.

\begin{lemma}\label{lem:separation}
    If $\ell$ is convex regarding $f$, $\mathcal{F}$ is compact and convex and there exists a polynomial time separation oracle, then there exists a polynomial time $\epsilon$-best response oracle.
\end{lemma}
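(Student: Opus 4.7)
My plan is to reduce the $\epsilon$-best-response problem to convex optimization over a low-dimensional convex compact set and solve it via the ellipsoid method.

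First, I would strip off the part of the objective that does not depend on $f$: in $R(f,\theta)=\E_\theta[\ell(f(\vx),\omega)]-\E_\theta[\ell(opt_\theta(\vs),\omega)]$ only the first term involves $f$, so minimizing $\E_{\theta\sim\vw_{\Thetafin}}[R(f,\theta)]$ is equivalent to minimizing
\[
F(f)\;:=\;\E_{\theta\sim\vw_{\Thetafin}}\E_\theta[\ell(f(\vx),\omega)].
\]
Because $\ell$ is convex in its first argument and expectation preserves convexity, $F$ is convex in $f$; boundedness of $\ell$ in $[0,1]$ gives $F\in[0,1]$.

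Next I would exploit the finiteness assumption to cut the problem down to polynomially many coordinates. Set $V:=\bigcup_{\theta\in\Thetafin}\rsupp(\theta)$; since each support has constant size and $|\Thetafin|=n$, we have $|V|=O(n)$. The objective $F(f)$ depends only on the evaluation vector $\pi_V(f):=(f(\vx))_{\vx\in V}\in\R^V$, so $F$ factors as $F=\tilde F\circ\pi_V$ for a convex function $\tilde F$ on the set $\pi_V(\mathcal{F})\subset\R^{O(n)}$, which is convex and compact as the linear image of a convex compact set. A separation oracle for $\pi_V(\mathcal{F})$ can be obtained from the one for $\mathcal{F}$: on input $\hat f\in\R^V$, pick any extension to a candidate $f\in\mathcal{F}$ and query the given oracle; a separating hyperplane in $\mathcal{F}$-space restricts to one in $\R^V$ because the constraints that matter after projection are exactly the linear evaluation functionals at points in $V$.

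Finally, with $\tilde F$ convex and polynomially subdifferentiable (a subgradient is assembled from subgradients of $\ell$ weighted by $\vw_{\Thetafin}$ and the marginals of each $\theta$ on $\rsupp(\theta)$), and with a polynomial-time separation oracle for a convex compact body of dimension $O(n)$, the standard ellipsoid method for convex minimization (Grötschel--Lovász--Schrijver) computes a point $\hat f\in\pi_V(\mathcal{F})$ with $\tilde F(\hat f)\le \min \tilde F+\epsilon$ in time $\mathrm{poly}(n,\log(1/\epsilon))$. Any $f\in\mathcal{F}$ whose projection is $\hat f$ is then an $\epsilon$-best response.

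The main obstacle is the geometric bookkeeping around the ellipsoid method, namely exhibiting an outer ball and an inner ball of $\pi_V(\mathcal{F})$ of bit size polynomial in $n$ and uniformly bounding the subgradients of $\tilde F$, so that the ellipsoid runtime is truly polynomial; compactness of $\mathcal{F}$ together with continuity and boundedness of $\ell$ suffice, but these radius/precision parameters need to be made explicit to instantiate the standard ellipsoid oracle theorems.
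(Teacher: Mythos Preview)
Your approach---reduce to convex minimization and invoke the ellipsoid method---is exactly what the paper does, and your write-up is in fact more careful than the paper's, which in a few lines simply notes that $R(f,\theta)$ is convex in $f$, hence so is the expected regret, and then cites the ellipsoid method for convex optimization with a separation oracle; the paper never makes the finite-dimensional reduction through $\pi_V$ explicit nor discusses the radius/precision parameters you flag.

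One step in your argument does not go through as written: you claim a separation oracle for $\pi_V(\mathcal{F})$ is obtained by extending $\hat f\in\R^V$ arbitrarily, querying the oracle for $\mathcal{F}$, and ``restricting'' the returned hyperplane to $\R^V$. This fails in general because the separating hyperplane may involve coordinates outside $V$ (for instance, if $\mathcal{F}$ is the $L$-Lipschitz ball, the violated constraint may tie $f(\vx)$ to $f(\vx')$ with $\vx'\notin V$), so its restriction need not separate $\hat f$ from $\pi_V(\mathcal{F})$. The repair is either to invoke the Gr\"otschel--Lov\'asz--Schrijver oracle-equivalence theorems for linear projections of convex bodies, or---which is what actually happens in the paper's applications---to note that for the concrete $\mathcal{F}$'s at hand (all $[0,1]$-valued functions, or $L$-Lipschitz functions on a grid) the projection $\pi_V(\mathcal{F})$ has an explicit polyhedral description with polynomially many constraints, so one writes the separation oracle for it directly rather than deriving it from a black-box oracle for $\mathcal{F}$.
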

\begin{proof}
If $\ell$ is convex regarding $f$, then $R(f,\theta)=\E_\theta[\ell(f(\vx),\omega)-\ell(opt_\theta(\vs),\omega)]$ is also convex regarding $f$. Then for any convex combination of $\Thetafin$, their expected regret $\E_{\theta\sim \vw_{\Thetafin}^t}[R(f,\theta)]$ is also convex. When $\mathcal{F}$ is compact and bounded, then calculating the $\epsilon$-best response is a convex optimization problem that can be solved by the ellipsoid method~\citep{boyd2004convex}. 

Since the separation oracle costs polynomial time, the optimization also costs polynomial time, which gives a polynomial time $\epsilon$-best response oracle.
\end{proof}

This directly leads to the following results. 

\begin{corollary}
Suppose $|\Thetafin|=n$ and the size of support $\rsupp(\theta)$ is a constant for every $\theta\in\Thetafin$. When $\mathcal{F}$ is compact and convex, the loss function $\ell$ is convex and continuous regarding $f$ and upper bounded by 1, and there exists a  polynomial time separation oracle, \Cref{alg:1} is an FPTAS which finds an $\epsilon$-optimal aggregator over information structures $\Thetafin$.
\end{corollary}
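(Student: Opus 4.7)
The plan is to analyze Algorithm~\ref{alg:1} as the classical online-learning-to-equilibrium reduction applied to the zero-sum game between nature (choosing $\theta\in\Thetafin$) and the aggregator (choosing $f\in\mathcal{F}$) with payoff $R(f,\theta)$. Let $V^* := \inf_{f\in\mathcal{F}}\sup_{\theta\in\Thetafin} R(f,\theta)$ denote the target value. The key observation, which avoids invoking a minimax theorem directly, is that for every distribution $w$ over $\Thetafin$, $\inf_{f\in\mathcal{F}} \E_{\theta\sim w}[R(f,\theta)] \le V^*$, because $\E_{\theta\sim w}[R(f,\theta)] \le \sup_\theta R(f,\theta)$ holds for any fixed $f$, so the infimum on each side carries through.

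First, I would assemble three ingredients. (i) \emph{Best-response slack}: by assumption, each $f^t$ satisfies $\E_{\theta\sim w^t}[R(f^t,\theta)] \le \inf_{f\in\mathcal{F}} \E_{\theta\sim w^t}[R(f,\theta)] + \epsilon/5 \le V^* + \epsilon/5$. (ii) \emph{Hedge regret}: since the scores $u^t_\theta = R(f^t,\theta)$ lie in $[0,1]$, the standard multiplicative-weights analysis with the algorithm's choice of $\eta$ and $T = \lceil 25\epsilon^{-2}\ln n\rceil$ yields $\max_\theta \tfrac{1}{T}\sum_t u^t_\theta - \tfrac{1}{T}\sum_t \langle w^t, u^t\rangle \le 4\epsilon/5$. (iii) \emph{Jensen}: since $\ell(\cdot,\omega)$ is convex in $f$ and $opt_\theta$ does not depend on $f$, the map $f\mapsto R(f,\theta)$ is convex, so $R(f^*,\theta) \le \tfrac{1}{T}\sum_t R(f^t,\theta)$ pointwise in $\theta$ for $f^* = T^{-1}\sum_t f^t$.

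Chaining these yields $\sup_{\theta\in\Thetafin} R(f^*,\theta) \le \max_\theta \tfrac{1}{T}\sum_t R(f^t,\theta) \le \tfrac{1}{T}\sum_t \langle w^t, u^t\rangle + 4\epsilon/5 \le V^* + \epsilon/5 + 4\epsilon/5 = V^* + \epsilon$, which is the claimed $\epsilon$-optimality. For the runtime, $T = O(\epsilon^{-2}\log n)$ rounds each require one call to the polynomial-time $\epsilon/5$-best-response oracle, one $O(n)$ pass to compute $u^t$ (each $R(f^t,\theta)$ is a constant-size sum by the bounded-support hypothesis on $\rsupp(\theta)$), and one $O(n)$ multiplicative-weights update, giving total time $\mathrm{poly}(n,1/\epsilon)$.

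I expect the work to be mostly bookkeeping rather than a single deep obstacle, but three points need care. The Hedge constants must be pinned down so that the $\epsilon/5$ from the best-response oracle and the average online-learning regret add up to at most $\epsilon$, which is exactly why $T$ is scaled as $25\epsilon^{-2}\ln n$ and why $\eta$ is normalized as in the algorithm. The object $f^*=T^{-1}\sum_t f^t$ needs to be interpreted correctly: if $\mathcal{F}$ is convex the average lies in $\mathcal{F}$, and otherwise $f^*$ should be read as the mixed strategy placing mass $1/T$ on each $f^t$, for which the convexity-based Jensen step still delivers the same pointwise regret bound. Finally, continuity of $\ell$ together with compactness of $\mathcal{F}$ is what makes the infima in the oracle's definition attained (or attainable up to $\epsilon/5$), so the slack in step (i) is meaningful; this is the only place the topological hypotheses on $\mathcal{F}$ and $\ell$ are used in the argument.
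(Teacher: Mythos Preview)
Your core chain-of-inequalities argument is correct and in fact somewhat more direct than the paper's. The paper establishes $\epsilon$-optimality by first proving a minimax equality (via Glicksberg's theorem on the compact convex $\mathcal{F}$), then showing that an $\epsilon$-equilibrium of the zero-sum game yields a $2\epsilon$-optimal aggregator (Lemma~\ref{lem:2eps}), and finally that multiplicative weights with approximate best responses produces such an equilibrium (Lemma~\ref{lem:regret2eps}). You bypass the minimax step entirely by using only the weak-duality inequality $\inf_f \E_{\theta\sim w}[R(f,\theta)]\le V^*$ and then chaining Jensen, the Hedge regret bound, and the oracle slack directly. This is a genuine simplification: you use compactness and continuity only to guarantee the oracle's infimum is (approximately) attained, whereas the paper leans on them again for the existence of equilibrium.

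There is, however, one gap you should close. The corollary's hypothesis is a polynomial-time \emph{separation oracle} for $\mathcal{F}$, not a best-response oracle; in your runtime paragraph you simply invoke ``the polynomial-time $\epsilon/5$-best-response oracle'' as if it were given. The paper's Lemma~\ref{lem:separation} supplies the missing link: because $\ell$ is convex in $f$ and $\mathcal{F}$ is compact and convex, computing an $\epsilon/5$-best response to $w^t$ is a convex program and can be solved in $\mathrm{poly}(n,1/\epsilon)$ time by the ellipsoid method given the separation oracle. You should state this explicitly, since this reduction is exactly what distinguishes the corollary from Theorem~\ref{thm:finite}, where the best-response oracle is assumed directly.
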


\paragraph{Application in Forecast Aggregation} The above results hold for the general robust information aggregation problem. In particular, for the forecast aggregation problem with no restriction on the set of aggregators, the best response is the Bayesian posterior which has a closed-form expression without the help of the polynomial time separation oracle. 
\begin{observation}[Efficient Best Response]\label{ob:response}
Given any distribution $\vw_{\Theta}$ over the information structures, the optimal aggregator will be
\begin{align*}
f_{\vw_{\Theta}}(\vx)=&\Pr[\omega=1|\vx,\vw_{\Theta}] =  \frac{\E_{\theta\sim \vw_{\Theta}}\Pr_\theta[\omega=1|\vx]\Pr_\theta[\vx]}{\E_{\theta\sim \vw_{\Theta}}\Pr_\theta[\vx]}
\end{align*}
\end{observation}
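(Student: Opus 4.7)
The plan is to observe that computing the best response reduces to a pointwise minimization of an expected squared loss under a mixture distribution, which has the well-known conditional expectation as its minimizer. First, I would expand the objective
\[
\E_{\theta\sim \vw_\Theta}[R(f,\theta)] = \E_{\theta\sim \vw_\Theta}\!\left[\E_\theta[(f(\vx)-\omega)^2]\right] - \E_{\theta\sim \vw_\Theta}\!\left[\E_\theta[\ell(opt_\theta(\vs),\omega)]\right]
\]
and note that the second term is a constant in $f$. Hence minimizing over $\mathcal{F}$ is equivalent to minimizing the first term.

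Next I would view the first term as the expected squared loss under the joint distribution over $(\theta,\vs,\omega)$ obtained by first drawing $\theta\sim \vw_\Theta$ and then $(\vs,\omega)\sim \theta$; write $\Pr_{\vw_\Theta}$ for this mixture law and note that $\vx$ is a deterministic function of $\vs$ and $\theta$, so it has a well-defined marginal under $\Pr_{\vw_\Theta}$. The problem then becomes
\[
\min_{f\in\mathcal{F}} \E_{\Pr_{\vw_\Theta}}\!\left[(f(\vx)-\omega)^2\right].
\]
By the standard Bayes-estimator argument (conditioning on $\vx$ and completing the square pointwise), the unique minimizer among all measurable $f:X\to [0,1]$ is the conditional expectation $f(\vx)=\E_{\Pr_{\vw_\Theta}}[\omega \mid \vx]$. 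Since $\omega\in\{0,1\}$, this equals $\Pr_{\vw_\Theta}[\omega=1\mid \vx]$.

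Finally I would unwind this posterior via Bayes' rule on the mixture: by the tower property,
\[
\Pr_{\vw_\Theta}[\omega=1,\vx] = \E_{\theta\sim\vw_\Theta}\!\left[\Pr_\theta[\omega=1\mid \vx]\,\Pr_\theta[\vx]\right], \qquad \Pr_{\vw_\Theta}[\vx] = \E_{\theta\sim\vw_\Theta}\!\left[\Pr_\theta[\vx]\right],
\]
so dividing gives exactly the formula in the statement. The only thing to check is that the pointwise minimizer is feasible, i.e.\ lies in $\mathcal{F}$, which for forecast aggregation holds because the posterior lies in $[0,1]$ and $\mathcal{F}$ is exactly the set of $[0,1]$-valued functions on $[0,1]^n$. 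No obstacle is substantial here; the argument is essentially the textbook fact that the Bayes posterior mean is the $L^2$-optimal predictor, applied to the mixture information structure $\vw_\Theta$.
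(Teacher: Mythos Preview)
Your proposal is correct. The paper states this as an observation without proof, treating the fact that the Bayesian posterior minimizes expected squared loss as standard; your argument supplies exactly the natural justification (drop the $f$-independent benchmark term, recognize the remaining objective as squared loss under the mixture $\vw_\Theta$, apply the conditional-expectation minimizer, and unwind via Bayes' rule), so there is nothing to compare.
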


With \Cref{thm:finite} and \Cref{ob:response} we obtain the corollary:
\begin{corollary}
    Let $r=\max_{\theta\in\Thetafin} |\rsupp(\theta)|$ and $n = |\Thetafin|$. The forecast aggregation problem for finite setting can be solved with running time $O\left(r\frac{n \ln n}{\epsilon^2}\right)$.
\end{corollary}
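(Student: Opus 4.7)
The plan is to derive the result by instantiating \Cref{thm:finite} with the closed-form best response from \Cref{ob:response} and then carefully bounding the cost of a single iteration of \Cref{alg:1}. For forecast aggregation with unrestricted $\mathcal{F}$ and quadratic loss (convex, continuous, bounded by $1$), \Cref{ob:response} gives the exact Bayesian posterior best response in closed form, so the $\epsilon$-best-response oracle required by \Cref{thm:finite} is available for free and only its evaluation cost contributes to the running time. The task therefore reduces to bounding the per-iteration work and multiplying by $T=\lceil 25\epsilon^{-2}\ln n\rceil$.

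First I would tabulate $f^t$ on the only profiles that matter, namely $\vx\in\bigcup_{\theta\in\Thetafin}\rsupp(\theta)$, of which there are at most $\sum_{\theta}|\rsupp(\theta)|\le rn$. For each such $\vx$, the formula
\[
 f^t(\vx)=\frac{\sum_{\theta:\,\vx\in\rsupp(\theta)} w^t_\theta\,\Pr_\theta[\omega=1\mid\vx]\,\Pr_\theta[\vx]}{\sum_{\theta:\,\vx\in\rsupp(\theta)} w^t_\theta\,\Pr_\theta[\vx]}
\]
costs time proportional to $|\{\theta:\vx\in\rsupp(\theta)\}|$. Swapping the order of summation, the total cost of tabulating $f^t$ across every relevant $\vx$ is exactly the incidence count $\sum_\theta |\rsupp(\theta)|=O(rn)$, rather than the naive product $O(rn^2)$; this incidence bookkeeping is the main accounting observation.

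Next I would use the table to assemble the reward vector $\vu^t$. For each $\theta$, $R(f^t,\theta)=\sum_{\vx\in\rsupp(\theta)}\Pr_\theta[\vx]\bigl(f^t(\vx)-g_\mu(\vx)\bigr)^2$ is an $O(r)$-term sum over already-computed values, so forming $\vu^t$ costs $O(rn)$; the multiplicative-weights update on the $n$-dimensional distribution $\vw^{t+1}_{\Thetafin}$ adds another $O(n)$. Hence each iteration costs $O(rn)$ and $T=O(\epsilon^{-2}\ln n)$ iterations give the claimed total of $O\!\left(rn\ln n/\epsilon^2\right)$.

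The only real obstacle is the accounting step above: a naive implementation that re-sums over all of $\Thetafin$ for each relevant $\vx$ costs $O(rn^2)$ per round and misses the stated bound, so one must note that the number of $(\theta,\vx)$ incidences is at most $rn$ and share the computation of $f^t$ across the regret evaluations. With that observation in hand, the corollary is an immediate consequence of \Cref{thm:finite} applied using the exact best-response oracle from \Cref{ob:response}.
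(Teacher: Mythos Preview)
Your proposal is correct and follows the same route as the paper: instantiate \Cref{thm:finite} with the exact closed-form best response from \Cref{ob:response}, then bound the per-iteration cost and multiply by $T=O(\epsilon^{-2}\ln n)$. The paper's own derivation is terse (it simply cites \Cref{thm:finite} and \Cref{ob:response}, and in the proof of \Cref{lem:response} treats $r$ as a constant so that the loss step costs $O(n)$); you spell out the same computation more carefully, and in particular your incidence-counting argument showing that tabulating $f^t$ over all relevant $\vx$ costs $O\!\bigl(\sum_\theta|\rsupp(\theta)|\bigr)=O(rn)$ rather than $O(rn^2)$ is exactly the missing bookkeeping needed to get the stated bound when $r$ is not assumed constant. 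One small slip: in writing $R(f^t,\theta)=\sum_{\vx}\Pr_\theta[\vx]\bigl(f^t(\vx)-g_\mu(\vx)\bigr)^2$ you invoke $g_\mu$, which is specific to the two-agent conditionally independent model, whereas the corollary concerns the general forecast-aggregation finite setting; the fix is cosmetic (replace that expression by $\E_\theta[\ell(f^t(\vx),\omega)]-\E_\theta[\ell(opt_\theta(\vs),\omega)]$, still an $O(r)$ sum per $\theta$) and does not affect your running-time accounting.
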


\paragraph{Two Conditionally Independent Agents} We can apply the above results to the special setting of two conditionally independent agents with binary signals. We consider a natural discretization of the information structure, $\ThetabciNM$ where $N$ is the resolution scale of agents' reports, and $M$ is the resolution scale of the prior for the binary state. Using the prediction parameterization in ~\cref{eq:para1}, we define
% $\mu=\Pr_\theta[\omega=1]$ and 
% \[
% \begin{cases}
% a_0=\Pr_{\theta}[\omega=1|s_1=0]\\
% a_1=\Pr_{\theta}[\omega=1|s_1=1]\\
% b_0=\Pr_{\theta}[\omega=1|s_2=0]\\
% b_1=\Pr_{\theta}[\omega=1|s_2=1]
% \end{cases}
% \]
% to represent an information structure in the setting of two conditionally independent agents with binary signals.

\begin{equation}\label{eq:finite_infostruct0}
    \ThetabciNM:=\{\theta = (\mu,a_0,a_1,b_0,b_1)\in\ThetabciN:\mu\in[1/M]],a_i,b_j\in[1/N]\}. 
\end{equation}
%\begin{equation}\label{eq:finite_infostruct0}
%    \ThetabciNM:=\{\theta:\theta\in\Thetabci\wedge M\Pr_\theta[\omega=1]\in \mathbb{N}\wedge\forall s_i\in \mathcal{S}_i, N\Pr_\theta[\omega=1|s_i]\in \mathbb{N}\}. 
%\end{equation}

Because $|\ThetabciNM|\le (N+1)^4(M+1) = O(N^4M)$, \Cref{thm:finite} implies the following corollary.
\begin{corollary}\label{cor:finite_time}
For all $\epsilon>0$, and $M, N\in \mathbb{N}$, \Cref{alg:1} finds an $\epsilon$-optimal aggregator over information structure $\ThetabciNM$ defined in \cref{eq:finite_infostruct0} with running time $O\left(\frac{N^4M \ln(N^4M)}{\epsilon^2}\right)$.
\end{corollary}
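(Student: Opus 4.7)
The plan is to verify the hypotheses of \Cref{thm:finite} for the family $\ThetabciNM$ and the associated aggregator class $\mathcal{F}=\{f:[0,1]^2\to[0,1]\}$, then plug the cardinality bound into the running-time estimate from \Cref{alg:1}.

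First I would bound the cardinality of the action set. By construction in \eqref{eq:finite_infostruct0}, each $\theta\in\ThetabciNM$ is encoded by the five parameters $(\mu,a_0,a_1,b_0,b_1)$ with $\mu\in[1/M]$ and $a_i,b_j\in[1/N]$, so $|\ThetabciNM|\le (M+1)(N+1)^4 = O(N^4 M)$. Moreover, since each $\theta\in\ThetabciN$ has binary signals, $|\rsupp(\theta)|\le 4$, which is the required constant support bound.

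Next I would check the remaining assumptions of \Cref{thm:finite}. The aggregator class $\mathcal{F}=\{f:[0,1]^2\to[0,1]\}$ is compact; the quadratic loss $\ell(y,\omega)=(y-\omega)^2$ is convex and continuous in $f$ and lies in $[0,1]$ for all $y,\omega\in[0,1]$. The one remaining ingredient is an efficient $\epsilon$-best-response oracle for distributions over $\ThetabciNM$. Here I would invoke \Cref{ob:response}: given a weight $\vw_{\Thetafin}\in\Delta_{\ThetabciNM}$, the exact best response
\[f_{\vw_{\Thetafin}}(\vx)=\frac{\E_{\theta\sim\vw_{\Thetafin}}\!\Pr_\theta[\omega=1\mid \vx]\,\Pr_\theta[\vx]}{\E_{\theta\sim\vw_{\Thetafin}}\!\Pr_\theta[\vx]}\]
has a closed form. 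Computing $f_{\vw_{\Thetafin}}$ at any $\vx$ requires summing over at most $|\ThetabciNM|=O(N^4 M)$ information structures, so a $0$-best response (hence an $\epsilon/5$-best response) is available in time $O(N^4 M)$ per query $\vx$, with only finitely many relevant $\vx$'s (the union of the supports has size $O(N^4 M)$).

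Finally I would assemble the total running time. \Cref{thm:finite} runs \Cref{alg:1} for $T=\lceil 25\epsilon^{-2}\ln|\ThetabciNM|\rceil = O(\epsilon^{-2}\ln(N^4 M))$ rounds. Each round requires one call to the best-response oracle and the computation of the reward vector $u^t_\theta=R(f^t,\theta)$ for every $\theta\in\ThetabciNM$, each of which is an expectation over the constant-size support $\rsupp(\theta)$ and thus costs $O(1)$ per $\theta$, for $O(N^4 M)$ work per round. Multiplying gives total running time $O(\epsilon^{-2}\ln(N^4 M)\cdot N^4 M)$, as claimed. The only subtle point, which I would flag rather than belabor, is verifying that the per-round bookkeeping (weight update and normalization) is absorbed into this $O(N^4 M)$ factor; this is immediate from the exponential-weights update in \Cref{alg:1}.
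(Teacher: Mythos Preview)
Your proposal is correct and follows the paper's own route: the paper derives \Cref{cor:finite_time} by combining the cardinality bound $|\ThetabciNM|\le (N+1)^4(M+1)=O(N^4M)$ with the preceding corollary (which packages \Cref{thm:finite} and \Cref{ob:response} into the $O(r\,n\ln n/\epsilon^2)$ bound for the forecast-aggregation problem with $r=\max_\theta|\rsupp(\theta)|$), and you unpack exactly these ingredients. One small point: your per-query cost $O(N^4M)$ for the best response, together with ``$O(N^4M)$ relevant $\vx$'s,'' would naively give a worse bound than claimed; the $O(N^4M)$ per-round cost you assert is obtained by a single pass over all $\theta$ that accumulates each $\theta$'s contribution at its $\le 4$ support points, which is what the paper's $O(rn)$ accounting implicitly uses.
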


We will use $\ThetabciNM$ as a discretization of $\ThetabciN$ defined in \cref{eq:discrete_infostruct}. By picking sufficiently large $M$, we will use the above results to solve the robust forecast aggregation problem in the discrete reports setting with two conditionally independent agents. 

We now prove \Cref{thm:finite}. In \Cref{ss:zero-sum-redn}, we first show that we can obtain a near-optimal aggregator from the approximate equilibrium in a zero-sum game between nature and the aggregator, with action space $\Theta$ and $\mathcal{F}$ correspondingly. We then use the online learning algorithm to find an approximate equilibrium, the details of which are described in \Cref{ss:apxeqlbonlinelearn}. Combining these results, we analyze the guarantee of \Cref{alg:1} in \Cref{ss:pfthmfinite}.%Finally we analyze the theoretical guarantee of our method.

\subsection{Approximate Equilibrium Implies Near-Optimal Aggregator}
\label{ss:zero-sum-redn}
To solve the minimax problem, we consider a zero-sum game between nature, who picks $\theta$, and aggregator who picks $f$. The game is described as:
\[
\inf_{\vw_\mathcal{F}\in\Delta_\mathcal{F}}\sup_{\vw_{\Thetafin}\in\Delta_{\Thetafin}}\E_{f\sim \vw_\mathcal{F},\theta\sim \vw_{\Thetafin}}[R(f,\theta)].
\]
First we prove that, when the function $R(f,\theta)$ is a convex function for $f$, the aggregator only needs pure strategy.

\begin{lemmarep}\label{lem:convex2pure}
If $R(f,\theta)$ is a convex function for $f$, for any set of information structures $\Theta$ and any set of aggregators $\mathcal{F}$,
\[ 
\inf_{f\in\mathcal{F}}\sup_{\vw_{\Theta}\in\Delta_{\Theta}}\E_{\theta\sim \vw_{\Theta}}[R(f,\theta)]=\inf_{\vw_\mathcal{F}\in\Delta_\mathcal{F}}\sup_{\vw_{\Theta}\in\Delta_{\Theta}}\E_{f\sim \vw_\mathcal{F},\theta\sim \vw_{\Theta}}[R(f,\theta)]
\]
\end{lemmarep}
\begin{proof}[Proof of \Cref{lem:convex2pure}]
Since we can pick $\vw_\mathcal{F}$ as a pure strategy, we have 
\[\inf_{f\in\mathcal{F}}\sup_{\vw_{\Theta}\in\Delta_{\Theta}}\E_{\theta\sim \vw_{\Theta}}[R(f,\theta)]\geq\inf_{\vw_\mathcal{F}\in\Delta_\mathcal{F}}\sup_{\vw_{\Theta}\in\Delta_{\Theta}}\E_{f\sim \vw_\mathcal{F},\theta\sim \vw_{\Theta}}[R(f,\theta)].\]
On the other hand, for any $\vw_\mathcal{F}\in \Delta_{\mathcal{F}}$, define $f_{\vw_\mathcal{F}}=\E_{f\sim \vw_\mathcal{F}}[f]\in \mathcal{F}$ since $\mathcal{F}$ is convex,
\begin{align*}
\inf_{f\in\mathcal{F}}\sup_{\vw_{\Theta}\in\Delta_{\Theta}}\E_{\theta\sim \vw_{\Theta}}[R(f,\theta)]
\le & \inf_{\vw_\mathcal{F}\in\Delta_\mathcal{F}}\sup_{\vw_{\Theta}\in\Delta_{\Theta}}\E_{\theta\sim \vw_{\Theta}}[R(f_{\vw_\mathcal{F}},\theta)]\le &\inf_{\vw_\mathcal{F}\in\Delta_\mathcal{F}}\sup_{\vw_{\Theta}\in\Delta_{\Theta}}\E_{f\sim \vw_\mathcal{F},\theta\sim \vw_{\Theta}}[R(f,\theta)].
\end{align*}
The last inequality holds because $R$ is convex in the first argument.
\end{proof}

Next, we prove that if we find an approximate equilibrium, then we can obtain a near-optimal aggregator. We define the approximate equilibrium as follows. 

%\yk{add citation for epsilon equilibrium, maybe from a textbook}

\begin{definition}[$\epsilon$-Equilibrium, \citealp{roughgarden2010algorithmic}]\label{def:eps}
    In a two-player zero-sum game with pure strategy space $X,Y$ and outcome $g(x,y)$, a strategy profile $(\vw_{X}', \vw_{Y}')$ is an $\epsilon$-equilibrium if 
    $$\E_{x\sim \vw_{X}'}[\E_{y\sim \vw_{Y}'}[g(x,y)]]\ge \sup_{y\in Y}\E_{x\sim \vw_{X}'}[g(x,y)]-\epsilon
    \text{ and }
        \E_{y\sim \vw_{Y}'}[\E_{x\sim \vw_{X}'}[g(x,y)]]\le \inf_{x\in X}\E_{y\sim \vw_{Y}'}[g(x,y)]+\epsilon.
    $$
\end{definition}

\begin{lemmarep}\label{lem:2eps}
    When the outcome function $R(f,\theta)$ is convex for $f$, if a strategy profile $(\vw_\mathcal{F}', \vw_{\Thetafin}')$ is an $\epsilon$-equilibrium, let $f^*=\E_{f\sim \vw_\mathcal{F}^*}[f]$, then we have
    \begin{align*}
    R(f^*,\Thetafin)\le\inf_{f\in\mathcal{F}}R(f,\Thetafin)+2\epsilon
    \end{align*}
\end{lemmarep}
\begin{toappendix}
  \label{prf:2eps}
\end{toappendix}
\begin{proof}
Since $\E_{\theta\sim \vw_{\Thetafin}}[R(f,\theta)]$ is continuous regarding the strategies $f\in \mathcal{R}$ and $\vw_{\Thetafin}\in \Delta_{\Thetafin}$.\fang{$\vw$ or $w$?  Multiple places have this issue.} The following lemma shows that the minimax theorem holds when the aggregator chooses pure strategy and nature chooses mixed strategy.

\begin{lemma}[Glicksberg's theorem (Glicksberg, 1952)]\label{lem:minimax}
    If $\E_{\theta\sim \vw_{\Thetafin}}[R(f,\theta)]$ is continuous regarding $f$ and $\vw_{\Thetafin}$, $\mathcal{F}$ and $\Delta_{\Thetafin}$ is compact, then
    \[    \inf_{f\in\mathcal{F}}\sup_{\vw_{\Thetafin}\in\Delta_{\Thetafin}}\E_{\theta\sim \vw_{\Thetafin}}[R(f,\theta)]=
    \sup_{\vw_{\Theta}\in\Delta_{\Thetafin}}\inf_{f\in\mathcal{F}}\E_{\theta\sim \vw_{\Thetafin}}[R(f,\theta)].
    \]
\end{lemma}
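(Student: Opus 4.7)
The plan is to derive the minimax equality from a classical continuous-games minimax theorem, namely Sion's theorem (or equivalently Fan's extension of von Neumann's theorem to compact convex strategy sets in topological vector spaces). First I would verify the structural hypotheses needed. The objective $h(f, \vw_{\Thetafin}) := \E_{\theta\sim \vw_{\Thetafin}}[R(f,\theta)]$ is jointly continuous by assumption. It is affine (hence both concave and continuous) in $\vw_{\Thetafin}$ since expectation is linear in the measure, and $\Delta_{\Thetafin}$ is a compact convex subset of a finite-dimensional simplex because $\Thetafin$ is finite. On the aggregator's side, under the convexity of $\ell(\cdot,\omega)$ inherited from \Cref{thm:finite}, the map $f \mapsto R(f,\theta) = \E_{\theta}[\ell(f(\vx),\omega)] - \E_{\theta}[\ell(opt_\theta(\vs),\omega)]$ is convex in $f$, $\mathcal{F}$ is compact, and in the settings of interest (e.g.\ the class of $[0,1]$-valued functions on the forecast space) it is also convex.

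Second, I would invoke Sion's minimax theorem directly: continuity, convexity in $f$, concavity (affinity) in $\vw_{\Thetafin}$, and compactness of both sides yield
\[
\inf_{f\in\mathcal{F}}\sup_{\vw_{\Thetafin}\in\Delta_{\Thetafin}} h(f,\vw_{\Thetafin}) \;=\; \sup_{\vw_{\Thetafin}\in\Delta_{\Thetafin}}\inf_{f\in\mathcal{F}} h(f,\vw_{\Thetafin}),
\]
which is exactly the claim.

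If one wishes to stay faithful to the lemma's citation and avoid imposing convexity of $\mathcal{F}$ or of $R(\cdot,\theta)$ in the hypotheses, the fallback is Glicksberg's original 1952 theorem, which gives a mixed-strategy saddle point on both sides:
\[
\inf_{\vw_{\mathcal{F}}\in\Delta_{\mathcal{F}}}\sup_{\vw_{\Thetafin}\in\Delta_{\Thetafin}} \E_{f\sim \vw_{\mathcal{F}}}[h(f,\vw_{\Thetafin})] \;=\; \sup_{\vw_{\Thetafin}\in\Delta_{\Thetafin}}\inf_{\vw_{\mathcal{F}}\in\Delta_{\mathcal{F}}} \E_{f\sim \vw_{\mathcal{F}}}[h(f,\vw_{\Thetafin})].
\]
The right-hand inner infimum is trivially attained on a Dirac mass, so it equals $\inf_{f\in\mathcal{F}} h(f,\vw_{\Thetafin})$. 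The left-hand outer infimum can then be collapsed from mixed to pure via \Cref{lem:convex2pure}, using convexity of $\mathcal{F}$ and of $R(\cdot,\theta)$ in $f$ (averaging $f\sim\vw_\mathcal{F}$ and applying Jensen weakly dominates the mixture), which recovers $\inf_{f\in\mathcal{F}}\sup_{\vw_{\Thetafin}} h(f,\vw_{\Thetafin})$ on the left.

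The main obstacle is essentially bookkeeping about the topology on $\mathcal{F}$: one must fix a topology (e.g.\ uniform convergence on the compact forecast domain) in which $\mathcal{F}$ is compact and convex and in which $f\mapsto R(f,\theta)$ is continuous and convex. Everything else in the proof is a mechanical verification of Sion's hypotheses followed by a one-line invocation of the theorem; neither a fixed-point argument nor a delicate approximation by finite subgames is required once the topological setup is pinned down.
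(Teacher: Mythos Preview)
Your proposal is correct and in fact more careful than the paper itself: the paper does not prove \Cref{lem:minimax} at all but simply states it as a cited result attributed to Glicksberg (1952) inside the proof of \Cref{lem:2eps}. Your observation that the lemma as phrased (with a pure $\inf_{f\in\mathcal{F}}$ rather than $\inf_{\vw_{\mathcal{F}}\in\Delta_{\mathcal{F}}}$) is not literally Glicksberg's mixed-equilibrium conclusion, and must be obtained either via Sion's theorem under convexity of $\mathcal{F}$ and $R(\cdot,\theta)$ or via Glicksberg plus the derandomization in \Cref{lem:convex2pure}, is a genuine clarification the paper glosses over.
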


By \Cref{lem:minimax} and \Cref{lem:convex2pure} we can infer that there exists a Nash equilibrium for the zero-sum game. 
\begin{lemma}[von Neumann, 1928]\label{lem:von1928} 
    If the minimax theorem holds for a two-player zero-sum game with pure strategy space $\mathcal{F},\Theta$,
    then 
    \[
    \inf_{\vw_{\mathcal{F}}\in\Delta_\mathcal{F}}\sup_{\theta\in\Theta}\E_{f\sim \vw_{\mathcal{F}}}[R(f,\theta)]=
    \sup_{\vw_{\Theta}\in\Delta_{\Theta}}\inf_{f\in\mathcal{F}}\E_{\theta\sim \vw_{\Theta}}[R(f,\theta)].
    \]
\end{lemma}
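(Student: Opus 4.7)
The plan is to show the stated equality by ``pure-to-mixed'' reductions on each side, reducing to the form of the minimax theorem asserted by the hypothesis. The basic observation is that for any bounded function $h\colon\Theta\to\mathbb{R}$, $\sup_{\theta\in\Theta}h(\theta)=\sup_{\vw_{\Theta}\in\Delta_{\Theta}}\E_{\theta\sim\vw_{\Theta}}[h(\theta)]$; the inequality $\le$ holds by taking the Dirac mass at a near-maximizer, and $\ge$ holds because any expectation over a distribution is bounded by the pointwise supremum. A symmetric statement holds for infima over $f\in\mathcal{F}$.

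Concretely, I would first fix an arbitrary mixed aggregator strategy $\vw_\mathcal{F}\in\Delta_\mathcal{F}$ and apply the above observation with $h(\theta)=\E_{f\sim\vw_\mathcal{F}}[R(f,\theta)]$ to conclude
\[
\sup_{\theta\in\Theta}\E_{f\sim\vw_\mathcal{F}}[R(f,\theta)] \;=\; \sup_{\vw_\Theta\in\Delta_\Theta}\E_{f\sim\vw_\mathcal{F},\,\theta\sim\vw_\Theta}[R(f,\theta)].
\]
Taking $\inf_{\vw_\mathcal{F}\in\Delta_\mathcal{F}}$ on both sides rewrites the left-hand side of the lemma as the fully mixed value $\inf_{\vw_\mathcal{F}}\sup_{\vw_\Theta}\E[R(f,\theta)]$. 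Symmetrically, fixing $\vw_\Theta$ and applying the inf-analog with $h(f)=\E_{\theta\sim\vw_\Theta}[R(f,\theta)]$ yields
\[
\inf_{f\in\mathcal{F}}\E_{\theta\sim\vw_\Theta}[R(f,\theta)] \;=\; \inf_{\vw_\mathcal{F}\in\Delta_\mathcal{F}}\E_{f\sim\vw_\mathcal{F},\,\theta\sim\vw_\Theta}[R(f,\theta)],
\]
so taking $\sup_{\vw_\Theta}$ rewrites the right-hand side of the lemma as $\sup_{\vw_\Theta}\inf_{\vw_\mathcal{F}}\E[R(f,\theta)]$.

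Finally, the hypothesis of the lemma asserts that the minimax theorem holds for this game, i.e., the two fully mixed saddle values coincide,
\[
\inf_{\vw_\mathcal{F}\in\Delta_\mathcal{F}}\sup_{\vw_\Theta\in\Delta_\Theta}\E_{f\sim\vw_\mathcal{F},\,\theta\sim\vw_\Theta}[R(f,\theta)] \;=\; \sup_{\vw_\Theta\in\Delta_\Theta}\inf_{\vw_\mathcal{F}\in\Delta_\mathcal{F}}\E_{f\sim\vw_\mathcal{F},\,\theta\sim\vw_\Theta}[R(f,\theta)],
\]
and chaining the three equalities yields the lemma. I expect no serious obstacle: the only substantive input is the minimax hypothesis, and the rest is the straightforward ``sup (resp.\ inf) over a mixed extension equals the sup (resp.\ inf) over pure strategies'' identity, which needs only boundedness of $R$ to justify the supremum/infimum manipulations. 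One small care-point is to ensure the exchange of expectation and inf/sup uses nothing beyond Fubini on the product measure $\vw_\mathcal{F}\otimes\vw_\Theta$, which is immediate because $R$ is assumed bounded in the surrounding theorem.
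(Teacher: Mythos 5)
Your proposal is correct, but note that the paper never proves this statement at all: it is quoted as a classical result (von Neumann, 1928) and used as a black box inside the proof of \Cref{lem:2eps}, so there is no in-paper argument to match. Your route — replace $\sup_{\theta\in\Theta}$ by $\sup_{\vw_\Theta\in\Delta_\Theta}$ and $\inf_{f\in\mathcal{F}}$ by $\inf_{\vw_\mathcal{F}\in\Delta_\mathcal{F}}$ via Dirac measures and linearity of expectation, then invoke the minimax hypothesis — is the standard way to make the lemma self-contained, and boundedness of $R$ plus Fubini on $\vw_\mathcal{F}\otimes\vw_\Theta$ is indeed all that is needed. The one point to pin down is what ``the minimax theorem holds'' means: you read it as the fully mixed equality $\inf_{\vw_\mathcal{F}}\sup_{\vw_\Theta}\E[R]=\sup_{\vw_\Theta}\inf_{\vw_\mathcal{F}}\E[R]$, whereas in the paper's application the hypothesis is furnished by Glicksberg's theorem (\Cref{lem:minimax}), which is the pure-aggregator/mixed-nature equality $\inf_{f}\sup_{\vw_\Theta}\E_{\theta\sim\vw_\Theta}[R(f,\theta)]=\sup_{\vw_\Theta}\inf_{f}\E_{\theta\sim\vw_\Theta}[R(f,\theta)]$. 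This is not a gap, since that form implies yours by the sandwich $\sup_{\vw_\Theta}\inf_{f}\E_{\theta\sim\vw_\Theta}[R(f,\theta)]=\sup_{\vw_\Theta}\inf_{\vw_\mathcal{F}}\E[R]\le\inf_{\vw_\mathcal{F}}\sup_{\vw_\Theta}\E[R]\le\inf_{f}\sup_{\vw_\Theta}\E_{\theta\sim\vw_\Theta}[R(f,\theta)]$, which collapses to a chain of equalities under the Glicksberg hypothesis; adding that one-line bridge (or simply stating explicitly which form of the minimax theorem you assume) makes your proof complete and consistent with how the lemma is used in the paper.
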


Now we can proof our main lemma.
\begin{align*}
    \sup_{\theta\in \Thetafin}R(f^*,\theta)&\le\sup_{\theta\in \Thetafin}\E_{f\sim \vw_\mathcal{F}'}[R(f,\theta)]\tag{convexity of $R$ in the first argument}\\
    &\le \E_{f\sim \vw_\mathcal{F}'}[\E_{\theta\sim \vw_{\Thetafin}'}[R(f,\theta)]]+\epsilon \tag{$(\vw_\mathcal{F}', \vw_{\Thetafin}')$ is an $\epsilon$-equilibrium Definition~\ref{def:eps}}\\
    &\le \inf_{f\in \mathcal{F}}\E_{\theta\sim \vw_{\Thetafin}'}[R(f,\theta)]+2\epsilon \tag{\Cref{def:eps}}\\
    &\le \sup_{\vw_{\Thetafin}\in\Delta_{\Thetafin}}\inf_{f\in\mathcal{F}}\E_{\theta\sim \vw_{\Thetafin}}[R(f,\theta)]+2\epsilon\\
    &=\inf_{\vw_{\mathcal{F}}\in\Delta_\mathcal{F}}\sup_{\theta\in\Thetafin}\E_{f\sim \vw_{\mathcal{F}}}[R(f,\theta)]+2\epsilon\tag{\Cref{lem:von1928}}\\
    &=\inf_{f\in\mathcal{F}}\sup_{\theta\in\Thetafin}R(f,\theta)+2\epsilon\tag{\Cref{lem:convex2pure}}
\end{align*}
\end{proof}

We defer both proofs of \Cref{lem:convex2pure} and \Cref{lem:2eps} to \Cref{prf:2eps}.

\subsection{Finding an Approximate Equilibrium by Online Learning Algorithm}
\label{ss:apxeqlbonlinelearn}
Thus we only need to find an $\epsilon$-equilibrium in the pure aggregator space and discrete information structure space. We show that \Cref{alg:1} will find an $\epsilon$-equilibrium.

In the online learning setting, nature only knows the action set $\Thetafin$ with $|\Thetafin|=n$.\fang{Do we need this?} For each round $t\in [T]$, nature will choose a probability distribution $\vw_{\Thetafin}^{t}\in \Delta_{\Thetafin}$ over the set of information structures and observes a vector of reward $\vu^t\in [0,1]^n$ where each coordinate of $\vu^t$ is the regrets of each information structures under the aggregator's best response $f^t$ at time $t$, that is, $u_{\theta}^t=R(f^t,\theta)$ and $f^t=\argmin_{f\in\mathcal{F}} \E_{\theta\sim \vw_{\Thetafin}^t}[R(f,\theta)]$. Nature needs to decide the distribution $\vw_{\Thetafin}^{t}$ given the historical online learning reward vectors $(\vu^{t'})_{t'=1}^{t-1}$ and minimize the online learning regret.%\fang{Check my edits.  Reader may be confuse the online learning regret $\mathcal{R}$ of the nature with the regret of learner $R$}

%\yk{add citation from a textbook if needed}
\begin{definition}[Online Learning Regret]
    The difference between the maximal reward and the expected reward.
    \[
    \mathcal{R}(T)=\max_{\theta\in \Thetafin}\sum_{t=1}^T u_{\theta}^t-\sum_{t=1}^T \vw_{\Thetafin}^t\cdot \vu^t
    \]
\end{definition}

Notice that we use \textit{max} instead of \textit{sup} here since $|\Thetafin|<\infty$. In particular, we use the multiplicative weights algorithm such that 
\[
w^{t+1}_{\theta}=w^{t}_{\theta} \exp(-\eta u_{\theta}^t)/Z_t
\]
where $Z_t$ is a normalization factor and $\eta$ is the learning rate.

Then we show that the online learning method can obtain an approximate equilibrium. It is a direct application of \citet{freund1999adaptive}. \yk{check it}
The proof is deferred to \Cref{prf:regret2eps}.
\begin{lemmarep}\label{lem:regret2eps}
Suppose $\vw_{\Thetafin}^1$ is a uniform distribution. Let $\eta=\frac{1}{1+\sqrt{2\ln n/T}}$. Then for any loss function $\ell$ whose value lies in $[0,1]$ and for any $f^t$, $\left(\frac1T\sum_{t=1}^T f^{t},\frac1T\sum_{t=1}^T \vw_{\Thetafin}^{t}\right)$ is a $\left(\frac{\sqrt{2T\ln n}+\ln n}T+\frac{\epsilon}{5}\right)$-equilibrium.
\end{lemmarep}
\begin{toappendix}
    \label{prf:regret2eps}
\end{toappendix}
\begin{proof}
    We have the following lemma for the regret bound.
    \begin{lemma}[Freund and Schapire, 1999]
        Suppose $\vw_{\Thetafin}^1$ is a uniform distribution. Let $\beta=\frac{1}{1+\sqrt{2\ln n/T}}$. Then for any online learning loss function $c$ whose value lies in $[0,1]$ and for any $f^t$, the following bound holds:
        $$\mathcal{R}(T)\le \sqrt{2T\ln n}+\ln n.$$
    
        Notice that when $T\to\infty$, the average regret $\frac{\mathcal{R}(T)}T$ can be arbitrarily small.
    \end{lemma}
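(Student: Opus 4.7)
The plan is to prove this classical Freund--Schapire / Hedge regret bound via the standard two-sided potential-function (log partition function) argument. I will track how the unnormalized weights of the multiplicative update in \cref{alg:1} evolve, bound the resulting potential from above (in terms of nature's cumulative expected reward) and from below (in terms of the best single action's cumulative reward), and then combine the two inequalities under the prescribed tuning of $\beta$.

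First I would introduce the unnormalized weights $\widetilde W^1_\theta:=1$ and $\widetilde W^{t+1}_\theta:=\widetilde W^t_\theta\,\beta^{u^t_\theta}$ with $\beta=e^{-\eta}\in(0,1)$, so that the algorithm's normalized iterates are $w^t_\theta=\widetilde W^t_\theta/\Phi_t$ where $\Phi_t:=\sum_\theta \widetilde W^t_\theta$ and $\Phi_1=n$. The upper bound on $\Phi_{T+1}$ follows from the convexity inequality $\beta^x\le 1-(1-\beta)x$ for $x\in[0,1]$, which gives $\Phi_{t+1}\le \Phi_t\bigl(1-(1-\beta)\vw^t\cdot\vu^t\bigr)$; taking logarithms with $\ln(1-x)\le -x$ and telescoping yields
\[
\ln\Phi_{T+1}\le \ln n-(1-\beta)\sum_{t=1}^T \vw^t\cdot\vu^t.
\]
For the lower bound, the one-line observation $\Phi_{T+1}\ge \widetilde W^{T+1}_\theta=\beta^{U_\theta}$ holds for every $\theta$, where $U_\theta:=\sum_t u^t_\theta$; choosing $\theta^\star$ to match the regret definition $\mathcal R(T)=\max_\theta U_\theta-\sum_t \vw^t\cdot\vu^t$ (with the sign of $\eta$ set so that the multiplicative update in \cref{alg:1} is consistent with the reward--maximization reading of $\mathcal R(T)$) gives $\ln\Phi_{T+1}\ge -\ln(1/\beta)\,U_{\theta^\star}$.

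Combining the two inequalities and rearranging yields $\sum_t\vw^t\cdot\vu^t\le \frac{\ln n+\ln(1/\beta)\,U_{\theta^\star}}{1-\beta}$, so $\mathcal R(T)\le \frac{\ln n}{1-\beta}+\bigl(\frac{\ln(1/\beta)}{1-\beta}-1\bigr)U_{\theta^\star}$. I would then substitute $\beta=1/(1+\sqrt{2\ln n/T})$, use the Taylor estimate $\ln(1/\beta)\le(1-\beta)+\frac{(1-\beta)^2}{2\beta}$ to control the coefficient of $U_{\theta^\star}$, invoke the identity $\frac{1}{1-\beta}=1+\sqrt{T/(2\ln n)}$, and apply the trivial bound $U_{\theta^\star}\le T$ coming from $u^t_\theta\in[0,1]$; the right-hand side then collapses to the stated $\sqrt{2T\ln n}+\ln n$.

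The main obstacle here is the final algebraic step rather than anything conceptual: tracked naively, $\frac{\ln n}{1-\beta}$ and $\bigl(\frac{\ln(1/\beta)}{1-\beta}-1\bigr)U_{\theta^\star}$ each look like they contribute a $\sqrt{T\ln n/2}$-scale quantity plus an $\ln n$ remainder, and they only sum to the sharp $\sqrt{2T\ln n}+\ln n$ after one recognizes that the prescribed choice of $\beta$ is precisely the AM--GM balancing of the $(1-\beta)U_{\theta^\star}/2$ and $\ln n/(1-\beta)$ terms. Getting this balancing right (and, in parallel, reconciling the sign convention between the algorithm's $\exp(-\eta u^t_\theta)$ update and the ``max reward minus expected reward'' definition of $\mathcal R(T)$) is where the care must be spent; everything else is routine telescoping and the one elementary convexity inequality.
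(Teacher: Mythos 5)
Your proposal is the standard potential-function proof of the Freund--Schapire bound, which the paper itself never proves (it only cites it), and most of the machinery is right: the upper bound on the log of the total unnormalized weight via $\beta^x\le 1-(1-\beta)x$ and $\ln(1-x)\le -x$, the lower bound via a single action's weight, and the final tuning (your estimate $\ln(1/\beta)\le(1-\beta)+\frac{(1-\beta)^2}{2\beta}$ together with $\frac{1}{1-\beta}=1+\sqrt{T/(2\ln n)}$ and $U_{\theta^\star}\le T$) does collapse to $\sqrt{2T\ln n}+\ln n$.

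The gap is in the combination step, and it is exactly the sign issue you defer in parentheses. With $\beta=e^{-\eta}\in(0,1)$ your two potential inequalities give, for every $\theta$, an \emph{upper} bound $\sum_t\vw^t_{\Thetafin}\cdot\vu^t\le\frac{\ln n+\ln(1/\beta)\,U_\theta}{1-\beta}$ on the learner's cumulative value; subtracting $U_{\theta^\star}$ from both sides bounds the loss-type regret $\sum_t\vw^t_{\Thetafin}\cdot\vu^t-\min_\theta U_\theta$, not the quantity $\mathcal{R}(T)=\max_\theta U_\theta-\sum_t\vw^t_{\Thetafin}\cdot\vu^t$ defined in the paper, and no upper bound on $\sum_t\vw^t_{\Thetafin}\cdot\vu^t$ can ever upper-bound the latter. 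Indeed, for the literal update $\exp(-\eta u^t_\theta)$ the reward-regret statement is false (give reward $1$ to one fixed action and $0$ to all others: the algorithm drives that action's weight down and $\mathcal{R}(T)=\Theta(T)$), so the sign cannot be waved away; the update must be read as $w^{t+1}_\theta\propto w^t_\theta\exp(+\eta u^t_\theta)$ in the reward convention (equivalently, treat $u^t_\theta$ as a loss and the regret as learner loss minus best loss, which is how Freund--Schapire state it). With the corrected sign the two sides of the potential argument swap roles: $e^{\eta x}\le 1+(e^\eta-1)x$ gives $\ln\Phi_{T+1}\le\ln n+(e^\eta-1)\sum_t\vw^t_{\Thetafin}\cdot\vu^t$, while $\ln\Phi_{T+1}\ge\eta\max_\theta U_\theta$, so $\mathcal{R}(T)\le\frac{\ln n}{\eta}+\left(\frac{e^\eta-1}{\eta}-1\right)T$, and the same balancing (taking $e^\eta=1+\sqrt{2\ln n/T}$, i.e.\ $1/\beta$) yields $\sqrt{2T\ln n}+\ln n$. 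So the plan is sound, but the argument must be stated and carried out in one consistent convention; as drafted, the inequality you label as a bound on $\mathcal{R}(T)$ bounds the opposite-signed regret.
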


    Let $\bar{\vw}_{\Thetafin}=\frac1T\sum_{t=1}^T \vw_{\Thetafin}^{t},\bar{f}=\frac1T\sum_{t=1}^T f^{t}$\footnote{$\bar{f}$ is a pure strategy here.}. Then we have the following chain of inequalities
    
    \begin{align*}
        \E_{\theta\sim \bar{\vw}_{\Thetafin}}[R(\bar{f},\theta)]\ge &\inf_{f\in\mathcal{F}}\E_{\theta\sim \bar{\vw}_{\Thetafin}}[R(f,\theta)]\tag{1}\\
        =&\inf_{f\in\mathcal{F}}\frac1T \sum_{t=1}^T\E_{\theta\sim \vw^t_{\Thetafin}}[R(f,\theta)]\\
        \ge &\frac1T \sum_{t=1}^T\inf_{f\in\mathcal{F}}\E_{\theta\sim \vw^t_{\Thetafin}}[R(f,\theta)]\\
        = &\frac1T \sum_{t=1}^T\E_{\theta\sim w^t_{\Thetafin}}[R(f^t,\theta)]-\epsilon/5 \tag{$f^t$ is the $\epsilon/5$ best response}\\
        \ge &\frac1T \sup_{\theta\in \Thetafin}\sum_{t=1}^T[R(f^t,\theta)]-\epsilon/5-\frac{\mathcal{R}(T)}T\tag{definition of regret $\mathcal{R}(T)$}\\
        \ge&\sup_{\theta\in \Thetafin}[R(\bar{f},\theta)]-\epsilon/5-\frac{\sqrt{2T\ln n}+\ln n}T\tag{2}\\
        \ge&\E_{\theta\sim \bar{w}_{\Thetafin}}[R(\bar{f},\theta)]-\epsilon/5-\frac{\sqrt{2T\ln n}+\ln n}T\tag{3}
    \end{align*}

    According to (1)(2) and (3)(1), we have
    \begin{align*}
    \sup_{\theta\in \Thetafin}[R(\bar{f},\theta)]-\epsilon/5-\frac{\mathcal{R}(T)}T&\le \E_{\theta\sim \bar{w}_{\Thetafin}}[R(\bar{f},\theta)]\le \inf_{f\in\mathcal{F}}\E_{\theta\sim \bar{w}_{\Thetafin}}[R(f,\theta)]+\epsilon/5+\frac{\mathcal{R}(T)}T\\
    \sup_{\theta\in \Thetafin}[R(\bar{f},\theta)]-\epsilon/5-\frac{\sqrt{2T\ln n}+\ln n}T&\le \E_{\theta\sim \bar{w}_{\Thetafin}}[R(\bar{f},\theta)]\le \inf_{f\in\mathcal{F}}\E_{\theta\sim \bar{w}_{\Thetafin}}[R(f,\theta)]+\epsilon/5+\frac{\sqrt{2T\ln n}+\ln n}T\tag{Insert the value of $\mathcal{R}(T)$}
    \end{align*}
    Thus we complete the proof.
\end{proof}

\subsection{Proof of~\Cref{thm:finite}}
\label{ss:pfthmfinite}
First we state that when calculating the best response is efficient, then \Cref{alg:1} is an FPTAS.

\begin{lemma}\label{lem:response}
    If calculating the $\epsilon$-best response $f^t$ costs polynomial time $poly(n,1/\epsilon)$, \Cref{alg:1} is an FPTAS with running time $O\left(\frac{(poly(n,1/\epsilon)+n)\ln n}{\epsilon^2}\right)$.
\end{lemma}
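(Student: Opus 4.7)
The plan is to combine the two preceding lemmas (\Cref{lem:regret2eps} and \Cref{lem:2eps}) with the specific parameter choice made in \Cref{alg:1}. First I would observe that the loop of \Cref{alg:1} is exactly the multiplicative weights setup analyzed in \Cref{lem:regret2eps}, with learning rate $\eta = 1/(1+\sqrt{2\ln n/T})$ and $T=\lceil 25\epsilon^{-2}\ln n\rceil$. Plugging this $T$ into the $\left(\frac{\sqrt{2T\ln n}+\ln n}{T}+\frac{\epsilon}{5}\right)$-equilibrium bound of \Cref{lem:regret2eps} reduces to bounding $\sqrt{2\ln n/T}+(\ln n)/T$ by a constant multiple of $\epsilon$; a short calculation shows this is at most $\tfrac{\sqrt{2}}{5}\epsilon + \tfrac{\epsilon^{2}}{25}\le \tfrac{2}{5}\epsilon$ for $\epsilon\le 1$, so the pair $\bigl(\tfrac{1}{T}\sum_t f^t,\tfrac{1}{T}\sum_t \vw_{\Thetafin}^t\bigr)$ is a $(3\epsilon/5)$-equilibrium of the zero-sum game (one could also re-tune the constants in the definition of $T$ to land exactly on $\epsilon/2$ if desired).

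Next I would invoke \Cref{lem:2eps}: because $R(f,\theta)=\E_\theta[\ell(f(\vx),\omega)-\ell(opt_\theta(\vs),\omega)]$ is convex in $f$ (since $\ell$ is assumed convex in its first argument) and $\mathcal{F}$ is compact, an $\eps'$-equilibrium in the mixed-vs-mixed zero-sum game yields a pure aggregator $f^\star=\E_{f\sim \vw_{\mathcal F}'}[f]$ with $\sup_{\theta\in\Thetafin}R(f^\star,\theta)\le \inf_{f\in\mathcal F}\sup_{\theta\in\Thetafin}R(f,\theta)+2\eps'$. Taking $\eps'=3\epsilon/5$ and noting that the average $f^\star=\tfrac{1}{T}\sum_t f^t$ is exactly what \Cref{alg:1} outputs, we obtain $\sup_{\theta\in\Thetafin}R(f^\star,\theta)\le\inf_{f\in\mathcal F}\sup_{\theta\in\Thetafin}R(f,\theta)+\epsilon$, which is the claimed approximation guarantee of \Cref{thm:finite}. (Tightening constants in the definition of $T$ gives exactly $\epsilon$; the order of magnitude is all the theorem requires.)

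For the running time, I would account for each round of the outer loop separately: the call to the $(\epsilon/5)$-best-response oracle costs $\mathrm{poly}(n,1/\epsilon)$ by hypothesis, the evaluation of the reward vector $\vu^t\in[0,1]^n$ is $O(n)$ per information structure (using that $|\rsupp(\theta)|$ is a constant so computing a single $R(f^t,\theta)$ is $O(1)$), and the multiplicative-weights update and renormalisation cost another $O(n)$. With $T=O(\epsilon^{-2}\ln n)$ rounds this totals $O\!\bigl((\mathrm{poly}(n,1/\epsilon)+n)\,\epsilon^{-2}\ln n\bigr)$, matching the stated bound. The only place I expect any subtlety is in verifying the constants so that the online-learning error and the best-response slack $\epsilon/5$ sum to at most $\epsilon/2$ before the factor-of-two blow-up from \Cref{lem:2eps}; this is purely arithmetic and is absorbed by taking the ceiling in the definition of $T$.
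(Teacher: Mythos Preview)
Your proposal is correct and follows the same approach as the paper: the per-iteration cost is $\mathrm{poly}(n,1/\epsilon)$ for the best-response oracle plus $O(n)$ for computing the reward vector (using that each $|\rsupp(\theta)|$ is constant) and $O(n)$ for the weight update, multiplied by $T=O(\epsilon^{-2}\ln n)$ rounds. The paper's own proof of this lemma is actually terser than yours---it records only the running-time computation and defers the approximation guarantee to the separate \Cref{lem:finite_regret} (a corollary of \Cref{lem:2eps} and \Cref{lem:regret2eps}), whereas you fold that argument in; but the substance is the same.
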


\begin{proof}
 For each information structure $\theta$, since the support of $\theta$ is a constant, we only need to enumerate the support to calculate $u_{\theta}^t$ which costs $O(1)$. So the loss step costs $O(n)$. Then updating the weights costs $O(n)$ to enumerate the information structures. Finally, the best response costs $poly(n,1/\epsilon)$. Thus for $T$ rounds, the total time complexity is $O(\frac{poly(n,1/\epsilon)+n}{\epsilon^2\ln n})$.
\end{proof}

The following corollary of \Cref{lem:2eps} and \Cref{lem:regret2eps} shows the near-optimal property of \Cref{alg:1}.

\begin{corollary}
\label{lem:finite_regret}
For any $0<\epsilon<1$, let $n=|\Thetafin|$, $T=\lceil\frac{\ln n}{\epsilon^2}\rceil$. The average output of \Cref{alg:1} $\bar{f}=\frac{1}{T}\sum_{t=1}^T f^t$ satisfies
$$R(\bar{f},\Thetafin)\le \inf_f R(f,\Thetafin)+5\epsilon.$$
\end{corollary}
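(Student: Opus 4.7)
The plan is to derive this corollary by directly chaining the two lemmas established earlier in this section: \Cref{lem:regret2eps} converts the multiplicative-weights regret bound into an approximate-equilibrium certificate for the pure-aggregator/mixed-nature zero-sum game, and \Cref{lem:2eps} converts such a certificate into a worst-case regret bound on the averaged aggregator. The parameters of \Cref{alg:1} ($\vw^1_{\Thetafin}$ uniform, $\eta = 1/(1+\sqrt{2\ln n/T})$, and an $\epsilon/5$-best response at every round) were chosen precisely to match the hypotheses of \Cref{lem:regret2eps}, so the corollary reduces to a parameter calculation.

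First, I would feed the trajectory $(f^t, \vw^t_{\Thetafin})_{t=1}^T$ produced by \Cref{alg:1} into \Cref{lem:regret2eps}. The lemma certifies that $(\bar f, \bar{\vw}_{\Thetafin})$, where $\bar f = \tfrac{1}{T}\sum_t f^t$ and $\bar{\vw}_{\Thetafin} = \tfrac{1}{T}\sum_t \vw^t_{\Thetafin}$, is a $\delta$-equilibrium of the zero-sum game, with
\[
\delta \;=\; \frac{\sqrt{2T\ln n} + \ln n}{T} + \frac{\epsilon}{5}.
\]

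Second, I would substitute $T = \lceil \ln n/\epsilon^2 \rceil$ and bound $\delta$ term by term. The first summand gives $\sqrt{2T\ln n}/T = \sqrt{2\ln n/T} = O(\epsilon)$; the middle summand gives $\ln n/T = O(\epsilon^2)$, which is absorbed into a linear term in $\epsilon$ using $\epsilon<1$; and the last summand is $\epsilon/5$. Collecting the explicit constants and simplifying yields $2\delta \le 5\epsilon$.

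Third, because the loss $\ell$ is convex in $f$ by assumption in \Cref{thm:finite}, the regret $R(f,\theta) = \E_\theta[\ell(f(\vx),\omega)] - \E_\theta[\ell(opt_\theta(\vs),\omega)]$ inherits convexity in its first argument. Consequently \Cref{lem:2eps} applies to the pair $(\bar f, \bar{\vw}_{\Thetafin})$, yielding $R(\bar f, \Thetafin) = \sup_{\theta\in\Thetafin} R(\bar f, \theta) \le \inf_{f\in\mathcal{F}} R(f, \Thetafin) + 2\delta \le \inf_{f\in\mathcal{F}} R(f, \Thetafin) + 5\epsilon$, which is the claimed bound. There is no genuine obstacle remaining here: the substantive content — the multiplicative-weights regret analysis of Freund--Schapire and the Glicksberg-style minimax reduction — is entirely absorbed into \Cref{lem:regret2eps} and \Cref{lem:2eps}, so the corollary itself amounts to bookkeeping of the approximation constants.
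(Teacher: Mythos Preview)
Your approach is exactly the paper's: the corollary is stated there as an immediate consequence of \Cref{lem:2eps} and \Cref{lem:regret2eps} with no further argument, and your three steps fill in precisely that chain. (The only wrinkle is arithmetic: with $T=\lceil\ln n/\epsilon^2\rceil$ one gets $2\delta\le 2\sqrt{2}\,\epsilon+2\epsilon^2+\tfrac{2}{5}\epsilon\approx 5.23\epsilon$ rather than $5\epsilon$; this is the paper's own inconsistency, since \Cref{alg:1} actually sets $T=\lceil 25\ln n/\epsilon^2\rceil$, under which $2\delta\le 5\epsilon$ holds with room to spare.)
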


By \Cref{lem:finite_regret} and \Cref{lem:response} we obtain our main theorem \ref{thm:finite}.

\section{Two Conditionally Independent Agents: Discrete Reports}
\label{sec:discrete}

We have shown how to use our general framework \cref{thm:finite} in the two conditionally independent agents setting with a finite set of information structures as \cref{cor:finite_time}. Now we will consider continuous sets of information structures with general signal. We first solve a partially continuous setting, the discrete report setting, of the robust forecast aggregation problem $$\inf_{f\in \mathcal{F}}\sup_{\theta\in\ThetaciN}R(f,\theta)$$
given that we have two conditionally independent agents. Recall that in \cref{eq:discrete_infostruct}
% \ThetaciN = \{\theta\in\Thetaci: \forall i, \forall s_i\in \mathcal{S}_i,\Pr_\theta[\omega=1|s_i]\in \{0,\frac1N, \frac2N, \cdots, 1\}\}
\[
\ThetaciN = \{\theta\in\Thetaci: \forall i, \forall s_i\in \mathcal{S}_i,x_i(s_i,\theta)\in [1/N]\}.
\]

\begin{theorem}[Discrete Setting]\label{thm:discrete}Given $N\in \mathbb{N}$, and $\epsilon>0$, there exists an algorithm that outputs an $\epsilon$-optimal aggregator over information structures $\ThetaciN$ in $\tilde{O}\left(\frac{N^5}{\epsilon^{13/2}}\right)$. \end{theorem}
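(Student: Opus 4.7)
The plan is a three-stage reduction. First, apply the dimension-reduction lemma (\Cref{lem:reduction}) to restrict attention from $\ThetaciN$ to $\ThetabciN$, i.e., binary-signal conditionally-independent structures parametrized by $(\mu,a_0,a_1,b_0,b_1)$ with $a_i,b_j\in[1/N]$ but $\mu\in[0,1]$. Second, discretize $\mu$ onto the grid $[1/M]$ to obtain the finite family $\ThetabciNM$ of \eqref{eq:finite_infostruct0}. Third, invoke \Cref{cor:finite_time} to compute an $(\epsilon/3)$-optimal aggregator over $\ThetabciNM$ in time $\tilde O(N^4 M/\epsilon^2)$. Choosing $M=\tilde\Theta(N/\epsilon^{9/2})$ then yields the claimed running time $\tilde O(N^5/\epsilon^{13/2})$.

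The heart of the argument is an insensitivity bound: for every $\theta=(\mu,a_0,a_1,b_0,b_1)\in\ThetabciN$, the nearest grid point $\theta'=(\mu',a_0,a_1,b_0,b_1)\in\ThetabciNM$ with $|\mu-\mu'|\le 1/M$ should satisfy $|R(f,\theta)-R(f,\theta')|\le \epsilon/3$ uniformly over $f\in\mathcal{F}$. Following the framework overview, I would split $R(f,\theta)=\E_\theta[(f(\vx)-\omega)^2]-\E_\theta[g_\mu(\vx)(1-g_\mu(\vx))]$ and bound the two pieces separately. The first piece is the easy one: expanding $\Pr_\theta[\omega,\vx]$ via the probability parametrization of \eqref{eq:para2} together with the bijection from \Cref{lem:construct_info} shows that, for fixed $(a_0,a_1,b_0,b_1)$, it is a rational function of $\mu$ whose derivative is bounded on the four-point support $\rsupp(\theta)$, yielding an $O(1/M)$ bound with no $N$ blow-up since the sum has only a constant number of terms.

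The main obstacle is the omniscient term $\E_\theta[g_\mu(\vx)(1-g_\mu(\vx))]$, because $g_\mu$ depends on $\mu$ and, as the framework overview warns, behaves badly when reports lie near $\{0,1\}$. I would adopt the strategy sketched there and partition the report grid into a ``safe'' region, where $\min(a_i,1-a_i)\ge\tau$ and $\min(b_j,1-b_j)\ge\tau$ for a threshold $\tau=\tau(\epsilon,N,M)$ to be optimized, and a ``sensitive'' region containing the remaining extreme reports. On the safe region, both $g_\mu$ and the joint distribution $\Pr_\theta$ are $\mathrm{poly}(1/\tau)$-Lipschitz in $\mu$, giving a pointwise contribution of $\mathrm{poly}(1/\tau)/M$. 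On the sensitive region, the key observation is that under any conditionally-independent $\theta$ the marginal probability that an extreme report is realized is at most $O(\tau)$, so the total contribution from the sensitive region to both $\E_\theta[g_\mu(\vx)(1-g_\mu(\vx))]$ and its $\theta'$-counterpart is at most $O(\tau)$. Balancing the two error sources, subject to $\tau\ge 1/N$ (the finest grid spacing on which reports live), should produce the requirement $M=\tilde\Omega(N/\epsilon^{9/2})$ and give the insensitivity bound $|R(f,\theta)-R(f,\theta')|\le \epsilon/3$.

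Once this insensitivity bound is in hand, the remainder is mechanical: by the triangle inequality an $(\epsilon/3)$-optimal aggregator over $\ThetabciNM$ is $\epsilon$-optimal over $\ThetabciN$, and by \Cref{lem:reduction} also over $\ThetaciN$. The genuine technical obstacle throughout is the delicate extreme-report analysis of $g_\mu$; everything else follows directly from the finite-$\Theta$ FPTAS of \Cref{sec:finite}.
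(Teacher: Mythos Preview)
Your high-level plan matches the paper exactly: reduce to $\ThetabciN$ via \Cref{lem:reduction}, discretize $\mu$ to get $\ThetabciNM$, apply \Cref{cor:finite_time}, and choose $M=\Theta(N\epsilon^{-9/2})$. The gap is in the insensitivity argument, where two of your concrete claims are false as stated.

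First, the ``no $N$ blow-up'' claim for $\E_\theta[(f(\vx)-\omega)^2]$ is incorrect. With $(a_0,a_1,b_0,b_1)$ fixed and $\mu$ varying, the joint probabilities $\Pr_\theta[(a_i,b_j),\omega]$ computed from \Cref{lem:construct_info} have $\mu$-derivative of order $\tfrac{1}{a_1-a_0}+\tfrac{1}{b_1-b_0}$, which is $\Theta(N)$ in the worst case; the paper's \Cref{lem:dtv_covernum} makes exactly this calculation and obtains $d_{TV}(\Xdis{\theta},\Xdis{\theta'})\le 6N/M$, not $O(1/M)$. Second, and more seriously, your sensitive-region probability bound is simply false: take $\mu=2\tau$, $a_0=\tau$, $a_1=1/2$; then $\Pr_\theta[x_1=\tau]=\tfrac{1/2-2\tau}{1/2-\tau}\approx 1$, not $O(\tau)$. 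The paper's actual argument (\Cref{lem:dtv_smooth} via \Cref{lem:prior_shift}) never partitions by report location. It cases on the \emph{prior}: when $\mu\ge\epsilon^{1-\alpha}$, the difference $|g_\mu-g_{\mu'}|$ is uniformly $O(\epsilon^\alpha)$; when $\mu<\epsilon^{1-\alpha}$, \Cref{lem:ext1} forces both reports below $\mu^{1-\alpha}$ with probability $1-O(\mu^\alpha)$, and on that event $\xi=\tfrac{x_1x_2}{(1-x_1)(1-x_2)}$ is so small that $|g_\mu-g_{\mu'}|$ is controlled directly. Balancing at $\alpha=1/3$ gives $\E_\theta|g_\mu-g_{\mu'}|\le 52|\mu-\mu'|^{2/9}$, and it is this $2/9$ exponent, combined with the $O(N/M)$ TVD bound, that produces the $M=\Theta(N\epsilon^{-9/2})$ you asserted but did not derive from your safe/sensitive split.
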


%\begin{theorem}[discrete setting]\label{thm:discrete}Given $N\in \mathbb{N}$, and $\epsilon>0$, there exists $M = O(N\epsilon^{-9/2})$ so that \Cref{alg:1} with input $\ThetabciNM$ outputs an $\epsilon$-optimal aggregator over information structures $\ThetaciN$ in \cref{eq:discrete_infostruct}.  Moreover the running time is $\tilde{O}\left(\frac{N^5}{\epsilon^{13/2}}\right)$.\end{theorem}

\paragraph{Proof Sketch} With the dimension reduction results in \cref{lem:reduction}, it is sufficient to consider the set of conditionally independent information structures with binary signals and discrete reports, $\ThetabciN$. We then use a natural discretization of $\ThetabciN$, $\ThetabciNM$ defined in \cref{eq:discrete_infostruct}. We show that when $M = O(N\epsilon^{-9/2})$, $\ThetabciNM$ is a sufficiently good representation of $\ThetabciN$.  Then we run \Cref{alg:1} with input $\ThetabciNM$ and output an $\epsilon$-optimal aggregator $f^*$. 

More formally, $\ThetabciNM$ is a sufficiently good representation of $\ThetabciN$ if for all $\theta\in \ThetabciN$ \fang{for all $f$ as well?} there exists $\theta'\in \ThetabciNM$ such that $R(f, \theta)\approx R(f, \theta')$. This implies $R(f, \ThetabciNM)\approx R(f, \ThetabciN)$ for any $f$. Thus $\inf_f R(f, \ThetabciNM)\approx \inf_f R(f, \ThetabciN)$. Then by our reduction result, $\inf_{f} R(f,\ThetabciN) = \inf_f R(f,\ThetaciN)$. Hence we will have 
\[R(f^*, \ThetabciNM)\approx \inf_{f} R(f,\ThetabciNM)\approx \inf_{f} R(f,\ThetabciN) = \inf_f R(f,\ThetaciN).\] The first approximate equality follows from the property of \Cref{alg:1}. The second follows from the fact that $\ThetabciNM$ is a sufficiently good representation of $\ThetabciN$, and the last equality follows from the dimension reduction results. 

The main technical part of the proof is showing the second approximate equality. We will use the following concept to show it.

\begin{definition}[$(\epsilon, d)$-Covering]
Given a metric $d$ over space $\Phi$, a set $A\subset \Phi$ is an \textbf{\textit{$(\epsilon, d)$-covering}} (or a $d$-covering with $\epsilon$) of a set $B\subset \Phi$ if for all $x\in B$, there exists $y\in A$ such that $x$ and $y$ are $\epsilon$-close under the metric, i.e., $d(x,y)\le \epsilon$.    
\end{definition}

% We want to find a covering for $\ThetabciN$.  We will first pick a proper metric. For each information structure $\theta$, recall that the marginal distribution over reports is $\Xdis{\theta}$. We will show that in our setting, the difference between two information structures $\theta,\theta'$ can be bounded by the difference between $\Xdis{\theta}$ and $\Xdis{\theta'}$. Thus, it is sufficient to pick a metric to measure the difference between $\Xdis{\theta}$ and $\Xdis{\theta'}$. 
To this end, we will pick a proper metric on information structures and find a good covering for $\ThetabciN$.  Note that in our setting an aggregator $f\in \mathcal{F}$ and omniscient aggregator takes the predictions $\Xdis{\theta}$ as input.  Thus, we will show that bounding the distance between $\Xdis{\theta}$ and $\Xdis{\theta'}$ is sufficient for our approximation argument for any pair of information structures $\theta$ and $\theta'$. 
Specifically, in the discrete reports setting, for each $\theta\in\ThetabciN$, the marginal distribution over reports $\Xdis{\theta}$ has a discrete support $\{0,\frac1N,\cdots,1\}^2 = [1/N]^2$. We use total variation distance (TVD) to measure the distance between $\Xdis{\theta}$ and $\Xdis{\theta'}$. In later sections where the support is continuous, TVD can be too restricted. Thus, later we will use the earth mover's distance (EMD).

% To measure the distance of two probability distributions, we use either total variation distance or earth mover distance, which depends on the analysis being conducted. 

\begin{definition}[Total Variation Distance, \citealp{Chatterjee2008}]\label{def:tvd}
    Given two distributions $P$ and $Q$ on $[0,1]^2$, we introduce total variation distance,
\[d_{TV}(P,Q) = \frac{1}{2}\sup_{h:\|h\|_\infty\le 1}\E_P[h]-\E_Q[h].\] Moreover, the optimal $h(\vx) = \begin{cases}1 & P(\vx)\geq Q(\vx)\\
-1& P(\vx)<Q(\vx)\end{cases}$ and $d_{TV}(P,Q) = \frac{1}{2}\sum|P(\vx)-Q(\vx)|$.
\end{definition}

We induce TVD-covering from the total variation distance.
\begin{definition}[$(\epsilon, d_{TV})$-Covering]
A set $A\subset \Theta$ is an \textbf{\textit{$(\epsilon, d_{TV})$-covering}} of a set $B\subset \Theta$ if for all $\theta\in B$, there exists $\theta'\in A$ so that $d_{TV}(\Xdis{\theta},\Xdis{\theta'})\le \epsilon$. By abusing the notation a little bit, we set $d_{TV}(\theta,\theta')=d_{TV}(\Xdis{\theta},\Xdis{\theta'})$ and call the corresponding $(\epsilon, d_{TV})$-covering the TVD-covering.  
\end{definition}

With the above concept, we will use two steps to show $\ThetabciNM$ is a sufficiently good representation of $\ThetabciN$. First, we bound the change of regret by the TVD between predictions (\Cref{sec:dtv_smooth}). Formally, from \Cref{lem:dtv_smooth}, we induce that $R(f,\Theta')\approx R(f,\Theta'')$ for any pair of families of information structures $\Theta',\Theta''$ if $\Theta'$ is a good $d_{TV}$-coverings for $\Theta''$. Second, we prove in \Cref{lem:dtv_covernum} that $\ThetabciNM$ is a good $d_{TV}$-covering of $\ThetabciN$ (\Cref{sec:tvdcovering}), which completes the proof of our main theorem. 
%This completes the proof of the fact that $\ThetabciNM$ is a sufficiently good representation of $\ThetabciN$, i.e., for all $\theta\in \ThetabciN$ there exists $\theta'\in \ThetabciNM$ such that $R(f, \theta)\approx R(f, \theta')$. 
%first part of \Cref{thm:discrete}, and the second part follows directly from \Cref{cor:finite_time}. \yk{Maybe find a way to integrate these proof sketch}

\subsection{Regret is Insensitive with Respect to TVD}\label{sec:dtv_smooth}

We first show that when the information structure is changed from $\theta$ to $\theta'$ the regret function $R$ does not change much if $\Xdis{\theta}$ and $\Xdis{\theta'}$ are close in total variation distance. Recall that $\Xdis{\theta}$ is the marginal distribution over reports as we defined in \Cref{sec:pre_bci}.

\begin{proposition}\label{lem:dtv_smooth}
For any aggregator $f:[0,1]^2\to [0,1]$, and conditional independent information structures $\theta, \theta'\in \Thetaci$
\begin{equation}\label{eq:dtv_covering1}
    R(f,\theta)\le R(f, \theta')+105d_{TV}(\Xdis{\theta}, \Xdis{\theta'})^{2/9}.
\end{equation}
\end{proposition}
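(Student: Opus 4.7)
The plan is to split the regret gap into a \emph{measure-change} piece and a \emph{benchmark-change} piece, and to control the latter via the explicit formula for $g_\mu-g_{\mu'}$ combined with the density identity for $\Pr_\theta[\vx]$. Using the ``Additive Regret'' lemma $R(f,\theta)=\E_{\Xdis{\theta}}[(f-g_\mu)^2]$, I would write
\[R(f,\theta)-R(f,\theta')=\underbrace{\E_{\Xdis{\theta}}[(f-g_\mu)^2]-\E_{\Xdis{\theta'}}[(f-g_\mu)^2]}_{\text{(I)}}+\underbrace{\E_{\Xdis{\theta'}}\bigl[(f-g_\mu)^2-(f-g_{\mu'})^2\bigr]}_{\text{(II)}}.\]
Since $(f-g_\mu)^2\in[0,1]$, \Cref{def:tvd} yields $|\text{(I)}|\le 2d_{TV}(\Xdis{\theta},\Xdis{\theta'})$. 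For (II), applying $a^2-b^2=(a-b)(a+b)$ with $a=f-g_\mu$, $b=f-g_{\mu'}$, together with $|2f-g_\mu-g_{\mu'}|\le 2$, reduces the task to bounding $\E_{\Xdis{\theta'}}[|g_\mu-g_{\mu'}|]$.

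Next, I would derive from \cref{eq:omniscient} the closed form
\[g_\mu(\vx)-g_{\mu'}(\vx)=\frac{AB\,(\mu'-\mu)}{[(1-\mu)A+\mu B]\,[(1-\mu')A+\mu' B]},\qquad A=x_1x_2,\ B=(1-x_1)(1-x_2),\]
and combine it with the density identity $\Pr_{\theta'}[\vx]=\Pr_{\theta'}[x_1]\Pr_{\theta'}[x_2]\cdot\frac{(1-\mu')A+\mu' B}{\mu'(1-\mu')}$ supplied by the Reports Distribution lemma. The factor $(1-\mu')A+\mu' B$ cancels one of the dangerous denominators, leaving
\[\E_{\Xdis{\theta'}}[|g_\mu-g_{\mu'}|]=\frac{|\mu-\mu'|}{\mu'(1-\mu')}\,\E_{X_1\otimes X_2}\!\Bigl[\tfrac{AB}{(1-\mu)A+\mu B}\Bigr],\]
where $X_1\otimes X_2$ is the product of the $\Xdis{\theta'}$-marginals. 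Two further inputs scale the right-hand side: (i) $|\mu-\mu'|\le 2 d_{TV}(\Xdis{\theta},\Xdis{\theta'})$ because $\mu=\E_{\Xdis{\theta}}[x_1]$ with $x_1\in[0,1]$; and (ii) AM-GM gives $(1-\mu)A+\mu B\ge 2\sqrt{\mu(1-\mu)AB}$, after which Jensen on the concave $\sqrt{\cdot}$, together with $\Var(x_i)\ge 0$, yields $\E_{X_1\otimes X_2}[\sqrt{AB}]\le\mu'(1-\mu')$.

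Chaining these bounds gives an estimate for (II) of order $d_{TV}/\sqrt{\mu(1-\mu)}$, which is useless when $\mu$ is near $0$ or $1$. This boundary regime---exactly the omniscient-aggregator sensitivity flagged in the paper's overview---is the main obstacle. I would dispatch it by a threshold argument: pick a parameter $\alpha$ and case-split on whether $\mu(1-\mu)\ge\alpha$. On the bulk event, the display above gives a bound of order $d_{TV}/\sqrt{\alpha}$. On the boundary event, WLOG $\mu<\alpha$, I would use the crude inequality $|g_\mu-g_{\mu'}|\le g_\mu+g_{\mu'}$ together with the key identity $\E_{\Xdis{\theta}}[g_\mu]=\mu$ (which follows from $g_\mu(\vx)=\Pr_\theta[\omega=1|\vx]$); TVD on the $[0,1]$-valued function $g_\mu$ then gives $\E_{\Xdis{\theta'}}[g_\mu]\le\mu+2d_{TV}\le\alpha+2d_{TV}$, and analogously $\E_{\Xdis{\theta'}}[g_{\mu'}]=\mu'\le\alpha+2d_{TV}$. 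Balancing the two regimes by choosing $\alpha$ as an appropriate power of $d_{TV}$ produces a polynomial bound $C\,d_{TV}^{\gamma}$ with $\gamma\in(0,1)$; because $d_{TV}\le 1$, any such estimate implies the stated $105\,d_{TV}^{2/9}$ inequality (with the constant $105$ absorbing the various factors of $2$ from the measure-change term, the $(g_\mu+g_{\mu'})$ split, and the threshold balancing). The conceptual crux is the density cancellation noted above, which converts a pointwise-divergent kernel into an $L^1$-integrable one; once that is in hand, the remaining $\mu\approx0,1$ pathology is handled by TVD-transferring the one-line identity $\E_{\Xdis{\theta}}[g_\mu]=\mu$.
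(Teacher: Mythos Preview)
Your argument is correct and in fact sharper than the paper's: the density-cancellation step and the AM--GM/Jensen bound together yield $\E_{\Xdis{\theta'}}[|g_\mu-g_{\mu'}|]\le |\mu-\mu'|/\bigl(2\sqrt{\mu(1-\mu)}\bigr)$, and balancing against the boundary estimate at $\alpha\asymp d_{TV}^{2/3}$ gives a bound of order $d_{TV}^{2/3}$ with a small constant, comfortably implying the stated $105\,d_{TV}^{2/9}$.

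The route differs substantially from the paper's. The paper never exploits the Reports Distribution identity to cancel the factor $(1-\mu')A+\mu' B$; instead it bounds $\E_\theta[|g_\mu-g_{\mu'}|]$ (expectation under the more extreme prior) by a pointwise case analysis on $\xi=A/B$ and $\sigma=\mu/(1-\mu)$, invoking the concentration lemma $\Pr_\theta[x_i\ge \mu^{1-\alpha}]\le \mu^{\alpha}$ for the small-$\mu$ regime (\Cref{lem:ext1}), and optimizing at $\alpha=1/3$ to obtain the exponent $2/9$. Your boundary device---transferring the one-line identity $\E_{\Xdis{\theta}}[g_\mu]=\mu$ via TVD---replaces that concentration argument entirely and is both shorter and more robust. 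What the paper's approach buys is that it works with the expectation under $\theta$ rather than $\theta'$ and avoids any appeal to the product-of-marginals structure, so it ports more directly to the later EMD setting where you no longer have the density identity in the same form; your cancellation trick is specific to the TVD/shared-support regime.
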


To prove \Cref{lem:dtv_smooth}, we observe that the regret $R(f,\theta) = \E_\theta[\ell(f(\vx),\omega)]-\E_\theta[\ell(g_\mu(\vx),\omega)]$ can be affected by $\theta$ (or $\theta'$) in two ways: 1) the distribution over the aggregator $f$'s input, the reports, and 2) the omniscient aggregator $g_\mu(\vx)$.  The first term satisfies $\E_\theta[\ell(f(\vx),\omega)]\approx \E_{\theta'}[\ell(f(\vx),\omega)]$ as the total variation distance between $\Xdis{\theta}$ and $\Xdis{\theta'}$ is small.  However, the second term can be sensitive to $\mu$. For instance, \Cref{fig:sensitive prior} demonstrates examples that show that fixing $\vx$, $g_\mu(\vx)$ can be very sensitive to $\mu$ especially when $\mu$ is close to 0 or 1. 

To handle the second term, we first observe that if $\mu$ is close to zero, $\vx$ should also be close to zero with large probability (\Cref{lem:ext1}). The case when $\mu$ is close to one is analogous. Thus, we can still bound the difference in expectation $\E_{\theta}[|g_\mu(\vx)-g_{\mu'}(\vx)|]$ as shown in \Cref{lem:prior_shift}. 

\begin{figure}
    \centering
    \includegraphics[width = 0.4\textwidth]{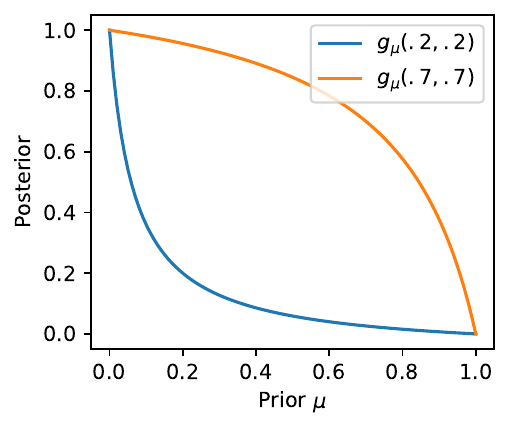}
    \caption{\textbf{Sensitivity of the omniscient aggregator $g_\mu(\vx)$ regarding $\mu$}. We fix agents' reports $\vx=(.2, .2)$ and $\vx=(.7, .7)$, and plot the Bayesian posteriors $g_\mu(\vx)$ as function of prior $\mu$. The figures show that 1) surprisingly, as the prior increases, the posterior decreases. Intuitively, fixing the agents' reports, as the prior probability of rain increases, the reports will increasingly resemble negative evidence for rain. Consequently, the true posterior probability should decrease.  2) $g_\mu(\vx)$ is highly sensitive when the prior $\mu$ is near certain and closely aligned with the reports.}
    \label{fig:sensitive prior}
\end{figure}

%First observe that if the prior is uniform $\mu = 0.5$, the posteriors are more extreme than individual predictions, $g_{0.5}(0.2,0.2)<0.2$ and $g_{0.5}(0.7,0.7)>0.7$.  Second, as the prior increases, the posterior decreases which can be formally verified in \Cref{lem:prediction2posterior}.  Intuitively, when the prior is close to zero, getting $\vx = (0.7,0.7)$ is very surprising and will push the posterior for $w = 1$ close to one. On the other hand, if the prior is already close to one, the same predictions $\vx = (0.7,0.7)$ is surprising in the other direction and pushes the posterior to zero.  Finally, the value of $g_\mu(\vx)$ is very sensitive when $\mu$ is extreme that follows the same intuition.\fang{surprising popular}

\begin{lemmarep}\label{lem:ext1}
For any $\theta\in \Thetaci$ and $\epsilon>0$, the predictions is concentrated
\begin{equation}
\Pr_\theta\left[x_1\le \epsilon\right]\le \frac{1-\mu}{1-\epsilon}\text{ and }\Pr_\theta\left[x_1\ge 1-\epsilon\right]\le \frac{\mu}{1-\epsilon}.\label{eq:ext_1}
\end{equation}
\end{lemmarep}
In particular, \Cref{lem:ext1} ensures $\Pr_\theta\left[x_1\le \epsilon\right]\approx 0$ if $\mu\approx 1$, and $\Pr_\theta\left[x_1\ge 1-\epsilon\right]\approx0$ is $\mu\approx 0$. 

\begin{toappendix}
    \label{prf:ext1}
\end{toappendix}
\begin{proof}[Proof of \Cref{lem:ext1}]
Suppose the prior $\mu \in (0,1)$, or otherwise the inequality trivially holds.  
Let $E_\epsilon = \{s_1: x_1(s_1)\le \epsilon\}$.  By the law of total probability, $1-\mu = \E_\theta[1-x_1] \ge \sum_{s_1\in E_\epsilon} (1-x_1(s_1))\Pr_\theta[s_1]\ge (1-\epsilon)\Pr_\theta[E_\epsilon]$.  Therefore,
$$\Pr_\theta[x_1\le \epsilon]\le \frac{1-\mu}{1-\epsilon}.$$
Similarly, 
$\mu = \E_\theta[x_1] \ge \sum_{s_1:x_1(s_1)\ge 1-\epsilon}x_1\Pr_\theta[s_1]\ge (1-\epsilon)\Pr_\theta[x_1\ge 1-\epsilon]$.

\end{proof}

% \begin{lemma}\label{lem:dtv_smooth}
% For any $0<\delta$, and $0<\epsilon_{TV}<1/16$, if $\Theta'\subseteq \Theta$ and $\Theta'$ is a $(\epsilon_{TV}, d_{TV})$-covering for $\Theta$ and $f'$ is $\delta$-optimal for $\Theta'$ with $\ell(f', \Theta')\le \inf_f\ell(f, \Theta')+\delta$, then 
% $$\ell(f', \Theta)\le \inf_f \ell(f, \Theta)+\delta+105\epsilon_{TV}^{1/3}.$$\fang{check the notation. It was Proposition 4.1 in other document}
% \end{lemma}

We then show that the distance between the prior $\mu$ and $\mu'$ is bounded by the distance between $\Xdis{\theta}$ and $\Xdis{\theta'}$. The results hold for both total variation distance and earth mover's distance which we will define in the next section.

\begin{lemmarep}\label{lem:prior_bound}
    Given $\theta$ and $\theta'\in \Thetaci$, the difference of prior $|\mu-\mu'|$ is less than both $d_{EM}(\Xdis{\theta}, \Xdis{\theta'})$ and $d_{TV}(\Xdis{\theta}, \Xdis{\theta'})$.
\end{lemmarep}
\begin{proof}[Proof of \Cref{lem:prior_bound}]
    Because $\E_\theta[x_1] = \mu$ and $\E_{\theta'}[x_1] = \mu'$, 
    $$|\mu-\mu'| = |\E_\theta[x_1]-\E_{\theta'}[x_1]|$$
    First, the function $(x_1, x_2)\mapsto x_1$ is a $1$-Lipschitz function, we prove the first part.  The second part follows because the function $(x_1, x_2)\mapsto x_1$ is bounded between zero and one.
\end{proof}

Finally, we show that the distance between the prior $\mu$ and $\mu'$ can upper bound the expected distance between the omniscient aggregator's posteriors.  

\begin{claimrep}\label{lem:prior_shift}
     If $\epsilon<1/8$ , for all $\theta$ and $\theta'\in \Thetaci$ with $|\mu-\mu'| = \epsilon$ and $|1/2-\mu|>|1/2-\mu'|$, 
     $$\E_{\theta}[|g_\mu(\vx)-{g}_{\mu'}(\vx)|]\le 52\epsilon^{2/9}.$$
\end{claimrep} 
\begin{proof}[Proof of \Cref{lem:prior_shift}]  
By symmetry, we can assume $\mu<1/2$.  Let $\sigma = \frac{\mu}{1-\mu}$, and $\sigma' = \frac{\mu'}{1-\mu'}$.   For any $\vx= (x_1, x_2)\neq (0,1), (1,0)$, let $\xi = \frac{x_1x_2}{(1-x_1)(1-x_2)}\in \R_{\ge 0}\cup \{+\infty\}$.  By the definition of omniscient aggregator in \cref{eq:omniscient}, $g_\mu(\vx) = \frac{\xi}{\xi+\sigma}$, and 
$$|g_\mu(\vx)-g_{\mu'}(\vx)| = \frac{|\sigma'-\sigma|\xi}{\xi^2+(\sigma+\sigma')\xi+\sigma\sigma'}.$$
Because $1\ge (1-\mu)\ge 1/2$, and $1\ge (1-\mu')\ge 1-\mu-\epsilon\ge 1/3$, $\sigma$ and $\sigma'$ satisfies $\mu\le \sigma\le 2\mu$, $\mu'\le \sigma'\le 3\mu'$, and 
\begin{equation}\label{eq:prior_shift1}
    |\sigma'-\sigma| = \frac{1}{(1-\mu)(1-\mu')}|\mu'-\mu|\le 6\epsilon.
\end{equation}

To prove the inequality, we consider two cases: $\mu\ge \epsilon^{1-\alpha}$ or $\mu<\epsilon^{1-\alpha}$ for any $\alpha\le 1/3$.  

\paragraph{Case 1: $\mu\ge \epsilon^{1-\alpha}$} We will show the difference is always bounded by $\epsilon^{\alpha}$.  
For any $\vx = (x_1, x_2)\neq (0,1), (1,0)$,
\begin{align*}
|g_\mu(\vx)-g_{\mu'}(\vx)| =&\frac{|\sigma'-\sigma|\xi}{\xi^2+(\sigma+\sigma')\xi+\sigma\sigma'}\\
\le& \frac{|\sigma'-\sigma|}{\sigma+\sigma'}\tag{$\xi, \sigma, \sigma'$ are nonnegative}\\
\le& \frac{2}{3}\epsilon^{\alpha-1} |\sigma'-\sigma|\tag{$\sigma\ge \mu\ge \epsilon^{1-\alpha}$, and $\sigma'\ge \mu'\ge \mu-\epsilon\ge 1/2\epsilon^{1-\alpha}$.}\\
\le& \frac{2}{3}\epsilon^{\alpha-1}\cdot{6\epsilon}= 4\epsilon^{\alpha}\tag{by \cref{eq:prior_shift1}}
\end{align*}
Because $\vx$ equals $(0,1)$ or $(1,0)$ with probability zero, we have
\begin{equation}\label{eq:prior_shift2}
    \E_\theta[|g_\mu(\vx)-g_{\mu'}(\vx)|]\le 4\epsilon^{\alpha}.
\end{equation}

\paragraph{Case 2: $\mu\le \epsilon^{1-\alpha}$}  Because $\epsilon^{1-\alpha}\le \epsilon^{2/3}\le 1/4$ and $|1/2-\mu|>|1/2-\mu'|$, $\mu$ is less than $\mu'$ and $\mu' = \epsilon+\mu>\epsilon$.  If $\vx = (x_1, x_2)$ satisfies $x_1, x_2\le \mu^{1-\alpha}\le 1/2$, 
\begin{align*}
    |g_\mu(\vx)-g_{\mu'}(\vx)|\le& \frac{|\sigma'-\sigma|\xi}{\sigma\sigma'}\tag{$\xi, \sigma, \sigma'$ are nonnegative}\\
    \le& 6\frac{\epsilon\xi}{\sigma\sigma'}\tag{by \cref{eq:prior_shift1}}\\
    \le& 24\frac{\epsilon\sigma^{2-2\alpha}}{\sigma\sigma'}\tag{because $\xi = \frac{x_1x_2}{(1-x_1)(1-x_2)}\le 4\mu^{2-2\alpha}\le 4\sigma^{2-2\alpha}$}\\
    \le& 24\sigma^{1-2\alpha}\tag{$\sigma'>\mu'>\epsilon$}
\end{align*}
On the other hand, $\Pr_\theta[x_1\ge \mu^{1-\alpha}]\le \frac{\mu}{\mu^{1-\alpha}}\le \mu^{\alpha}\le \sigma^{\alpha}$ and $\Pr_\theta[x_2\ge \mu^{1-\alpha}]\le \sigma^{\alpha}$ by \Cref{lem:ext1}.  Therefore, by union bound and $\alpha\le 1/3$, we have
\begin{equation}\label{eq:prior_shift3}
   \E_\theta |g_\mu(\vx)-g_{\mu'}(\vx)|\le 2\sigma^{\alpha}\cdot \|g_\mu(\vx)-g_{\mu'}(\vx)\|_\infty+24\sigma^{1-2\alpha} \le  26\sigma^{\alpha}\le 26\cdot 2^\alpha \mu^\alpha\le 52 \epsilon^{\alpha(1-\alpha)}.
\end{equation}
Combining \cref{eq:prior_shift2,eq:prior_shift3} completes the proof by taking $\alpha = 1/3$.
\end{proof}

The proofs of \Cref{lem:ext1}, \Cref{lem:ext2} and \Cref{lem:prior_shift} are deferred to \Cref{{prf:ext1}}. With the above auxiliary lemmas, we start to prove \Cref{lem:dtv_smooth}. 
\begin{proof}[Proof of \Cref{lem:dtv_smooth}]
Given $\theta$ and $\theta'$ let $\epsilon_{TV} = d_{TV}(\Xdis{\theta}, \Xdis{\theta'})$, and $\epsilon = |\mu-\mu'|$ which is less than $\epsilon_{TV}$ by \Cref{lem:prior_bound}.  We want to bound the difference of losses on $\theta$ and $\theta'$.  When $|1/2-\mu|>|1/2-\mu'|$,
\begin{align*}
    R(f', \theta)-R(f', \theta') =& \E_\theta[\left(g_\mu(\vx)-f(\vx)\right)^2]-\E_{\theta'}[\left(g_{\mu'}(\vx)-f(\vx)\right)^2]\\
    =&\left(\E_{\theta}[\left(g_{\mu'}(\vx)-f(\vx)\right)^2]-\E_{\theta'}[\left(g_{\mu'}(\vx)-f(\vx)\right)^2]\right)\\
    &+\left(\E_\theta[\left(g_\mu(\vx)-f(\vx)\right)^2]-\E_{\theta}[\left(g_{\mu'}(\vx)-f(\vx)\right)^2]\right)\\
\end{align*}
The first term is the difference of one function on two distributions, $\Xdis{\theta}$ and $\Xdis{\theta'}$, and the second term is the expected difference of two functions on the distribution $\Xdis{\theta}$.  

For the first term, because $0\le \left(g_{\mu}(\vx)-f(\vx)\right)^2\le 1$ for all $\vx$, 
$ \E_\theta[\left(g_{\mu}(\vx)-f(\vx)\right)^2]-\E_{\theta'}[\left(g_{\mu}(\vx)-f(\vx)\right)^2]\le d_{TV}(P_\theta, P_{\theta'}) = \epsilon_{TV}$.
 For the second term, 
\begin{align*}
    &\E_{\theta}[\left(g_\mu(\vx)-f(\vx)\right)^2]-\E_{\theta}[\left(g_{\mu'}(\vx)-f(\vx)\right)^2]\\
    =&  \E_{\theta}[\left(g_\mu(\vx)+g_{\mu'}(\vx)-2f(\vx)\right)\left(g_\mu(\vx)-g_{\mu'}(\vx)\right)]\\
    \le& 2\E_{\theta}[|g_\mu(\vx)-g_{\mu'}(\vx)|]\tag{$\|g_\mu(\vx)+g_{\mu'}(\vx)-2f(\vx)\|_\infty\le 2$}\\
    \le& 104 |\mu-\mu'|^{2/9} = 104\epsilon^{2/9}\tag{by \Cref{lem:prior_shift}}
\end{align*}
Combining these two gets $R(f', \theta)\le R(f', \theta')+104\epsilon^{2/9}+\epsilon_{TV}\le R(f', \theta')+105\epsilon_{TV}^{2/9}$.  

For the other two case, if $|1/2-\mu'|<|1/2-\mu|$, we can write 
\begin{align*}
R(f', \theta)-R(f', \theta') = &\left(\E_\theta[\left(g_\mu(\vx)-f(\vx)\right)^2]-\E_{\theta'}[\left(g_{\mu}(\vx)-f(\vx)\right)^2]\right)\\
    &+\left(\E_{\theta'}[\left(g_{\mu}(\vx)-f(\vx)\right)^2]-\E_{\theta'}[\left(g_{\mu'}(\vx)-f(\vx)\right)^2]\right).
\end{align*}
and the same argument applies.  Finally, if $|1/2-\mu'| = |1/2-\mu|$, the error can be further reduced to $\epsilon_{TV}$.  
% Therefore, \cref{eq:dtv_covering1} holds for all $\epsilon_{TV}$-close pairs.
% Finally, by \cref{eq:dtv_covering1}, we have 
% $$\ell(f', \theta)\le\ell(f', \theta')+105\epsilon_{TV}^{1/3}\le \inf_f\ell(f, \Theta')+\delta+105\epsilon_{TV}^{1/3}\le \inf_f\ell(f, \Theta)+\delta+105\epsilon_{TV}^{1/3}.$$
% The second inequality is due to $f'$ is $\delta$-optimal, and the final one is due to $\Theta'\subseteq \Theta$.
\end{proof}

\subsection{$\ThetabciNM$ is a Small TVD Covering of $\ThetabciN$}\label{sec:tvdcovering}\fang{This is not that exciting and we can move to appendix.}

To complete the proof of \Cref{thm:discrete}, we show that $\ThetabciNM$ is a small total variation covering of $\ThetabciN$.
\begin{lemmarep}\label{lem:dtv_covernum}  For all $N$ and $M$ in $\mathbb{N}$ with $M>N$, $\ThetabciNM$ is a $(\frac{6N}{M}, d_{TV})$-covering of $\ThetabciN$.
\end{lemmarep}
\paragraph{Proof Sketch}
To prove \Cref{lem:dtv_covernum}, given any $\theta' = (\mu', a_0', a_1', b_0', b_1')\in \ThetabciN$, we should find \\$\theta = (\mu, a_0, a_1, b_0, b_1)\in \ThetabciNM$, such that $d_{TV}(\Xdis{\theta},\Xdis{\theta'})\leq \frac{6N}{M}$.

We start by setting $\mu$ close to $\mu'$ and constructing $\theta = (\mu, a_0, a_1, b_0, b_1)\in \ThetabciNM$ using \Cref{lem:construct_info} such that $\theta$ has the same reports support as $\theta'$. Next, we upper bound the difference of the marginal distribution over reports, $\left|\Pr_\theta[(a_i, b_j)]-\Pr_{\theta'}[(a_i, b_j)]\right|$ for all $i,j = 0,1$. Based on the transformations between the marginal distributions over reports and bijection formulas introduced in \Cref{lem:construct_info}, we will show that when $a_1-a_0$ and $b_1-b_0$ are lower bounded by $1/N$, changing $\mu$ will not significantly affect  $\Pr_\theta[(a_i, b_j)]$. When $a_1-a_0=0$ or $b_1-b_0=0$, we have $\mu'=a_1$ or $b_1$, which indicates $\theta'\in \ThetabciNM$. So we can directly set $\theta=\theta'$, which completes the proof.
We defer the proof of \Cref{lem:dtv_covernum} to \Cref{prf:dtv_covernum}

\begin{toappendix}
  \label{prf:dtv_covernum}
\end{toappendix}
\begin{proof}[Proof for \Cref{lem:dtv_covernum}]
To prove \Cref{lem:dtv_covernum}, we need to show that for all ${\theta}'\in \ThetabciN$, there exists $\theta\in \ThetabciNM$ so that 
$$d_{TV}(P_\theta(x_1,x_2), P_{\theta'}(x_1,x_2))\le  \frac{6N}{M}$$
Given $\theta'\in \ThetaciN$ and $\theta' = (\mu', p_0', p_1', q_0', q_1')$, there exist $0\le a_0, a_1, b_0, b_1\le 1$ so that  $Na_0, Na_1, Nb_0, Nb_1\in \mathbb{Z}$ and the predictions of two agents are
\begin{align*}
    &{a_0} = \frac{\mu' p_1'}{\mu' p_1'+(1-\mu')p_0'}, a_1 = \frac{\mu' (1-p_1')}{\mu' (1-p_1')+(1-\mu')(1-p_0')},\\
    &{b_0} = \frac{\mu' q_1'}{\mu' q_1'+(1-\mu')q_0'}\text{ and }{b_1} = \frac{\mu' (1-q_1')}{\mu' (1-q_1')+(1-\mu')(1-q_0')}
\end{align*}

When $a_0=a_1$, then $a_0=a_1=\mu$. So $\theta'\in \ThetabciNM$, which is trivial. It is similar when $b_0=b_1$. So we assume $a_0< a_1$ and $b_0<b_1$.

We may always switch $p_0', p_1'$ (and $q_0', q_1'$) so that $a_0< a_1$ and $b_0< b_1$. Since the prior $\mu'$ is a convex combination of the posteriors, ${a_0}< \mu'< {a_1}$ and ${b_0}< \mu'< {b_1}$.

Since $M\ge N$, we construct $\theta = (\mu, p_0, p_1, q_0, q_1)\in \ThetabciNM$ as the following: Set $\mu$ such that ${a_0}< \mu< {a_1}$, ${b_0}< \mu< {b_1}$, and $|\mu-\mu'|\le 1/M$.  Given $a_0, a_1, b_0, b_1$ and above $\mu$, we use \Cref{lem:construct_info} to set $p_1, p_0, q_1, q_0$ so that $\theta$ has the same predictions as $\theta'$, and hence $\theta\in \ThetabciNM$.

Finally, we show the total variation distance between $\theta$ and $\theta'$ is small.  \begin{align*}
    &\Pr_{\theta'}[(x_1, x_2) = (a_0, b_0)] = \mu'p_1'q_1'+(1-\mu')p_0'q_0'\\
    =& \mu'\frac{a_0(a_1-\mu')}{\mu'(a_1-a_0)}\frac{b_0(b_1-\mu')}{\mu'(b_1-b_0)}+(1-\mu')\frac{(1-a_0)(a_1-\mu')}{(1-\mu')(a_1-a_0)}\frac{(1-b_0)(b_1-\mu')}{(1-\mu')(b_1-b_0)}\tag{by \Cref{lem:construct_info}}\\
    =& \frac{(a_1-\mu')(b_1-\mu')}{\mu'}\frac{a_0b_0}{(a_1-a_0)(b_1-b_0)}+\frac{(a_1-\mu')(b_1-\mu')}{(1-\mu')}\frac{(1-a_0)(1-b_0)}{(a_1-a_0)(b_1-b_0)}
\end{align*}
and 
$\Pr_{\theta}[(x_1, x_2) = (a_0, b_0)] = \frac{(a_1-\mu)(b_1-\mu)}{\mu}\frac{a_0b_0}{(a_1-a_0)(b_1-b_0)}+\frac{(a_1-\mu)(b_1-\mu)}{(1-\mu)}\frac{(1-a_0)(1-b_0)}{(a_1-a_0)(b_1-b_0)}$.  
% First because $\theta'\in \ThetaciN$, $a_1-a_0$ and $b_1-b_0$ are greater than $1/N'$, and $$\frac{a_0b_0}{(a_1-a_0)(b_1-b_0)}, \frac{(1-a_0)(1-b_0)}{(a_1-a_0)(b_1-b_0)}\le (N')^2$$
By Taylor expansion there exists $\mu''$ between $\mu$ and $\mu'$ so that  
\begin{equation}\label{eq:construct_info1}
    \frac{(a_1-\mu')(b_1-\mu')}{\mu'}-\frac{(a_1-\mu)(b_1-\mu)}{\mu} = \left(\frac{-(a_1-\mu'')(b_1-\mu'')}{(\mu'')^2}-\frac{b_1-\mu''}{\mu''}-\frac{a_1-\mu''}{\mu''}\right)(\mu'-\mu)
\end{equation}
and $\mu'''$ between $\mu$ and $\mu'$ so that   
\begin{equation}\label{eq:construct_info2}
    \frac{(a_1-\mu')(b_1-\mu')}{(1-\mu')}-\frac{(a_1-\mu)(b_1-\mu)}{(1-\mu)} = \left(\frac{(a_1-\mu''')(b_1-\mu''')}{(1-\mu''')^2}-\frac{b_1-\mu'''}{1-\mu'''}-\frac{a_1-\mu'''}{1-\mu'''}\right)(\mu'-\mu)
\end{equation}

\begin{align*}
    &|\Pr_{\theta'}[(x_1, x_2) = (a_0, b_0)]-\Pr_{\theta}[(x_1, x_2) = (a_0, b_0)]|\\
    \le&\left| \frac{(a_1-\mu')(b_1-\mu')}{\mu'}-\frac{(a_1-\mu)(b_1-\mu)}{\mu}\right|\frac{a_0b_0}{(a_1-a_0)(b_1-b_0)}\\
    &+\left|\frac{(a_1-\mu')(b_1-\mu')}{(1-\mu')}-\frac{(a_1-\mu)(b_1-\mu)}{(1-\mu)}\right|\frac{(1-a_0)(1-b_0)}{(a_1-a_0)(b_1-b_0)}\\
    \le&\left(\frac{(a_1-\mu'')(b_1-\mu'')}{(\mu'')^2}+\frac{b_1-\mu''}{\mu''}+\frac{a_1-\mu''}{\mu''}\right)\frac{a_0b_0}{(a_1-a_0)(b_1-b_0)}(\mu'-\mu)\\
    &+\left(\frac{(a_1-\mu''')(b_1-\mu''')}{(1-\mu''')^2}+\frac{b_1-\mu'''}{1-\mu'''}+\frac{a_1-\mu'''}{1-\mu'''}\right)\frac{(1-a_0)(1-b_0)}{(a_1-a_0)(b_1-b_0)}(\mu'-\mu)\tag{by \cref{eq:construct_info1,eq:construct_info2}}\\
    \le&\left(\frac{(a_1-a_0)(b_1-b_0)}{a_0b_0}+\frac{b_1-b_0}{b_0}+\frac{a_1-a_0}{a_0}\right)\frac{a_0b_0}{(a_1-a_0)(b_1-b_0)}(\mu'-\mu)\\
    &+\left(\frac{(a_1-a_0)(b_1-b_0)}{(1-a_0)(1-b_0)}+\frac{b_1-b_0}{1-b_0}+\frac{a_1-a_0}{1-a_0}\right)\frac{(1-a_0)(1-b_0)}{(a_1-a_0)(b_1-b_0)}(\mu'-\mu)\tag{because $\mu'', \mu'''$ are in $[a_0, a_1]$ and $[b_0, b_1]$}\\
    =&\left(1+\frac{a_0}{a_1-a_0}+\frac{b_0}{b_1-b_0}+1+\frac{1-a_0}{a_1-a_0}+\frac{1-b_0}{b_1-b_0}\right)(\mu'-\mu)\\
    \le& (2+\frac{1}{a_1-a_0}+\frac{1}{b_1-b_0})\frac{1}{M}\tag{$|\mu'-\mu|\le 1/M$}\\
    \le& \frac{3N}{M}\tag{$a_1-a_0, b_1-b_0\ge 1/N$}
\end{align*}
By similar argument, we have $|\Pr_{\theta'}[(x_1, x_2) = (a_1, b_0)]-\Pr_{\theta}[(x_1, x_2) = (a_1, b_0)]|$, $|\Pr_{\theta'}[(x_1, x_2) = (a_0, b_1)]-\Pr_{\theta}[(x_1, x_2) = (a_0, b_1)]|$, and $|\Pr_{\theta'}[(x_1, x_2) = (a_1, b_1)]-\Pr_{\theta}[(x_1, x_2) = (a_1, b_1)]|$ all bounded by $\frac{3N}{M}$.  Therefore, 
$$d_{TV}(P_\theta(x_1,x_2), P_{\theta'}(x_1,x_2)) = \frac{1}{2}\sum_{a,b} |\Pr_{\theta'}[(x_1, x_2) = (a, b)]-\Pr_{\theta}[(x_1, x_2) = (a, b)]| \le \frac{6N}{M}$$
which completes the proof.
\end{proof}

\subsection{Proof of \Cref{thm:discrete}}

After showing that the regret function is insensitive regarding TVD and $\ThetabciNM$ is a small TVD Covering of $\ThetabciN$, we can prove \Cref{thm:discrete}.

\begin{toappendix}
    \begin{claim}\label{clm:compactness}
        $\mathcal{F}$ is compact.
    \end{claim}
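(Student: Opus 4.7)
The plan is to equip $\mathcal{F}$ with the topology of pointwise convergence, viewing $\mathcal{F} = \{f:[0,1]^2\to [0,1]\}$ as the Cartesian product $[0,1]^{[0,1]^2}$ where each coordinate corresponds to the value $f(\vx)$ at a point $\vx\in [0,1]^2$. Under this identification, the compactness claim becomes an immediate application of Tychonoff's theorem, and this topology is also the one needed to invoke Glicksberg's theorem in \Cref{lem:minimax}.

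First I would note that $[0,1]$ is a compact Hausdorff space in the standard topology on $\R$. By Tychonoff's theorem, arbitrary products of compact spaces are compact in the product topology, so $[0,1]^{[0,1]^2}$ is compact. Since products of Hausdorff spaces are Hausdorff, $\mathcal{F}$ is compact Hausdorff, which is the form required for Glicksberg's theorem.

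Next I would verify that this is the right topology for the downstream application, i.e., that the map $f\mapsto R(f,\theta)$ is continuous in it for every $\theta\in \Thetafin$. Because each $\theta\in\Thetafin$ has support of constant size, we can expand
\[
R(f,\theta) \;=\; \sum_{\vx\in \rsupp(\theta)} \Pr_\theta[\vx]\,\E_\theta\bigl[\ell(f(\vx),\omega)\mid \vx\bigr] \;-\; \E_\theta\bigl[\ell(\mathit{opt}_\theta(\vs),\omega)\bigr],
\]
which depends on $f$ only through its values at the finitely many points of $\rsupp(\theta)$. Coordinate projections are continuous in the product topology, and $\ell$ is continuous in its first argument by hypothesis, so $R(\cdot,\theta)$ is continuous on $\mathcal{F}$. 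Averaging over $\theta\sim \vw_{\Thetafin}$ preserves continuity, giving the joint continuity required.

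The main obstacle is not compactness itself (Tychonoff is essentially one line), but making sure the chosen topology is consistent across all the places the claim is used: in the application of Glicksberg's minimax theorem we need both compactness of $\mathcal{F}$ and continuity of the payoff with respect to the same topology. The product topology is the natural choice because it is the coarsest one that makes $\mathcal{F}$ compact while still being fine enough to make the finitely-supported expectations that constitute $R(f,\theta)$ continuous; if one instead tried to use a sup-norm or $L^2$ topology, continuity of $R$ would hold but compactness would fail without an additional equicontinuity restriction (as in the Lipschitz setting, where Arzelà–Ascoli is needed).
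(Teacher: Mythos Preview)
Your proposal is correct and takes a genuinely different route from the paper. The paper equips $\mathcal{F}$ with the sup-norm, argues completeness by showing Cauchy sequences converge pointwise to a bounded limit, and then asserts total boundedness; the parenthetical comment in the paper indicates that this last step is meant to rely on the fact that, in the discrete-report context of \Cref{thm:discrete}, the aggregator's input is restricted to the finite grid $[1/N]^2$, so $\mathcal{F}$ is effectively $[0,1]^{(N+1)^2}$. Your approach via the product topology and Tychonoff's theorem sidesteps this restriction entirely: it gives compactness of the full function space $[0,1]^{[0,1]^2}$ in one stroke, and your check that $R(\cdot,\theta)$ depends on $f$ only through finitely many coordinate projections is exactly what is needed to recover continuity in this coarser topology and feed into Glicksberg's theorem. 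What the paper's metric approach buys is that the sup-norm is more familiar and aligns with the Lipschitz setting in \Cref{clm:compactness2}; what your approach buys is that it is rigorous for the unrestricted domain (sup-norm total boundedness genuinely fails there, as you note) and highlights the precise reason the finite-support hypothesis on $\theta$ matters.
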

    \begin{proof}
    Recall that $$\lvert|f\rvert|_{\infty} = \sup_{\mathbf{x} \in [0,1]^{2}}|f(\mathbf{x})|.$$ We first show that $\left(\mathcal{F},\lvert|\cdot\rvert|_{\infty}\right)$ is complete.\fang{It seem a standard result.  We can defer the proof to appendix as a claim.  (Here the input of $f$ is restricted in $[1/N]^2$, so the function space equivalent to vector space $\R^{(N+1)^2}$} Consider a Cauchy sequence $\left(f_{n}\right)$ where $f_{n} \in \mathcal{F}$. For any $\mathbf{x} \in [0,1]^{2}$ and by the sup-norm, it follows that $(f_{n}(\mathbf{x}))$ is a Cauchy sequence on $\mathbb{R}$ and thus converges in $\mathbb{R}$. Define $f(\mathbf{x}) := \lim_{n \to \infty}f_{n}(\mathbf{x})$. We first show that the function $f(\cdot) \in \mathcal{F}$. Note that for any $\mathbf{x} \in [0,1]^{2}$, each element in the sequence $\left(f_{n}(\mathbf{x})\right)$ is upper bounded by $1$ and lower bounded by $0$. Since $\left(f_{n}(\mathbf{x})\right)$ converges to $f(\mathbf{x})$, we have that $f(\mathbf{x}) \in [0,1]$, for any $\mathbf{x} \in [0,1]^{2}$ and thus $f \in \mathcal{F}$. To show that $(f_{n})$ converges to $f$, for any $\epsilon > 0$, there exists $N$ such that $\sup_{\mathbf{x} \in [0,1]^2}|f_{n}(\mathbf{x})-f_{m}(\mathbf{x})| \leq \epsilon$, for all $n,m \geq N$. Thus $\sup_{\mathbf{x}}|f_{n}(\mathbf{x})-f(\mathbf{x})| \leq \epsilon$. But this holds for all $\epsilon > 0$ and thus $(f_{n})$ converges to $f$. It follows that $\left(\mathcal{F}_{L},\lvert|\cdot\rvert|_{\infty}\right)$ is complete. Since $\mathcal{F}$ is totally bounded, it follows that $\mathcal{F}$ is compact and thus the statement follows.
    \end{proof}
\end{toappendix}

\begin{proof}[Proof of \Cref{thm:discrete}]

First, we can see that the space of bounded function $\mathcal{F}$ is convex and compact (the proof is deferred to \Cref{clm:compactness}) and follow from Glicksberg's theorem an equilibrium exists~\citep{glicksberg1952further}
$$\inf_{\vw_{\mathcal{F}} \in \Delta_{\mathcal{F}}}\sup_{\theta \in \ThetabciNM}\mathbf{E}_{f \sim \vw_{\mathcal{F}}}[R(f,\theta)] = \sup_{\vw_{\Theta} \in \Delta_{\ThetabciNM}}\inf_{f \in \mathcal{F}}\mathbf{E}_{\theta \sim \vw_{\Theta}}[R(f,\theta)].$$
Then it is easy to verify that the discrete setting satisfies the condition in \Cref{thm:finite}.

Let $M = \epsilon^{-9/2}NQ$ where $Q\in \mathbb{N}$ will be specified later. By \Cref{cor:finite_time} and \Cref{thm:finite}, \Cref{alg:1} on $\ThetabciNM$ can output a $\epsilon/2$-optimal aggregator $f^*$ in $\tilde{O}(\frac{N^5Q}{\epsilon^{2+9/2}}) = \tilde{O}(\frac{N^5Q}{\epsilon^{13/2}})$, so that
    $$ R(f^*, \ThetabciNM)\le \inf_{f\in \mathcal{F}} R(f, \ThetabciNM)+\epsilon/2$$
    On the other hand, by \Cref{lem:dtv_smooth,lem:dtv_covernum} for any $f$
    $$R(f, \ThetabciNM)\le R(f, \ThetabciN)+105\cdot \left(\frac{6N}{M}\right)^{2/9} \le R(f, \ThetabciN)+157\epsilon{Q}^{-2/9}$$
    Thus, combing the above two, because $\ThetabciNM\subset \ThetabciN$ we have
    $$ R(f^*, \ThetabciNM)\le \inf_{f\in \mathcal{F}} R(f,  \ThetabciN)+\epsilon/2+157\epsilon{Q}^{-2/9}\le \inf_{f\in \mathcal{F}} R(f,  \ThetabciN)+\epsilon=\inf_{f\in \mathcal{F}} R(f,  \ThetaciN)+\epsilon$$
    if $Q$ is large enough (e.g., larger than $(2\cdot 157)^{4.5}\approx 172259592827$).
\end{proof}

% \paragraph{Importance of different reports}

%\Cref{table:matlab} shows the estimation results for other $L$. 

%For each aggregator $f$ we get from our algorithm, we calculate the true regret in the worst case $\sup_\theta G(f,\theta)$ by Matlab. We use the simulated annealing algorithm to find the global max value.

%\Cref{table:matlab} shows our results. When we pick $N=20, L=\infty$, the regret is only $0.0226$, which is very close to the theoretical lower-bound $\frac18(5\sqrt{5}-11)\approx 0.0225$ by \citet{arieli2018robust}. This greatly improves the previous upper-bound 0.0250.

%In addition, when $N$ increases, the regret decrease, which is consistent with our theory.
%\gyk{Figures xxx shows our results. We can find that ...}

% \begin{table}[!ht]
%   \centering
  
%   \begin{tabular}{c|c|c}
%     \hline
%     N & L & Regret Calculated by Matlab\hzh{Delete Matlab}\\
%     \hline
%     \hline
%     % 10 & 1 & 0.0250\\
%     % 10 & 2 & 0.0241\\
%     % 10 & 4 & 0.0241\\
%     % 10 & $\infty$ & 0.0235\\
%     % \hline
%     % 15 & 1 & 0.0243\\
%     % 15 & 2 & 0.0243\\
%     % 15 & 4 & 0.0243\\
%     % 15 & $\infty$ & 0.0230\\
%     % \hline
%     20 & 1 & 0.0244\\
%     20 & 2 & 0.0244\\
%     20 & 4 & 0.0244\\
%     20 & $\infty$ & \pmb{0.0226}\\
%     \hline
%   \end{tabular}
%   \caption{Regret in Matlab}
%   \label{table:matlab}
% \end{table}
% \subsection{Construct Lowerbound}

\section{Two Conditionally Independent Agents: Lipschitz Aggregators}
In this section, we will solve the Lipschitz aggregators setting of the robust forecast aggregation problem with a collection of Lipschitz aggregators $\mathcal{F}_L$,\[\inf_{f\in \mathcal{F}_L}\sup_{\theta\in\Thetaci}R(f,\theta).\]The main challenge to apply the online learning framework to the Lipschitz aggregators setting is to show the covering property. We show that the information structures with discrete reports is a good earth-moving distance covering for the class of continuous information structures. \fang{Why do we recall condition Independence here}The following theorem shows our results in this setting.
\begin{theorem}[Lipschitz Aggregators]\label{thm:lipschitz}
For any $0 < \epsilon < 1$ and $L > 0$, \Cref{alg:2} is an FPTAS with running time $O\left(\frac{L^{77}}{\epsilon^{79}}\log \left(\frac{L^{13}}{\epsilon^{14}}\right)\right)$ by taking $N=,M=$\fang{by taking $N$ $M = ...$} that finds an $\epsilon$-optimal $L$-Lipschitz aggregator over information structures $\Thetaci$.
\end{theorem}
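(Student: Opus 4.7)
The plan is to mirror the proof of \Cref{thm:discrete} with two substitutions: replace the TVD-based covering with an earth mover's distance (EMD) covering of $\Thetabci$ by the finite set $\ThetabciNM$ (defined in \eqref{eq:finite_infostruct0}), and exploit the $L$-Lipschitz property of aggregators to absorb the weaker distance. The dimension reduction from $\Thetaci$ to $\Thetabci$ carries over unchanged from \Cref{lem:reduction}, since that proof manipulates nature's optimization (linear in each agent's signal distribution) and is agnostic to the class of $f$. So it suffices to run \Cref{alg:1} on $\ThetabciNM$ with $N, M = \mathrm{poly}(L,1/\epsilon)$, using \Cref{thm:finite} for the algorithmic guarantee, and then transfer the guarantee to $\Thetaci$ through the two arguments described below.

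The two technical ingredients I need are (a) EMD insensitivity of the regret for $L$-Lipschitz aggregators, and (b) a bound on the EMD covering radius. For (a), I would decompose $R(f,\theta) - R(f,\theta')$ exactly as in the proof of \Cref{lem:dtv_smooth} into a \emph{distribution shift} piece $\mathbb{E}_{\Xdis{\theta}}[(f-g_{\mu'})^2]-\mathbb{E}_{\Xdis{\theta'}}[(f-g_{\mu'})^2]$ and a \emph{prior shift} piece $\mathbb{E}_{\Xdis{\theta}}[(f-g_\mu)^2-(f-g_{\mu'})^2]$. The prior shift piece is handled by \Cref{lem:prior_bound} (which already controls $|\mu-\mu'|$ by EMD) combined with \Cref{lem:prior_shift}. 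For (b), given any $\theta = (\mu, a_0, a_1, b_0, b_1)\in\Thetabci$, I round $\mu$ to the nearest multiple of $1/M$ and each of $a_0, a_1, b_0, b_1$ to the nearest multiple of $1/N$, then invoke \Cref{lem:construct_info} to rebuild a valid $\theta'\in\ThetabciNM$. The coupling transporting mass from atom $(a_i, b_j)$ of $\Xdis{\theta}$ to atom $(a_i', b_j')$ of $\Xdis{\theta'}$ costs at most $2/N$ per unit of matched mass; the leftover mass discrepancy is controlled by a Taylor expansion in $\mu$ as in \Cref{lem:dtv_covernum}, but crucially, in contrast to the TVD analysis, the gap-dependent blowup $1/(a_1-a_0)$ no longer arises in EMD because whenever $a_1\approx a_0$ the atoms on either side are close in $\ell_1$ and can be transported cheaply. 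This yields an EMD covering radius of $O(1/N+1/M)$ uniformly in the parameters.

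The \emph{main obstacle} is the distribution shift piece in (a), because $g_{\mu'}$ is not globally Lipschitz on $[0,1]^2$ --- its derivative diverges near $(0,1)$ and $(1,0)$ --- so the raw test function $(f-g_{\mu'})^2$ is not admissible in the Kantorovich dual $d_{EM}(P,Q) = \sup_{\|h\|_{Lip}\le 1}\mathbb{E}_P[h]-\mathbb{E}_Q[h]$. My plan is to introduce a \emph{smoothed} omniscient aggregator $\tilde g_{\mu'}$ obtained by clipping $\vx$ to stay $\delta$-away from the two bad corners; this $\tilde g_{\mu'}$ is Lipschitz with constant $O(1/\delta^2)$. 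I would then argue separately that (i) swapping $g_{\mu'}$ for $\tilde g_{\mu'}$ inside either expectation introduces additive error $O(\delta)$, because the $\delta$-neighborhoods of $(0,1)$ and $(1,0)$ carry probability $O(\delta)$ under $\Xdis{\theta}$ and $\Xdis{\theta'}$ via \Cref{lem:ext1}, and (ii) with $\tilde g_{\mu'}$ in hand, $(f-\tilde g_{\mu'})^2$ is $O(L+1/\delta^2)$-Lipschitz and bounded by $1$, so EMD duality gives the desired bound. Optimizing $\delta$ against the EMD scale then yields an insensitivity estimate of the form $|R(f,\theta)-R(f,\theta')|\le \mathrm{poly}(L)\cdot d_{EM}(\Xdis{\theta},\Xdis{\theta'})^{\alpha}$ for some $\alpha>0$ inherited jointly from the $2/9$ exponent of \Cref{lem:prior_shift} and the $\delta$ tradeoff.

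Assembling the pieces: \Cref{alg:1} on $\ThetabciNM$ produces an aggregator whose regret on $\ThetabciNM$ is within $\epsilon/3$ of the $\ThetabciNM$-optimum by \Cref{thm:finite}; the insensitivity in (a) and covering in (b) transfer this to within $\epsilon/3+\epsilon/3$ of the $\Thetabci$-optimum provided $N$ and $M$ are polynomially large in $L$ and $1/\epsilon$; dimension reduction then gives the same guarantee on $\Thetaci$. The claimed exponents in the running time $O(L^{77}/\epsilon^{79}\cdot\log(L^{13}/\epsilon^{14}))$ arise by substituting the value of $\alpha$ from the smoothing optimization into the requirement $1/N+1/M \le \mathrm{poly}(\epsilon/L)^{1/\alpha}$ and plugging into the $O(|\ThetabciNM|\log|\ThetabciNM|/\epsilon^2) = O(N^4 M\log(N^4 M)/\epsilon^2)$ runtime of \Cref{alg:1}.
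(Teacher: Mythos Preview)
Your overall architecture matches the paper's: reduce $\Thetaci$ to $\Thetabci$, discretize to $\ThetabciNM$, run the online-learning algorithm with an $L$-Lipschitz best-response oracle, and transfer the guarantee back via an EMD covering and an EMD-insensitivity bound. The decomposition of $R(f,\theta)-R(f,\theta')$ into a prior-shift piece and a distribution-shift piece is also exactly what the paper does.

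There is, however, a genuine gap in your smoothing step. You assert that $g_{\mu'}$ fails to be Lipschitz only near the two disagreement corners $(0,1),(1,0)$, and that clipping a $\delta$-neighborhood of those corners yields $\|\tilde g_{\mu'}\|_{Lip}=O(1/\delta^2)$. This is false when $\mu'$ is close to $0$ or $1$. Take $\mu'\to 0$ and $x_1=x_2=t$ with $t=\Theta(\sqrt{\mu'})$: then the denominator $(1-\mu')x_1x_2+\mu'(1-x_1)(1-x_2)\approx t^2+\mu'=\Theta(\mu')$, and a direct computation gives $|\partial_{x_1}g_{\mu'}|=\Theta(1/\sqrt{\mu'})$. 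This point sits near $(0,0)$, not near either corner, so your $\tilde g_{\mu'}$ inherits an unbounded Lipschitz constant and the EMD-duality bound on the distribution-shift term collapses. The paper handles this by trimming a \emph{second} region $C_{\mu'}(\epsilon_2)=\{x_1,x_2>\mu'/\epsilon_2\}$ (when $\mu'\le 1/2$, symmetrically otherwise), on which reports are far from the prior; only after removing $B(\epsilon_1)\cup C_{\mu'}(\epsilon_2)$ does one get a uniform Lipschitz bound $O(1/(\epsilon_1^2\epsilon_2))$ (\Cref{lem:lip_omni}). The paper then needs \Cref{lem:ext1} (and the dedicated \Cref{lem:prior_shift_lip}, not just \Cref{lem:prior_shift}) to show $C_{\mu'}(\epsilon_2)$ carries small mass under \emph{both} $\Xdis{\theta}$ and $\Xdis{\theta'}$ despite being defined via $\mu'$ rather than $\mu$.

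A secondary issue: your claimed EMD covering radius $O(1/N+1/M)$ is stronger than what you have justified. The paper proves $O(1/\sqrt{N}+N/M)$ (\Cref{l:smallcoverlipschitz}) via a three-case analysis on the gaps $a_1-a_0,\,b_1-b_0$; your sentence ``the gap-dependent blowup no longer arises because nearby atoms can be transported cheaply'' only addresses within-row transport, not the cross-row mass imbalance when one gap is small and the other large. A sharper $O(1/N)$ bound may be attainable by coupling conditionally on $\omega$ (so each conditional is a product measure), but that argument is not the one you sketched, and in any event the $\mu$-rounding still contributes $O(N/M)$, not $O(1/M)$.
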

% In this section, we consider the problem of computing robust Lipschitz aggregators in all conditional independent information structures $\Thetaci$. 

% \yk{To hzh: simulate the beginning of section 5}

% \yk{flow of section 6 should match flow of section 5}

% \fang{This can be extend to almost Lipschitz function.  Formally, $\mathcal{F}_\gamma$ is a collection of aggregator so that for all $L>0$ $f\in \mathcal{F}_\gamma$ is $L$-Lipschitz on $[0,1]\setminus B(L^{-\gamma})$.  We may show PNAS's aggregator $\frac{(\bar{x}_1+\bar{x}_2)x_1x_2}{(\bar{x}_1+\bar{x}_2)x_1x_2+(x_1+x_2)\bar{x}_1\bar{x}_2}$ satisfies the condition with some $\gamma$.}
\begin{algorithm}[!ht]
\caption{Online Learning For Lipschitz Aggregators}
\KwIn{approximation parameter $\epsilon$, discretization parameters $N,M$\fang{Should they depend on $\epsilon$ and $L$?}, and a class of Lipschitz aggregators $\mathcal{F}_L$ with $L>0$}
\KwOut{Optimal Lipschitz aggregator $f^*\in\mathcal{F}_L$}
Initialize the binary signal information structure set $\ThetabciNM$.\\ %$\ThetabciN=\{\theta|\theta\in\ThetabciN,|\mathcal{S}_1|=2,|\mathcal{S}_2|=2\}$\\
Initialize the uniform policy $\vw_{\ThetabciNM}^1\in\Delta_{\ThetabciNM}$.\\
Set $T=\lceil\frac{25\ln n}{\epsilon^2}\rceil$ where $n=|\ThetabciNM|$.\\
\For{$t=1\ \mathbf{to}\ T$} {
Calculate the $\epsilon/5$-best $L$-Lipschitz response $f^t\in\mathcal{F}_L$ to $\vw_{\ThetabciNM}^t$ such that for any $f\in \mathcal{F}_L$\\
\[\E_{\theta\sim \vw_{\ThetabciNM}^t}[R(f^t,\theta)]\le \E_{\theta\sim \vw_{\ThetabciNM}^t}[R(f,\theta)]+\epsilon/5.\]
Calculate the reward $\vu^t$ when the aggregator is $f^t$\\
\[
\forall \theta\in \ThetabciNM, u^t_{\theta}=R(f^t,\theta).
\]
% Update the regret $\mathcal{R}(t)$\\
Use weight update algorithm to calculate the policy $\vw_{\ThetabciNM}^{t+1}\in\Delta_{\ThetabciNM}$
\[
w^{t+1}_{\theta}=w^{t}_{\theta} \exp(-\eta u_{\theta}^t)/Z_t
\]
where $Z_t$ is a normalization factor.
}
$f^*=\frac{1}{T}\sum_{t=1}^T f^t$
\label{alg:2}
\end{algorithm}

\paragraph{Proof Sketch} With the dimension reduction results (similar to \cref{lem:reduction} and already proved in \citet{doi:10.1073/pnas.1813934115}), it is sufficient to consider the set of conditionally independent information structures with binary signals, $\Thetabci$. Similar to \cref{thm:discrete}, We show that the discretization $\ThetabciNM$ is a sufficiently good representation of $\Thetabci$ under a weaker metric than TVD with appropriate choice of $N$ and $M$ specified below.  Then we run \Cref{alg:2} with input $\ThetabciNM$ and output a $L$-Lipschitz $\epsilon$-optimal aggregator $f^*$. 

More formally, we use $\ThetabciN$ as a bridge. As we show in the discrete setting, $\ThetabciNM$ is a sufficiently good representation of $\ThetabciN$, which implies $\inf_f R(f, \ThetabciNM)\approx \inf_f R(f, \ThetabciN)$ for any $f$. We then show that  $\ThetabciN$ is a good representation of $\Thetabci$, so that $\inf_f R(f, \ThetabciN)\approx \inf_f R(f, \Thetabci)$. Then by our reduction result, $\inf_{f} R(f,\Thetabci) = \inf_f R(f,\Thetaci)$. Hence we will have 
\[R(f^*, \ThetabciNM)\approx \inf_{f} R(f,\ThetabciNM)\approx \inf_{f} R(f,\ThetabciN) \approx \inf_{f} R(f,\Thetabci) = \inf_f R(f,\Thetaci).\]

Moreover, we show that we can output a near optimal $L$-Lipschitz aggregator by solving a quadratic optimization problem with linear constraints. So our algorithm has a polynomial running time. 

The main technical part is showing the second approximate equality. Here are multiple new challenges we need to deal with compared to the discrete reports setting.

\paragraph{New Challenges and Techniques} To address the challenges associated with the fully continuous $\Theta_{\text{ci}}$, we employ multiple new techniques.

\begin{description}
\item [TVD $\rightarrow$ EMD] Recall that the difference between $\theta$ and $\theta'$ is bounded by the difference between the marginal distributions over reports $\Xdis{\theta}$, $\Xdis{\theta'}$. In the discrete reports setting, we can construct $\theta$ and $\theta'$ with the same support of the pair reports, thus it is sufficient to use total variation distance to measure the difference between $\Xdis{\theta}$ and $\Xdis{\theta'}$. However, in fully continuous $\Thetaci$, $\theta$ and $\theta'$ can have different supports of the reports. TVD would be too restricted. In fact, no finite collection of information structures $\Theta'$ can be a good $(\epsilon, d_{TV})$-coverings of $\Thetabci$ with $\epsilon<1$. This is because we can always find some reports that do not appear in $\Theta'$ and if we pick an information structure $\theta\in\Thetabci$ which only contains those not appeared reports, the TVD between $\theta$ and information structures in $\Theta'$ is always 1. In such a case, we will use a relaxed distribution metric, the earth mover's distance $d_{EM}$. We can show that $\ThetabciNM$ is a good $d_{EM}$-covering by constructing a coupling to upper bound $d_{EM}$ in \Cref{sec:dem_smallcover}. %\Cref{fig:tvdemd} explains the choice of metric. 

\begin{definition}[Earth Mover's Distance \citep{Chatterjee2008}]
    Given two distributions $P$ and $Q$ on $[0,1]^2$, we introduce the earth mover's distance,
$$d_{EM}(P,Q) = \sup_{h:\|h\|_{Lip}\le 1}\E_P[h]-\E_Q[h].$$
\end{definition} By abusing the notation a little bit, we set $d_{EM}(\theta,\theta')=d_{EM}(\Xdis{\theta},\Xdis{\theta'})$ and call the corresponding $(\epsilon, d_{EM})$-covering the EMD-covering.
\item [Sensitivity of $g_{\mu}(\vx)$ Regarding Both $\mu$ and $\vx$] We then show the bounded Lipschitz constant of the regret function. Though we only consider Lipschitz aggregators, to show the bounded Lipschitz constant of the regret function, we still need to handle the sensitivity of $g_{\mu}(\vx)$. In contrast to the previous discrete report setting where $\vx$ is fixed, here we need to handle the sensitivity of $g_{\mu}(\vx)$ regarding both $\mu$ and $\vx$. Based on a delicate case-by-case analysis, we will carefully smooth the $g_{\mu}(\vx)$ first by trimming its sensitive parts and extending it to ensure it has bounded Lipschitz constant. We will show that in expectation, the modified version of $g_{\mu}(\vx)$ not only has bounded Lipschitz constant but also has a similar expected loss as $g_{\mu}(\vx)$. Therefore, we can replace $g_{\mu}(\vx)$ with the modified version. 
\item [EMD-covering] To show that $\ThetabciNM$ is a good $d_{EM}$-covering of $\Thetabci$, we first prove that $\ThetabciNM$ is a good $d_{EM}$-covering of $\ThetaciN$. Due to the triangle inequality, it is left to show that $\ThetabciN$ is a good $d_{EM}$-covering of $\Thetabci$. For each $\theta\in\Thetaci$, we will construct a proper $\theta'\in \ThetabciN$ such that $d_{EM}(\Xdis{\theta},\Xdis{\theta'})$ is small. Finally, we will use the dual form of EMD and construct a coupling between $\Xdis{\theta},\Xdis{\theta'}$ to upper bound $d_{EM}(\Xdis{\theta},\Xdis{\theta'})$.

\item [Best Response Aggregator $\rightarrow$ Best Response Lipschitz Aggregator] Without any restriction, the best response aggregator has an explicit formula and thus is efficient to compute. We show that it is still efficient to compute the optimal Lipschitz aggregator by proving that it is a convex optimization problem with an efficient separation oracle. 
\end{description}

\subsection{Regret is Insensitive Regarding EMD}\label{sec:dem_smooth}

We analyze the sensitivity of the regret function regarding the earth mover's distance. 
Recall that $R(f,\theta) = \E_\theta[\ell(f(\vx),\omega)]-\E_\theta[\ell(g_{\mu}(\vx),\omega)]$. We will analyze the sensitivity of the first term $\E_\theta[\ell(f(\vx),\omega)]$, and then analyze the sensitivity of the second term $\E_\theta[\ell(g_{\mu}(\vx)]$. Regarding $\E_\theta[\ell(f(\vx),\omega)]$, as $f$ is Lipschitz, we can show when $d_{EM}(\Xdis{\theta}, \Xdis{\theta'})$ is small, $\E_\theta[\ell(f(\vx),\omega)]\approx \E_{\theta'}[\ell(f(\vx),\omega)]$. 

Regarding $\E_\theta[\ell(g_{\mu}(\vx),\omega)]$, we require a more delicate analysis than the previous discrete reports setting. In the previous setting, close $\theta$ and $\theta'$ have the same support of the reports, thus we only need to carefully handle the sensitivity of $g_{\mu}(\vx)$ regarding $\mu$. Here in the general $\Thetabci$, the reports are in a continuous set. Close $\theta$ and $\theta'$ can have different supports of the pair reports. Therefore, in this setting, we need to additionally handle the sensitivity of $g_{\mu}(\vx)$ regarding $\vx$ as well. We illustrate the sensitivity in \Cref{fig:sensitive_prediction}. 

%Changing parameter directly not only change the distribution of input but also omniscient aggregator itself.  In particular, in addition to the change of prior \Cref{fig:sensitive prior}, because the value of predictions in $\Thetabci$ is continuous, the support of predictions can also change.  As shown in \Cref{fig:sensitive_prediction}, given $\theta$ if we change one pair of predictions from $(0.1, 1)$ in $\rsupp(\theta)$ to $(0, 0.9)$, the omniscient aggregator changes from $g_\mu(\vx)(0.1,1) = 1$ to $g_\mu(\vx)(0,0.9) = 0$. 

\paragraph{Smoothing $g_{\mu}(\vx)$} We use the following steps to smooth $g_{\mu}(\vx)$ and obtain $\tilde{g}_{\mu}(\vx)$.
\begin{description}
\item [Trimming] We initially trim the sensitive parts of $g_{\mu}(\vx)$. \Cref{lem:ext2} demonstrates that the occurrence of very disagreeing reports is highly unlikely within conditionally independent information structures. Considering $\epsilon_1$ and $\epsilon_2$ as values in the range of $(0, 0.5]$, \Cref{lem:lip_omni} indicates that $g_{\mu}(\vx)$ are nearly Lipschitz with a high probability unless the reports are in disagreement (event $B(\epsilon_1)$), or if the reports are far from the prior (event $C_\mu(\epsilon_2)$):
\begin{equation}\label{eq:bad_area}
    \begin{aligned}B(\epsilon_1):=& \left\{\vx\in [0,1]^2: |x_1-x_2|> 1-\epsilon_1\right\}\\
    C_{\mu}(\epsilon_2):=&\begin{cases}
        &\left\{\vx\in [0,1]^2: x_1, x_2> \frac{\mu}{\epsilon_2}
\right\}\text{ if } \mu\le 1/2\text{, or }\\
&\left\{\vx\in [0,1]^2: x_1, x_2> 1-\frac{1-\mu}{\epsilon_2}\right\}\text{ if } \mu> 1/2.
    \end{cases}\end{aligned}
\end{equation}
and the remaining area is $A_\mu(\epsilon_1, \epsilon_2) = [0,1]^2\setminus \left(B(\epsilon_1)\cup C_\mu(\epsilon_2)\right)$.
\item [Extending] We extend the remaining parts of $g_{\mu}(\vx)$ to $\tilde{g}_{\mu}(\vx)$ to ensure $\tilde{g}_{\mu}(\vx)$ has bounded Lipschitz constant (\Cref{lem:lip_extension}), and guarantee that $\E_{\theta}[|g_\mu(\vx)-\tilde{g}_{\mu'}(\vx)|]$ is small (\Cref{lem:prior_shift_lip}).
\end{description}

Finally, we combine the above results and show that when $d_{EM}(\Xdis{\theta}, \Xdis{\theta'})\approx 0$, $\E_\theta[\ell(g_{\mu}(\vx),\omega)]\approx \E_{\theta'}[\ell(g_{\mu'}(\vx),\omega)]$. Combined with the result $\E_\theta[\ell(f(\vx),\omega)]\approx \E_{\theta'}[\ell(f(\vx),\omega)]$, we can show \Cref{lem:dem_smooth}. The proofs are deferred to \Cref{prf:dem_smooth}.

\begin{figure}
\centering
\begin{subfigure}[h]{0.40\textwidth}
\centering
\includegraphics[height=.8\textwidth]{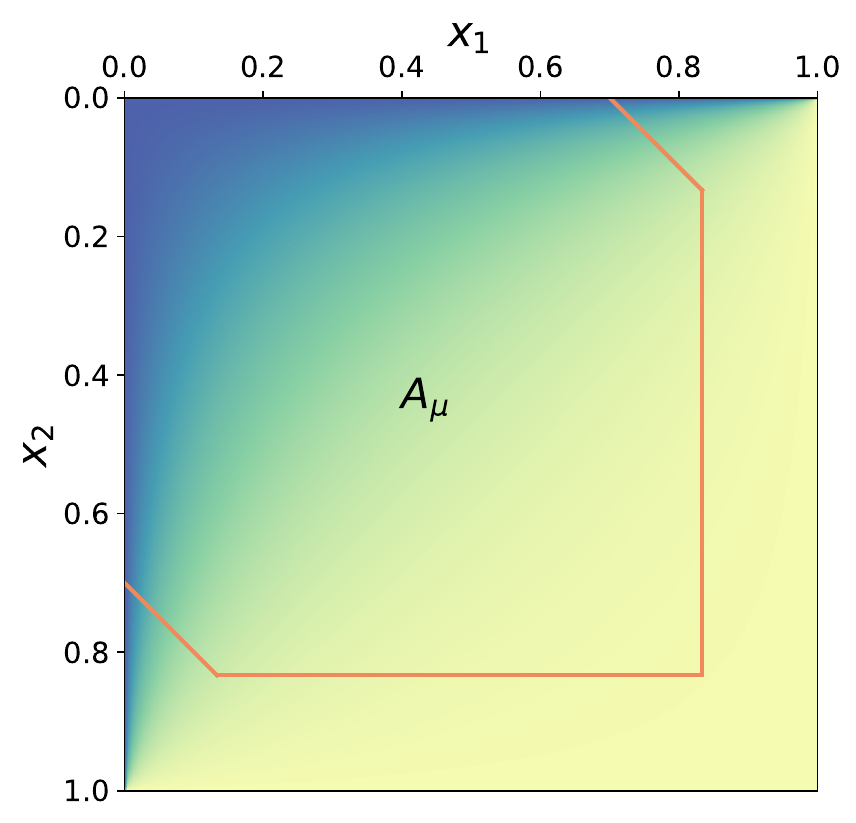}
\caption{Heatmap of $g_\mu(\vx)$, $\mu = 0.1$\fang{Can we have a smaller value, e.g., $\mu = 0.001$.}}
\end{subfigure}
\hfill
\begin{subfigure}[h]{0.50\textwidth}
\centering
\includegraphics[height=.64\textwidth]{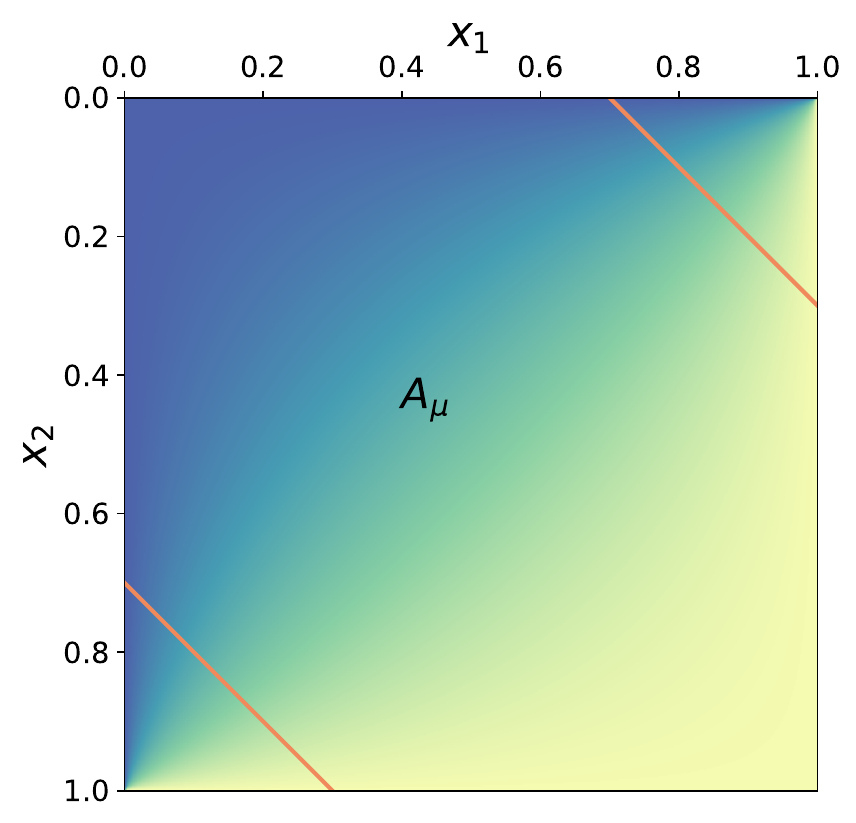}
\includegraphics[height=.576\textwidth]{figures/cbar1.pdf}
\caption{Heatmap of $g_\mu(\vx)$, $\mu = 0.4$}
\end{subfigure}%
\caption{\textbf{Sensitivity of the omniscient aggregator regarding $\vx$: $g_\mu(\vx)$ vs. $\vx$.} We trim the sensitive area \fang{shaded or dotted?} and extend the insensitive area $A_\mu$.}\label{fig:sensitive_prediction}
\end{figure}

%Here we pick $\epsilon_1=.3$ and $\epsilon_2=.12$. 

\begin{propositionrep}\label{lem:dem_smooth}
For any $L>0$ $\epsilon_L>0$, aggregator $f:[0,1]^2\to [0,1]$, and $\theta, \theta'\in \Thetaci$ with $\epsilon_{EM} = d_{EM}\left(\Xdis{\theta}, \Xdis{\theta'}\right)<2^{-7}$, if $f$ is $L$-Lipschitz on $[0,1]^2\setminus B(\epsilon_L)$, then 
$$R(f, \theta)\le R(f, \theta')+134\epsilon_{EM}^{1/7}+2L\epsilon_{EM}+4\epsilon_L.$$
\end{propositionrep}
\fang{we may only keep the proposition and figure and move the rest into appendix.}
\begin{toappendix}
    \label{prf:dem_smooth}
\end{toappendix}
\begin{proof}[Proof of \Cref{lem:dem_smooth}]
Given $\theta$ and $\theta'$, let $\epsilon_{EM} = d_{EM}(\Xdis{\theta}, \Xdis{\theta'})$, and $\epsilon = |\mu-\mu'|$ which is less than $\epsilon_{EM}$ by \Cref{lem:prior_bound}.  The proof of \Cref{lem:dem_smooth} bears a similar structure as \Cref{lem:dtv_smooth}.  However, to use the bounded $d_{EM}$ condition, we need to modify $g_{\mu'}(\vx)$ to a Lipschitz one.  
Formally, given $\epsilon$ and $\epsilon_{EM}$, we define $\beta, \gamma$ so that $\epsilon_1 = \epsilon^\beta = \epsilon_{EM}^{1/7}$ and $\epsilon_2 = \epsilon^\gamma = \epsilon_{EM}^{4/7}$. By \Cref{lem:lip_extension}, we can construct $\tilde{g}_{\mu'}(\vx)$ that agree with $g_{\mu'}(\vx)$ on $A_{\mu'}(\epsilon_1, \epsilon_2)$ and is $\frac{8}{\epsilon_1^2\epsilon_2}$-Lipschitz.  Additionally, because

We first consider $|1/2-\mu'|>|1/2-\mu|$.
\begin{align*}
    &R(f, \theta)-R(f, \theta')\\
    =& \E_\theta[\left(g_\mu(\vx)-f(\vx)\right)^2]-\E_{\theta'}[\left(g_{\mu'}(\vx)-f(\vx)\right)^2]\\
    \le& \E_\theta[\left(g_\mu(\vx)-f(\vx)\right)^2]-\E_{\theta'}[\left(\tilde{g}_{\mu'}-f(\vx)\right)^2]+\Pr_{\theta'}[\neg A_{\mu'}(\epsilon_1, \epsilon_2)]\tag{since $\|\left(g_\mu(\vx)-f(\vx)\right)^2-\left(\tilde{g}_{\mu'}-f(\vx)\right)^2\|_\infty\le 1$}\\
    \le& \E_\theta[\left(g_\mu(\vx)-f(\vx)\right)^2]-\E_{\theta'}[\left(\tilde{g}_{\mu'}-f(\vx)\right)^2]+4\epsilon_1+2\epsilon_2\tag{by \Cref{lem:lip_omni}}\\
    =& \left(\E_\theta[\left(g_\mu(\vx)-f(\vx)\right)^2]-\E_\theta[\left(\tilde{g}_{\mu'}-f(\vx)\right)^2]\right)+\left(\E_\theta[\left(\tilde{g}_{\mu'}-f(\vx)\right)^2]-\E_{\theta'}[\left(\tilde{g}_{\mu'}-f(\vx)\right)^2]\right)+4\epsilon_1+2\epsilon_2.
\end{align*}
For the first term, because $\|g_\mu(\vx)+\tilde{g}_{\mu'}-2f\|_\infty\le 2$, 
\[
\E_\theta[\left(g_\mu(\vx)-f(\vx)\right)^2]-\E_\theta[\left(\tilde{g}_{\mu'}-f(\vx)\right)^2] =  \E_\theta[\left(g_\mu(\vx)+\tilde{g}_{\mu'}-2f(\vx)\right)\left(g_\mu(\vx)-\tilde{g}_{\mu'}\right)]\le 2\E_\theta[|g_\mu(\vx)-\tilde{g}_{\mu'}|].
\]  Since $\epsilon\le \epsilon_{EM}<2^{-7}< 2^{-32/9}$, $0<\beta, \gamma\le 4/7<2/3$, and $\epsilon^\beta, \epsilon^\gamma<1/2$, we can apply \Cref{lem:prior_shift_lip} and get
\begin{equation}
\E_\theta[\left(g_\mu(\vx)-f(\vx)\right)^2]-\E_\theta[\left(\tilde{g}_{\mu'}-f(\vx)\right)^2]\le 8\epsilon_1+104\epsilon_2^{1/4} = 112\epsilon_{EM}^{1/7}.
\end{equation}
For the second term, because for all $\vx$ and $\vx'$
\begin{align*}
    &|(\tilde{g}_{\mu'}(\vx)-f(\vx))^2-(\tilde{g}_{\mu'}(\vx')-f(\vx'))^2|\\
    =& |(\tilde{g}_{\mu'}(\vx)-f(\vx))+(\tilde{g}_{\mu'}(\vx')-f(\vx'))|\cdot |(\tilde{g}_{\mu'}(\vx)-f(\vx))-(\tilde{g}_{\mu'}(\vx')-f(\vx'))|\\
    \le& 2|(\tilde{g}_{\mu'}(\vx)-f(\vx))-(\tilde{g}_{\mu'}(\vx')-f(\vx'))|\\
    \le& 2|\tilde{g}_{\mu'}(\vx)-\tilde{g}_{\mu'}(\vx')|+2|f(\vx)-f(\vx')|
    % \le& 2(\|\tilde{g}_{\mu'}\|_{Lip}+\|f\|_{Lip})\|\vx-\vx'\|_1.
\end{align*}
By the definition of earth mover's distance $d_{EM}(\Xdis{\theta}, \Xdis{\theta'})\le \epsilon_{EM}$ and $\|\tilde{g}_{\mu'}\|_{Lip}\le \frac{8}{\epsilon_{EM}^{6/7}}$, we have
$\E_\theta[2|\tilde{g}_{\mu'}(\vx)-\tilde{g}_{\mu'}(\vx')|]\le 16\epsilon_{EM}^{1/7}.$
Additionally $f$ is $L$-Lipschitz $[0,1]^2\setminus B(\epsilon_L)$, by \Cref{lem:ext2}
$\E_\theta[2|f(\vx)-f(\vx')|]\le 2L\epsilon_{EM}+2\Pr_\theta[B(\epsilon_L)]\le 2L\epsilon_{EM}+4\epsilon_L.$
Therefore, 
$$\E_\theta[\left(\tilde{g}_{\mu'}(\vx)-f(\vx)\right)^2]-\E_{\theta'}[\left(\tilde{g}_{\mu'}(\vx)-f(\vx)\right)^2]\le 16\epsilon_{EM}^{1/7}+2L\epsilon_{EM}+4\epsilon_L.$$
Combining these two gets $$R(f, \theta)-R(f, \theta')\le 128\epsilon_{EM}^{1/7}+2L\epsilon_{EM}+4\epsilon_1+2\epsilon_2+4\epsilon_L\le 134\epsilon_{EM}^{1/7}+2L\epsilon_{EM}+4\epsilon_L$$
% , and 
% $$R(f, \theta)\le \inf_f\ell(f, \Theta')+\delta+134\epsilon^{1/7}+2L\epsilon\le \inf_f\ell(f, \Theta)+\delta+134\epsilon^{1/7}+2L\epsilon.$$
% The second inequality is due to $f$ is $\delta$-optimal, and the final one is due to $\Theta'\subseteq \Theta$.

\end{proof}

\begin{lemmarep}\label{lem:ext2}
For any $\theta\in \Thetaci$ and $\epsilon>0$, 
\begin{align}
    &\Pr_\theta[|x_1-x_2|\ge 1-\epsilon]\le \frac{2\epsilon}{1-\epsilon}\label{eq:ext_2}
\end{align}
\end{lemmarep}
\begin{proof}
We can let $\mu \in (0,1)$, or the inequality trivially holds.  
We first prove \cref{eq:ext_2}.  Suppose that a pair of signals $(s_1, s_2)$ induces disagreeing predictions $(x_1, x_2)$ with $x_1-x_2>1-\epsilon$.  Let $\alpha_1 = p_1(s_1), \alpha_0 = p_0(s_1), \beta_1 = q_1(s_2)$, and $\beta_0 = q_0(s_2)$.  Because $x_1-x_2>1-\epsilon$ and $x_2\in [0,1]$, we have $1-\epsilon<x_1 = \frac{\mu \alpha_1}{\mu \alpha_1+(1-\mu)\alpha_0}$, and with some arrangement
    \begin{equation}\label{eq:ext1}
        \alpha_0\le \frac{\mu}{1-\mu}\frac{\epsilon}{1-\epsilon}\alpha_1.
    \end{equation}
    Similarly 
    \begin{equation}\label{eq:ext2}
        \beta_1\le \frac{\epsilon}{1-\epsilon}\frac{1-\mu}{\mu}\beta_0.
    \end{equation}
    The probability of signal pairs $(s_1, s_2)$ is 
    \begin{align*}
        \Pr_\theta[s_1, s_2] =& \mu \alpha_1\beta_1+(1-\mu)\alpha_0\beta_0\\
        \le& \mu \alpha_1\frac{\epsilon}{1-\epsilon}\frac{1-\mu}{\mu}\beta_0+(1-\mu)\frac{\mu}{1-\mu}\frac{\epsilon}{1-\epsilon}\alpha_1\beta_0\tag{ by \cref{eq:ext1,eq:ext2}}\\
        =& \frac{\epsilon}{1-\epsilon}\alpha_1\beta_0.
    \end{align*}
    Therefore, summing all possible pair of signals that induces disagreeing predictions, we have
    $$\Pr_\theta[x_1-x_2>1-\epsilon]\le \frac{\epsilon}{1-\epsilon}$$
    which completes the proof by symmetry.
\end{proof}
The following lemma shows that $g_\mu(\vx)$ is mostly Lipschitz, and then we replace it with a Lipschitz function $\tilde{g}_\mu(\vx)$.
\begin{lemmarep}\label{lem:lip_omni}
Given $\epsilon_1, \epsilon_2\in (0,1/2]$ and $\mu\in [0,1]$, the function $g_\mu(\vx)(x_1, x_2) := \frac{(1-\mu)x_1x_2}{(1-\mu)x_1x_2+\mu(1-x_1)(1-x_2)}$ is $\frac{4}{\epsilon_1^2\epsilon_2}$-Lipschitz on $A_\mu(\epsilon_1, \epsilon_2)$ defined in \cref{eq:bad_area}.  Moreover,     $\Pr_\theta[A_{\mu}(\epsilon_1, \epsilon_2)]\ge 1-4\epsilon_1-2\epsilon_2.$
\end{lemmarep}
\begin{proof}[Proof of \Cref{lem:lip_omni}]
    First, because $|x_1-x_2|\le 1-\epsilon_1$, we can bound $x_1x_2+(1-x_1)(1-x_2)$ 
    Let $U:= x_1x_2+(1-x_1)(1-x_2)$ and $V:= x_1x_2-(1-x_1)(1-x_2)$.  
    \begin{align*}
        V^2-2U+1=&x_1^2x_2^2+(1-x_1)^2(1-x_2)^2-2x_1(1-x_1)x_2(1-x_2)-2x_1x_2-2(1-x_1)(1-x_2)+1\\
        =& x_1^2x_2^2+[1-2x_1+x_1^2-2x_2+4x_1x_2-2x_1^2x_2+x_2^2-2x_1x_2^2+x_1^2x_2^2]\\
        &+[2x_1x_2-2x_1^2x_2-2x_1x_2^2+2x_1^2x_2^2]-2x_1x_2-2[1-x_1-x_2+x_1x_2]+1\\
        =& x_1^2+x_2^2-2x_1x_2 = (x_1-x_2)^2
    \end{align*}
    Therefore, if $|x_1-x_2|\le 1-\epsilon_1$, we have
    \begin{equation}\label{eq:lip_omni1}
        x_1x_2+(1-x_1)(1-x_2) = U = \frac{V^2+1-(x_1-x_2)^2}{2}\ge \epsilon_1-\frac{1}{2}\epsilon_1^2\ge \frac{1}{2}\epsilon_1
    \end{equation}
    On the other hand, by symmetry, we consider $\mu\le 1/2$.  Then we can use the second condition $x_1, x_2\le \frac{\mu}{\epsilon_2}$ and have
    \begin{equation}\label{eq:lip_omni2}
        x_1(1-x_1), x_2(1-x_2)\le \frac{\mu}{\epsilon_2}
    \end{equation}
The partial derivative of $g_\mu(\vx)$ is 
$\nabla_\vx g_\mu(\vx)(x_1,x_2) =\left(\frac{\mu(1-\mu)x_2(1-x_2)}{(\mu(1-x_1)(1-x_2)+(1-\mu)x_1x_2)^2}, \frac{\mu(1-\mu)x_1(1-x_1)}{(\mu(1-x_1)(1-x_2)+(1-\mu)x_1x_2)^2}\right)$.  We can bound each coordinate as follows:
\begin{align*}
    \frac{\mu(1-\mu)x_2(1-x_2)}{(\mu(1-x_1)(1-x_2)+(1-\mu)x_1x_2)^2}\le& \frac{\mu^2(1-\mu)}{\epsilon_2(\mu(1-x_1)(1-x_2)+(1-\mu)x_1x_2)^2}\tag{by \cref{eq:lip_omni2}}\\
    \le& \frac{\mu^2}{\epsilon_2(\mu(x_1x_2+(1-x_1)(1-x_2)))^2}\tag{$\mu\le 1/2$ and $(1-\mu)\le 1$}\\
    \le& \frac{4}{\epsilon_1^2\epsilon_2}\tag{by \cref{eq:lip_omni1}}
\end{align*}

Finally, by symmetry, we can assume $\mu\le 1/2$.  By \Cref{lem:ext1,lem:ext2},  $\Pr_\theta[x_1\ge \frac{\mu}{\epsilon_2}]\le \epsilon_2$ and $\Pr_\theta[|x_1-x_2|>1-\epsilon_1]\le \frac{2\epsilon_1}{1-\epsilon_1}\le 4\epsilon_1$.  Thus, by union bound, $\Pr_\theta[A_\mu(\epsilon_1, \epsilon_2)]\ge 1-4\epsilon_1-2\epsilon_2$
\end{proof}

\begin{lemmarep}[Bounded Lipschitz extension]\label{lem:lip_extension}
    Given $\epsilon_1, \epsilon_2\in (0,1/2)$ and $\mu\in (0,1)$, there exists a $\frac{8}{\epsilon_1^2\epsilon_2}$-Lipschitz function $\tilde{g}_\mu: [0,1]^2\to [0,1]$ and $\tilde{g}_{\mu}(\vx) = g_\mu(\vx)$ for all $\vx\in {A_{\mu}(\epsilon_1, \epsilon_2)}$.
\end{lemmarep}
Note that we need extend $g_\mu$ to an Lipschitz and bounded function, and we can not directly apply Kirszbraun theorem~\citep{kirszbraun1934zusammenziehende} whose extension is not necessarily bounded.  
\begin{proof}
By symmetry, we consider $\mu\le 1/2$ so that $$A_\mu(\epsilon_1, \epsilon_2) = \left\{(x_1, x_2): 0\le x_1< \frac{\mu}{\epsilon_2},0\le x_2< \frac{\mu}{\epsilon_2},|x_1-x_2|< 1-\epsilon_1\right\}$$.  In the rest of the proof we write $A_\mu = A_\mu(\epsilon_1, \epsilon_2)$, and $\bar{x}:=\inf(1, \mu/\epsilon_2)$ which is the maximum value of $x_1$ or $x_2$ in $A_\mu$. We want to define $\tilde{g}_\mu$ on set  $[0,1]^2\setminus A_\mu$ which can be written as the union of the following three sets 
\begin{align*}
    D_1 =& \{\vx: x_2\le \bar{x}\wedge(x_1> \bar{x} \vee x_1-x_2> 1-\epsilon_1)\},\\
    D_2 =& \{\vx: x_1\le\bar{x}\wedge(x_2> \bar{x} \vee x_2-x_1> 1-\epsilon_1)\},\\
    E =& \{\vx: x_1> \bar{x}\text{, and }x_2>\bar{x}\}.
\end{align*}
Intuitively, $D_1$ is the collection of points that is on the right-hand side of $A_\mu$, $D_2$ is on the top of $A_\mu$, and $E$ is not connected to $A_\mu$.  Let $\hat{x}_1$ and $\hat{x}_2$ map each points to the rightmost or topmost in $A_\mu$ respectively: Specifically, we define
\begin{equation}\hat{x}_1(x_2) = \argmax_{x_1'}\{(x_1', x_2)\in A_\mu\} = \begin{cases}
    \bar{x}&\text{ if $x_2\ge \bar{x}-(1-\epsilon_1)$}\\
    1-\epsilon_1+x_2&\text{ if $x_2< \bar{x}-(1-\epsilon_1)$}
\end{cases},\label{eq:lip_extension_x1}
\end{equation}
for all $x_2\in [0,1]$, and 
\begin{equation}
    \hat{x}_2(x_1) = \argmax_{x_2'}\{(x_1, x_2')\in A_\mu\} = \begin{cases}
    \bar{x}&\text{ if $x_1\ge \bar{x}-(1-\epsilon_1)$}\\
    1-\epsilon_1+x_1&\text{ if $x_1< \bar{x}-(1-\epsilon_1)$}
\end{cases}\label{eq:lip_extension_x2}
\end{equation}
for all $x_1\in [0,1]$. \Cref{fig:area} shows an example of these areas and symbols we define.

\begin{figure}
\centering
\includegraphics[width=0.5\textwidth]{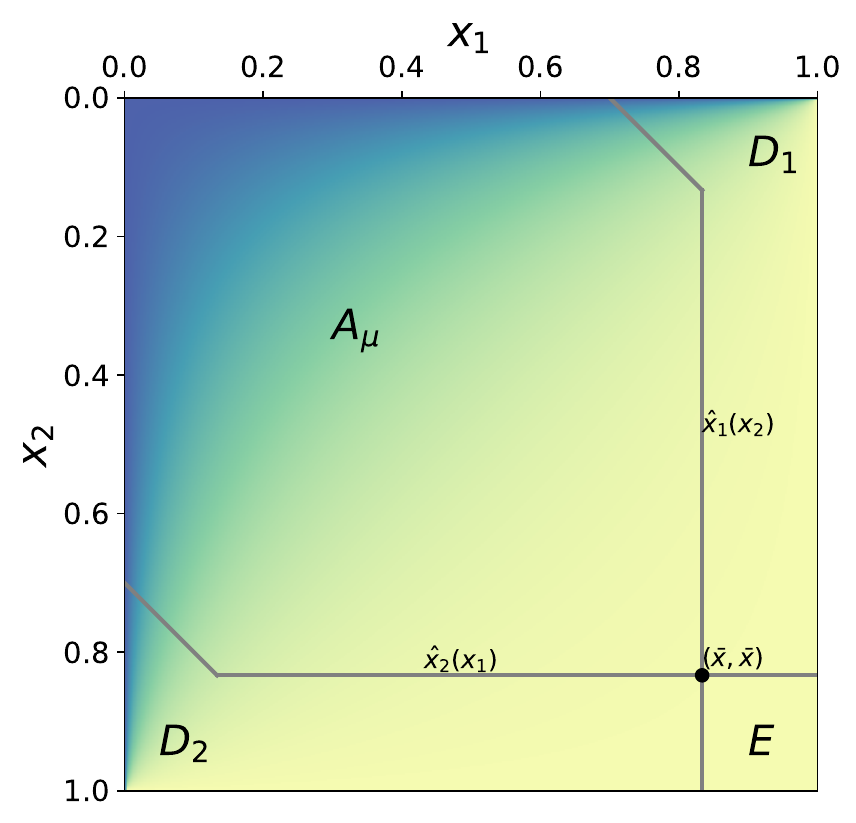}
\caption{An example of area division.}\label{fig:area}
\end{figure}

Finally, we can define
$$\tilde{g}_\mu(x_1, x_2) = \begin{cases}g_\mu(\hat{x}_1(x_2), x_2)&\text{if $(x_1, x_2)\in D_1$}\\
g_\mu(x_1, \hat{x}_2(x_1))&\text{if $(x_1, x_2)\in D_2$}\\
g_\mu(\bar{x}, \bar{x})=g_\mu(\hat{x}_1(x_2), \hat{x}_2(x_1))&\text{if $(x_1, x_2)\in E$}\\
g_\mu(x_1, x_2)&\text{otherwise}\end{cases}$$
It is easy to see that $\tilde{g}_\mu(\vx)\in [0,1]$ for all $\vx$.  By \Cref{lem:lip_omni}, $g_\mu(\vx)$ is $L$-Lipschitz on $A_\mu$ with $L\le \frac{4}{\epsilon_1^2\epsilon_2}$.  We will show $\tilde{g}_\mu$ is $2L$-Lipschitz.  First, for all $x_1< x_1'$ and $x_2$,  let $\vx = (x_1, x_2)$ and $\vx' = (x_1', x_2)$.   we can show the following statement is true. 
\begin{equation}\label{eq:lip_extension1}
    |\tilde{g}_\mu(\vx')-\tilde{g}_\mu(\vx)|\le 2L\|\vx'-\vx\|_1 = 2L(x_1'-x_1).
\end{equation}
\begin{enumerate}
    \item If $\vx, \vx'\in A_\mu$, \cref{eq:lip_extension1} holds because $\tilde{g}_\mu(\vx) = g_\mu(\vx)$, $\tilde{g}_\mu(\vx') = g_\mu(\vx')$ and  $\|g_\mu(\vx)-g_\mu(\vx')\|\le L\|\vx'-\vx\|_1$.
    \item If $\vx, \vx'\in D_1$ or $\vx, \vx'\in E$, \cref{eq:lip_extension1} holds, because $\tilde{g}_\mu(\vx) = \tilde{g}_\mu(\vx')$.
    \item If $\vx\in D_2$ and $\vx'\in D_2$ or $E$.  
    We observe that $|\hat{x}_2(x_1)-\hat{x}_2(x_1')|\le |x_1-x_1'|$ since the gradient of $\hat{x}_2(x_1)$ is less than 1. So $|\tilde{g}_\mu(\vx')-\tilde{g}_\mu(\vx)| \le L(|x_1'-x_1|+|\hat{x}_2(x_1)-\hat{x}_2(x_1')|)\le 2L\|\vx'-\vx\|_1$, which proves \cref{eq:lip_extension1}.
    \item If $\vx\in D_2$, $\vx'\in A_\mu$ or $D_1$.  Because $\vx\in D_2$, $x_2\ge \inf(\bar{x}, x_1+(1-\epsilon_1)) = \hat{x}_2(x_1)\ge x_1+(1-\epsilon_1)$.  Because $\vx'\in A_\mu$ or $D_1$, $x_2-x_1'\le(1-\epsilon_1)$.  Combining above two inequalities, we have $|x_2-\hat{x}_2(x_1)| = x_2-\hat{x}_2(x_1)\le x_1'+(1-\epsilon_1)-(x_1+(1-\epsilon_1)) = x_1'-x_1$, and
    \begin{align*}
        |\tilde{g}_\mu(\vx')-\tilde{g}_\mu(\vx)| =& |g_\mu(x_1', x_2)-g_\mu(x_1, \hat{x}_2(x_1))|\\
        \le& L(|x_1'-x_1|+|x_2-\hat{x}_2(x_1)|)\\
        \le& 2L(x_1'-x_1)
    \end{align*}
    that proves \cref{eq:lip_extension1}. 
    \item If $\vx\in A_\mu$, $\vx'\in D_1$, by the definition of $D_1$, we have $x_1'\ge \hat{x}_1(x_2)$. Thus, $|\tilde{g}_\mu(\vx')-\tilde{g}_\mu(\vx)| = |g_\mu(\hat{x}_1(x_2), x_2)-g_\mu(x_1, x_2)|\le L|\hat{x}(x_2)-x_1|\le L|x_1'-x_1|$.
    \item Finally, because $\vx$ and $\vx'$ have the same second coordinate, it is impossible to have $\vx\in A_\mu$ and $\vx'\in E$, or $\vx\in D_1$ and $\vx'\in E$.
\end{enumerate}
The above argument exhaust all possible combination of a pair of points in four sets, and proves \cref{eq:lip_extension1}.  By symmetry, for all $x_1, x_2< x_2'$ in $[0,1]$,
$$|\tilde{g}_\mu(x_1, x_2')-\tilde{g}_\mu(x_1, x_2')|\le 2L(x_2'-x_2).$$
Combining these two, $\|\tilde{g}_\mu\|_{Lip}\le 2L\le \frac{8}{\epsilon_1^2\epsilon_2}$.
\end{proof}

% \fang{we may put the following claim to appendix} \yk{The proof sketch refers to these lemmas. We may still keep the statements. }
\begin{claimrep}\label{lem:prior_shift_lip}
     Given for all $0< \beta, \gamma\le 2/3$ and $\theta, \theta'$ with $|\mu-\mu'| = \epsilon<1/6$ and $|1/2-\mu|>|1/2-\mu'|$, 
     if $\epsilon_1 = \epsilon^{\beta}$, $\epsilon_2 = \epsilon^{\gamma}$ satisfying $\epsilon_1, \epsilon_2<1/2$ and $\epsilon<2^{-32/9}$, 
     $$\E_{\theta}[|g_\mu(\vx)-\tilde{g}_{\mu'}(\vx)|] \le 4\epsilon^\beta+52\epsilon^{\frac{\gamma}{2}(1-\frac{\gamma}{2})} \le 4\epsilon_1+52\epsilon_2^{1/4}$$
     where $\tilde{g}_{\mu'}$ is the Lipschitz extension of $g_{\mu'}(\vx)$ from $A_{\mu'}(\epsilon_1, \epsilon_2)$.
\end{claimrep}
\begin{proof}
 By symmetry, we can assume $\mu\le 1/2$.  Let $\sigma = \frac{\mu}{1-\mu}$, and $\sigma' = \frac{\mu'}{1-\mu'}$.   For any $\vx \in A_{\mu'}(\epsilon_1, \epsilon_2)$, let $\xi = \frac{x_1x_2}{\bar{x}_1\bar{x}_2}\in \R_{\ge 0}$, $$|g_\mu(\vx)-\tilde{g}_{\mu'}(\vx)| = |g_\mu(\vx)-g_{\mu'}(\vx)|=\frac{|\sigma'-\sigma|\xi}{\xi^2+(\sigma+\sigma')\xi+\sigma\sigma'}.$$
To prove the inequality, we consider two cases: $\mu\ge \epsilon^{1-\gamma/2}$ or $\mu<\epsilon^{1-\gamma/2}$.  

\paragraph{Case 1:} $\mu\ge \epsilon^{1-\gamma/2}$, by the argument identical to the first part of \Cref{lem:prior_shift}'s proof, we have for all $\vx\in A_{\mu'}(\epsilon_1, \epsilon_2)$, $|g_\mu(\vx)-\tilde{g}_{\mu'}(\vx)| = |g_\mu(\vx)-g_{\mu'}(\vx)| = 4\epsilon^{\gamma/2}$, so by \cref{eq:bad_area}
$$\E_\theta[|g_\mu(\vx)-\tilde{g}_{\mu'}(\vx)|] = 4\epsilon^{\gamma/2}+1-\Pr_\theta[A_{\mu'}(\epsilon_1, \epsilon_2)]\le 4\epsilon^{\gamma/2}+\Pr_\theta[B(\epsilon_1)]+\Pr_\theta[C_{\mu'}(\epsilon_2)].$$
By \Cref{lem:ext2}, $\Pr_\theta[B(\epsilon_1)]\le \frac{2\epsilon_1}{1-\epsilon_1}\le 4\epsilon_1$ when $\epsilon_1\le 1/2$. On the other hand, if $\mu'\le 1/2$, we have $\mu\le \mu'$ and by \Cref{lem:ext1} $\Pr_\theta[C_{\mu'}(\epsilon_2)] \le 2\Pr_\theta[x_1\ge \frac{\mu'}{\epsilon_2}]\le 2\frac{\mu}{\mu'}\epsilon_2\le 2\epsilon_2$.  If $\mu'>1/2$, $\Pr_\theta[C_{\mu'}(\epsilon_2)] \le 2\Pr_\theta[x_1\le 1-\frac{1-\mu'}{\epsilon_2}]\le 2\frac{1-\mu}{1-\mu'}\epsilon_2\le 2\frac{1/2}{1/2-\epsilon}\epsilon_2\le 3\epsilon_2$ since $\epsilon<2^{-32/9}<1/6$. Therefore,
\begin{equation}\label{eq:prior_shift_lip1}
    \E_\theta[|g_\mu(\vx)-\tilde{g}_{\mu'}(\vx)|]\le 4\epsilon^{\gamma/2}+4\epsilon_1+3\epsilon_2 = 4\epsilon^{\beta}+7\epsilon^{\gamma/2}.
\end{equation}
\paragraph{Case 2:} $\mu\le \epsilon^{1-\gamma/2}$, because $|1/2-\mu|>|1/2-\mu'|$, $\mu' = \mu+\epsilon$ is greater than $\mu$ and $\epsilon$.  Consider $\vx$ with $x_1, x_2\le \mu^{1-\gamma/2}$.  Note that 
$x_1, x_2 \le \mu^{1-\gamma/2}\le (\mu')^{1-\gamma/2}\le \frac{\mu'}{{\epsilon}_2}$, because $\mu\le \mu'$ and $(\mu')^{\gamma/2}\ge \epsilon^{\gamma/2}\ge \epsilon_2$. 
 Additionally, $|x_1-x_2|\le 2\mu^{1-\gamma/2}\le 2{\epsilon}^{(1-\gamma/2)^2}\le 2{\epsilon}^{(3/4)^2}\le 1/2\le 1-\epsilon_1$ since $\epsilon<2^{-32/9}$ and $\epsilon_1\le 1/2$.  Therefore, 
 $$\{\vx: x_1, x_2\le \mu^{1-\gamma/2}\}\subset A_{\mu'}(\epsilon_1, \epsilon_2),$$
 and $|g_\mu(\vx)-\tilde{g}_{\mu'}(\vx)| = |g_\mu(\vx)-g_{\mu'}(\vx)|$.  We can use the same argument as \Cref{lem:prior_shift}, and have $|g_\mu(\vx)-\tilde{g}_{\mu'}(\vx)| = |g_\mu(\vx)-g_{\mu'}(\vx)| \le 24\sigma^{1-\gamma}$, and by union bound with \Cref{lem:ext1}, we have
\begin{equation}\label{eq:prior_shift_lip2}
    \E_\theta |g_\mu(\vx)-\tilde{g}_{\mu'}(\vx)|\le 2\sigma^{\gamma/2}\cdot \|g_\mu(\vx)-\tilde{g}_{\mu'}\|_\infty+24\sigma^{1-\gamma}\le26\sigma^{\gamma/2}\le 52\epsilon^{(1-\gamma/2)\gamma/2}
\end{equation}
Combining \cref{eq:prior_shift_lip1,eq:prior_shift_lip2} completes the proof.
\end{proof}

\subsection{$\ThetabciNM$ is a Small EMD Covering of $\Thetabci$}\label{sec:dem_smallcover}

In this section, we show $\ThetabciNM$ is a good EMD covering of $\Thetabci$. 
\begin{proposition}[Small Cover]\label{l:smallcoverlipschitz}
For all $M, N\in\mathbb{N}$ with $M>N$, $\ThetabciNM$ is a $\left(\frac{12N}{M}+\frac{8}{\sqrt{N}}+\frac{4}{N}, d_{EM}\right)$-covering of $\Thetabci$.
\end{proposition}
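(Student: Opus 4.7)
The plan is to use the triangle inequality through $\ThetabciN$: given $\theta\in\Thetabci$, I will first find $\theta'\in\ThetabciN$ close to $\theta$ in $d_{EM}$, and then invoke Lemma~\ref{lem:dtv_covernum} to find $\theta''\in\ThetabciNM$ close to $\theta'$. For the hop from $\ThetabciN$ to $\ThetabciNM$, the report marginals are supported in $[0,1]^2$ whose $\ell_1$-diameter is $2$, so the standard total-variation-to-earth-mover coupling yields $d_{EM}\le 2\,d_{TV}$; combined with the $6N/M$ TVD bound of Lemma~\ref{lem:dtv_covernum}, this accounts for the $12N/M$ term.

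For the harder hop from $\Thetabci$ to $\ThetabciN$, I will round each report of $\theta=(\mu, a_0, a_1, b_0, b_1)$ to the nearest multiple of $1/N$ and construct $\theta'$ via Lemma~\ref{lem:construct_info}, splitting into two cases by agent informativeness. In the \emph{strong} case $a_1-a_0,\,b_1-b_0\ge 1/\sqrt{N}$, rounding preserves the strict ordering of reports around the prior, so I take $\mu'$ close to $\mu$ in the (still non-empty) valid range and apply Lemma~\ref{lem:construct_info}. In the \emph{weak} case, say $a_1-a_0<1/\sqrt{N}$, I collapse agent~1's signal by setting $a_0'=a_1'$ equal to a rounded value near $\mu$ (so $p_0'=p_1'$ and $s_1$ is independent of $\omega$ under $\theta'$), handling agent~2 as in the strong case if its gap is large enough and by the same collapse otherwise.

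To bound $d_{EM}(\Xdis{\theta},\Xdis{\theta'})$, I exhibit a coupling by first coupling the underlying signals and then reading off the reports. In the strong case, I use the optimal TVD coupling of the joint signal distributions, so on the matched event the reports differ by at most $2/N$ (rounding) and on the unmatched event by at most the diameter $2$; this yields $d_{EM}\le 2/N+2\,d_{TV}(\Pr_\theta[s_1,s_2],\Pr_{\theta'}[s_1,s_2])$. The joint signal TVD is controlled by differentiating the explicit formulas of Lemma~\ref{lem:construct_info}: the sensitivity of $\Pr[s_1=i,s_2=j]$ to each report scales as $1/(a_1-a_0)+1/(b_1-b_0)$, so a $1/N$-rounding yields a TVD of $O(1/\sqrt{N})$ under the case assumption. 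In the weak case, agent~1's signal is degenerate under $\theta'$, so coupling $s_2$ directly makes $\vx$ and $\vx'$ differ only by $|a_{s_1}-\hat{\mu}|+|b_{s_2}-\hat{b}_{s_2}|\le 1/\sqrt{N}+O(1/N)$, using $|a_{s_1}-\mu|\le a_1-a_0<1/\sqrt{N}$; any residual mismatch in the $s_2$ marginal contributes an additional $O(1/\sqrt{N})$.

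The main obstacle is the singular sensitivity of $\Pr_\theta[s_1,s_2]$ to the reports when a gap collapses: the naive bound on the signal TVD scales as $1/(N(a_1-a_0))$ and diverges as $a_1-a_0$ shrinks below $1/N$. The $1/\sqrt{N}$ threshold is chosen precisely to balance this inverse-gap sensitivity against the $1/N$ rounding error. The degenerate regime also sidesteps Lemma~\ref{lem:construct_info} (which requires strict gaps) by making the weak signal uninformative and choosing $\mu'$ so that the resulting $\theta'$ remains in $\ThetabciN$. Summing the two hop contributions and applying the triangle inequality yields the claimed bound $\tfrac{12N}{M}+\tfrac{8}{\sqrt{N}}+\tfrac{4}{N}$.
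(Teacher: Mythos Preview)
Your outer scaffold is exactly the paper's: chain through $\ThetabciN$ via the triangle inequality, with the first hop handled by $d_{EM}\le 2\,d_{TV}$ on $[0,1]^2$ combined with Lemma~\ref{lem:dtv_covernum}. The substantive content, as you identify, is the $d_{EM}$-covering of $\Thetabci$ by $\ThetabciN$ (the paper isolates this as Lemma~\ref{lem:dem_covernum}). Here your route differs from the paper's in two respects. The paper keeps $\mu'=\mu$ and rounds reports \emph{outward} ($a_0'\le a_0$, $a_1'\ge a_1$), then builds a coupling directly on the \emph{report} distributions with a three-way split (both gaps small / both large / mixed) at a threshold $\delta_N=4/\sqrt N+2/N$. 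You instead couple the \emph{signals} and, in the small-gap regime, collapse the weak agent to a degenerate signal. Your two-case analysis with collapsing is a legitimate and arguably cleaner alternative to the paper's mixed case; both ultimately rest on the same derivative estimate that $\Pr[s_1{=}i,s_2{=}j]$ has sensitivity of order $1/(a_1-a_0)+1/(b_1-b_0)$ in the reports, which the paper computes explicitly.

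There is, however, a concrete gap in your strong-case construction: rounding to the \emph{nearest} grid point need not leave any valid $\mu'$. Take $N=10$, $\mu=0.495$, $(a_0,a_1)=(0,0.5)$, $(b_0,b_1)=(0.49,1)$; both gaps exceed $1/\sqrt N$, yet nearest rounding gives $a_1'=b_0'=0.5$ and $(a_0',a_1')\cap(b_0',b_1')=\emptyset$, so Lemma~\ref{lem:construct_info} cannot be invoked. The fix is exactly the paper's choice: round outward, so that $a_0'\le a_0<\mu<a_1\le a_1'$ and likewise for $b$, and simply take $\mu'=\mu$ (the prior need not lie in $[1/N]$ for membership in $\ThetabciN$). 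A similar care is needed in your weak case, where forcing $\mu'=\hat\mu\in[1/N]$ can collide with $b_0'$ or $b_1'$. With outward rounding and $\mu'=\mu$ these issues disappear and your signal-coupling/collapse argument goes through; pinning down the precise constants $8/\sqrt N+4/N$ still requires the explicit gradient bounds the paper works out rather than the $O(\cdot)$ sketch.
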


To prove \Cref{l:smallcoverlipschitz}, we first note that because the diameter of $[0,1]^2$ is bounded, the earth mover's distance of any pair of distributions on $[0,1]^2$ can be upper bounded by their total variation distance. With \Cref{lem:dtv_covernum}, $\ThetabciNM$ is also a good $d_{EM}$-covering of $\ThetabciN$.  Therefore, it is sufficient to prove $\ThetabciN$ is a good $d_{EM}$-covering of $\Thetabci$ as proved in \Cref{lem:dem_covernum}, which is proved in \Cref{lem:dem_covernum}. 

\begin{proof}[Proof of \Cref{l:smallcoverlipschitz}]
    By \Cref{lem:dtv_covernum}, $\ThetabciNM$ is a $(\frac{6N}{M}, d_{TV})$-covering of $\ThetabciN$.  Because the diameter of $[0,1]^2$ is $2$ so that $\|\vx-\vy\|_1\le 2\cdot\mathbf{1}[\vx\neq \vy]$ for all $\vx, \vy\in [0,1]^2$, by \cref{eq:dual} we have $d_{EM}(\Xdis{\theta}, \Xdis{\theta'})\le 2d_{TV}(\Xdis{\theta}, \Xdis{\theta'})$ for all $\theta, \theta'$.  Therefore, $\ThetabciNM$ is a $(\frac{12 N}{M}, d_{EM})$-covering of $\bar{\Theta}_{N'}$.
    
    On the other hand, by \Cref{lem:dem_covernum} $\ThetabciN$ is a $\left(\frac{8}{\sqrt{N}}+\frac{4}{N}, d_{EM}\right)$-covering of $\Thetabci$.  Because $d_{EM}$ is a metric~\citep{dudley2018real}, by triangle inequality, $\ThetabciNM$ is a $\left(\frac{12N}{M}+\frac{8}{\sqrt{N}}+\frac{4}{N}, d_{EM}\right)$
    -covering of $\Thetabci$
\end{proof}

\begin{lemmarep}\label{lem:dem_covernum}
For all $N\in \mathbb{N}$, $\ThetabciN$ is a $\left(\frac{8}{\sqrt{N}}+\frac{4}{N}, d_{EM}\right)$-covering of $\Thetabci$.
\end{lemmarep}

To prove \Cref{lem:dem_covernum}, for each $\theta\in \Thetabci$, it is sufficient to find a $\theta'\in \ThetabciN$ with small $d_{EM}(\theta,\theta')$. Recall that the earth mover's distance is defined as the supreme of $\E_P[h]-\E_Q[h]$ over all $h$ which is 1-Lipschitz. However, a specific $\E_P[h]-\E_Q[h]$ only provides a lower bound of EMD, while we want an upper bound. Thus, we will consider the dual form, which requires coupling techniques.

\paragraph{Coupling and Dual Form of EMD/TVD} We introduce coupling here. Readers familiar with these concepts can jump to \Cref{lem:dem_covernum}.
For any two distributions $P$ and $Q$ on $[0,1]^2$ with a metric $d:[0,1]^2\times [0,1]^2 \to \R$, let $\Pi(P,Q)$ denote the set of all joint distributions on $S\times S$\fang{$[0,1]^2\times[0,1]^2$?} with marginals $P$ and $Q$. The joint distribution $\pi\in \Pi(P,Q)$ is also known as \emph{coupling} or \emph{transportation plan} between $P$ and $Q$. %Informally $d\pi(\vx, \vy)$ measures the amount of mass transferred from $\vx$ to $\vy$. 

The \emph{Wasserstein distance} between $P$ and $Q$ is $\inf_{\pi \in \Pi(P,Q)}\E_{(\vx, \vy)\sim \pi}[d(\vx, \vy)]$.  Given a real-valued function $f$ on $S$, $\|f\|_{L, d}:= \sup_{\vx\neq \vy\in S}|f(\vx)-f(\vy)|/d(\vx, \vy)$.  Note that when $d = \|\cdot\|_1$ is $1$-norm, $\|f\|_{L,d} = \|f\|_{Lip}$ , and when $d$ is the discrete metric, if $\|f\|_\infty = 1$ then $\|f\|_{L,d} = 2$.
Now we are ready to state a special case of Kantorovich-Rubinstein Theorems (Theorem 11.8.2~\citep{dudley2018real}) which shows the duality between Wasserstein distance and earth mover's distance/total variation distance: 
    For any metric $d$ and two distributions $P, Q$ on $[0,1]^2$, 
    \begin{equation}\label{eq:dual}
        \sup_{f: \|f\|_{L,d} = 1} \E_{\vx\sim P} f(\vx)-\E_{\vy\sim Q}f(\vy) = \inf_{\pi\in \Pi(P, Q)} \E_{(\vx, \vy)\sim \pi}[d(\vx,\vy)].
    \end{equation}
    % Moreover, Wasserstein distances are metrics.
In particular, by taking $d$ as $1$-norm, 
$$d_{EM}(P, Q) = \inf_{\pi\in \Pi(P, Q)} \E_{(\vx, \vy)\sim \pi}[\|\vx-\vy\|_1].$$
By taking $d$ as the discrete metric where $d(x,y) = 1$ for all $x\neq y$, the dual form becomes $d_{TV}(P,Q) = \inf_{\pi\in \Pi(P, Q)} \E_{(\vx, \vy)\sim \pi}[\mathbf{1}[\vx\neq\vy]]$.  

With the above dual form, to prove \cref{lem:dem_covernum} we construct $\theta'\in \ThetabciN$  for each $\theta\in \Thetabci$ and device proper coupling between $\Xdis{\theta}$ and $\Xdis{\theta'}$. Specifically, we let $\rsupp(\theta)$ be close to the $\rsupp(\theta')$ and set the prior $\mu$ of $\theta$ identical to the prior $\mu'$ of $\theta'$.\fang{Is $\theta$ and $\theta'$ swapped?} Recall that $\rsupp(\theta)$ is the support of report space. Then we show that $\theta$ and $\theta'$ are close in earth mover's distance by constructing a coupling $\pi\in \Pi(\Xdis{\theta}, \Xdis{\theta'})$ with small \emph{transportation cost} $\E_{(\vx, \vx')\sim \pi}[\|\vx-\vx'\|_1]$ which upper bounds $d_{EM}(\Xdis{\theta}, \Xdis{\theta'})$ by \cref{eq:dual}.  

We note that $\pi$ can be specified as a function on $\rsupp(\theta)\times \rsupp(\theta')$, and the transportation cost can be written as 
$$\E_{(\vx, \vx')\sim \pi}[\|\vx-\vx'\|_1] = \sum_{\vx\in \rsupp(\theta), \vx'\in \rsupp(\theta')} \|\vx-\vx'\|_1\pi(\vx, \vx').$$
Then, intuitively we bound the cost by considering two cases. Recall two parameterizations of information structures \cref{eq:para1,eq:para2} $(p_0, p_1, q_0, q_1)$ and $(p_0', p_1', q_0', q_1')$ (or $(a_0, a_1, b_0, b_1)$ and $(a_0', a_1', b_0', b_1')$).
% that $\begin{cases}
% p_0=\Pr_{\theta}[s_1=1|\omega=0]\\
% p_1=\Pr_{\theta}[s_1=1|\omega=1]\\
% \end{cases}$, $\begin{cases}
% q_0=\Pr_{\theta}[s_2=1|\omega=0]\\
% q_1=\Pr_{\theta}[s_2=1|\omega=1]
% \end{cases}$ and $\begin{cases}
% a_0=\Pr_{\theta}[\omega=1|s_1=0]\\
% a_1=\Pr_{\theta}[\omega=1|s_1=1]
% \end{cases}$, $\begin{cases} b_0=\Pr_{\theta}[\omega=1|s_2=0]\\
% b_1=\Pr_{\theta}[\omega=1|s_2=1]
% \end{cases}$. 
When $p_i\approx p_i'$ and $q_i\approx q_i'$ for all $i$, we can transport most of the probability locally $(a_i, b_j)\to (a_i', b_j')$ for all $i, j$ that has small cost $\|(a_i, b_j)-(a_i', b_j')\|_1$.  While if we cannot transport locally, we can show $|a_1-a_0|$ or $|b_1-b_0|$ are small and any coupling will incur little costs. We defer the formal proof to \Cref{prf:dem_covernum}.

\begin{toappendix}
    \label{prf:dem_covernum}
\end{toappendix}
\begin{proof}[Proof for \Cref{lem:dem_covernum}]

% To prove the lemma, we need to show that for all ${\theta}'\in {\Theta}^{bci}$, there exists $\theta\in \bar{\Theta}^{N}$ so that 
% $$d_{EM}(P_\theta(x_1,x_2), P_{\theta'}(x_1,x_2))\le  \frac{4\sqrt{2}}{\sqrt{N}}+\frac{3}{N}$$
Given $\theta' = (\mu', p_0', p_1', q_0', q_1')$ the predictions are 
\begin{align*}
    &{a_0}' = \frac{\mu' p_1'}{\mu' p_1'+(1-\mu')p_0'}, a_1' = \frac{\mu' (1-p_1')}{\mu' (1-p_1')+(1-\mu')(1-p_0')},\\
    &{b_0}' = \frac{\mu' q_1'}{\mu' q_1'+(1-\mu')q_0'}\text{ and }{b_1}' = \frac{\mu' (1-q_1')}{\mu' (1-q_1')+(1-\mu')(1-q_0')}
\end{align*}
Without loss of generality, we can assume $a_0' < a_1'$, $b_0' < b_1'$.
We construct $\theta\in \bar{\Theta}_N$ as the following: Set $\mu = \mu'$.  Then pick $a_0<a_1, b_0<b_1\in \{0,\frac{1}{N},\cdots,1\}$ so that $$a_0'-1/N\le a_0\le a_0', a_1'\le a_1\le a_1'+1/N, b_0'-1/N\le b_0\le b_0'\text{, and }b_1'\le b_1\le b_1'+1/N.$$ 
Because $a_0\le \mu\le a_1$ and $b_0\le \mu\le b_1$, by \Cref{lem:construct_info} there exists $p_0, p_1, q_0$ and $q_1$ so that $\rsupp(\theta) = \{(a_0, b_0), (a_0, b_1), (a_1, b_0), (a_1, b_1)\}$.  By symmetry, We can additionally assume $|a_1-a_0|\le |b_1-b_0|$.

Recall that any coupling $\mu: \rsupp(\theta)\times\rsupp(\theta')\to \mathbb{R}_{\ge 0}$ between $\Xdis{\theta}$ and $\Xdis{\theta'}$, 
 \begin{equation}\label{eq:em_covernum0}
     d_{EM}(P_\theta(x_1,x_2), P_{\theta'}(x_1,x_2))\le \sum_{\vx\in \rsupp(\theta), \vx'\in \rsupp(\theta')} \|\vx-\vx'\|_1\mu(\vx, \vx')
 \end{equation}
 \begin{figure}
     \centering
     \includegraphics[width = .8 \textwidth]{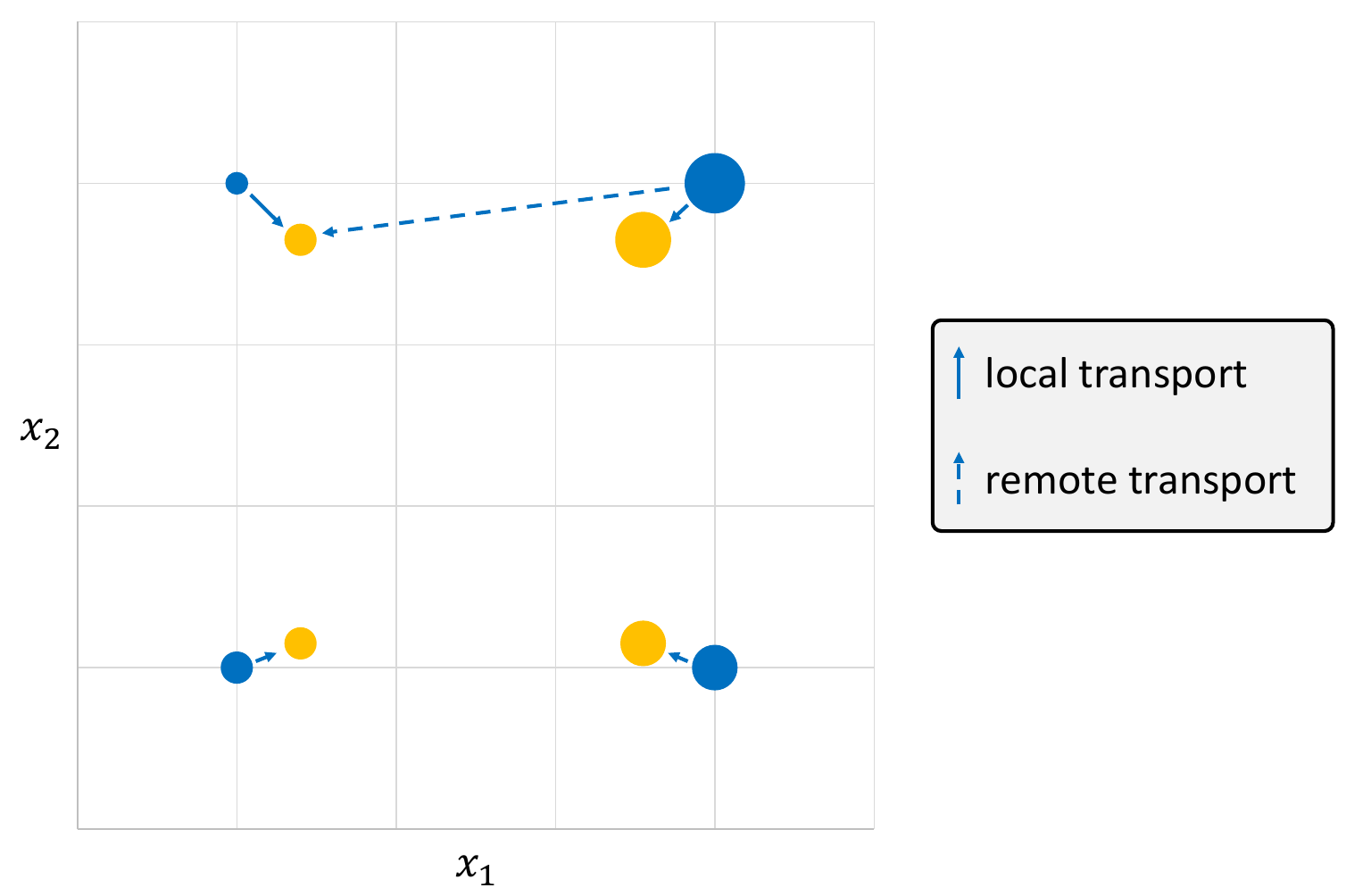}
     \caption{The sketch of the coupling in \Cref{lem:dem_covernum}.  Given $|a_1-a_0|\le |b_1-b_0|$, we first match locally $(a_i, b_j), (a_i', b_j')$ in \cref{eq:em_covernum1}, remotely $(a_i, b_j), (a_{\neg i}', b_j')$ in \cref{eq:em_covernum2}, and the rest.}
     \label{fig:coupling}
 \end{figure}
 % Specifically, a coupling is a function $\mu: \rsupp(\theta)\times\rsupp(\theta')\to \mathbb{R}_{\ge 0}$ so that $\sum_{\vx'\in \rsupp(\theta')}\mu(\vx, \vx') = \Pr_\theta[\vx]$ for all $\vx\in \rsupp(\theta)$ and $\sum_{\vx\in \rsupp(\theta)}\mu(\vx, \vx') = \Pr_{\theta'}[\vx']$ for all $\vx'\in \rsupp(\theta')$.  
 The definition of our coupling $\pi(\vx, \vx')$ has three parts.  The first is \emph{local transportation} $\vx = (a_i, b_j), \vx' = (a_i', b_j')$ for all $i,j$ that has the smallest cost $\|\vx-\vx'\|_1 \approx 1/N$.  The second is \emph{remote transportation} $\vx = (a_i, b_j), \vx' = (a_{\neg i}', b_j')$ for all $i,j$ and $\neg i\neq i$ that has cost $\|\vx-\vx'\|_1 \approx |a_0-a_1|$.  The rest has the largest cost $\|\vx-\vx'\|_1 \approx |b_0-b_1|$.  Therefore, to minimize the cost, $\pi$ will prioritize the local transportation, remote transportation, then the rest.  First, we set 
 \begin{equation}\label{eq:em_covernum1}
     \pi((a_i, b_j), (a_i', b_j')) = \inf \{\Pr_{\theta}[a_i, b_j], \Pr_{\theta'}[a_i', b_j']\}\text{ for all }i, j = 0,1.
 \end{equation}  
 Second, for all $i, {\neg i},j = 0,1, i\neq {\neg i}$
 \begin{equation}\label{eq:em_covernum2}
     \pi((a_i, b_j), (a_{\neg i}', b_j')) = \inf\{(\Pr_{\theta'}[a_i', b_j']-\Pr_{\theta}[a_i, b_j])_+, (\Pr_{\theta}[a_{\neg i}, b_j]-\Pr_{\theta'}[a_{\neg i}', b_j'])_+ \}
 \end{equation}
 where $(z)_+ = \sup \{z, 0\}$ for all $z\in \R$. 
Finally, we extend \cref{eq:em_covernum1,eq:em_covernum2} to a valid coupling 
% so that
% \begin{equation}\label{eq:em_covernum3}
%     \sum_{\vx'\in \rsupp(\theta')}\pi(\vx, \vx') = \Pr_\theta[\vx], \forall \vx\in \rsupp(\theta)\text{ and }\sum_{\vx\in \rsupp(\theta)}\pi(\vx, \vx') = \Pr_{\theta'}[\vx'], \forall \vx'\in \rsupp(\theta')
% \end{equation}
%\fang{The existence can be proven by the max-flow min-cut theorem.  I am not sure how much details do we want to spell out.}
% Intuitively, we first match $(a_i, b_j)$ and $(a_i', b_j')$ for all $i,j = 0,1$ when possible, because the they are at most $2/N$ far.  Then, we match $(a_0, b_j)$ to $(a_1, b_j')$ (or $(a_1, b_j)$ to $(a_0, b_j')$), because the $|a_1-a_0|\le |b_1-b_0|$.  
We will call $\pi(\vx, \vx')$ the flow/transportation between $\vx$ and $\vx'$.

To upper bound the right hand side of \cref{eq:em_covernum0}, we set a $\delta_N$ that will be specified later, and consider three cases:  small distance case $\delta_N\ge |b_1-b_0|$, large distance case $|a_1-a_0|\ge \delta_N$, and mixed distance case $|b_1-b_0|\ge \delta_N\ge |a_1-a_0|$. 

\paragraph{Small distance case} Because for all $i,j,k,l = 0,1$, the cost are small $|a_i-a_j'|+|b_k-b_l'|\le |a_1-a_0|+|b_1-b_0|\le 2\delta_N$, 
$$\sum_{\vx\in \rsupp(\theta), \vx'\in \rsupp(\theta')} \|\vx-\vx'\|_1\pi(\vx, \vx')\le \sum_{\vx\in \rsupp(\theta), \vx'\in \rsupp(\theta')} 2\delta_N\pi(\vx, \vx') = 2\delta_N$$

\paragraph{Large distance case} When $|a_1-a_0|\ge \delta_N$, though the cost of between $(a_i,b_j)$ and $(a_k',b_l')$ with $(i,j)\neq (k,l)$ is large, we can show most transportation in $\pi$ happens between $(a_i,b_j)$ and $(a_i',b_j')$.  
Because $\|(a_i, b_j)-(a_i', b_j')\|_1\le 2/N$ and $\|(a_i, b_j)-(a_k', b_l')\|_1\le 2$ for all $i,j,k$, and $l$ with $(i,j)\neq (k,l)$, 

\begin{align*}
    \sum_{\vx\in \rsupp(\theta), \vx'\in \rsupp(\theta')} \|\vx-\vx'\|_1\pi(\vx, \vx')\le& \sum_{i,j = 0,1} \pi((a_i, b_i), (a_i', b_i'))\frac{2}{N}+\sum_{i,j,k,l: (i,j)\neq (k,l)} \pi((a_i, b_i), (a_k', b_l'))\\
    =& \sum_{i,j = 0,1} \pi((a_i, b_i), (a_i', b_i'))\frac{2}{N}+2(1-\sum_{i,j = 0,1} \pi((a_i, b_i), (a_i', b_i'))).
\end{align*}
Additionally, by \cref{eq:em_covernum1}, $$\sum_{i,j = 0,1}\pi((a_i, b_j), (a_i', b_j')) = \sum_{i,j = 0,1}\inf (\Pr_\theta[a_i, b_j], \Pr_{\theta'}[a_i', b_j']) = 1-\frac{1}{2}\sum_{i,j = 0,1}|\Pr_\theta[a_i, b_j]- \Pr_{\theta'}[a_i', b_j']|.$$ So 
\begin{equation}\label{eq:em_covernum4}
    \sum_{\vx\in \rsupp(\theta), \vx'\in \rsupp(\theta')} \|\vx-\vx'\|_1\pi(\vx, \vx')\le \frac{2}{N}+(1-\frac{1}{N})\sum_{i,j = 0,1}|\Pr_\theta[a_i, b_j]- \Pr_{\theta'}[a_i', b_j']|
\end{equation}
Thus, it is sufficient to upper bound $|\Pr_\theta[a_i, b_j]- \Pr_{\theta'}[a_i', b_j']|$.  
Let $F_{00}(\alpha_0, \alpha_1, \beta_0, \beta_1)=\frac{\alpha_0\beta_0(\alpha_1-\mu)(\beta_1-\mu)}{\mu(\alpha_1-\alpha_0)(\beta_1-\beta_0)}$ and $G_{00}(\alpha_0, \alpha_1, \beta_0, \beta_1) = \frac{(1-\alpha_0)(1-\beta_0)(\alpha_1-\mu)(\beta_1-\mu)}{(1-\mu)(\alpha_1-\alpha_0)(\beta_1-\beta_0)}$ for $\alpha_0, \alpha_1, \beta_0, \beta_0\in [0,1]$  Because $\mu = \mu'$ and \Cref{lem:construct_info}, we have 
\begin{align*}
    &\Pr_{\theta}[a_0, b_0]-\Pr_{\theta'}[a_0', b_0'] = F_{00}(a_0, a_1, b_0, b_1)-F_{00}(a_0', a_1', b_0', b_1')+G_{00}(a_0, a_1, b_0, b_1)-G_{00}(a_0', a_1', b_0', b_1')
\end{align*}
By Taylor's approximation, there exist $\alpha_0\in [a_0, a_0']$, $\alpha_1\in [a_1', a_1]$, $\beta_0\in [b_0, b_0']$, and $\beta_1\in [b_1', b_1]$ so that 
\begin{align*}
    &|F_{00}(a_0, a_1, b_0, b_1)-F_{00}(a_0', a_1', b_0', b_1')+G_{00}(a_0, a_1, b_0, b_1)-G_{00}(a_0', a_1', b_0', b_1')| \\
    =& |\nabla (F_{00}+G_{00})(\alpha_0, \alpha_1, \beta_0, \beta_1)\cdot (a_0-a_0', a_1-a_1', b_0-b_0', b_1-b_1')|\\
    \le&\|\nabla (F_{00}+G_{00})(\alpha_0, \alpha_1, \beta_0, \beta_1)\|_1 \sup(|a_0-a_0'|+|a_1-a_1'|+|b_0-b_0'|+|b_1-b_1')|)\\
    \le&\frac{1}{N}\|\nabla (F_{00}+G_{00})(\alpha_0, \alpha_1, \beta_0, \beta_1)\|_1
\end{align*}
Now we bound $\|\nabla (F_{00}+G_{00})(\alpha_0, \alpha_1, \beta_0, \beta_1)\|_1$.  
\begin{align*}
    |\frac{\partial (F_{00}+G_{00})}{\partial \alpha_0}| =& \left|\frac{(\alpha_1-\mu)(\beta_1-\mu)(\alpha_1\beta_0-\mu \alpha_1-\mu\beta_0+\mu)}{\mu(1-\mu)(\alpha_1-\alpha_0)^2(\beta_1-\beta_0)}\right|\\
    =& \left|\frac{(\alpha_1-\mu)(\beta_1-\mu)[(\alpha_1-\mu)(\beta_0-\mu)+\mu(1-\mu)]}{\mu(1-\mu)(\alpha_1-\alpha_0)^2(\beta_1-\beta_0)}\right|\\
    \le& \left|\frac{(\alpha_1-\mu)(\beta_1-\mu)(\alpha_1-\mu)(\beta_0-\mu)}{\mu(1-\mu)(\alpha_1-\alpha_0)^2(\beta_1-\beta_0)}\right|+ \left|\frac{(\alpha_1-\mu)(\beta_1-\mu)\mu(1-\mu)}{\mu(1-\mu)(\alpha_1-\alpha_0)^2(\beta_1-\beta_0)}\right|\\    
    =& \left|\frac{(\alpha_1-\mu)^2(\mu-\beta_0)(\beta_1-\mu)}{\mu(1-\mu)(\alpha_1-\alpha_0)^2(\beta_1-\beta_0)}\right|+ \left|\frac{(\alpha_1-\mu)(\beta_1-\mu)}{(\alpha_1-\alpha_0)^2(\beta_1-\beta_0)}\right|
\end{align*}
We upper bound the above two terms separately.  Because $0\le (\mu-\beta_0)(\beta_1-\mu)\le \mu(\beta_1-\beta_0)$ and $0\le (\alpha_1-\mu)^2\le (1-\mu)(\alpha_1-\alpha_0)$, $\left|\frac{(\alpha_1-\mu)^2(\mu-\beta_0)(\beta_1-\mu)}{\mu(1-\mu)(\alpha_1-\alpha_0)^2(\beta_1-\beta_0)}\right|\le \frac{1}{(\alpha_1-\alpha_0)}$.  Additionally, because $0\le (\alpha_1-\mu)(\beta_1-\mu)\le (\alpha_1-\alpha_0)(\beta_1-\beta_0)$, $\left|\frac{(\alpha_1-\mu)(\beta_1-\mu)}{(\alpha_1-\alpha_0)^2(\beta_1-\beta_0)}\right|\le \frac{1}{(\alpha_1-\alpha_0)}$.  Therefore, 
$$|\frac{\partial (F_{00}+G_{00})}{\partial \alpha_0}|\le \frac{2}{(\alpha_1-\alpha_0)}.$$

Second,
\begin{align*}
    &|\frac{\partial (F_{00}+G_{00})}{\partial \alpha_1}|\\
    =& \left|\frac{(\mu-\alpha_0)(\beta_1-\mu)(\alpha_0\beta_0-\alpha_0\mu-\beta_0\mu+\mu))}{(1-\mu)\mu(\alpha_1-\alpha_0)^2(\beta_1-\beta_0)}\right|\\
    =& \left|\frac{(\mu-\alpha_0)(\beta_1-\mu)((\alpha_0-\mu)(\beta_0-\mu)+\mu(1-\mu))}{(1-\mu)\mu(\alpha_1-\alpha_0)^2(\beta_1-\beta_0)}\right|\\
    \le& \left|\frac{(\mu-\alpha_0)(\beta_1-\mu)(\alpha_0-\mu)(\beta_0-\mu)}{(1-\mu)\mu(\alpha_1-\alpha_0)^2(\beta_1-\beta_0)}\right|+\left|\frac{(\mu-\alpha_0)(\beta_1-\mu)\mu(1-\mu)}{(1-\mu)\mu(\alpha_1-\alpha_0)^2(\beta_1-\beta_0)}\right|\\
    =& \left|\frac{(\mu-\alpha_0)^2(\mu-\beta_0)(\beta_1-\mu)}{(1-\mu)\mu(\alpha_1-\alpha_0)^2(\beta_1-\beta_0)}\right|+\left|\frac{(\mu-\alpha_0)(\beta_1-\mu)}{(\alpha_1-\alpha_0)^2(\beta_1-\beta_0)}\right|
\end{align*}
Because $0\le (\mu-\beta_0)(\beta_1-\mu)\le (1-\mu)(\beta_1-\beta_0)$ and $0\le (\mu-\alpha_0)^2\le \mu(\alpha_1-\alpha_0)$, 
$\left|\frac{(\mu-\alpha_0)^2(\mu-\beta_0)(\beta_1-\mu)}{(1-\mu)\mu(\alpha_1-\alpha_0)^2(\beta_1-\beta_0)}\right|\le \left|\frac{(\mu-\alpha_0)^2}{\mu(\alpha_1-\alpha_0)^2}\right|\le \left|\frac{1}{(\alpha_1-\alpha_0)}\right|$.
Additionally, because $0\le (\mu-\alpha_0)(\beta_1-\mu)\le (\alpha_1-\alpha_0)(\beta_1-\beta_0)$, 
$\left|\frac{(\mu-\alpha_0)(\beta_1-\mu)}{(\alpha_1-\alpha_0)^2(\beta_1-\beta_0)}\right|\le \left|\frac{1}{(\alpha_1-\alpha_0)}\right|$.  
Therefore,
$$|\frac{\partial (F_{00}+G_{00})}{\partial \alpha_1}|\le \frac{2}{\alpha_1-\alpha_0}.$$
Finally, by symmetry, we have
$$|\frac{\partial (F_{00}+G_{00})}{\partial \beta_0}|, |\frac{\partial (F_{00}+G_{00})}{\partial \beta_1}|\le \frac{2}{\beta_1-\beta_0}.$$
Since $\alpha_1-\alpha_0\ge a_1-a_0-2/N\ge \delta_N-2/N$ and $\beta_1-\beta_0\ge \delta_N-2/N$
$$|\Pr_{\theta}[a_0, b_0]-\Pr_{\theta'}[a_0', b_0']|\le \frac{4}{N}\left(\frac{1}{\alpha_1-\alpha_0}+\frac{1}{\beta_1-\beta_0}\right)\le \frac{8}{N\delta_N-2}.$$
Using similar argument, we can have
$$|\Pr_{\theta}[a_i, b_j]-\Pr_{\theta'}[a_i', b_j']|\le \frac{8}{N\delta_N-2}$$
for all $i$ and $j = 0,1$.  Apply above inequality to \cref{eq:em_covernum4} and we have
$$\sum_{\vx\in \rsupp(\theta), \vx'\in \rsupp(\theta')} \|\vx-\vx'\|_1\pi(\vx, \vx')\le \frac{2}{N}+(1-\frac{1}{N})4\cdot\frac{8}{N\delta_N-2}\le \frac{2}{N}+\frac{32}{N\delta_N-2}$$

\textbf{Mixed distance case: } When $|a_1-a_0|\le \delta_N\le |b_1-b_0|$, this can be seen as a mixture of the first and second cases.  The remote transportation incur a small cost for all $i,j$, so we can use the argument similar to the first case's to bound the cost of all remote transportation.  On the other hand, though the rest of transportation has a large cost, we will use an argument similar to the large distance case's to show most of the transportation happens remotely.  Formally, let $S = \{((a_i, b_j), (a_i', b_j')): \forall, i,j = 0,1\}$ and $H = \{((a_i, b_j), (a_{\neg i}', b_j')): \forall, i,j = 0,1\}$ be the set contains all remote transportation.  Because $\|(a_i, b_j)-(a'_{i}, b_j')\|_1\le 2/N$ and $\|(a_i, b_j)-(a'_{\neg i}, b_j')\|_1\le \delta_N+1/N$, 
$$\sum_{(\vx, \vx')\in S\cup H} \|\vx-\vx'\|_1\pi(\vx, \vx')\le (\delta_N+\frac{1}{N})\sum_{(\vx, \vx')\in S\cup H}\pi(\vx, \vx').$$
Additionally, 
\begin{align*}
   &\sum_{(\vx, \vx')\in S\cup H}\pi(\vx, \vx')\\
   =& \sum_{j = 0,1}\sum_{i, k = 0,1}\pi((a_i, b_j), (a_k', b_j')) \\
   =& \sum_{j = 0,1}\pi((a_0, b_j), (a_0', b_j'))+\pi((a_1, b_j), (a_1', b_j'))+\pi((a_0, b_j), (a_1', b_j'))+\pi((a_1, b_j), (a_0', b_j'))\\
   =& \sum_{j = 0,1} \inf \{\Pr_{\theta}[a_0, b_j]+\Pr_{\theta}[a_1, b_j], \Pr_{\theta'}[a_0', b_j']+\Pr_{\theta'}[a_1', b_j']\}\\
   =& \sum_{j = 0,1} \inf \{\Pr_{\theta}[ b_j], \Pr_{\theta'}[ b_j']\}\\
   =& 1-\frac{1}{2}\sum_{j = 0,1} |\{\Pr_{\theta}[ b_j]-\Pr_{\theta'}[ b_j']|
\end{align*}
Thus, using argument similar to \cref{eq:em_covernum4} we have 
$$\sum_{\vx\in \rsupp(\theta), \vx'\in \rsupp(\theta')} \|\vx-\vx'\|_1\pi(\vx, \vx')\le (\delta_N+\frac{1}{N})+\sum_{j = 0,1} |\Pr_{\theta}[ b_j]-\Pr_{\theta'}[ b_j']|.$$
Because $\Pr_{\theta}[ b_0] = \frac{{b_1}-\mu}{{b_1}-{b_0}}$ and $\Pr_{\theta}[ b_1] = \frac{\mu-{b_0}}{b_1-b_0}$, by Taylor's approximation, there exist $\beta_0\in [b_0, b_0']$ and $\beta_1\in [b_1', b_1]$ so that 
$$|\Pr_{\theta}[ b_0]-\Pr_{\theta'}[b_0']| =  \left|\frac{\beta_1-\mu}{(\beta_1-\beta_0)^2}(b_0-b_0')+\frac{\mu-\beta_0}{(\beta_1-\beta_0)^2}(b_1-b_1')\right| \le \frac{2}{N(\beta_1-\beta_0)}$$
and
$$|\Pr_{\theta}[ b_1]-\Pr_{\theta'}[b_1']| = \left|\frac{-(\beta_1-\mu)}{(\beta_1-\beta_0)^2}(b_0-b_0')+\frac{-(\mu-\beta_0)}{(\beta_1-\beta_0)^2}(b_1-b_1')\right|\le \frac{2}{N(\beta_1-\beta_0)}$$
As a result,
$$\sum_{\vx\in \rsupp(\theta), \vx'\in \rsupp(\theta')} \|\vx-\vx'\|_1\pi(\vx, \vx')\le (\delta_N+\frac{1}{N})+\frac{2}{N\delta_N-2}.$$

Therefore,
$$\sum_{\vx\in \rsupp(\theta), \vx'\in \rsupp(\theta')} \|\vx-\vx'\|_1\pi(\vx, \vx')\le \sup(2\delta_N, \frac{2}{N}+\frac{32}{N\delta_N-2}, (\delta_N+\frac{1}{N})+\frac{4}{N\delta_N-2})\le \frac{8}{\sqrt{N}}+\frac{4}{N}$$
and we complete the proof by taking $\delta_N = \frac{4}{\sqrt{N}}+\frac{2}{N}$.
\end{proof}

We now show that computing the optimal $L$-Lipschitz operator is efficient.

%Finally, combining \Cref{lem:lipbestresponsecomp}, \Cref{lem:dem_smooth} and we prove our main theorem.
\subsection{Calculating Optimal $L$-Lipschitz Aggregator is Efficient}
At each round of the algorithm we need to compute a best response over Lipschitz aggregators $\mathcal{F}_L$ efficiently. Consider a distribution over information structures $\vw_{\ThetabciNM} \in \Delta_{\ThetabciNM}$. Let us denote the size of the support of $\vw_{\ThetabciNM}$ as $| \text{supp}(\vw_{\ThetabciNM})|$. We need to solve the optimization problem 
\begin{equation}
\label{e:lipopt}
     \begin{aligned}
       & \inf_{f} & \quad & \mathbf{E}_{\theta \sim \vw_{\ThetabciNM}}[R(f,\theta)] \\
       & \text{subject to} & & |f(\vx) - f(\mathbf{y})| \leq L\|\vx - \mathbf{y}\|_1 \quad \forall \vx,\mathbf{y} \in [0,1]^{2}.
     \end{aligned}
\end{equation}
in time polynomial in $| \text{supp}(\vw_{\ThetabciNM})|$. We show the following lemma. The proof is deferred to \Cref{prf:lipbestresponsecomp}.

\begin{lemmarep}(Efficient Best Response)
\label{lem:lipbestresponsecomp}
    There exists an algorithm that finds the best $L$-Lipschitz aggregator for distribution over information structures $\vw_{\ThetabciNM}$ in time $O\left(\left(|\text{supp}(\vw_{\ThetabciNM})|N^{4} + N^{8}\right)\log\left(\frac{N}{L}\right)\right)$, when $N>L$, and in time $O(|\text{supp}(\vw_{\ThetabciNM})|)$\fang{Should there be $|\cdot |$?}, when $N < L$, both of which are polynomial in the size of the support of $\vw_{\ThetabciNM}$.
\end{lemmarep}

\begin{toappendix}
    \label{prf:lipbestresponsecomp}
\end{toappendix}
\begin{proof}[Proof of \Cref{lem:lipbestresponsecomp}]
Consider any distribution over $\ThetabciNM$,  represented by $\vw_{\ThetabciNM} \in \Delta_{\ThetabciNM}$. By definition of $\ThetabciNM$, the only reports that an aggregator observes are of the form $\{x_1,x_2: x_1,x_2 \in [1/N]\}$. Thus, to solve the optimization problem, 
    \begin{equation}
     \begin{aligned}
       & \inf_{f} & \quad & \mathbf{E}_{\theta \sim \vw_{\ThetabciNM}}[R(f,\theta)] \\
       & \text{subject to} & & |f(\vx) - f(\mathbf{y})| \leq L\|\vx - \mathbf{y}\|_1 \quad \forall \vx,\mathbf{y} \in [0,1]^{2}.
     \end{aligned}
\end{equation}
in time polynomial in $| \text{supp}(\vw_{\ThetabciNM})|$,
we need an assignment of the variables $\{f\left(x_1,x_2\right) : x_1,x_2\in[1/N]\}$, while maintaining that the function is $L$-Lipschitz. We first consider a discretized optimization problem and show that an interpolation technique gives a solution to a best $L$-Lipschitz aggregator. Consider the optimization problem which only constrains the adjacent points.
\begin{equation}
     \begin{aligned}
       & \inf_{f} & \quad & \mathbf{E}_{\theta \sim \vw_{\ThetabciNM}}[R(f,\theta)] \\
       & \text{subject to} & & |f(\vx) - f(\mathbf{y})| \leq \frac{L}N \quad \forall \vx,\mathbf{y} \in \left(\frac{k_{1}}{N},\frac{k_{2}}{N}\right), k_{1},k_{2} \in \{0,1,\dots,N\},\|\vx - \mathbf{y}\|_1=\frac1N.
     \end{aligned}
\end{equation}

This is a convex optimization problem with $(N+1)^{2}$ variables and $O(N^{2})$ constraints. This can be solved using the Ellipsoid method using the following separation oracle: given an assignment of the variables $\{f\left(\frac{k_{1}}{N},\frac{k_{2}}{N}\right): k_{1},k_{2} \in \{0,1,\dots,N\}\}$, we verify each constraint in a brute force approach, which will take $O(N^{2})$ time. Note that since we are working with aggregators trying to predict a binary state, it follows that each variable can be restricted to be in the range $[0,1]$. Thus the variable space lies in a ball of radius $O(N)$,\fang{2 norm ball or infity norm?} as there are $O(N^{2})$ variables. By the Lipschitz condition, there also exists a ball of radius $\frac{L}{N}$ within the feasible region. By rounding the solution of the ellipsoid to the optimal solution(observe that the coefficients are rational\fang{Why is rational enough.  multiples of $1/N$?})  We can then obtain an optimal solution to \Cref{e:lipopt} by interpolating the values at the grid points. The time taken to solve this optimization problem then is $\Tilde{O}((\text{supp}(\vw_{\ThetabciNM})N^{4} + N^{8})\log\left(\frac{N}{\inf\{1,L\}}\right))$. Consider the following function definition at the non-grid points :
\begin{multline*}
    f(x_{1},x_{2}) := \alpha_{1}\alpha_{2}f\left(\frac{\floor{x_{1}N}}{N},\frac{\floor{x_{2}N}}{N}\right) + \alpha_{1}(1-\alpha_{2})f\left(\frac{\floor{x_{1}N}+1}{N},\frac{\floor{x_{2}N}}{N}\right) \\ + (1-\alpha_{1})\alpha_{2}f\left(\frac{\floor{x_{1}N}}{N},\frac{\floor{x_{2}N}+1}{N}\right) + (1-\alpha_{1})(1-\alpha_{2})f\left(\frac{\floor{x_{1}N}+1}{N},\frac{\floor{x_{2}N}+1}{N}\right), \quad \forall x_{1},x_{2} \in [0,1],
\end{multline*}
where $$\alpha_{1} = \floor{x_{1}N}+1-Nx_{1}, x_{2} = \floor{x_{2}N}+1-Nx_{2}.$$ 

Now we check other points. For any $\vx,\mathbf{y}\in [0,1]^2$, since the non-grid points are interpolated by grid points, we only need to ensure that the grid points satisfy the Lipschitz constraint. Suppose the $\|\vx - \mathbf{y}\|_1=\frac kN$. We denote $\vx$ as $\vx_0$ and $\mathbf{y}$ as $\vx_k$. Then we can find a path $\vx_i,i=1,\cdots,k$, such that $\|\vx_{i-1} - \vx_i\|_1=\frac 1N, i=1,\cdots,k$. Then
\begin{align*}
    |f(\vx)-f(\mathbf{y})|\le &\sum_{i=1}^k |f(\vx_{i-1})-f(\vx_i)|\\
    \le & \sum_{i=1}^k \frac LN\\
    \le & \frac{Lk}{N}= L\|\vx - \mathbf{y}\|_1\\
\end{align*}
Thus we find a optimal Lipschitz aggregator.

In the case $N < L$, then the Lipschitz constraint holds vacuously and the problem becomes an unconstrained optimization problem for which the solution is obtained by taking the FOC with respect to each variable and setting it to $0$, thus giving us an explicit formula for the optimum solution.
\end{proof}

\begin{lemma}(Efficient Utility Computation)
\label{lem:liplosscomp}
    Given any function $f \in \mathcal{F}_L$, the loss vector for $\ThetabciNM$, at each iteration, can be computed in time $O\left(|\text{supp}\left(\vw_{\ThetabciNM}\right)|\right)$.
\end{lemma}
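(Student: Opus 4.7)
The plan is to exploit the fact that every $\theta\in\ThetabciNM$ has binary signals for each of the two agents, so $|\rsupp(\theta)|\le 4$.  Writing $\theta$ in the prediction parametrization $(\mu, a_0, a_1, b_0, b_1)$, the four possible joint reports are $(a_i,b_j)$ for $i,j\in\{0,1\}$, and their marginal probabilities under $\theta$ are given in closed form by the Reports Distribution lemma.  Consequently the regret can be written as
\[
R(f,\theta)=\sum_{i,j\in\{0,1\}} \Pr_\theta[(a_i,b_j)]\bigl(f(a_i,b_j)-g_\mu(a_i,b_j)\bigr)^2,
\]
a sum of at most four terms.

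Each term is computable in $O(1)$ time: the probability $\Pr_\theta[(a_i,b_j)]$ is an explicit rational expression in the five parameters of $\theta$; the omniscient aggregator value $g_\mu(a_i,b_j)$ is given by the closed form \eqref{eq:omniscient}; and $f(a_i,b_j)$ is a single oracle evaluation of the input aggregator.  Therefore $R(f,\theta)$ is computable in constant time once $\theta$ is read off.

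Iterating over every $\theta$ in the support of $\vw_{\ThetabciNM}$ and recording the value $R(f,\theta)$ produces the entire loss vector restricted to the support (entries outside the support never need to be evaluated during the weight update, since they are multiplied by weight zero).  This gives a total running time of $O\bigl(|\text{supp}(\vw_{\ThetabciNM})|\bigr)$, which is the claimed bound.  There is no substantive obstacle here; the lemma is a direct consequence of the constant-size support of information structures in $\ThetabciNM$ together with the closed-form expressions for $\Pr_\theta[\vx]$ and $g_\mu(\vx)$ established in \Cref{sec:pre_bci}.
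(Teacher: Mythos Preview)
Your argument is correct and matches the paper's reasoning. The paper does not actually supply a separate proof of this lemma; it is stated without proof, relying on the identical observation already made in \Cref{lem:response} for the finite setting, namely that each $\theta$ has constant-size report support so $R(f,\theta)$ costs $O(1)$ to evaluate. Your write-up spells out exactly this, using the closed forms for $\Pr_\theta[\vx]$ and $g_\mu(\vx)$ from \Cref{sec:pre_bci}.
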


\subsection{Proof of \Cref{thm:lipschitz}}
\label{sec:proof}

\begin{toappendix}
    \begin{claim}\label{clm:compactness2}
        $\mathcal{F}_{L}$ is compact.
    \end{claim}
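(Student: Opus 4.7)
The plan is to invoke the Arzelà--Ascoli theorem for the space $C([0,1]^2,\mathbb{R})$ equipped with the sup-norm $\|\cdot\|_\infty$. Recall that a subset of this space is compact if and only if it is closed, uniformly bounded, and equicontinuous. I will verify each of these three properties for $\mathcal{F}_L$, where I interpret $\mathcal{F}_L$ as the collection of $L$-Lipschitz functions $f:[0,1]^2 \to [0,1]$ (inheriting the codomain restriction from $\mathcal{F}$).

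First I would argue uniform boundedness: every $f \in \mathcal{F}_L$ satisfies $\|f\|_\infty \le 1$ by definition. Equicontinuity is immediate because every $f \in \mathcal{F}_L$ shares the same modulus of continuity, namely $|f(\vx)-f(\vy)| \le L\|\vx-\vy\|_1$, so for any $\epsilon > 0$ the choice $\delta = \epsilon/L$ works simultaneously for all $f \in \mathcal{F}_L$. These two together give precompactness via Arzelà--Ascoli.

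Next I would establish closedness. Let $(f_n) \subset \mathcal{F}_L$ be a sequence converging uniformly to some $f \in C([0,1]^2,\mathbb{R})$; I need to show $f \in \mathcal{F}_L$. The $[0,1]$ codomain constraint is preserved under pointwise (hence uniform) limits, as argued in the proof of \Cref{clm:compactness}. For the Lipschitz property, fix $\vx, \vy \in [0,1]^2$ and use a standard $\epsilon/3$-style argument: for any $\epsilon > 0$, choose $n$ large enough that $\|f_n - f\|_\infty < \epsilon/2$; then
\[
|f(\vx) - f(\vy)| \le |f(\vx) - f_n(\vx)| + |f_n(\vx) - f_n(\vy)| + |f_n(\vy) - f(\vy)| \le \epsilon + L\|\vx - \vy\|_1.
\]
Letting $\epsilon \to 0$ gives $|f(\vx) - f(\vy)| \le L\|\vx - \vy\|_1$, so $f \in \mathcal{F}_L$.

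Combining closedness with the Arzelà--Ascoli precompactness yields that $\mathcal{F}_L$ is compact. The only mildly delicate step is the preservation of the Lipschitz constant under uniform limits, but this is the routine triangle-inequality calculation above; no serious obstacle is expected.
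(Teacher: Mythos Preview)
Your proof is correct and essentially parallels the paper's: both establish that uniform limits of $L$-Lipschitz $[0,1]$-valued functions remain $L$-Lipschitz and $[0,1]$-valued, and then appeal to a compactness criterion. The only cosmetic difference is that you package precompactness via Arzel\`a--Ascoli (closed $+$ uniformly bounded $+$ equicontinuous in $C([0,1]^2)$), whereas the paper argues directly that $\mathcal{F}_L$ is complete and totally bounded; the substantive step---preservation of the Lipschitz bound under limits---is the same in both.
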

    \begin{proof}
        We first show that $\left(\mathcal{F}_{L},\lvert|\cdot\rvert|_{\infty}\right)$ is complete.\fang{similar to Theorem 5.1. The completeness is standard and can be deferred to appendix.} Consider a Cauchy sequence $\left(f_{n}\right)$ where $f_{n} \in \mathcal{F}_{L}$. For any $\mathbf{x} \in [0,1]^{2}$ and by the sup-norm, it follows that $(f_{n}(\mathbf{x}))$ is a Cauchy sequence on $\mathbb{R}$ and thus converges in $\mathbb{R}$. Define $f(\mathbf{x}) := \lim_{n \to \infty}f_{n}(\mathbf{x})$. We first show that the function $f(\cdot)$ is $L$-Lipschitz. For any $\mathbf{x} \in [0,1]^{2}$ and $\mathbf{y} \in [0,1]^{2}$, consider the following chain of inequalities
\begin{equation*}
    \begin{split}
        |f(\mathbf{x})-f(\mathbf{y})| &= |\lim_{n \to \infty}f_{n}(\mathbf{x}) - \lim_{n \to \infty}f_{n}(\mathbf{y})|, \\
        &= |\lim_{n \to \infty}\left(f_{n}(\mathbf{x})-f_{n}(\mathbf{y})\right)|, \\
        &= \lim_{n \to \infty}|f_{n}(\mathbf{x}-f_{n}(\mathbf{y})|, \\
        & \leq L\lvert|\mathbf{x}-\mathbf{y}\rvert|.
    \end{split}
\end{equation*}
where the third equality follows from the continuity of the absolute value function and the first inequality follows from the fact that $f_{n}$ is $L$-Lipschitz. To show that $(f_{n})$ converges to $f$, for any $\epsilon > 0$, there exists $N$ such that $\sup_{\mathbf{x} \in [0,1]^2}|f_{n}(\mathbf{x})-f_{m}(\mathbf{x})| \leq \epsilon$, for all $n,m \geq N$. Thus $\sup_{\mathbf{x}}|f_{n}(\mathbf{x})-f(\mathbf{x})| \leq \epsilon$. But this holds for all $\epsilon > 0$ and thus $(f_{n})$ converges to $f$. It follows that $\left(\mathcal{F}_{L},\lvert|\cdot\rvert|_{\infty}\right)$ is complete. From observing that $\left(\mathcal{F}_{L},\lvert|\cdot\rvert|_{\infty}\right)$ is also totally-bounded, we have that the space is compact. 
    \end{proof}
\end{toappendix}

\begin{proof}[Proof of \Cref{thm:lipschitz}]
%\label{pro:lipschitz}
We claim that $\mathcal{F}_{L}$ is compact (the proof is deferred to \Cref{clm:compactness2}) and follow from Glicksberg's theorem an equilibrium exists when considering the set of information structures $\ThetabciNM$~\citep{glicksberg1952further}. We formally state that $$\inf_{\vw_{\mathcal{F}} \in \Delta_{\mathcal{F}_{L}}} \sup_{\theta \in \ThetabciNM} \mathbf{E}_{f \sim \vw_{\mathcal{F}}}[R(f,\theta)] = \sup_{\vw_{\Theta}\in \Delta_{\ThetabciNM}} \inf_{f \in \mathcal{F}_{L}} \mathbf{E}_{\theta \sim \vw_{\Theta}}[R(f,\theta)].$$ 
It is easy to verify that the Lipschitz setting satisfies the conditions in \Cref{thm:finite}.

Choosing $M=N^{\frac{3}{2}}$, \Cref{l:smallcoverlipschitz} tells us that $\ThetabciNM$ is a $\left(\frac{24}{\sqrt{N}},d_{EM}\right)$-covering of $\Thetabci$. By \Cref{lem:lipbestresponsecomp}, \Cref{lem:liplosscomp} and \Cref{thm:finite}, \Cref{alg:2} can compute an $\frac{\epsilon}{2}$-optimal $L$-Lipschitz aggregator $f^{*}$ in time $O\left(N^{\frac{11}{2}}\log\left(\frac{N}{L}\right)\frac{1}{\epsilon^{2}}\right)$. We thus have that $$R(f^{*},\ThetabciNM) \leq \inf_{f\in\mathcal{F}_L}R(f,\ThetabciNM) + \frac{\epsilon}{2}.$$
\Cref{lem:dem_smooth} and \Cref{l:smallcoverlipschitz} tell us that for any Lipschitz aggregator $f$, we have $$R(f,\ThetabciNM) \leq R(f,\Thetabci) + 134\left(\frac{24}{\sqrt{N}}\right)^{\frac{1}{7}} + 2L\left(\frac{24}{\sqrt{N}}\right).$$
From both of the equations above, and since $\ThetabciNM \subset \Thetabci$, it follows that 
$$R\left(f^{*},\ThetabciNM\right) \leq \inf_{f\in\mathcal{F}_L}R(f,\Thetabci) + \frac{\epsilon}{2} +  134\left(\frac{24}{\sqrt{N}}\right)^{\frac{1}{7}} + 2L\left(\frac{24}{\sqrt{N}}\right) \leq \inf_{f\in\mathcal{F}_L}R(f,\Thetabci) + \epsilon,$$ 
for large enough $N$, precisely for $N$ chosen as $O\left(\frac{L^{14}}{\epsilon^{14}}\right)$. Thus the overall running time is $O\left(\frac{L^{77}}{\epsilon^{79}}\log \left(\frac{L^{13}}{\epsilon^{14}}\right)\right)$.

\end{proof}

\section{Numerical Results}
In this section, we will show the performance of our algorithm and compare different robustness paradigms numerically.
% \subsection{Study Setup}

% \subsubsection{Hyper-parameters}

% \begin{itemize}
%     \item $T\in \{2000, 5000, 10000\}$
%     \item $N\in \{10, 15, 20\}$
%     \item $\eta=10.0$
% \end{itemize}

\subsection{Aggregators}

% \subsubsection{Lipschitz Setting}

\paragraph{Regret Estimation}

% We employ the approach\footnote{The global optimum software of Matlab. } used in \citet{doi:10.1073/pnas.1813934115} to estimate the regret of the obtained aggregators.
Following \citet{doi:10.1073/pnas.1813934115}, we numerically estimate and compare the regrets of our and other common aggregators.\footnote{The global optimum software of Matlab R2022a.}  Our algorithm obtains an aggregator with $N=20, M=400, L=\infty$\footnote{We use linear interpolation to obtain a continuous aggregator from the discrete output of our algorithm.} whose regret is $0.0226$ that outperforms all previous aggregators: The simple averaging  $\frac{x_1+x_2}{2}$ has regret $0.0625$, the average prior aggregator (see formula in \Cref{table:matlab}), proposed by \citet{doi:10.1073/pnas.1813934115}, has regret $0.0260$, and the previous state-of-the-art aggregator in \citet{doi:10.1073/pnas.1813934115} has $0.0250$.  Moreover, our aggregator's regret $0.0226$ almost match previous theoretical lower-bound $\frac18(5\sqrt{5}-11)\approx 0.0225$~\citep{doi:10.1073/pnas.1813934115}. Notice that the convergence time of our FPTAS depends on the discretization parameters and the Lipschitz constant. To have a reasonable convergence time, we pick relatively small discretization parameters. This is why we only obtain a near-tight aggregator. The results are listed in \Cref{table:matlab}. We discuss the efficiency and implementation details in \Cref{sec:implement}.

\begin{table}[!ht]
\renewcommand{\arraystretch}{2}
  \centering
  
  \begin{tabular}{lcc}
    \toprule
    Aggregator & Formula & Regret\\
    \midrule
    \midrule
    % 10 & 1 & 0.0250\\
    % 10 & 2 & 0.0241\\
    % 10 & 4 & 0.0241\\
    % 10 & $\infty$ & 0.0235\\
    % \hline
    % 15 & 1 & 0.0243\\
    % 15 & 2 & 0.0243\\
    % 15 & 4 & 0.0243\\
    % 15 & $\infty$ & 0.0230\\
    % \hline
    Simple averaging & $\frac{x_1+x_2}2$ & 0.0625\\
    \midrule 
    Average prior  & $\frac{x_1x_2(1-\frac{x_1+x_2}2)}{x_1x_2(1-\frac{x_1+x_2}2)+(1-x_1)(1-x_2)\frac{x_1+x_2}2}$ & 0.0260\\
    \midrule
    State-of-the-art  & $\frac{x_1x_2(1-ep(x_1,x_2))}{x_1x_2(1-ep(x_1,x_2))+(1-x_1)(1-x_2)ep(x_1,x_2)}$ & 0.0250\\
    \midrule
    Our aggregator & - & \pmb{0.0226}\\
    \bottomrule
  \end{tabular}
  \caption{\textbf{Regret of different aggregators.}\\ Here $ep(x_1,x_2)=\begin{cases}0.49x_1+0.49x_2, &if\ x_1+x_2\leq 1\\ 0.49x_1+0.49x_2+0.02, & otherwise\end{cases}$
  }
  \label{table:matlab}
\end{table}

\paragraph{Visual Comparison to Previous Aggregators}

\begin{figure}[!ht]
    \centering
    \begin{subfigure}[b]{0.225\textwidth}
        \centering
        \includegraphics[height=.96\textwidth]{figures/simple_average.pdf}
        \caption{\textbf{Simple averaging}}
    \end{subfigure}
    \begin{subfigure}[b]{0.225\textwidth}
        \centering
        \includegraphics[height=.96\textwidth]{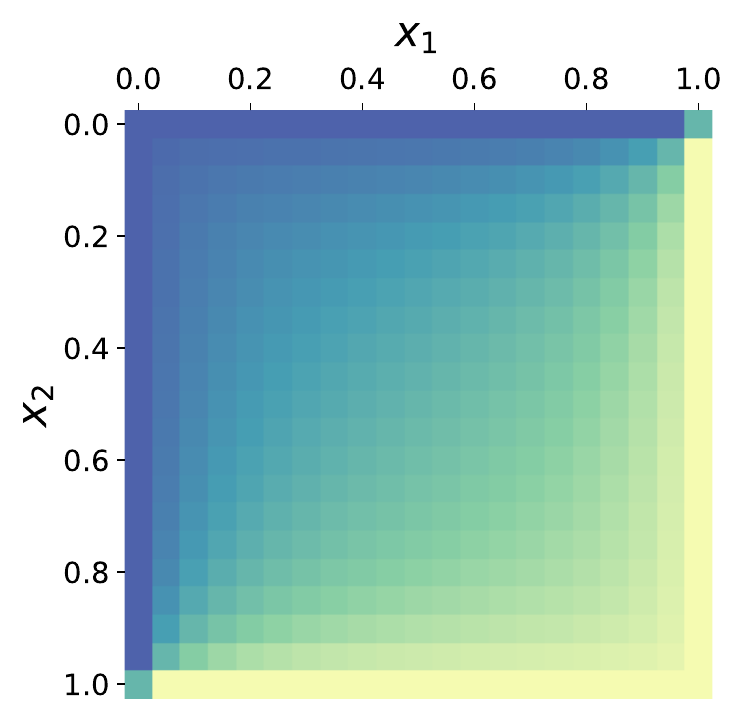}
        \caption{\textbf{Average prior}}
    \end{subfigure}
    \begin{subfigure}[b]{0.225\textwidth}
        \centering
        \includegraphics[height=.96\textwidth]{figures/sota_pnas.pdf}
        \caption{\textbf{State-of-the-art}}
    \end{subfigure}
    \begin{subfigure}[b]{0.27\textwidth}
        \centering
        \includegraphics[height=.8\textwidth]{figures/function.pdf}
        \includegraphics[height=.7\textwidth]{figures/cbar1.pdf}
        \caption{\textbf{Our aggregator}}
    \end{subfigure}
    % \begin{subfigure}[b]{0.225\textwidth}
    %     \centering
    %     \includegraphics[height=.96\textwidth]{}
    %     \caption{\textbf{Simple averaging}}
    % \end{subfigure}
    % \begin{subfigure}[b]{0.225\textwidth}
    %     \centering
    %     \includegraphics[height=.96\textwidth]{}
    %     \caption{\textbf{Average prior}}
    % \end{subfigure}
    % \begin{subfigure}[b]{0.225\textwidth}
    %     \centering
    %     \includegraphics[height=.96\textwidth]{}
    %     \caption{\textbf{State-of-the-art}}
    % \end{subfigure}
    % \begin{subfigure}[b]{0.27\textwidth}
    %     \centering
    %     \includegraphics[height=.8\textwidth]{}
    %     \includegraphics[height=.7\textwidth]{figures/cbar1.pdf}
    %     \caption{\textbf{Our aggregator}}
    % \end{subfigure}
    
    \caption{\textbf{Heatmaps of different aggregators $f(x_1,x_2)$.} Darker to lighter shades represent the range of $f(x_1,x_2)$ from 0 to 1.}
    \label{fig:allaggregators}
\end{figure}

\begin{figure}[!ht]
    \centering
    \begin{subfigure}[b]{0.225\textwidth}
        \centering
        \includegraphics[height=.96\textwidth]{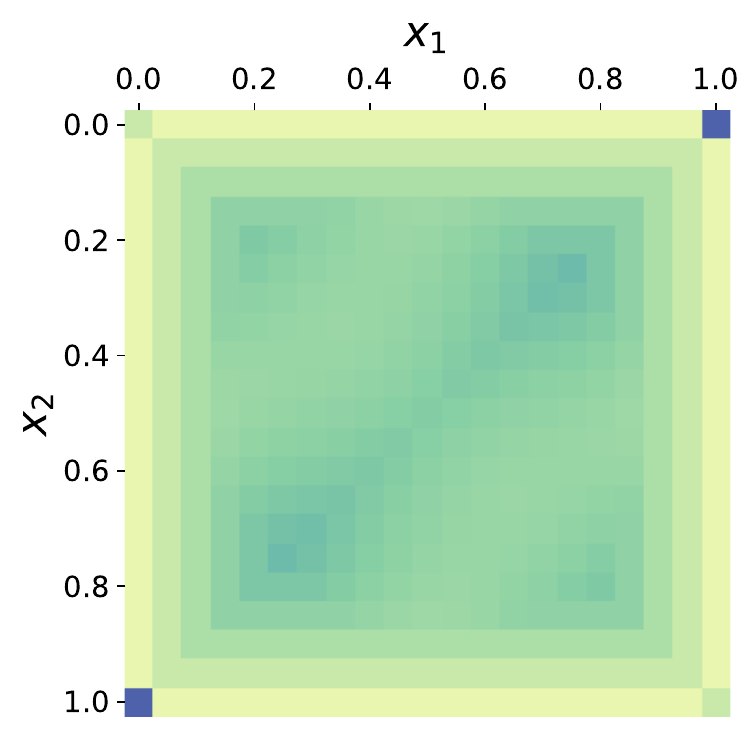}
        \caption{\textbf{Simple averaging}}
        \label{fig:image1}
    \end{subfigure}
    \begin{subfigure}[b]{0.225\textwidth}
        \centering
        \includegraphics[height=.96\textwidth]{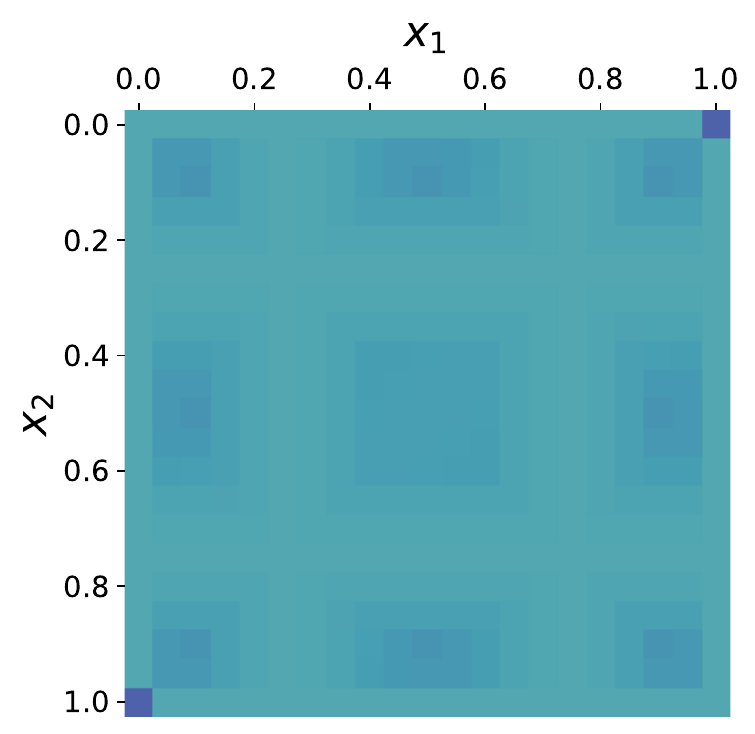}
        \caption{\textbf{Average prior}}
        \label{fig:image2}
    \end{subfigure}
    \begin{subfigure}[b]{0.225\textwidth}
        \centering
        \includegraphics[height=.96\textwidth]{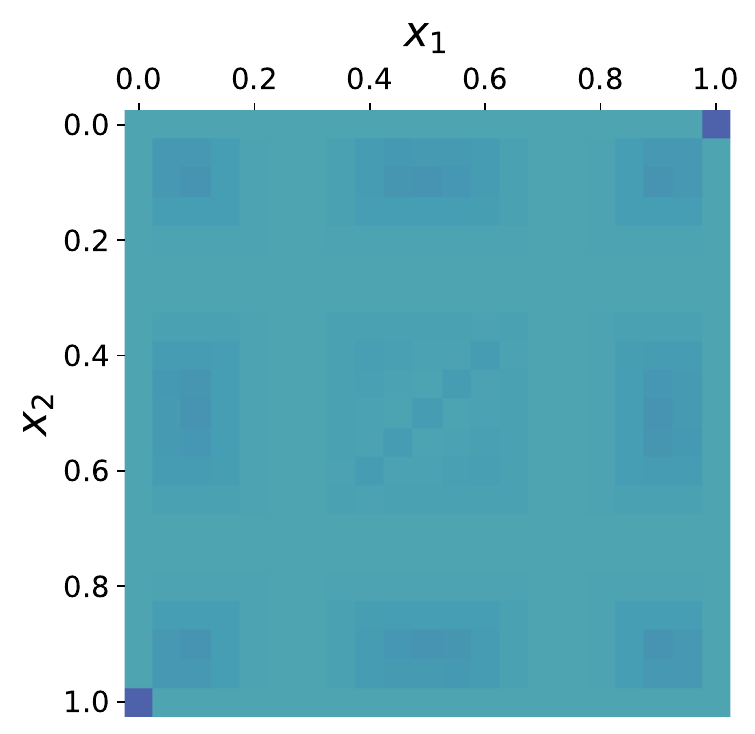}
        \caption{\textbf{State-of-the-art}}
        \label{fig:image3}
    \end{subfigure}
    \begin{subfigure}[b]{0.27\textwidth}
        \centering
        \includegraphics[height=.8\textwidth]{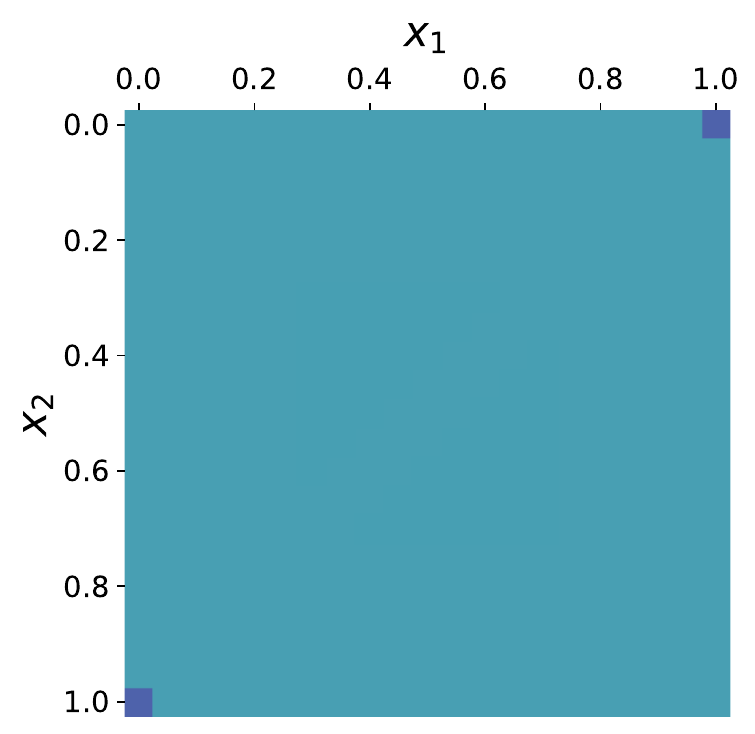}
        \includegraphics[height=.7\textwidth]{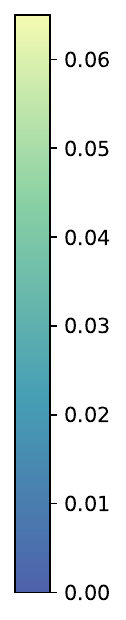}
        \caption{\textbf{Our aggregator}}
        \label{fig:image4}
    \end{subfigure}
 \caption{\textbf{Heatmaps of $R(x_1,x_2)=\max_{\theta,\Pr_{\theta}[x_1,x_2]>0}R(f,\theta)$.} Visualize the regret associated with reports $(x_1,x_2)$ from different aggregators. Darker to lighter shades represent the range of $R(x_1,x_2)$ from $0$ to $0.065$.}
%    \caption{\textbf{The regret associated with the reports from different aggregators.} Heatmaps visualize the regret associated with reports from different aggregators.}
    \label{fig:allregrets}
\end{figure}

\Cref{fig:allaggregators} shows the heatmaps
 of the aforementioned aggregators' values. When one report is almost certain ($\le 0.05$) and the other is not, except the simple averaging, all other aggregators will follow the almost certain report. This makes sense because the agent who provides more extreme forecasts usually has more information.  In particular, if one knows the ground state exactly and reports either $0$ or $1$, a good aggregator should follow the report regardless of the other report. This is one reason why simple averaging performs badly. Though the Average prior, State-of-the-art and our aggregator are similar, our aggregator aggregates forecast to a more extreme value when $x_1,x_2\approx0.1$ or $x_1, x_2\approx0.9$. 

To further compare our aggregator with the others, we aim to visualize the ``weakness'' of different aggregators. Specifically, we calculate the maximal regret over the information structures associated with each report $\vx$,\footnote{Note that it is impossible to have predictions $(0,1)$ and $(1,0)$.  We set the value of $R(x_1,x_2) = 0$ when $(x_1, x_2) = (0,1), (1,0)$.} denoted as $\max_{\theta,\Pr_{\theta}[\vx]>0}R(f,\theta)$ in \Cref{fig:allregrets}. In this context, a report $\vx$ with a high maximal regret indicates a potential ``weakness'' of the aggregator.  We observe that the simple averaging aggregator exhibits substantial regret along the borders, suggesting its vulnerability in those areas, which aligns with our previous discussion.  Similarly, both the average prior aggregator and the State-of-the-art aggregator display high regret at the borders, as well as when either $x_1$ or $x_2$ is around $0.25$ or $0.75$. 
In contrast, the regret associated with reports from our aggregator is almost uniformly distributed across the entire heatmap. This indicates that our aggregator is not vulnerable to specific areas but rather pays attention to all regions. 

%Intuitively, to improve the performance, we need to pay attention to the aggregation result in those reports with relatively higher regret. So less area with relatively higher report means easier improvement.

\begin{figure}[!ht]
    \centering
    \begin{subfigure}[b]{0.225\textwidth}
        \includegraphics[height=.96\textwidth]{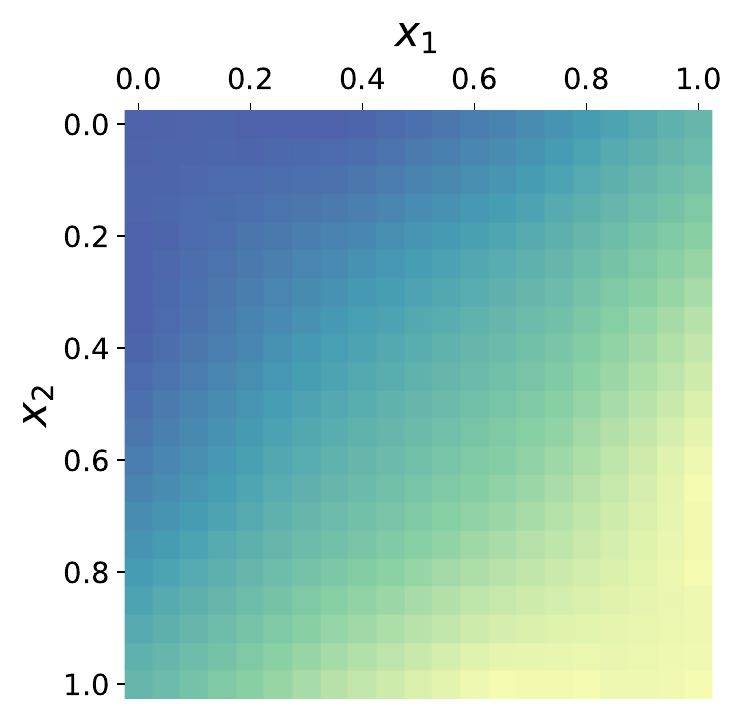}
        \caption{\textbf{\textit{$1$-Lipschitz}}}
        \label{fig:image1}
    \end{subfigure}
    \begin{subfigure}[b]{0.225\textwidth}
        \includegraphics[height=.96\textwidth]{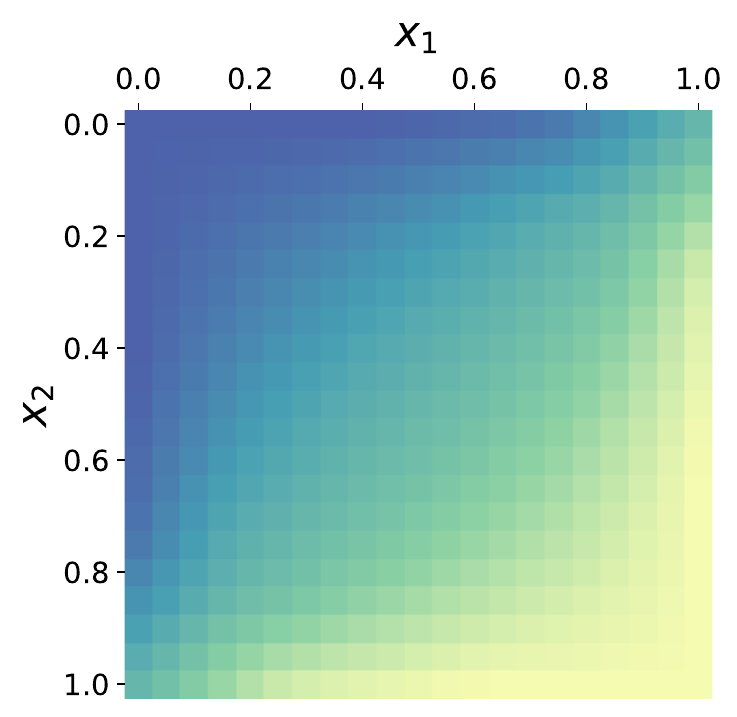}
        \caption{\textbf{\textit{$2$-Lipschitz}}}
        \label{fig:image2}
    \end{subfigure}
    \begin{subfigure}[b]{0.225\textwidth}
        \includegraphics[height=.96\textwidth]{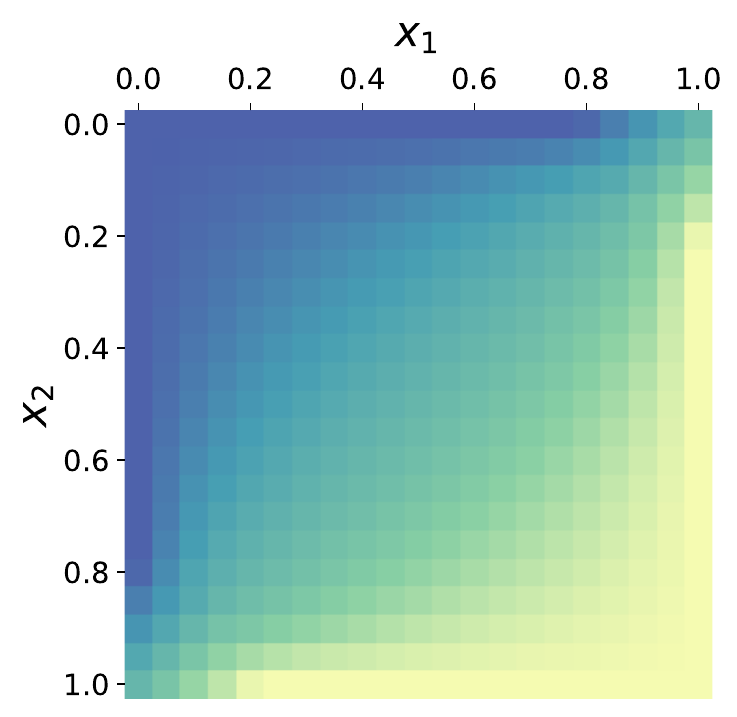}
        \caption{\textbf{\textit{$4$-Lipschitz}}}
        \label{fig:image2}
    \end{subfigure}
    \begin{subfigure}[b]{0.27\textwidth}
        \includegraphics[height=.8\textwidth]{figures/function.pdf}
        \includegraphics[height=.7\textwidth]{figures/cbar1.pdf}
        \caption{\textbf{\textit{$\infty$-Lipschitz}}}
        \label{fig:image2}
    \end{subfigure}
    % \begin{subfigure}[b]{0.225\textwidth}
    %     \includegraphics[height=.96\textwidth]{}
    %     \caption{\textbf{\textit{$1$-Lipschitz}}}
    %     \label{fig:image1}
    % \end{subfigure}
    % \begin{subfigure}[b]{0.225\textwidth}
    %     \includegraphics[height=.96\textwidth]{}
    %     \caption{\textbf{\textit{$2$-Lipschitz}}}
    %     \label{fig:image2}
    % \end{subfigure}
    % \begin{subfigure}[b]{0.225\textwidth}
    %     \includegraphics[height=.96\textwidth]{}
    %     \caption{\textbf{\textit{$4$-Lipschitz}}}
    %     \label{fig:image2}
    % \end{subfigure}
    % \begin{subfigure}[b]{0.27\textwidth}
    %     \includegraphics[height=.8\textwidth]{figures/function_contour.pdf}
    %     \includegraphics[height=.7\textwidth]{figures/cbar1.pdf}
    %     \caption{\textbf{\textit{$\infty$-Lipschitz}}}
    %     \label{fig:image2}
    % \end{subfigure}
    
    \caption{\textbf{Heatmaps of approximate optimal aggregators $f(x_1,x_2)$ under different Lipschitz constant ($L$).} Resolution parameters are $N=20, M=400$. Darker to lighter shades represent the range of $f(x_1,x_2)$ from 0 to 1.}
    \label{fig:allresults2}
\end{figure}

To study the effect of Lipschitz constant $L$, \Cref{fig:allresults2} shows the results of the online learning algorithm with different $L$. As $L$ increases, the optimal aggregator aggregates the forecasts to a more extreme value in the corner case ($x_1\approx 0,1$ or $x_2\approx 0,1$), while the aggregators are similar in the central parts. On the other hand, the regret of aggregators under small $L$ ($L\le 4$) is relatively high (around $0.024$). When $L$ is around 20, the aggregator becomes near-optimal.

\begin{toappendix}
\section{Efficiency and Implementation Details}
\label{sec:implement}
\paragraph{Pruning by symmetry} When we run the algorithm, to improve efficiency, we prune some information structures which is symmetric with some other information structures. In detail, for each $\theta=(\mu, a_0,a_1,b_0,b_1)$, we delete the centrosymmetric information $(1-\mu,1-a_0,1-a_1,1-b_0,1-b_1)$, the axisymmetric $(\mu,b_0,b_1,a_0,a_1)$ and $(\mu,a_1,a_0,b_0,b_1)$. This is supported by the following lemma. By this method, we improve the efficiency 16-fold.
\begin{lemmarep}[Symmetry of weights and functions]
    If the set of aggregators $\mathcal{F}$ are convex, compact, and symmetric, there exists an equilibrium $(\vw_{\ThetabciNM}, f)$ for $\vw_{\ThetabciNM}\in \Delta_{\ThetabciNM}$ and $f\in \mathcal{F}$ where the weight and aggregator are both symmetric:
    \begin{align*}
    \forall \theta(\mu, a_0,a_1,b_0,b_1)\in\ThetabciNM,&\\ w_{\theta(\mu,a_0,a_1,b_0,b_1)}&=w_{\theta(\mu,a_1,a_0,b_0,b_1)}\\
    &=w_{\theta(\mu,b_0,b_1,a_0,a_1)}\\
    &=w_{\theta(1-\mu,1-a_0,1-a_1,1-b_0,1-b_1)}\\
    \forall x_1,x_2\in\{0,\frac1N,\frac2N,\cdots,1\},&\\
    f(x_1,x_2)&=f(x_2,x_1)\\
        &=1-f(1-x_1,1-x_2)
    \end{align*}
    \label{lem:symmetry}
\end{lemmarep}

\begin{proof}
As we already proved the existence of equilibrium in \cref{sec:proof}, let $V$ be the value of the zero-sum game and strategy profile $(\vw_{\ThetabciNM}^{(1)},f^{(1)})$ be an equilibrium, and we have
\begin{align}
V&=\E_{\theta\sim \vw_{\ThetabciNM}^{(1)}}\left[R(f^{(1)},\theta)\right]\notag\\
\forall \hat{\vw}_{\ThetabciNM}\in\Delta_{\ThetabciNM}, \E_{\theta\sim \hat{\vw}_{\ThetabciNM}}\left[R(f^{(1)}, \theta)\right]&\leq \E_{\theta\sim \vw_{\ThetabciNM}^{(1)}}\left[R(f^{(1)},\theta)\right]\label{eq:anyw}\\
\forall \hat{f}\in\mathcal{F}, \E_{\theta\sim \vw_{\ThetabciNM}^{(1)}}\left[R(\hat{f}, \theta)\right]&\geq \E_{\theta\sim \vw_{\ThetabciNM}^{(1)}}\left[R(f^{(1)},\theta)\right]\label{eq:anyf}
\end{align}

Then we define strategy profiles $(\vw_{\ThetabciNM}^{(2)},f^{(2)}), (\vw_{\ThetabciNM}^{(3)},f^{(3)}), (\vw_{\ThetabciNM}^{(4)},f^{(4)})$
\begin{align*}
w^{(2)}_{\theta(\mu,a_0,a_1,b_0,b_1)}&=w^{(1)}_{\theta(\mu,a_1,a_0,b_1,b_0)}\\
w^{(3)}_{\theta(\mu,a_0,a_1,b_0,b_1)}&=w^{(1)}_{\theta(1-\mu,1-a_0,1-a_1,1-b_0,1-b_1)}\\
w^{(4)}_{\theta(\mu,a_0,a_1,b_0,b_1)}&=w^{(1)}_{\theta(1-\mu,1-a_1,1-a_0,1-b_1,1-b_0)}\\
f^{(2)}(x_1,x_2)&=f^{(1)}(x_2,x_1)\\
f^{(3)}(x_1,x_2)&=f^{(1)}(1-x_1,1-x_2)\\
f^{(4)}(x_1,x_2)&=f^{(1)}(1-x_2,1-x_1)\\
\end{align*}

First, we prove the strategy profiles $(\vw^{(2)}_{\ThetabciNM},f^{(2)}), (\vw^{(3)}_{\ThetabciNM},f^{(3)}), (\vw^{(4)}_{\ThetabciNM},f^{(4)})$ are all equilibrium. 

Consider $(\vw^{(2)}_{\ThetabciNM},f^{(2)})$ as an example. We prove this by contradiction.

\begin{itemize}
\item Suppose that there exists a policy  $\hat{\vw}^{(2)}_{\ThetabciNM}\in\Delta_{\ThetabciNM}$ that 
\(
\E_{\theta\sim \hat{\vw}^{(2)}_{\ThetabciNM}}\left[R(f^{(2)}, \theta)\right]>\E_{\theta\sim \vw^{(2)}_{\ThetabciNM}}\left[R(f^{(2)}, \theta)\right].
\) Then we can construct $\hat{\vw}_{\ThetabciNM}$, i.e., 
\(
\hat{w}_{\theta(\mu,a_0,a_1,b_0,b_1)}=\hat{w}^{(2)}_{\theta(\mu,a_1,a_0,b_1,b_0)},
\)
and we will have
\(
\E_{\theta\sim \hat{\vw}_{\ThetabciNM}}\left[R(f^{(1)}, \theta)\right]>\E_{\theta\sim \vw^{(1)}_{\ThetabciNM}}\left[R(f^{(1)}, \theta)\right]
\)
which contradicts the inequality~(\ref{eq:anyw}).

\item Suppose that there exists a function $\hat{f}^{(2)}\in \mathcal{F}$ that 
\(
\E_{\theta\sim \vw^{(2)}_{\ThetabciNM}}\left[R(\hat{f}^{(2)}, \theta)\right]<\E_{\theta\sim \vw^{(2)}_{\ThetabciNM}}\left[R(f^{(2)}, \theta)\right].
\) Then we can construct $\hat{f}$, i.e., 
\(
\hat{f}(x_1,x_2)=\hat{f}^{(2)}(x_2,x_1),
\)
and we will have
\(
\E_{\theta\sim \vw^{(1)}_{\ThetabciNM}}\left[R(\hat{f}, \theta)\right]<\E_{\theta\sim \vw^{(1)}_{\ThetabciNM}}\left[R(f^{(1)}, \theta)\right]
\)
which contradicts the inequality~(\ref{eq:anyf}).
\end{itemize}

Then we proved $(\vw^{(2)}_{\ThetabciNM},f^{(2)})$ is an equilibrium. Similarly we can prove $(\vw^{(3)}_{\ThetabciNM},f^{(3)}), (\vw^{(4)}_{\ThetabciNM},f^{(4)})$ are both equilibrium.

Second, we prove $\forall i,j\in\{1,2,3,4\}$, the strategy profile $(\vw^{(i)}_{\ThetabciNM},f^{(j)})$ is an equilibrium.

Since $(\vw^{(i)}_{\ThetabciNM},f^{(i)})$ and $(\vw^{(j)}_{\ThetabciNM},f^{(j)})$ are both equilibrium, we have

\begin{align}
\forall \hat{\vw}_{\ThetabciNM}\in\Delta_{\ThetabciNM}, \E_{\theta\sim \hat{\vw}_{\ThetabciNM}}\left[R(f^{(j)}, \theta)\right]&\leq \E_{\theta\sim \vw_{\ThetabciNM}^{(j)}}\left[R(f^{(j)}, \theta)\right]\label{eq:anyfj}\\
\forall \hat{f}\in\mathcal{F}, \E_{\theta\sim \vw_{\ThetabciNM}^{(i)}}\left[R(\hat{f}, \theta)\right]&\geq \E_{\theta\sim \vw_{\ThetabciNM}^{(i)}}\left[R(f^{(i)}, \theta)\right]\label{eq:anyfi}
\end{align}

Then we have
\begin{align*}
\E_{\theta\sim \vw^{(i)}_{\ThetabciNM}}\left[R(f^{(j)}, \theta)\right]&\leq \E_{\theta\sim \vw_{\ThetabciNM}^{(j)}}\left[R(f^{(j)}, \theta)\right]\\
\E_{\theta\sim \vw_{\ThetabciNM}^{(i)}}\left[R(f^{(j)}, \theta)\right]&\geq \E_{\theta\sim \vw_{\ThetabciNM}^{(i)}}\left[R(f^{(i)}, \theta)\right]
\end{align*}

Since the game is a zero-sum game, $\E_{\theta\sim \vw_{\ThetabciNM}^{(j)}}\left[R(f^{(j)}, \theta)\right]=\E_{\theta\sim \vw_{\ThetabciNM}^{(i)}}\left[R(f^{(i)}, \theta)\right]$, and we have 
\begin{align*}
\forall \hat{\vw}_{\ThetabciNM}\in\Delta_{\ThetabciNM}, \E_{\theta\sim \hat{\vw}_{\ThetabciNM}}\left[R(f^{(j)}, \theta)\right]&\leq \E_{\theta\sim \vw_{\ThetabciNM}^{(j)}}\left[R(f^{(j)},\theta)\right]=\E_{\theta\sim \vw_{\ThetabciNM}^{(i)}}\left[R(f^{(j)}, \theta)\right]\tag{inequality~(\ref{eq:anyfj})}\\
\forall \hat{f}\in\mathcal{F}, \E_{\theta\sim \vw_{\ThetabciNM}^{(i)}}\left[R(\hat{f}, \theta)\right]&\geq \E_{\theta\sim \vw_{\ThetabciNM}^{(i)}}\left[R(f^{(i)},\theta)\right]=\E_{\theta\sim \vw_{\ThetabciNM}^{(i)}}\left[R(f^{(j)}, \theta)\right]\tag{inequality~(\ref{eq:anyfi})}
\end{align*}
Then we proved $\forall i,j\in\{1,2,3,4\}$, the strategy profile $(\vw^{(i)}_{\ThetabciNM},f^{(j)})$ is an equilibrium.

Third, let $f=\frac14\sum_{j=1}^4f^{(j)}$ be a symmetric function, then we prove $\forall i\in\{1,2,3,4\}$, the strategy profile $(\vw^{(i)}_{\ThetabciNM},f)$ is an equilibrium.
\begin{enumerate}
    \item Prove $\E_{\theta\sim \vw_{\ThetabciNM}^{(i)}}\left[R(f, \theta)\right]$ equals to the value of the zero-sum game.
    \begin{align}
    \E_{\theta\sim \vw_{\ThetabciNM}^{(i)}}\left[R(f, \theta)\right]&=\E_{\theta\sim \vw_{\ThetabciNM}^{(i)}}\left[R(\frac14\sum_{j=1}^4f^{(j)}, \theta)\right]\notag\\
    &\leq \E_{\theta\sim \vw_{\ThetabciNM}^{(i)}}\left[\frac14\sum_{j=1}^4R(f^{(j)}, \theta)\right]\tag{$R(f,\theta)$ is convex to $f$}\\
    &=\frac14\sum_{j=1}^4\E_{\theta\sim \vw_{\ThetabciNM}^{(i)}}\left[R(f^{(j)}, \theta)\right]\notag\\
    &=\E_{\theta\sim \vw_{\ThetabciNM}^{(1)}}\left[R(f^{(1)}, \theta)\right]\notag\\
    \E_{\theta\sim \vw_{\ThetabciNM}^{(i)}}\left[R(f, \theta)\right]&\geq \E_{\theta\sim \vw_{\ThetabciNM}^{(1)}}\left[R(f^{(1)}, \theta)\right]\tag{inequality~(\ref{eq:anyf})}\\
    \Rightarrow \E_{\theta\sim \vw_{\ThetabciNM}^{(i)}}\left[R(f, \theta)\right]&=\E_{\theta\sim \vw_{\ThetabciNM}^{(1)}}\left[R(f^{(1)}, \theta)\right]=V\label{eq:equalV}
    \end{align}
    \item Prove there is no better $\hat{f}$.
    \begin{align*}
        \forall \hat{f}\in\mathcal{F}, \E_{\theta\sim \vw_{\ThetabciNM}^{(1)}}\left[R(\hat{f}, \theta)\right]&\geq \E_{\theta\sim \vw_{\ThetabciNM}^{(1)}}\left[R(f^{(1)},\theta)\right]=\E_{\theta\sim \vw_{\ThetabciNM}^{(i)}}\left[R(f, \theta)\right]\tag{inequality~(\ref{eq:anyf})}
    \end{align*}
    \item Prove there is no better $\hat{\vw}_{\ThetabciNM}$. 
    \begin{align*}
        \forall\hat{\vw}_{\ThetabciNM}\in\Delta_{\ThetabciNM},&\\
        \E_{\theta\sim \hat{\vw}_{\ThetabciNM}}\left[R(f, \theta)\right]&\leq \E_{\theta\sim \hat{\vw}_{\ThetabciNM}}\left[\frac14\sum_{j=1}^4R(f^{(j)},\theta)\right]\tag{$R(f,\theta)$ is convex to $f$}\\
        &= \frac14\sum_{j=1}^4\E_{\theta\sim \hat{\vw}_{\ThetabciNM}}\left[R(f^{(j)},\theta)\right]\\
        &\leq \frac14\sum_{j=1}^4\E_{\theta\sim \vw^{(j)}_{\ThetabciNM}}\left[R(f^{(j)},\theta)\right]\tag{inequality~(\ref{eq:anyfj})}\\
        &=\E_{\theta\sim \vw_{\ThetabciNM}^{(i)}}\left[R(f, \theta)\right]\tag{inequality~(\ref{eq:equalV})}
    \end{align*}
\end{enumerate}

Then we proved the strategy profile $(\vw^{(i)}_{\ThetabciNM},f)$ is an equilibrium.

Forth we define $\vw^{(5)}_{\ThetabciNM}\ldots \vw^{(16)}_{\ThetabciNM}$
\begin{align*}
w^{(5)}_{\theta(\mu,a_0,a_1,b_0,b_1)}&=w^{(1)}_{\theta(\mu,a_1,a_0,b_0,b_1)}\\
w^{(6)}_{\theta(\mu,a_0,a_1,b_0,b_1)}&=w^{(1)}_{\theta(\mu,a_0,a_1,b_1,b_0)}\\
w^{(7)}_{\theta(\mu,a_0,a_1,b_0,b_1)}&=w^{(1)}_{\theta(1-\mu,1-a_1,1-a_0,1-b_0,1-b_1)}\\
w^{(8)}_{\theta(\mu,a_0,a_1,b_0,b_1)}&=w^{(1)}_{\theta(1-\mu,1-a_0,1-a_1,1-b_1,1-b_0)}\\
w^{(9)}_{\theta(\mu,a_0,a_1,b_0,b_1)}&=w^{(1)}_{\theta(\mu,b_0,b_1,a_0,a_1)}\\
w^{(10)}_{\theta(\mu,a_0,a_1,b_0,b_1)}&=w^{(1)}_{\theta(\mu,b_1,b_0,a_1,a_0)}\\
w^{(11)}_{\theta(\mu,a_0,a_1,b_0,b_1)}&=w^{(1)}_{\theta(1-\mu,1-b_0,1-b_1,1-a_0,1-a_1)}\\
w^{(12)}_{\theta(\mu,a_0,a_1,b_0,b_1)}&=w^{(1)}_{\theta(1-\mu,1-b_1,1-b_0,1-a_1,1-a_0)}\\
w^{(13)}_{\theta(\mu,a_0,a_1,b_0,b_1)}&=w^{(1)}_{\theta(\mu,b_0,b_1,a_1,a_0)}\\
w^{(14)}_{\theta(\mu,a_0,a_1,b_0,b_1)}&=w^{(1)}_{\theta(\mu,b_1,b_0,a_0,a_1)}\\
w^{(15)}_{\theta(\mu,a_0,a_1,b_0,b_1)}&=w^{(1)}_{\theta(1-\mu,1-b_0,1-b_1,1-a_1,1-a_0)}\\
w^{(16)}_{\theta(\mu,a_0,a_1,b_0,b_1)}&=w^{(1)}_{\theta(1-\mu,1-b_1,1-b_0,1-a_0,1-a_1)}\\
\end{align*}

Since $f$ is symmetric, we can prove $\forall{i}\in\{1,2,\ldots,16\}$, strategy profile $(\vw^{(i)}_{\ThetabciNM},f)$ is an equilibrium.

Next let $\vw_{\ThetabciNM}=\frac1{16}\sum_{i=1}^{16}\vw^{(i)}_{\ThetabciNM}$ be a symmetric mixed strategy, then we prove the symmetric strategy profile $(\vw_{\ThetabciNM},f)$ is an equilibrium.

\begin{enumerate}
    \item Prove $\E_{\theta\sim \vw_{\ThetabciNM}}\left[R(f, \theta)\right]$ equals to the value of the zero-sum game.
    \begin{align}
    \E_{\theta\sim \vw_{\ThetabciNM}}\left[R(f, \theta)\right]&=\frac1{16}\sum_{i=1}^{16}\E_{\theta\sim \vw_{\ThetabciNM}^{(i)}}\left[R(f, \theta)\right]\notag\\
    &=\E_{\theta\sim \vw_{\ThetabciNM}^{(1)}}\left[R(f, \theta)\right]=V
    \end{align}
    \item Prove there is no better $\hat{f}$.
    \begin{align*}
        \forall \hat{f}\in\mathcal{F}, \E_{\theta\sim \vw_{\ThetabciNM}}\left[R(\hat{f}, \theta)\right]&=\frac1{16}\sum_{i=1}^{16} \E_{\theta\sim \vw_{\ThetabciNM}^{(i)}}\left[R(\hat{f},\theta)\right]\\
        &\geq \frac1{16}\sum_{i=1}^{16} \E_{\theta\sim \vw_{\ThetabciNM}^{(i)}}\left[R(f,\theta)\right]\\
        &=\E_{\theta\sim \vw_{\ThetabciNM}}\left[R(f,\theta)\right]
    \end{align*}
    \item Prove there is no better $\hat{\vw}_{\ThetabciNM}$. 
    \begin{align*}
        \forall\hat{\vw}_{\ThetabciNM}\in\Delta_{\ThetabciNM},\E_{\theta\sim \hat{\vw}_{\ThetabciNM}}\left[R(f, \theta)\right]&\leq \E_{\theta\sim \vw_{\ThetabciNM}^{(i)}}\left[R(f, \theta)\right]=\E_{\theta\sim \vw_{\ThetabciNM}}\left[R(f, \theta)\right]
    \end{align*}
\end{enumerate}

Thus we proved $(\vw_{\ThetabciNM},f)$ is a symmetric equilibrium.
\end{proof}

To explore the efficiency of our online learning algorithm, we introduce two metrics that provide a lower bound and upper bound of minimax value, and constrain the magnitude of $\epsilon$ in the $\epsilon$-equilibrium.

Let $\overline{\vw}^t_{\ThetabciNM}=\frac1t \sum_{i=1}^t\vw^i_{\ThetabciNM}$ and $\overline{f}^t=\frac1t \sum_{i=1}^t f^i$ denote the averaged strategies of the nature and aggregator over the first $t$ rounds. With these, we can define the lower and upper bounds for each round $t$ as follows:

\begin{itemize}
    \item \textbf{Lower bound} for round $t$ represents the expected regret when the aggregator responds optimally to nature's mixed strategy $\overline{\vw}^t_{\ThetabciNM}$, given by:
    \[
    \inf_{f\in\mathcal{F}}\E_{\theta\sim \overline{\vw}^t_{\ThetabciNM}}[R(f,\theta)]
    \]
    \item \textbf{upper bound} for round $t$ signifies the regret when nature responds optimally to the aggregator's strategy $\overline{f}^t$, expressed as:
    \[
    R(\overline{f}^t,\ThetabciNM)=\sup_{\theta\in \ThetabciNM}R(\overline{f}^t,\theta)
    \]
\end{itemize}

It can be shown that these two bounds encapsulate the minimax value.
\[
\inf_{f\in\mathcal{F}}\E_{\theta\sim \overline{\vw}^t_{\ThetabciNM}}[R(f,\theta)]\leq \inf_{f\in\mathcal{F}}\sup_{\theta\in \ThetabciNM}R(f,\theta) \leq R(\overline{f}^t,\ThetabciNM)
\]

\begin{lemma}[Constrain the $\epsilon$ in $\epsilon$-equilibrium]
    Let $\epsilon^t$ be the difference of upper bound and lower bound of round $t$, i.e., $\epsilon^t=R(\overline{f}^t,\ThetabciNM)-\inf_{f\in\mathcal{F}}\E_{\theta\sim \overline{\vw}^t_{\ThetabciNM}}[R(f,\theta)]$. Then the strategy profile $(\overline{\vw}^t_{\ThetabciNM},\overline{f}^t)$ is an $\epsilon^t$-equilibrium.
    \label{lem:constrain_eps}
\end{lemma}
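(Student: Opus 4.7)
The plan is to unpack the definition of $\epsilon$-equilibrium (\Cref{def:eps}) with nature playing $\theta \in \ThetabciNM$ to maximize and the aggregator playing $f \in \mathcal{F}$ (pure strategy) to minimize, and then show that the two required inequalities follow from sandwiching $\E_{\theta \sim \overline{\vw}^t_{\ThetabciNM}}[R(\overline{f}^t, \theta)]$ between the stated lower and upper bounds.

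First, I would introduce the shorthand $V^t := \E_{\theta \sim \overline{\vw}^t_{\ThetabciNM}}[R(\overline{f}^t, \theta)]$, $L^t := \inf_{f\in\mathcal{F}}\E_{\theta\sim \overline{\vw}^t_{\ThetabciNM}}[R(f,\theta)]$, and $U^t := \sup_{\theta\in \ThetabciNM}R(\overline{f}^t,\theta)$, so that by definition $\epsilon^t = U^t - L^t$. Directly from the definitions of infimum and supremum, we have
\[
L^t \;\le\; V^t \;\le\; U^t,
\]
since $\overline{f}^t \in \mathcal{F}$ and $\overline{\vw}^t_{\ThetabciNM}$ is a distribution supported on $\ThetabciNM$ (here I would note $\mathcal{F}$ is convex so averages stay in $\mathcal{F}$, and similarly $\Delta_{\ThetabciNM}$ is closed under averaging).

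Next, I would verify the two conditions of \Cref{def:eps} with $X = \ThetabciNM$ (playing for maximization), $Y = \mathcal{F}$ (playing for minimization), $g(\theta, f) = R(f,\theta)$, $\vw_X' = \overline{\vw}^t_{\ThetabciNM}$, and $\vw_Y' = \overline{f}^t$ (as a pure strategy). From $V^t \le U^t$ and $U^t - L^t = \epsilon^t$, we get $V^t \ge U^t - \epsilon^t$, i.e.,
\[
\E_{\theta \sim \overline{\vw}^t_{\ThetabciNM}}[R(\overline{f}^t, \theta)] \;\ge\; \sup_{\theta \in \ThetabciNM} R(\overline{f}^t, \theta) - \epsilon^t,
\]
which is the first inequality of \Cref{def:eps}. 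From $V^t \ge L^t$ and again $U^t - L^t = \epsilon^t$ combined with $V^t \le U^t$, we get $V^t \le L^t + \epsilon^t$, i.e.,
\[
\E_{\theta \sim \overline{\vw}^t_{\ThetabciNM}}[R(\overline{f}^t, \theta)] \;\le\; \inf_{f \in \mathcal{F}} \E_{\theta \sim \overline{\vw}^t_{\ThetabciNM}}[R(f, \theta)] + \epsilon^t,
\]
which is the second inequality of \Cref{def:eps}. Hence $(\overline{\vw}^t_{\ThetabciNM}, \overline{f}^t)$ is an $\epsilon^t$-equilibrium.

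Honestly, there is no hard part here: the lemma is a tautological consequence of the definitions once the inf/$V^t$/sup sandwich is written down. The only mild subtlety is to make explicit that $\overline{f}^t = \frac{1}{t}\sum_{i=1}^t f^i$ is a legitimate pure strategy (it lies in $\mathcal{F}$ by convexity, which is how the previous sections used it), so that $L^t$ is indeed an infimum over a set containing $\overline{f}^t$ and the bound $L^t \le V^t$ holds. I would mention this in one sentence for completeness.
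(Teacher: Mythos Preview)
Your proposal is correct and follows essentially the same approach as the paper: both establish the sandwich $L^t \le V^t \le U^t$ and then read off the two inequalities of \Cref{def:eps} from $\epsilon^t = U^t - L^t$. Your remark that $\overline{f}^t \in \mathcal{F}$ by convexity is a nice explicit note the paper leaves implicit.
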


\begin{proof}[Proof of Lemma~\ref{lem:constrain_eps}]
First we have
\[
\inf_{f\in\mathcal{F}}\E_{\theta\sim \overline{\vw}^t_{\ThetabciNM}}[R(f,\theta)]\leq \E_{\theta\sim \overline{\vw}^t_{\ThetabciNM}}R(\overline{f},\theta) \leq R(\overline{f}^t,\ThetabciNM)
\]

Then we have
\begin{align*}
    \E_{\theta\sim \overline{\vw}^t_{\ThetabciNM}}R(\overline{f},\theta) &\leq R(\overline{f}^t,\ThetabciNM)\\
    &= \inf_{f\in\mathcal{F}}\E_{\theta\sim \overline{\vw}^t_{\ThetabciNM}}[R(f,\theta)]+\epsilon^t\\
    \E_{\theta\sim \overline{\vw}^t_{\ThetabciNM}}R(\overline{f},\theta) &\geq \inf_{f\in\mathcal{F}}\E_{\theta\sim \overline{\vw}^t_{\ThetabciNM}}[R(f,\theta)]\\
    &= R(\overline{f}^t,\ThetabciNM)-\epsilon^t
\end{align*}

Thus we proved the strategy profile $(\overline{\vw}^t_{\ThetabciNM},\overline{f}^t)$ is an $\epsilon^t$-equilibrium.
\end{proof}

When the lower bound equals the upper bound, the algorithm is converged. \Cref{fig:perform} shows the performance of our algorithm. We set the learning rate $\eta=1$ during our experiments, and normalize the loss of online learning by the expected regret $\E_{\theta\sim \vw^t_{\ThetabciNM}}\left[R(f^t,\theta)\right]$. For different resolution $N$ and different Lipschitz constant $L$, the convergence speed is close, which costs around $T=10^3$ rounds. The maximum regret converges quicker, which means finding the optimal aggregator is much easier than finding the optimal mixed-strategy of nature.
% In this section, we run a numerical study for the 

% First, we show our results in the Discrete setting.

\begin{figure}[!ht]
    
    \centering
    \includegraphics[width=.45\textwidth]{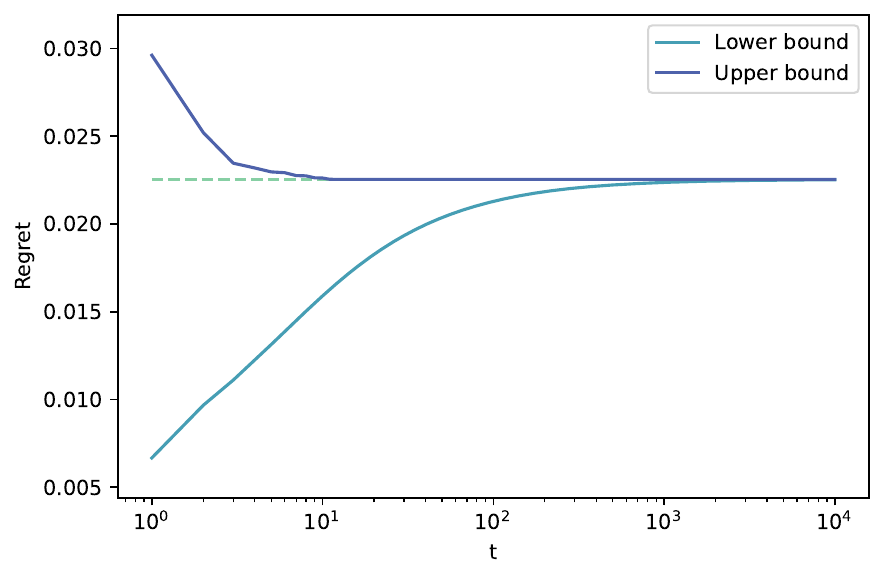}
    % \begin{subfigure}[b]{0.32\textwidth}
        
    %     \caption{\textbf{\textit{$N=10, L=\infty$}}}
    %     \label{fig:image1}
    % \end{subfigure}
    % \begin{subfigure}[b]{0.32\textwidth}
    %     \includegraphics[width=\textwidth]{figures/20_0_ave_loss.pdf}
    %     \caption{\textbf{\textit{$N=20, L=\infty$}}}
    %     \label{fig:image2}
    % \end{subfigure}
    % \begin{subfigure}[b]{0.32\textwidth}
    %     \includegraphics[width=\textwidth]{figures/20_1_ave_loss.pdf}
    %     \caption{\textbf{\textit{$N=20, L=1$}}}
    %     \label{fig:image2}
    % \end{subfigure}
    
    \caption{\textbf{Convergence Rate.} The figure shows the convergence rate of online learning  when $N=20,M=400,L=\infty$. The horizontal axis is the number of rounds and the vertical axis is the regret value.}
    \label{fig:perform}
\end{figure}

\end{toappendix}

\subsection{Different Robustness Paradigms}\label{sec:paradigm}

There are three different robustness paradigms, the additive, the absolute, and the ratio. The regret formulas for these robustness paradigms are listed below:

\begin{itemize}
    \item\textbf{Additive} $Additive(f,\theta)=\E_\theta[\ell(f(\vx),\omega)]-\E_{\theta}[\ell(opt_\theta(\mathbf{s}),\omega)]$
    \item\textbf{Absolute} $Absolute(f,\theta)=\E_\theta[\ell(f(\vx),\omega)]$
    \item\textbf{Ratio} $Ratio(f,\theta)=\frac{\E_\theta[\ell(f(\vx),\omega)]}{\E_\theta[\ell(opt_\theta(\mathbf{s}),\omega)]}$
    
\end{itemize}

The paradigms aim to solve $\inf_{f\in\mathcal{F}}\sup_{\theta\in\Theta} Additive(f,\theta), \inf_{f\in\mathcal{F}}\sup_{\theta\in\Theta} Absolute(f,\theta)$, and\\ $\inf_{f\in\mathcal{F}}\sup_{\theta\in\Theta} Ratio(f,\theta)$ correspondingly. 

Prior-independent mechanism's robustness paradigms are typically scale-invariant~\citep{devanur2011prior, chawla2013prior}. In such cases, it may be more proper to use ratio-based regret. In our setting, since both reports and aggregators are constrained in $[0,1]$, we do not need to pay special attention to the scale. Besides, in some cases, the benchmark can be $0$ or very close to $0$. Therefore the ratio robustness paradigm becomes meaningless in our setting. Traditional machine learning often considers absolute robustness paradigms. However, with the absolute robustness paradigm, nature can always pick uninformative information structures to maximize the loss where no aggregator can help. Thus we select the additive robustness paradigm. We provide a visual comparison of different robustness paradigms and show that the additive robustness paradigm works best in our setting. The visual comparison can also be applied to other settings of prior-independent design.

\begin{figure}[!ht]
    \centering
    \begin{subfigure}[b]{0.32\textwidth}
        \includegraphics[width=\textwidth]{figures/loss_additive.pdf}
        \caption{\textbf{Additive}}
        \label{fig:image2}
    \end{subfigure}
    \begin{subfigure}[b]{0.32\textwidth}
        \includegraphics[width=\textwidth]{figures/loss_absolute.pdf}
        \caption{\textbf{Absolute}}
        \label{fig:image1}
    \end{subfigure}
    \begin{subfigure}[b]{0.32\textwidth}
        \includegraphics[width=\textwidth]{figures/loss_ratio.pdf}
        \caption{\textbf{Ratio}}
        \label{fig:image2}
    \end{subfigure}
    \caption{\textbf{Losses of the optimal aggregator under additive, ratio, and absolute robustness paradigms.} The horizontal axis represents the information structures in $\ThetabciNM$ sorted by their losses under the omniscient aggregator. The vertical axis represents the loss. The bottom curve  (navy blue) represents the optimal loss (lower bound), i.e., the loss of the omniscient aggregator. The middle curve (green), which consists of a shaded region, is the loss of the optimal aggregator $f$ obtained by our algorithm for each paradigm. The top curve (cyan) represents the bound on the highest loss that can be afforded for each information structure. The worst case occurs when the top curve touches the middle curve.\fang{Is this identical to Figure 2?}}
    \label{fig:allimages}
\end{figure}

%\yk{the words on the figures can be enlarged.}
\begin{figure}[!ht]
    \centering
    % \begin{subfigure}[b]{0.27\textwidth}
    %     \includegraphics[height=.96\textwidth]{}
    %     \caption{\textbf{Additive}}
    %     \label{fig:image2}
    % \end{subfigure}
    % \begin{subfigure}[b]{0.27\textwidth}
    %     \includegraphics[height=.96\textwidth]{}
    %     \caption{\textbf{Absolute}}
    %     \label{fig:image1}
    % \end{subfigure}
    % \begin{subfigure}[b]{0.324\textwidth}
    % \centering
    %     \includegraphics[height=.8\textwidth]{}
    %     \includegraphics[height=.7\textwidth]{}
    %     \caption{\textbf{Ratio}}
    %     \label{fig:image2}
    % \end{subfigure}
    \begin{subfigure}[b]{0.32\textwidth}
        \includegraphics[width=\textwidth]{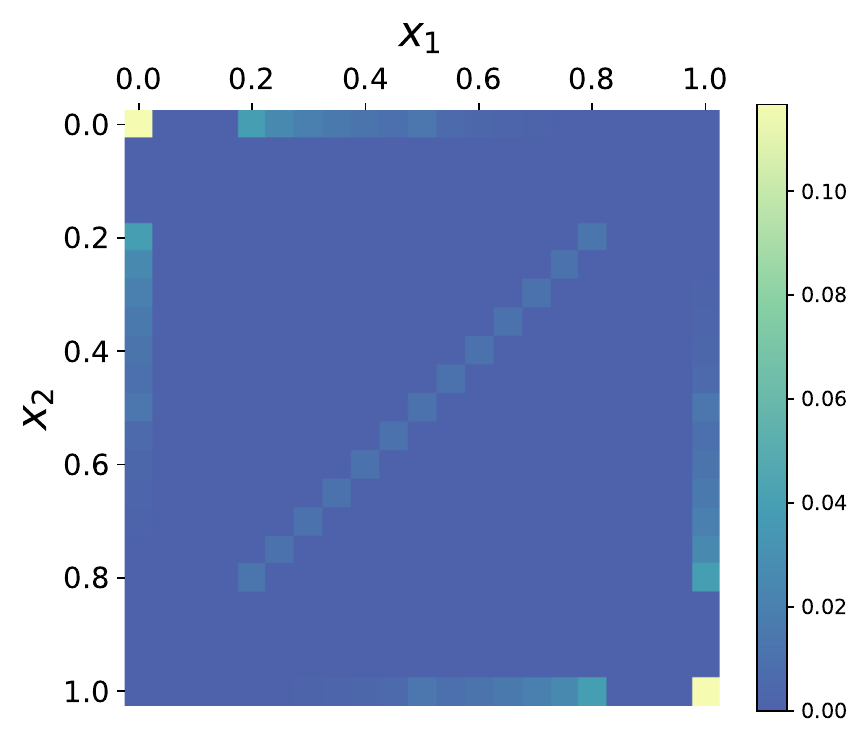}
        \caption{\textbf{Additive}}
        \label{fig:image2}
    \end{subfigure}
    \begin{subfigure}[b]{0.32\textwidth}
        \includegraphics[width=\textwidth]{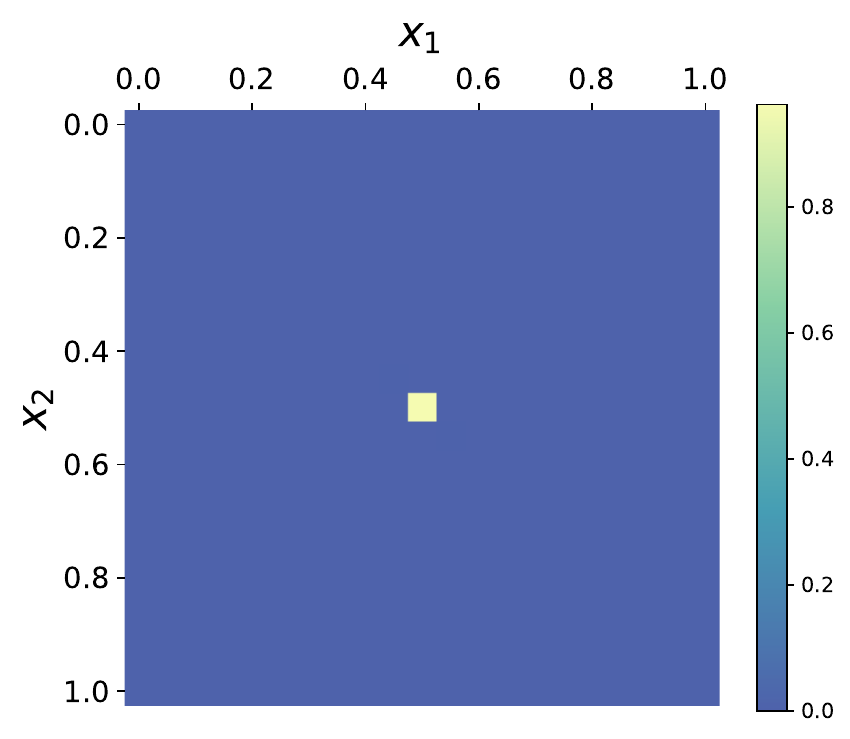}
        \caption{\textbf{Absolute}}
        \label{fig:image1}
    \end{subfigure}
    \begin{subfigure}[b]{0.32\textwidth}
        \includegraphics[width=\textwidth]{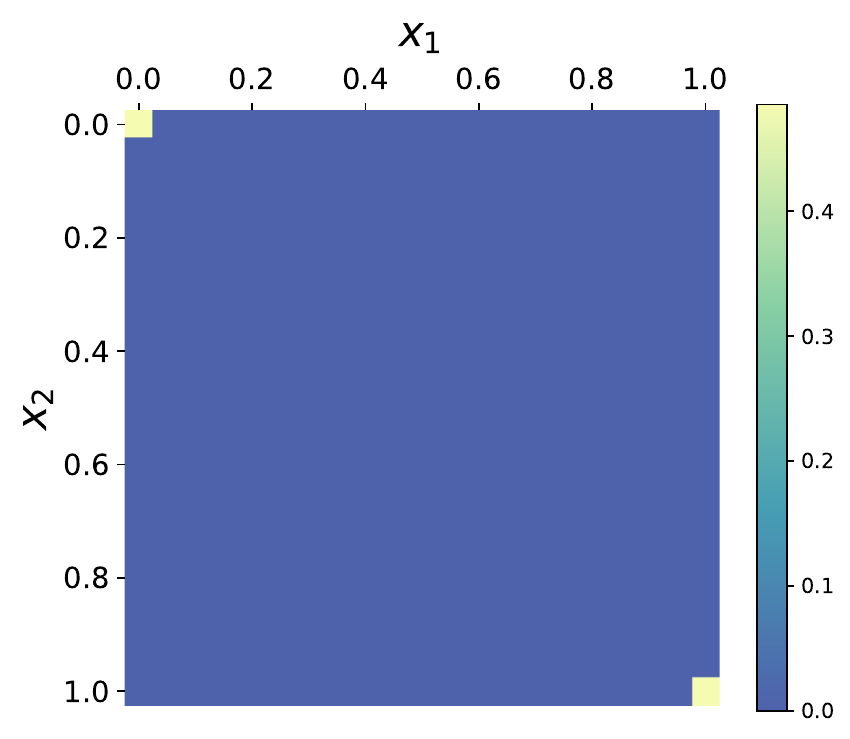}
        \caption{\textbf{Ratio}}
        \label{fig:image2}
    \end{subfigure}
    
    \caption{\textbf{Heatmaps of the distribution over $(x_1,x_2)$ at the $\epsilon$-equilibrium under different robustness paradigms} Darker to lighter shades represent the probabilities increasing, and we use a logarithmic scale.}
    \label{fig:weight}
\end{figure}
\Cref{fig:allimages} shows the learned aggregators under different robustness paradigms. At the equilibrium with the ratio-based robustness paradigm, nature will focus on the information structures where the omniscient aggregator has $\approx 0$ loss\footnote{Weights on the other region are negligible but not zero. This is why the learned aggregator does not have a high loss in the other region.}. At the equilibrium with the absolute robustness paradigm, nature will focus on the information structures where agents have no information. At the equilibrium with the additive robustness paradigm, nature will focus on a more diverse region of the information structures, which leads to an aggregator which behaves quite well everywhere. Thus, the additive robustness paradigm is the most suitable robustness paradigm in our setting. 

In addition to information structures, we also show that with additive regret, a wider range of reports is paid attention to, whereas with absolute loss and ratio-based regret, the focus is narrowed down to one or two specific reports at the $\epsilon$-equilibrium. \Cref{fig:weight} shows the probability of seeing report $\vx$ at the $\epsilon$-equilibrium. Formally, the probability is $\Pr[\vx]=\sum_{\theta\in\ThetabciNM}\Pr_{\theta}[\vx]\cdot\overline{w}_{\theta}$, where $\overline{\vw}$ is the mixed strategy of nature at the $\epsilon$-equilibrium. The larger the probability, the more important the aggregator's response at this point is. Notice in the additive case, we pay more attention to the borders and secondary diagonals. In the absolute case, we only pay attention to the central point $(0.5,0.5)$. In the ratio case, we only pay attention to the upper left $(0,0)$ and lower right $(1,1)$. This matches the fact that with the absolute loss, nature will focus on the uninformative information structure, and with the ratio-based regret, near certain information structures where the omniscient aggregator has near zero loss. %\yk{check the explanation}
%Define $W(\vx)$ to be the sum of weights of the information structures which report $\vx$, i.e., $W(\vx)=\sum_{\theta\in\ThetabciNM}\Pr_{\theta}[\vx]\cdot\overline{w}_{\theta}$\yk{formally define the importance of report pairs}\fang{probability of seeing report $\vx$}

\section{Conclusion and Discussion}
Our algorithmic framework for robust aggregation addresses the challenges of prior-independent optimal aggregator design by providing a systematic approach. There are several future directions for further exploration. Firstly, although the continuous results focus on the case of two agents with independent signals conditioned on a binary state, extending the framework to multiple symmetric agents would be an interesting avenue for future research. Secondly, our framework can be extended to consider more general information structures, allowing for a richer representation of the forecasters' knowledge. Exploring the impact of higher-order reports and considering the ensemble learning scenario, such as aggregating classifiers, would broaden the scope of our framework and enable its application in diverse domains. Furthermore, incorporating the existence of irrational agents, who may exhibit biases or deviate from rational behavior, would enhance the robustness of the framework in real-world settings. To validate the effectiveness of our framework in real-world scenarios, conducting extensive experiments using real-world data would be also important.

In summary, our algorithmic framework for robust aggregation opens up several avenues for future research. By exploring the extensions and applications discussed, we can further enhance the robustness, scalability, and real-world applicability of the framework, ultimately advancing the field of information aggregation and decision-making.

\begin{comment}
\fang{general setting: optimal estimation, information design, and channel capacity}

\fang{In optimal estimation problems, we try to design a statistic mapping from samples to prediction that achieves a worst-case optimality guarantee.  For instance, \citet{valiant2017estimating, wu2016chebyshev} study the support estimation problem: $S(P)$ is the support size of a distribution $S(P) = \sum_i \mathbf{1}[P_i>0]$, and $\mathcal{F}_n$ is the collection of integer-valued estimator on $n$ samples, $\vx\sim P^n$.  Given $k\in \mathbb{N}$, $\mathcal{D}_k$ consists of all probability distributions on $\mathbb{N}$ whose minimum non-zero mass is at least $1/k$.  They want to achieve minimax risk 
$$\inf_{\hat{S}\in \mathcal{F}_n}\sup_{P\in \mathcal{D}_k}\E_{\vx\sim P^n}[\ell(\hat{S}(\vx), S(P))]$$}

\fang{Non uniform algorithm for fixed input size.}

\end{comment}

\section{Acknowledgement}
This research was undertaken in part while the authors were participating in the 2022 IDEAL Special Quarter on Data Economics which was supported by NSF grant CCF 1934931. Yongkang Guo, Zhihuan Huang, and Yuqing Kong were funded by National Natural Science Foundation of China award number 62002001.

\bibliography{reference}

\begin{thebibliography}{}

\end{thebibliography}


\begin{thebibliography}{}

\bibitem[Allouah and Besbes, 2020]{allouah2020prior}
Allouah, A. and Besbes, O. (2020).
\newblock Prior-independent optimal auctions.
\newblock {\em Management Science}, 66(10):4417--4432.

\bibitem[Anunrojwong et~al., 2022]{anunrojwong2022}
Anunrojwong, J., Balseiro, S., and Besbes, O. (2022).
\newblock On the robustness of second-price auctions in prior-independent
  mechanism design.
\newblock In {\em Proceedings of the 23rd ACM Conference on Economics and
  Computation}, EC '22, page 151–152, New York, NY, USA. Association for
  Computing Machinery.

\bibitem[Arieli et~al., 2018]{doi:10.1073/pnas.1813934115}
Arieli, I., Babichenko, Y., and Smorodinsky, R. (2018).
\newblock Robust forecast aggregation.
\newblock {\em Proceedings of the National Academy of Sciences},
  115(52):E12135--E12143.

\bibitem[Assos et~al., 2023]{AADDF-23}
Assos, A., Attias, I., Dagan, Y., Daskalakis, C., and Fishelson, M. (2023).
\newblock {Online Learning} and {Solving Infinite Games} with an {ERM} oracle.
\newblock {\em Working Paper.}

\bibitem[Baron et~al., 2014]{baron2014two}
Baron, J., Mellers, B.~A., Tetlock, P.~E., Stone, E., and Ungar, L.~H. (2014).
\newblock Two reasons to make aggregated probability forecasts more extreme.
\newblock {\em Decision Analysis}, 11(2):133--145.

\bibitem[Bordley, 1982]{bordley1982multiplicative}
Bordley, R.~F. (1982).
\newblock A multiplicative formula for aggregating probability assessments.
\newblock {\em Management science}, 28(10):1137--1148.

\bibitem[Boyd and Vandenberghe, 2004]{boyd2004convex}
Boyd, S.~P. and Vandenberghe, L. (2004).
\newblock {\em Convex optimization}.
\newblock Cambridge university press.

\bibitem[Chatterjee, 2008]{Chatterjee2008}
Chatterjee, S. (2008).
\newblock Distances between probability measures.
\newblock PDF lecture notes, UC Berkeley.
\newblock Archived from the original on July 8, 2008. Retrieved 21 June 2013.

\bibitem[Chawla et~al., 2013]{chawla2013prior}
Chawla, S., Hartline, J.~D., Malec, D., and Sivan, B. (2013).
\newblock Prior-independent mechanisms for scheduling.
\newblock In {\em Proceedings of the forty-fifth annual ACM symposium on Theory
  of computing}, pages 51--60.

\bibitem[Dantzig, 1963]{Dantzig-63}
Dantzig, G.~B. (1963).
\newblock Linear programming and extensions.
\newblock {\em Princeton University Press}.

\bibitem[Daskalakis et~al., 2011]{DDK-11}
Daskalakis, C., Deckelbaum, A., and Kim, A. (2011).
\newblock Near-optimal no-regret algorithms for zero-sum games.
\newblock In {\em Proceedings of the Twenty-Second Annual ACM-SIAM Symposium on
  Discrete Algorithms}, SODA '11, page 235–254, USA. Society for Industrial
  and Applied Mathematics.

\bibitem[Daskalakis and Panageas, 2019]{DP-19}
Daskalakis, C. and Panageas, I. (2019).
\newblock Last-iterate convergence : zero-sum games and constrained min-max
  optimization.

\bibitem[De~Oliveira et~al., 2021]{de2021robust}
De~Oliveira, H., Ishii, Y., and Lin, X. (2021).
\newblock Robust merging of information.
\newblock In {\em Proceedings of the 22nd ACM Conference on Economics and
  Computation}, pages 341--342.

\bibitem[Devanur et~al., 2011]{devanur2011prior}
Devanur, N., Hartline, J., Karlin, A., and Nguyen, T. (2011).
\newblock Prior-independent multi-parameter mechanism design.
\newblock In {\em International Workshop on Internet and Network Economics},
  pages 122--133. Springer.

\bibitem[Dhangwatnotai et~al., 2010]{dhangwatnotai2010revenue}
Dhangwatnotai, P., Roughgarden, T., and Yan, Q. (2010).
\newblock Revenue maximization with a single sample.
\newblock In {\em Proceedings of the 11th ACM conference on Electronic
  commerce}, pages 129--138.

\bibitem[Dudley, 2018]{dudley2018real}
Dudley, R.~M. (2018).
\newblock {\em Real analysis and probability}.
\newblock CRC Press.

\bibitem[Freund and Schapire, 1997]{FS-97}
Freund, Y. and Schapire, R.~E. (1997).
\newblock A decision-theoretic generalization of online-learning and an
  application to boosting.
\newblock {\em J. Comput. Syst. Sci., 55:119–139}.

\bibitem[Freund and Schapire, 1999]{freund1999adaptive}
Freund, Y. and Schapire, R.~E. (1999).
\newblock Adaptive game playing using multiplicative weights.
\newblock {\em Games and Economic Behavior}, 29(1-2):79--103.

\bibitem[Fu et~al., 2015]{fu2015randomization}
Fu, H., Immorlica, N., Lucier, B., and Strack, P. (2015).
\newblock Randomization beats second price as a prior-independent auction.
\newblock In {\em Proceedings of the Sixteenth ACM Conference on Economics and
  Computation}, pages 323--323.

\bibitem[Gabrel et~al., 2014]{gabrel2014recent}
Gabrel, V., Murat, C., and Thiele, A. (2014).
\newblock Recent advances in robust optimization: An overview.
\newblock {\em European journal of operational research}, 235(3):471--483.

\bibitem[Glicksberg, 1952]{glicksberg1952further}
Glicksberg, I.~L. (1952).
\newblock A further generalization of the kakutani fixed theorem, with
  application to nash equilibrium points.
\newblock {\em Proceedings of the American Mathematical Society},
  3(1):170--174.

\bibitem[Gneiting and Raftery, 2007]{gneiting2007strictly}
Gneiting, T. and Raftery, A.~E. (2007).
\newblock Strictly proper scoring rules, prediction, and estimation.
\newblock {\em Journal of the American statistical Association},
  102(477):359--378.

\bibitem[Hartline et~al., 2020]{hartline2020benchmark}
Hartline, J., Johnsen, A., and Li, Y. (2020).
\newblock Benchmark design and prior-independent optimization.
\newblock In {\em 2020 IEEE 61st Annual Symposium on Foundations of Computer
  Science (FOCS)}, pages 294--305. IEEE.

\bibitem[Hartline et~al., 2023]{HJS-23}
Hartline, J.~D., Johnsen, A., and Shah, A. (2023).
\newblock {Near-Optimal Prior-Independent Online Algorithms and the Ski-Rental
  Problem}.
\newblock {\em Working Paper.}

\bibitem[Khachiyan, 1979]{Khachiyan-79}
Khachiyan, G.~L. (1979).
\newblock A polynomial time algorithm in linear programming.
\newblock {\em Soviet Mathematics Doklady, 20(1):191194}.

\bibitem[Kirszbraun, 1934]{kirszbraun1934zusammenziehende}
Kirszbraun, M. (1934).
\newblock {\"U}ber die zusammenziehende und lipschitzsche transformationen.
\newblock {\em Fundamenta Mathematicae}, 22(1):77--108.

\bibitem[Levy and Razin, 2022]{LEVY2022105075}
Levy, G. and Razin, R. (2022).
\newblock Combining forecasts in the presence of ambiguity over correlation
  structures.
\newblock {\em Journal of Economic Theory}, 199:105075.
\newblock Symposium Issue on Ambiguity, Robustness, and Model Uncertainty.

\bibitem[Neyman and Roughgarden, 2022]{neyman2022you}
Neyman, E. and Roughgarden, T. (2022).
\newblock Are you smarter than a random expert? the robust aggregation of
  substitutable signals.
\newblock In {\em Proceedings of the 23rd ACM Conference on Economics and
  Computation}, pages 990--1012.

\bibitem[Palley and Soll, 2019]{palley2019extracting}
Palley, A.~B. and Soll, J.~B. (2019).
\newblock Extracting the wisdom of crowds when information is shared.
\newblock {\em Management Science}, 65(5):2291--2309.

\bibitem[Prelec et~al., 2017]{prelec2017solution}
Prelec, D., Seung, H.~S., and McCoy, J. (2017).
\newblock A solution to the single-question crowd wisdom problem.
\newblock {\em Nature}, 541(7638):532--535.

\bibitem[Roughgarden, 2010]{roughgarden2010algorithmic}
Roughgarden, T. (2010).
\newblock Algorithmic game theory.
\newblock {\em Communications of the ACM}, 53(7):78--86.

\bibitem[Satop{\"a}{\"a} and Ungar, 2015]{satopaa2015combining}
Satop{\"a}{\"a}, V. and Ungar, L. (2015).
\newblock Combining and extremizing real-valued forecasts.
\newblock {\em arXiv preprint arXiv:1506.06405}.

\bibitem[Satop{\"a}{\"a} et~al., 2014]{satopaa2014combining}
Satop{\"a}{\"a}, V.~A., Baron, J., Foster, D.~P., Mellers, B.~A., Tetlock,
  P.~E., and Ungar, L.~H. (2014).
\newblock Combining multiple probability predictions using a simple logit
  model.
\newblock {\em International Journal of Forecasting}, 30(2):344--356.

\bibitem[Sohrab, 2003]{sohrab2003basic}
Sohrab, H.~H. (2003).
\newblock {\em Basic real analysis}, volume 231.
\newblock Springer.

\bibitem[Winkler et~al., 1996]{winkler1996scoring}
Winkler, R., Mu{\~n}oz, J., Cervera, J., Bernardo, J., Blattenberger, G.,
  Kadane, J., Lindley, D., Murphy, A., Oliver, R., and R{\'\i}os-Insua, D.
  (1996).
\newblock Scoring rules and the evaluation of probabilities.
\newblock {\em Test}, 5(1):1--60.

\end{thebibliography}
\bibliographystyle{apalike}
\clearpage

\appendix
\end{document}